\newcommand{\qed}{\hfill $\blacksquare$ \par}
\begin{document}

\title{A General Recipe for the Analysis of Randomized Multi-Armed Bandit Algorithms}

\author{\name Dorian Baudry \email dorian.baudry@stats.ox.ac.uk \\
       \addr Department of Statistics, University of Oxford\\
       Oxford, United Kingdom
       \AND
       \name Kazuya Suzuki\\ 
       \addr Kyoto University \\ Kyoto, Japan
       \AND
       \name Junya Honda
       \email honda@i.kyoto-u.ac.jp\\
       \addr Kyoto University\\ Kyoto, Japan
}

\editor{XXXX XXXX}

\maketitle

\begin{abstract} In this paper we propose a general methodology to derive regret bounds for randomized multi-armed bandit algorithms.
It consists in checking a set of sufficient conditions on the family of distributions and on the \emph{sampling probability} of each arm to prove a logarithmic regret.
As a direct application we revisit two 
bandit algorithms, Minimum Empirical Divergence (MED) and Thompson Sampling (TS), under various models for the distributions including single parameter exponential families, Gaussian distributions, bounded distributions, or distributions satisfying some conditions on their moments. In particular, we prove that MED is asymptotically optimal for all these models, but also provide a simple regret analysis for \rev{a variant of} some TS algorithms for which the optimality is already known.
We then further illustrate the interest of our approach, by analyzing a new Non-Parametric TS algorithm ($h$-NPTS), adapted to some families of unbounded reward distributions with a bounded \emph{$h$-moment}. This model can for instance capture some non-parametric families of distributions whose variance is upper bounded by a known constant.
\end{abstract}

\begin{keywords}
	Multi-Armed Bandits, Thompson Sampling, Empirical Divergence
\end{keywords}

\section{Introduction}\label{sec::introduction}

A Multi-Armed Bandit (MAB) is a problem in which a learner sequentially picks an action among $K$ alternatives, called arms, and collects a random reward. The rewards collected from an arm $k \in [K]$ are all drawn independently from a distribution $F_k$, of mean $\mu_k$. At each time step $t$ the learner chooses an arm $A_t$, adapting her strategy in order to maximize the expected sum of rewards. For a time horizon $T$ this is equivalent to minimizing the \emph{regret}, 
formally defined as 

\begin{equation}\cR_T(F_1,\dots, F_K) = \bE\left[\sum_{t=1}^{T} (\mu^\star - \mu_{A_t})\right] = \sum_{k=1}^K \Delta_k \bE\left[N_k(T)\right] \label{eq:basicregret} \;, \end{equation}
where $N_k(T)=\sum_{t=1}^T \ind(A_t=k)$ is the total number of selections of arm $k$ up to time $T$, and $\Delta_k$ is the \emph{sub-optimality gap} of arm $k$: $\Delta_k = \mu^\star - \mu_k$, for $\mu^\star=\max_{j\in\K} \mu_j$. Assuming that $F_1,\dots, F_K$ come from the same family of distributions $\cF$, \cite{LaiRobbins85} and \cite{burnetas96LB} proved (respectively for single-parametric and general families) that a uniformly efficient bandit algorithm\footnote{$\forall (F_1, \dots, F_K)\in \cF^K$: $\forall \alpha>0$,  $\bE[N_k(T)]=o(T^\alpha)$ for all $k$ satisfying $\Delta_k>0$.} satisfies the following lower bound for any sub-optimal arm $k$:
\begin{equation}
\liminf_{T\rightarrow \infty} \frac{\bE[N_k(T)]}{\log(T)} \!\geq\! \frac{1}{\kinf^\cF(F_k, \mu^\star)} \;,\;\;\; \kinf^\cF(F_k, \mu^\star)=\inf_{G \in \cF} \left\{\KL(F_k, G)\!:\! \bE_{X\sim G}[X]\!>\! \mu^\star \right\}\;,
\label{eq::asymp_opt_def} 
\end{equation} 
where $\KL(.,.)$ denotes the Kullback-Leibler divergence between two distributions.
We call an algorithm \emph{asymptotically optimal} if it admits a regret upper bound \emph{matching} this lower bound.
Furthermore, if for each $k \in [K]$ it holds that $\bE[N_k(T)] \leq  C_k\log(T)$, \rev{for some problem-dependent constant $C_k$}, we say that the algorithm achieves a \emph{logarithmic regret}.

\paragraph{Families of distributions} The lower bound presented in \eqref{eq::asymp_opt_def} depends on the characteristics of the family of distributions $\cF$, on which assumptions have to be made before designing bandit algorithms. 
For instance, \emph{Single Parameter Exponential Families} (SPEF) are a usual parametric model (see Definition~\ref{def::spef}).
They include several usual distributions such as Bernoulli, Poisson, or Gaussian distributions with known variance.
In some cases a multiparameter exponential family is also considered such as \textit{Gaussian distributions with unknown variances}.

In other cases, non-parametric assumptions may be more suitable. For instance,
one can consider the model where the rewards are supported in a \emph{known bounded range} $[b, B]$, or the model of $\sigma$-\emph{sub-Gaussian} distributions
(see for example Definition 5.2 in \citealp{BanditBook}),
the latter of which essentially assumes that their tails are no heavier than the
Gaussian distribution with variance $\sigma^2$.

When the distributions are \emph{heavy-tailed}, some \emph{moment condition} may be assumed (see e.g. \citealp{bubeck_heavy}): $\bE_{X\sim \nu}[|X|^{1+\epsilon}]\leq B$ for known $\epsilon>0, B >0$. More recently, \cite{agrawal2020optimal} introduced a more general assumption: $\bE_\nu[h(|X|)]\leq B$ for a known convex function $h$ satisfying $x=o(h(|x|))$. We name for convenience such assumption an (uncentered) \emph{$h$-moment condition}. We further consider in this paper its \emph{centered} version: $\bE_\nu[h(|X-\bE_\nu[X]|)]\leq B$, that we simply call a \emph{centered} $h$-moment condition.
Some interesting examples would be
(a variant of) the $\sigma$-sub-Gaussian model and a family of distributions with bounded variance. \rev{In Section~\ref{sec::NPTS} we propose a simple policy based on bootstrapping to tackle these families of distributions. Its analysis is enabled by the theoretical framework that we propose in this work.}

\subsection{Asymptotically optimal bandit algorithms}\label{subsec::literature} 

There is a vast literature on MABs, so the following review of related works is non-exhaustive. We refer for instance to \citet{BanditBook} for a broader survey. 

\rev{In this study, we mainly focus on \emph{instance-dependent} regret bounds. The lower bound established in Equation \eqref{eq::asymp_opt_def} provides the best achievable asymptotic rate for this kind of guarantees. For that reason, in this section we proposed a review of related works primarily focused on the main families of asymptotically optimal policies.}

The most celebrated approach in bandits is certainly \emph{optimism in face of uncertainty} \citep{agrawal95,auer2002finite}. The simple UCB1 policy \citep{auer2002finite} achieves logarithmic regret for bounded-support
distributions, while the more sophisticated KL-UCB principle provides optimal algorithms for SPEF \citep{KL_UCB}, bounded distributions \citep{KL_UCB,agrawal2021regret}, and uncentered $h$-moment conditions when $h(x)=x^{1+\epsilon}$ for $\epsilon>0$ \citep{agrawal2021regret}. To do that, these policies build on \emph{upper confidence bounds}, that are well-calibrated for the family of distributions considered. In the cases where asymptotic optimality is achieved (KL-UCB policies), these confidence bounds involve the $\kinf$ function presented in \eqref{eq::asymp_opt_def}.

\rev{A second celebrated class of policies, \emph{Thompson Sampling} (TS) dates back to the work of \citet{TS_1933}. TS algorithms gained renewed attention in the early 2010s, following several studies that demonstrated their strong theoretical and practical performance. Notably, \citet{chapelle11} established the strong empirical performance of TS, and \citet{TS_Emilie, TS12AG} first proved the asymptotic optimality for the Bernoulli case. Later, \cite{AgrawalG17} proposed a simplified version of the proof, inspiring subsequent instance-dependent analyses of TS and other randomized policies (Theorem 36.2 in \cite{BanditBook}, Theorem 1 in \cite{Giro}).} At its core, TS is a Bayesian algorithm that, at each time step, uses an appropriate conjugate prior/posterior to
sample a \emph{parameter} for each arm (typically, their expectation),
and choose the best arm in this sampled environment. \rev{From an instance-dependent perspective}, several TS algorithms \rev{based on non-informative priors} are optimal: with Jeffreys priors for SPEF \citep{korda13TSexp}, with well-tuned inverse-gamma priors for Gaussian distributions \citep{honda14}, and with Dirichlet prior/posteriors for bounded distributions \citep{RiouHonda20}.

A third family notably includes several optimal algorithms: \emph{Minimum Empirical Divergence} policies. MED \citep{honda11MED} and IMED \citep{Honda15IMED} are respectively the randomized and deterministic versions of this principle. The former is optimal for multinomial distributions \citep{honda11MED}, while the latter is optimal for semi-bounded distributions \citep{Honda15IMED} and SPEF \citep{pesquerel21}. More recently, \citep{jun2022, Qin23} analyzed variants of MED under the name \emph{Maillard Sampling} (MS), \rev{after re-discovering the algorithm from the work of \citet[Figure 1.8 in Section 3.4]{Maillard11}}
respectively for sub-Gaussian and bounded distributions. 
\rev{Prior to these works, MED/MS was also reinvented under the name \emph{SoftElim} by \cite{boutilier20}, which the authors later extended to contextual bandits in \citet{kveton2021metalearningbanditpoliciesgradient}. In these studies, the algorithms are analyzed through the lens of policy gradient methods, with the authors comparing the performance of this approach to other policies, such as Thompson Sampling.}

More recently, several works have focused on developing alternative nonparametric algorithms to these three approaches. \rev{First, \cite{PHE} and \cite{Giro} established a general framework for exploration based on \emph{bootstrapping}.} \rev{The NPTS algorithm \citep{RiouHonda20}, which is related to the Bayesian bootstrap (see \citealp{tiapkin2022}), can also be categorized within this line of work, in addition to being a TS-based policy.
This is also true for the variant of NPTS introduced in Section~\ref{sec::NPTS} of this paper, designed for distributions characterized by an $h$-moment condition.} Finally, it has been shown that some algorithms based on \emph{sub-sampling} \citep{BESA, chan2020multi, SDA, lbsda} are optimal for SPEF or Gaussian distributions without requiring prior knowledge of the distribution family. 

\rev{Among the various types of policies, \emph{randomized} algorithms --the focus of our study-- are of particular practical interest. We refer to Chapter 1 of \citet{russo18survey}, that provides an overview of industrial applications of Thompson Sampling in recent years. 
Randomizing the pulled arm at each iteration
also plays a crucial role in natural extensions of MAB where rewards are accrued concurrently (see Chapter 6.4 of \citealp{russo18survey}). For instance, a recent work by \citet{GAUTRON24} highlights the importance of randomization in a crop-management problem in agriculture, where the algorithm operates over a limited number of periods (years), but collects data from several independent locations during each period. In this case, randomizing actions at each period is essential to ensure empirical performance within the time constraints.}

\subsection{\rev{Other standard performance metrics}} 

While the focus of this work is on providing a framework for proving instance-dependent regret bounds for randomized bandit algorithms, we also present, for completeness, other performance metrics commonly considered in the literature.

In frequentist analyses, it is also standard to consider the \emph{worst-case} expected regret (or \emph{problem-independent} regret) alongside the instance-dependent regret. While the latter captures the first-order asymptotic behavior of the regret, the former also requires clean bounds on the second-order terms in order to obtain a result close to the optimal $\cO(\sqrt{KT})$ bound (where the $\cO$ notation only captures universal constants), for \emph{any} $K$-armed bandit problem with arms belonging to the family $\cF$. Following the notation from Eq.~\eqref{eq:basicregret}, the worst-case regret is simply defined by 
\[\cR_T = \sup_{(F_1,\dots, F_K) \in \cF^K} \cR_T(F_1,\dots, F_K) \;, \]
and thus reflects the worst possible scaling of the regret in terms of all possible problem parameters.
Hence, when an explicit upper bound on the instance-dependent regret is available for any possible instance, it is possible to derive a corresponding bound on the worst-case regret. Notably, it is well established that an upper bound of $\cR_T = \cO(\sqrt{KT\log(T)})$ is achieved by UCB in the sub-Gaussian case \citep{auer2002finite}, as well as by the Beta-Bernoulli TS algorithm \citep{AgrawalG17}. \cite{AgrawalG17} further obtained $\cO(\sqrt{KT\log(K)}$ for TS with Gaussian prior. \cite{jun2022, Qin23} proved that this bound also holds for Maillard Sampling for sub-Gaussian distributions (which is, as discussed above, MED with the Gaussian divergence). Furthermore, \cite{Qin23} proved that the Bernoulli version of MED/MS satisfies $\cR_T=\cO(\sqrt{\mu^\star(1-\mu^\star)KT\log(K)})$. 
In this viewpoint, the results presented in the next sections are not specially tailored for the problem-independent bound, but can still be used to derive such a bound for some cases as we will see in Section~\ref{subsec::pb_ind}. Moreover, to the best of our knowledge, worst-case bounds have only been achieved for sub-Gaussian (including bounded) families of distributions, while our goal is to cover more diverse (e.g. nonparametric/unbounded) families, for which it is already difficult to obtain logarithmic problem-dependent bounds.
For these reasons, in this work the primary focus is the problem-dependent analysis. Still, we discuss in Section~\ref{subsubsec::discussion} how the theoretical framework that we introduce in Section~\ref{sec::results} can be used to derive problem-independent regret bounds.

Another widespread metric to analyze bandit algorithms, particularly Bayesian policies like Thompson Sampling, is the \emph{Bayesian regret}. Unlike the worst-case regret, this metric considers the expected regret \emph{averaged} over instances drawn from a given prior $p_\cF$. Specifically, it is defined as 
\[\cR_T^\text{Bayes} = \bE_{(F_1,\dots, F_K) \sim (p_\cF)^K}\left[\cR_T(F_1,\dots, F_K)\right]\;.\]
In particular, \cite{Russo14} proposed a simple framework from which most later Bayesian analyses are derived. Unlike the frequentist bounds, Bayesian guarantees can account for the value of the prior (see, e.g., \citealp{Lu19}), which is especially useful in applications where the learner encounters multiple problem instances drawn from the same prior \citep{kveton21meta}.

In the remainder of the paper, we present simple regret analyses applied to the frequentist regret.
Specifically, for Thompson Sampling, we focus on instances of algorithms that use non-informative priors, which do not require prior knowledge of the problem.
For unbounded (e.g.~Gaussian) distributions, non-informative priors are often improper, and Bayesian regret is not defined.
Instead, these policies are directly comparable to frequentist ones and are known to achieve strong instance-dependent guarantees \citep{TS_Emilie, honda14, RiouHonda20}.

\subsection{Outline and contributions} After comparing the results obtained in the literature, a striking observation is that diverse policies (KL-UCB, TS, IMED) are all proved to be asymptotically optimal for almost the same families of distributions (SPEF and bounded distributions in particular). This raises two intriguing questions: 

\begin{center}\emph{What are the fundamental properties shared by these families of distributions? Given a family of distribution, are all these algorithms different variants of the same exploration strategy?} \end{center}

After detailing some notation and algorithms in Section~\ref{sec::prelim}, we propose in Section~\ref{sec::results} an answer to the first question, by exhibiting a common property (divided in four sub-properties) shared by all the families of distributions that we introduced. This set of properties, that we formalize in Assumption~\ref{ass::sufficient_prop}, 
allows us to derive a unified regret analysis for randomized algorithms, depending on upper and lower bounds on the \emph{sampling probability} of each arm. We then confirm in Section~\ref{sec::applications} that it easily captures MED and a 
variant of TS, that we call \TS, proving their asymptotic optimality. \rev{In \TS, posterior sampling is used within an algorithmic structure that allows to derive explicit upper and lower bounds on the sampling probability of each arm in the analysis. Furthermore, \TS{} offers the additional benefit of improved computational efficiency compared to TS in some nonparametric settings, where the cost of posterior sampling increases with time.} 
Our result not only covers some families of distributions $\cF$ where the optimality of MED has not been proved, but also leads to a simplified analysis of some known TS policies 
under a slight change of the algorithms \rev{(the \TS{} framework)}. We also prove that some nonparametric families of distributions defined by $h$-moment conditions satisfy the above sufficient property for the optimality, so MED is the first algorithm with the optimal regret bound for the centered case. This setting has a special technical difficulty since the space of the distributions is not compact nor convex unlike the uncentered case.

To demonstrate the strength of our generic framework, we then propose a new Thompson Sampling algorithm, named $h$-NPTS, for the nonparametric models with uncentered or centered $h$-moment conditions in Section~\ref{sec::NPTS}. We show that its sampling probability can be bounded in a form such that the analysis can be captured within the same framework as MED, which completes the picture that MED and \rev{TS$^\star$} can be interpreted as two variants of the same exploration strategy. 
Furthermore, $h$-NPTS is an efficient alternative to MED and the existing optimal algorithm \citep{agrawal2020optimal} in the uncentered case, as it does not require any optimization procedure. \rev{In Section~\ref{sec::experiments}, we present numerical experiments on synthetic problems that confirm our theoretical findings.}

\section{Preliminaries}\label{sec::prelim}

In this section we introduce some notation to describe randomized bandit algorithms. We then detail MED and \TS, the algorithms that we consider more specifically in this work.

\subsection{Notation and terminology} 

At each step $t$, a bandit algorithm $\pi$ chooses an arm $A_t$ based on
past observations $\cH_{t-1}=(A_1,X_1, \dots, A_{t-1}, X_{t-1})$, where we denote by $X_t$ the reward collected at time $t$. We define the \emph{sampling probability} of each arm $k$ at time $t$ by 
\[p_k^\pi(t) \coloneqq \bP_\pi(A_t = k | \cH_{t-1}) \;.\]
This probability depends on the empirical distribution of each arm $k$, that we denote by $F_k(t)$. We also sometimes use this notation for the empirical cumulative distribution function (cdf) with a slight abuse of notation. 
Throughout the paper, we denote by $\mu_k(t)$ the empirical mean of an arm $k$ at time $t$, \rev{formally defined by $\mu_k(t)=\frac{1}{N_k(t)}\sum_{s=1}^T X_s \ind(A_s=k)$}, and by $\mu^\star(t)=\max_{k \in [K]} \mu_k(t)$ the best empirical mean. We also use for convenience the notation $F_{k, n}$ and $\mu_{k,n}$, denoting respectively the empirical distribution and mean corresponding to the $n$ first observations collected from an arm $k$. \rev{We formally define these quantities as follows: given a sequence of observations collected under the policy $\pi$, we define by $(s_{k,i})_{i \in \N}$ the increasing sequence of (random) times where arm $k$ has been pulled. Hence, $s_{k,i}$ denotes the time of the $i$-th pull of arm $k$. Then, if $s_{k,n}$ exists we can define 
\[\mu_{k,n} = \frac{1}{n}\sum_{i=1}^n X_{s_{k,i}}\;, \quad \text{and} \quad F_{k,n}: x \in \R \mapsto \frac{1}{n}\sum_{i=1}^n \ind(X_{s_{k,i}} \leq x) \;. \]} 

\subsection{Presentation of the randomized policies under study}\label{subsec::presentation}

We now introduce in details the two policies that are the focus of this paper: Minimum Empirical Divergence (MED) and a slight variant of Thompson Sampling (TS) that we call \TS{}. \rev{We also present the main motivations for considering a modified version of TS.}

\subsubsection{Minimum Empirical Divergence}\label{subsec::MED} We consider a general version of the MED algorithm, proposed by \cite{honda11MED}. 
Given a function $D_\pi$ and a non-negative sequence $(a_n)_{n \in \N}$, this policy samples an arm $k$ with probability \begin{equation}\label{eq::MED} p_k^\pi(t) \propto \exp\left(-N_k(t)\frac{D_\pi(F_k(t), \mu^\star(t))}{1+a_{N_k(t)}}\right) \;.\end{equation} If for a family of distributions $\cF$ we choose $a_n=0$ for all $n \in \N$ and $D_\pi\coloneqq\kinf^\cF$, we exactly obtain the original MED algorithm proposed by \cite{honda11MED} for multinomial distributions.
The version of this policy \rev{known as Maillard Sampling (MS) \citep{Maillard11, boutilier20, jun2022}, assuming sub-gaussian distributions,} follows the same definition but with $D_\pi(F_k(t), \mu_\star(t)) = \frac{(\mu^\star(t)-\mu_k(t))^2}{2}$.
In the present paper we consider general functions $D_\pi$, but will try to design them as close as possible to $\kinf^\cF$. We keep the name MED for simplicity, but will carefully specify $D_\pi$ in each context. Furthermore, the tuning of $(a_n)$ will be motivated by our analysis, and will be non-zero only for a few specific cases of unbounded distributions. We detail the implementation of the generic MED in Algorithm~\ref{alg::MED} below. \rev{For a finite set $\cS=(s_1, \dots, s_M)$ and a probability vector $p_\cS=(p_1,\dots, p_M)$, we denote by $\text{Mult}(\cS, p_\cS)$ the multinomial distribution that draws an item at random from $\cS$ with probability $p_\cS$.}

\begin{algorithm}[htbp]
	\SetKwInput{KwData}{Input}
	\SetKwComment{Comment}{\color{purple} $\triangleright$\ }{}
	\SetKwComment{Titleblock}{// }{}
	\KwData{Function $D_\pi$, sequence $(a_n)_{n \in \N}$}
\For{$t\leq K$}{
For $k=t$: draw arm $k$, collect reward $X_k$, set $\cX_k=\{X_k\}$ and $N_k=1$ \\ 
Initialize $\wh \mu_k = X_k$, $\wh F_k: x \in \R \mapsto\ind(X_k \leq x)$. \Comment*[r]{\color{purple} \small Initialize emp.$\!$ means and cdf}
}
\For{$t> K$}{
            Set $\wh \mu^\star = \max_{k \in[K]} \wh \mu_k$ \Comment*[r]{\color{purple} \small Use best empirical mean as reference}
		\For{$k \in \K$}{
                Compute $\wh P_k = \exp\left(-N_k \frac{D_\pi(\wh F_k, \wh \mu^\star)}{1+a_{N_k}}\right)$ \Comment*[r]{\color{purple} \small Compute the relative weights}
		}
		Draw $k_t \sim \text{Mult}\left([K], \frac{\wh P_1}{\sum_{j=1}^K \wh P_j}, \dots, \frac{\wh P_K}{\sum_{j=1}^K \wh P_j} \right)$ \Comment*[r]{\color{purple} \small draw an arm with prob. $\!\!\!\!$ $(p_k(t))_{k \n [K]}$}
  \vspace{1mm}
		Collect the reward $X_t$ from arm $k_t$, add $X_t$ to $\cX_{k_t}$, set $N_{k_t}=N_{k_t}+1$. \\ 
  Update $\wh F_{k_t}: x \in \R \mapsto \frac{1}{N_{k_t}}\sum_{r \in \cX_{k_t}}\ind(r\leq x)$ and $\wh \mu_{k_t} = \frac{1}{N_{k_t}}((N_{k_t}-1) \wh \mu_{k_t} + X_t)$.
	}  
	\SetKwInput{KwResult}{Return}
	\caption{\rev{Minimum Empirical Divergence (MED)}}
	\label{alg::MED}
\end{algorithm}



\subsubsection{Thompson Sampling}\label{subsec::presentation_TS} 
\rev{Following Algorithm 4.2 in \citet{russo18survey}, an iteration of a generic Thompson Sampling algorithm is described as follows: first, a model is sampled using posterior sampling. Then, the algorithm selects the best arm based on the sampled model. In the context of $K$-armed bandits, this results in an \emph{index policy}, where the posterior sampling step provides a sampled expectation $\widetilde{\mu}_k(t)$ for each arm $k \in [K]$, and the arm with the highest sampled expectation, $A_t = \arg\max_{k \in [K]} \widetilde{\mu}_k(t)$, is pulled. In this framework, upper and lower bounding the sampling probability of each arm is challenging because it depends on 
the distribution functions of all $(\wt \mu_k(t))_{k \in [K]}$, since $p_k^\text{TS}(t)=\bP(\wt \mu_k(t)\geq \max_{j\neq k}\wt \mu_j(t))$. To address this, we propose an alternative framework, which we refer to as \TS, to simplify the analysis of the arms' sampling probabilities.}

Assuming that the learner is provided a TS sampler that can return a sampled mean $\widetilde \mu_k(t)$ for each arm given 
their history\rev{, which is the same pre-requisite as for standard TS, an iteration of} the proposed variant \TS{} performs the following steps:
\begin{enumerate} 
\item \rev{Use the best empirical average as a reference threshold $\mu^\star(t)=\max_{k \in [K]} \mu_k(t)$.}
	\item \rev{Sample a parameter \emph{only} for arms that are empirically sub-optimal: for $k \in[K]$, request a sampled mean $\wt \mu_k(t)$ only if $\mu_k(t)< \mu^\star(t)$.}
\item Choose an arm uniformly at random from \[\cA_t=\underbrace{\{k \in [K]: \mu_k(t) =\mu^\star(t)\}}_{\text{Empirical Best arm(s)}} \cup \underbrace{\{k \in [K]: \mu_k(t)< \mu^\star(t) \text{ and } \widetilde \mu_k(t)\geq \mu^\star(t) \}}_{\text{\parbox{25em}{\centering Emp. sub-optimal arms with a posterior sample above the reference threshold}}}.\]
\end{enumerate}
\rev{This structural change makes \TS{} use posterior sampling only for arms for which an exploration strategy is necessary, specifically those that appear sub-optimal based on past observations. Intuitively, comparing with the reference mean $\mu^\star(t)$ mimics the long-term behavior of TS: as $t$ becomes large, both the posterior distribution and the empirical mean of the best arm are expected to be closely concentrated around the true mean (due to a sample size that grows linearly with $t$). Thus, comparing a sample $\wt{\mu}_k(t)$ to $\mu^\star(t)$ should become equivalent to comparing it to a posterior sample of the empirical best arm. Consequently, we can expect the asymptotic behavior of TS and \TS{} to be similar. Section~\ref{sec::experiments} confirms this intuition, with numerical experiments demonstrating that the two algorithms have very close performance even for small horizons, in several settings.} 

In Algorithms~\ref{alg::TS} and \ref{alg::TS_variant} we respectively detail the implementation of TS and \TS{}, with similar notation in order to better highlight the differences between the two algorithms. We assume that a generic sampling distribution $\cS$ is provided: $\cS$ returns the posterior distribution corresponding to some prior (its choice is captured by $\cS$), given an empirical distribution of observations and a sample size. For example, assume that $\cF$ is the set of Bernoulli distributions and $\cS$ is the Beta-Bernoulli sampler with uniform prior. Then, $\cS(\wh F_k, N_k)=\text{Beta}(R_k+1, N_k-R_k+1)$ is a Beta distribution, where $R_k$ is the number of ones collected with arm $k$. This quantity is given with the knowledge of $\wh F_k$. 
\begin{algorithm}[t]
	\SetKwInput{KwData}{Input}
	\SetKwComment{Comment}{\color{purple} $\triangleright$\ }{}
	\SetKwComment{Titleblock}{// }{}
	\KwData{Sampling distribution $\cS$, number of arms $K$}
\For{$t\leq K$}{
For $k=t$: draw arm $k$, collect reward $X_k$, set $\cX_k=\{X_k\}$ and $N_k=1$ \\ 
Initialize $\wh \mu_k = X_k$, $\wh F_k: x \in \R \mapsto\ind(X_k \leq x)$. \Comment*[r]{\color{purple} \small Initialize emp.$\!$ means and cdf}
}
\For{$t> K$}{
		\For{$k \in \K$}{
			Sample $\widetilde \mu_k \sim \cS(\wh F_k, N_k)$\Comment*[r]{\color{purple} \small Sampling step}
		}
		Set $k_t=\aargmax_{k \in \K} \widetilde \mu_k$ \Comment*[r]{\color{purple} \small Draw the arm with the best sampled mean}
		Collect the reward $X_t$ from arm $k_t$, add $X_t$ to $\cX_{k_t}$, set $N_{k_t}=N_{k_t}+1$. \\ 
  Update $\wh F_{k_t}: x \in \R \mapsto \frac{1}{N_{k_t}}\sum_{r \in \cX_{k_t}}\ind(r\leq x)$ and $\wh \mu_{k_t} = \frac{1}{N_{k_t}}((N_{k_t}-1) \wh \mu_{k_t} + X_t)$.
	}  
	\SetKwInput{KwResult}{Return}
	\caption{\rev{Thompson Sampling (TS), usual index policy}}
	\label{alg::TS}
\end{algorithm}

\begin{algorithm}[t]
	\SetKwInput{KwData}{Input}
	\SetKwComment{Comment}{\color{purple} $\triangleright$\ }{}
	\SetKwComment{Titleblock}{// }{}
	\KwData{Sampling distribution $\cS$, number of arms $K$}
\For{$t\leq K$}{
For $k=t$: draw arm $k$, collect reward $X_k$, set $\cX_k=\{X_k\}$ and $N_k=1$ \\ 
Initialize $\wh \mu_k = X_k$, $\wh F_k: x \in \R \mapsto\ind(X_k \leq x)$. \Comment*[r]{\color{purple} \small Initialize emp.$\!$ means and cdf}
}
	\For{$t> K$}{
			Set $\cA
   = \{k \in [K]: \; \wh \mu_k = \max_{j \in [K]}\wh \mu_j\}$ \Comment*[r]{\color{purple} \small Candidate arm(s) = emp. $\!\!\!$ best arm(s)}
	\For{$k \notin \cA$}{
			Sample $\widetilde \mu_k \sim \cS(\wh F_k, N_k)$\Comment*[r]{\color{purple} \small Sampling step only for empirically sub-optimal arms}
			If $\widetilde \mu_k \geq \max_{j\in [K]} \wh \mu_{j}$: add $k$ to $\cA$\Comment*[r]{\color{purple} \small Add arm $k$ to $\cA$ if sampled mean $\geq$ emp.~best}}
		Choose $k_t$ uniformly at random in $\cA$; \\Collect the reward $X_t$ from arm $k_t$, add $X_t$ to $\cX_{k_t}$, set $N_{k_t}=N_{k_t}+1$; \\ 
  Update $\wh F_{k_t}: x \in \R \mapsto \frac{1}{N_{k_t}}\sum_{r \in \cX_{k_t}}\ind(r\leq x)$ and $\wh \mu_{k_t} = \frac{1}{N_{k_t}}((N_{k_t}-1) \wh \mu_{k_t} + X_t)$.
	}
	\SetKwInput{KwResult}{Return}
	\caption{\rev{\TS{} (variant of TS considered in this paper)}}
	\label{alg::TS_variant}
\end{algorithm}


\paragraph{\rev{Analysis of the sampling probabilities under \TS{}}} We now prove the theoretical properties of \TS{} that motivated its introduction. We first introduce a formal name for the probability that an expectation under a sampled model exceeds a given threshold, conditioned on past observations: we call $\bP(\wt \mu_k(t)\geq \mu^\star(t)|\cH_{t-1})$ a \emph{Boundary Crossing Probability} (BCP) in the rest of the paper, and we keep this terminology when $\mu^\star(t)=\mu$ is fixed (which will be always clear from context). 
In the following result, we prove the relation between the BCP and the sampling probabilities under \TS{}. 

\begin{lemma}[\rev{Bounding sampling probabilities with the BCP under \TS{}}]\label{lem::bounds_samp_bcp_TS}
    Under \TS{}, for any $t\in [T]$ and $k\in [K]$ it holds that 
    \[ p_k^{\text{\rm \TS}}(t) \leq \bP(k \in \cA_t|\cH_{t-1}), \]
and that 
\[ p_k^{\text{\rm \TS}}(t) \geq  \frac{1}{2}\max_{k'\neq k}p_{k'}^\text{\rm \TS}(t) \times \bP(k \in \cA_t|\cH_{t-1}) \;.  \]
In addition, $\bP(k \in \cA_t|\cH_{t-1})=\bP(\wt \mu_k(t)\geq \mu^\star(t)|\cH_{t-1}) \coloneqq$ \emph{[BCP]} if $\mu_k(t) < \mu^\star(t)$ , and is equal to $1$ otherwise.
\end{lemma}

\begin{proof}
We first remark that the sampling probability of an arm $k\in[K]$ can be expressed as $p_k^{\text{\TS}} = \bP(k \in \cA_t|\cH_{t-1}) \times \bP(A_t=k|k \in \cA_t, \cH_{t-1})$, by translating the two steps of the policy, where we recall that $\cA_t$ is the candidate set used at time step $t$. The upper bound is direct from that expression, and so we now consider the lower bound. 

We remark that $|\cA_t|=\sum_{j=1}^K \ind(j \in \cA_t)$, and that conditioning on $\{k \in \cA_t\}$ fixes the $k$-th term of the sum to $1$, while the other terms are independent of this event. Thus, omitting the conditioning on $\cH_{t-1}$ in the notation for clarity, the probability to choose $A_t=k$ becomes 
\[\forall k \in [K], \; \bP(A_t=k|k \in \cA_t) = \bE\left[\frac{1}{|\cA_t|}\Bigg| k \in \cA_t\right] = \bE\left[\frac{1}{1+\sum_{j\in [K], j \neq k} \ind(j \in \cA_t)}\right]\;.  \]
Then, for any $k'\neq k$ we introduce the notation $S_{k,k'}(t)=\sum_{j\in [K], j \neq (k, k')} \ind(j \in \cA_t)$, and use this result to further obtain that 
\begin{align*}
 \bP(A_t=k|k \in \cA_t) &= \bE\left[\frac{1+\sum_{j\in [K], j \neq k'} \ind(j \in \cA_t)}{1+\sum_{j\in [K], j \neq k} \ind(j \in \cA_t)} \times \frac{1}{1+\sum_{j\in [K], j \neq k'} \ind(j \in \cA_t)}\right]\\
 & =\bE\left[\frac{\ind(k\in\cA_t) + 1 +S_{k,k'}(t)}{\ind(k'\in\cA_t) + 1 +S_{k,k'}(t)} \times \frac{1}{1+\sum_{j\in [K], j \neq k'} \ind(j \in \cA_t)}\right] \\
 & \geq \frac{1}{2} \times \bE\left[\frac{1}{1+\sum_{j\in [K], j \neq k'} \ind(j \in \cA_t)}\right] \\
& =  \frac{1}{2} \times \bP(A_t=k'|k' \in \cA_t)\;, 
\end{align*}
where in the second line we used that the left-hand term inside the expectation admits a lower bound by $1/2$, that is matched by taking $S_{k,k'}(t)= 0$, $\ind(k\in \cA_t)=0$ and $\ind(k'\in \cA_t)=1$. We can then use this result to conclude the proof, by first obtaining that $p_k^{\text{TS}}(t) \geq \bP(k \in \cA_t|\cH_{t-1}) \times \frac{1}{2} \frac{p_{k'}^\text{\TS}(t)}{\bP(k'\in \cA_t|\cH_{t-1})}$ and using that $\bP(k'\in \cA_t|\cH_{t-1}) \in [0,1]$. 
\end{proof}

\rev{Lemma~\ref{lem::bounds_samp_bcp_TS} implies that upper and lower bounding the sampling probabilities under \TS{} can be reduced to upper and lower bounding the BCP.} Bounds on the BCP are available in the literature for most existing TS policies, and so the analysis presented in this paper will allow us to revisit their proofs and simplify them under this slight algorithmic change. For example, our analysis encompasses the following algorithms: 
\begin{itemize}
	\item \TS{} with a conjugate prior and posterior for SPEF \citep{korda13TSexp, jin22expTS} and under the assumption that the means belong to a finite range $[\mu_0^-, \mu_0^+]$.
	\item Gaussian \TS{} with inverse-gamma priors \citep{honda14} for shape parameter satisfying $\alpha< 0$. 
	\item Non-Parametric \TS{} \citep{RiouHonda20} for bounded distributions with a known range.
\end{itemize}
For completeness, we detail their implementation under the \TS{} framework 
in Appendix~\ref{subsec::old_ts} (Algs.~\ref{alg::TS_SPEF}--\ref{alg::NPTS}). 
The fact that only an upper and lower bound on the BCP is needed to obtain a range on the sampling probability of \TS{} will be particularly helpful to provide a joint analysis of MED and \TS{} in the next section, and by extension to analyze the novel $h$-NPTS algorithm introduced in Section~\ref{sec::NPTS}.

\rev{\paragraph{Potential computational gain with \TS{}} To conclude this presentation, we can also examine the comparison between TS and \TS{} from a computational perspective. As noted in Remark 3 of \citet{RiouHonda20} and in \citet{baudry21DS}, in certain settings, the computation time for posterior sampling increases with the number of observations. They note that this cost might be significantly reduced by a slight change in the algorithm --using the empirical mean instead of a sampled mean for some well-chosen arms-- without altering the theoretical performance, which is the direction that we followed with \TS{}. This is particularly true in non-parametric settings where bootstrapping is used, such as in NPTS \citep{RiouHonda20} for bounded distributions or the $h$-NPTS policy introduced in Section~\ref{sec::NPTS}. 
Specifically, when $T$ is sufficiently large (relative to the gaps) and assuming logarithmic regret, we expect the best arm to be sampled $\Omega(T)$ 
times, while all other arms are sampled only $\cO(\log(T))$ times. Under these conditions, the \TS{} framework can significantly reduce computation time compared to the standard index policy implementation of TS. Section~\ref{sec::experiments} reports on the computational gains observed in practice through experiments.
}
 


\section{\rev{Theoretical results: generic analysis of randomized policies}}\label{sec::results}

In the previous section we introduced some examples of algorithms entering into the scope of our analysis of randomized policies. We now introduce the main theoretical results of this paper, \rev{introducing the general objective of our approach and its potential applications.} 

\paragraph{Summary} In this section, we propose a generic recipe to derive a logarithmic regret upper bound for a randomized policy $\pi$. It consists in
\begin{enumerate}
    \item upper and lower bounding the arms' \emph{sampling probabilities} under the policy $\pi$ as a function of $D_\pi$, for some divergence $D_\pi$, and
    \item proving a set of generic properties of $D_\pi$ for the family of distributions $\cF$.
\end{enumerate}
After detailing the aforementioned conditions, we prove that they are sufficient to obtain logarithmic regret bounds for the policy $\pi$ on $\cF$. More precisely, our main result consists in showing that under these conditions, the following bound can be obtained: 
\begin{equation}\label{eq::target_bound}
\forall c>0, \;\forall k: \Delta_k>0, \; \bE[N_k(T)] \leq (1+c) \frac{\log(T)}{D_\pi(F_k, \mu^\star)} + o_c(\log(T)), \end{equation}
where we use the notation $o_c(\log(T))$ to represent poly-logarithmic terms in $T$ with a scaling smaller than $\log(T)$, along with problem-dependent constants that depend on $c$. These constants are finite for a fixed $c$, but can become large as $c$ goes to $0$ (infinite for $c=0$). Proving a bound of the form of Eq.~\eqref{eq::target_bound} for some function $D_\pi$ is sufficient to guarantee an \emph{instance-dependent} logarithmic regret for the policy $\pi$. Furthermore, if the bound is proved for $D_\pi(F_k,\mu^\star)=\kinf^\cF(F_k, \mu^\star)$, it holds that $\pi$ is an \emph{asymptotically optimal} policy.

\paragraph{Outline} In the next Section~\ref{subsec::generic_cond}, we start by formalizing the two assumptions that we propose to achieve this objective. In Section~\ref{subsec::main_result} we formally state the main theoretical result of this work, and discuss its proof. Then, in Section~\ref{subsubsec::discussion} we compare our main results, Theorem~\ref{th::main_result} and Lemma~\ref{th::first_bound}, with other generic theorems from the literature. We also discuss how the analysis that we introduce can lead to problem-independent guarantees. We leave the applications of these results to specific policies and families of distributions for Section~\ref{sec::applications}.

\subsection{Proposed conditions -- what to verify in order to use the recipe}\label{subsec::generic_cond}

Let us consider a randomized policy $\pi$ and any family of distributions $\cF$. We now detail the conditions that we propose to check, as part of the generic procedure that we introduced to derive the regret guarantees of $\pi$ on $\cF$. 

\subsubsection{Upper and lower bound on the arms' sampling probabilities}\label{subsubsec::sampling}

We start with a condition that strictly depends on the randomized policy $\pi$, which consists in upper and lower bounding the arms' sampling probabilities as a function of some divergence $D=D_\pi$. This divergence takes as arguments an empirical cumulative distribution function, and a scalar. In the rest of the paper we denote by $\cP$ the space of all possible cumulative distribution functions on $\R$: for any $F\in \cP$, it holds that $F$ is non-decreasing, $\lim_{x\to -\infty}F(x)=0$ and $\lim_{x\to +\infty} F(x)=1$. Additionally, the $\cO$ and $o$ notation in the following statement of the assumption hides universal constants only.

\begin{assumption}[Upper and lower bound on $p_k^\pi(t)$] \label{ass::sampling_prob}
$\pi$ is a randomized bandit algorithm, and there exists a function $D: \cP \times \R \mapsto \R^+$, \textbf{positive} sequences $(c_n)_{n \in \N}, (C_n)_{n \in \N}$ and a \textbf{non-negative} and \textbf{non-increasing} sequence $(a_n)_{n \in \N}$ 
such that, for any $k$ in $[K]$ and $t\in[T]$, $p_k^\pi(t)$ admits the upper bound
\begin{equation}\label{eq::sampling_prob_ub} 
\forall t \in [T], \; p_k^\pi(t) \leq C_{N_k(t)} \exp\left(-N_k(t) \frac{D(F_k(t),\mu^\star(t))}{1+a_{N_k(t)}}\right),
\end{equation}
where the sequence $(C_n)_{n\in \N}$ satisfies $\log(C_n)=\cO(n^\alpha)$ for some $\alpha<1$. \rev{Furthermore, $p_k^\pi(t)$ is also lower bounded as follows,
\begin{equation}\label{eq::sampling_prob_lb} 
\forall k'\in [K], k'\neq k: \; p_k^\pi(t) \geq p_{k'}^\pi(t) \times c_{N_k(t)}^{-1} \exp\left(-N_k(t) \frac{D(F_k(t),\mu^\star(t))}{1+a_{N_k(t)}}\right) \;,
\end{equation}
with $c_n = o(e^{n \beta})$ for any $\beta>0$.} Additionally, $(a_n^{-1})_{n \in \N}$ is at most polynomial in $n$.
\end{assumption}
The general interpretation of this assumption is that, in our framework, the sampling probability of each arm must be driven by a term that is exponentially decreasing in the product of two parameters: the number of observations for that arm, and an appropriate divergence between its empirical distribution and the best empirical mean given all observations collected at the current time step. This last term, $\exp\left(-N_k(t) \frac{D(F_k(t),\mu^\star(t))}{1+a_{N_k(t)}}\right)$, appears in both the upper and lower bound on $p_k(t)$ and aligns with the general definition of MED policies presented in Section~\ref{sec::prelim} (Eq.~\ref{eq::MED}). Consequently, our analysis encompasses policies that are asymptotically equivalent to a MED policy, defined by some divergence. While this result looks direct after this discussion, we now formally prove that MED satisfies Assumption~\ref{ass::sampling_prob}, assuming a first property of divergences that we will formalize in the next section. 
\begin{lemma}\label{lem::ass_prob_MED}
Let $D$ be a divergence satisfying $D(F_k(t), \mu_k(t))=0$ for all $k\in [K]$. Specifying $D$ and a non-negative sequence $(a_n)_{n\in \N}$ as its parameters, MED satisfies Assumption~\ref{ass::sampling_prob}, with $c_n=C_n=1$ for all $n\in \N$. 
\end{lemma}
\begin{proof}
At each round, at least one arm satisfies $D(F_k(t), \mu^\star(t))=0$, so it holds that 
\[\sum_{k=1}^K \exp\left(-N_k(t)\frac{D(F_k(t), \mu^\star(t))}{1+a_{N_k(t)}}\right) \geq 1\;, \] which proves the upper bound on $p_k^{\text{MED}}(t)$. For the lower bound, we can write that
\begin{align*}
       \forall k'\neq k,\; p_k^\text{MED}(t) &= p_{k'}^\text{MED}(t) \times e^{N_{k'}(t) \frac{D\left(F_{k'}(t), \mu^\star(t)\right)}{1+a_{N_k(t)}}} \times e^{-N_k(t)\frac{D(F_k(t), \mu^\star(t))}{1+a_{N_k(t)}}} \\
        &\geq p_{k'}^\text{MED}(t) \times \exp\left(-N_k(t)\frac{D(F_k(t), \mu^\star(t))}{1+a_{N_k(t)}}\right)  \;,
    \end{align*}
since $D$ is non-negative, so we obtain the desired result with constants $c_n=1$.
\end{proof}

We now consider Thompson Sampling, under the \TS{} framework introduced in Section~\ref{subsec::presentation_TS}. As a direct application of Lemma~\ref{lem::bounds_samp_bcp_TS}, we can prove the assumption under additional conditions on its \emph{Boundary Crossing Probabilities}. More precisely, we obtain the following corollary.

\begin{corollary}[of Lemma~\ref{lem::bounds_samp_bcp_TS}\label{lem::ass_prob_TS}, Assumption~\ref{ass::sampling_prob} in the \TS{} framework]
\TS{} satisfies Assumption~\ref{ass::sampling_prob} with sequences $\left(\frac{c_n}{2}\right)_{n\in\N}$, $(C_n)_{n\in\N}$, $(a_n)_{n\in\N}$, and a divergence $D$ if the Boundary Crossing Probability satisfies 
\begin{equation}\label{eq::sampling_prob} 
\frac{\bP\left(\wt \mu_k(t)\geq \mu^\star(t)|\cH_{t-1}\right)}{q_k^\text{D}(t)}  \in [c_{N_k(t)}^{-1}, C_{N_k(t)}], \text{ with }
 q_k^{\text{D}}(t) = e^{-N_k(t) \frac{D(F_k(t),\mu^\star(t))}{1+a_{N_k(t)}}}, 
\end{equation}
and all sequences satisfy the requirements of Assumption~\ref{ass::sampling_prob}.
\end{corollary}
The result is straightforward from Lemma~\ref{lem::bounds_samp_bcp_TS}, since its formulation is already very close to the conditions in Assumption~\ref{ass::sampling_prob}.

\paragraph{Comments} The requirements we impose on the sequences $(c_n)_{n\in \N}$, $(C_n)_{n \in \N}$ and $(a_n)_{n \in \N}$ offers some flexibility to this definition, though in most cases considered in this paper, these sequences are at most polynomial. Additionally, it is important to note that the theoretical result presented in Section~\ref{subsec::main_result} is actually proven under the more general Assumption~\ref{ass::relaxed_sp} (presented in Appendix) on the sampling probabilities, which follows the same intuition but has a more complicated formulation. Indeed, Assumption~\ref{ass::relaxed_sp} leverages the following observations from the analysis: first, the upper bound needs to hold only when $F_k(t)$ is already in some neighborhood of $F_k$. Then, regarding the lower bound we observe that the problem-dependent analysis is still valid if (1) it holds only after a large enough sample size, and (2) the exponent of the bound is not tight but an arbitrary precision can be attained. This relaxation is useful to analyze \TS{} policies, when bounds on the BCP are difficult to obtain (e.g. for the policy presented in Section~\ref{sec::NPTS}). For the sake of clarity, we chose to present Assumption~\ref{ass::sampling_prob} in the main body of the paper.

Lastly, we can comment on the form of the lower bound in Eq.~\eqref{eq::sampling_prob_lb} and why it involves the sampling probabilities of multiple arms. The motivation is simply technical, because the analysis involves a ratio of sampling probabilities. Compared to a direct lower bound on $p_k^\pi(t)$, this formulation allows to obtain tighter values of $c_n$ by avoiding normalization effects (essentially, some terms would be multiplied by $K$ in the analysis). This tighter characterization of $c_n$ is interesting when analyzing second-order terms of the regret bounds, that we discuss in Section~\ref{subsubsec::discussion}.

\subsubsection{Generic properties required for a divergence $D$ on $\cF$}\label{subsubsec::properties}

We now establish a set of generic properties, that only depend on the definition of a divergence $D$ applied to distributions belonging to the family $\cF$. These properties are motivated by existing bandit analyses in the literature (see Section~\ref{subsec::literature}), 
after abstracting some salient features that seems to be shared by the families of distributions for which we know asymptotically optimal bandit algorithms.
We group these generic features in Assumption~\ref{ass::sufficient_prop}, that we now introduce. 

\begin{assumption}[Properties of a divergence $D$ on the family $\cF$]\label{ass::sufficient_prop}
Let $F\in \cF$ be any distribution of mean $\mu_F$. Let us denote by $F_n$ the empirical distribution\footnote{If $\cF$ is parametric we assume that $D$ first maps $F_n$ to the distribution of $\cF$ corresponding to the Maximum Likelihood Estimator for its parameters.
} of $n$ i.i.d. samples drawn from $F$, and by $\mu_n$ their average. The divergence $D: \cP \times \R \mapsto \R^+$ admits the following properties:
\begin{itemize}
\item \textbf{Continuity and monotonicity (A1):} $D$ is continuous in its second argument, the mapping $x \mapsto D(F, x)$ is non-decreasing, and $D(F, \mu)=0$ for any $\mu\leq \mu_F$.
\item \textbf{Lower deviation (A2):} for any $\mu > \mu_F$, and $\delta>0$, \rev{there exists a constant $A_\delta<+\infty$ such that} 
\[\sum_{n=1}^{+\infty} \bP\left(D(F_{n}, \mu) \leq D(F, \mu)-\delta\right) \leq A_\delta 
\;.\]
\item \textbf{Separability of mean levels (A3):} For any $\epsilon>0$, there exists $\delta_{\epsilon, \mu_F}>0$ such that \rev{for any $G \in \cF$ satisfying $\mu_G \leq \mu_F-\epsilon$} it holds that
		\[D(G, \mu_F)-D(G, \mu_F-\epsilon)\geq \delta_{\epsilon, \mu_F}\;.\] 
\item \textbf{Upper deviation (A4):} \rev{There exists a sequence $(\alpha_n)_{n \in \N}$, polynomial in $n$, such that for any $x\geq0 $ it holds that}
\[\bP(D(F_{n}, \mu_F)> x) \leq \alpha_{n}\times e^{-nx}. \] 
\end{itemize}
\end{assumption}

These properties might look a bit heavy, but their level of generality is useful to tackle all the settings covered in this paper, \rev{in particular the non-parametric settings}. In the following, we use the example of bounded distributions to demonstrate that, for certain families and divergences, the assumption can be straightforward to verify, while also motivating the general form of its statement. We then provide further intuition on the interpretation of each condition (A1)--(A4).

\rev{\paragraph{Example: distributions supported on $[0,1]$} In many usual cases the assumption} is relatively easy to verify. For example, let us consider bounded distributions with range $[0,1]$ and choose $D(F_n, \mu)=2(\mu-\mu_n)^2\ind(\mu\geq \mu_n)$, as would be done in Maillard Sampling \citep{Maillard11, jun2022}. Then, (A1) is direct from the definition of $D$, and a simple computation proves (A3) with $\delta_{\epsilon, \mu_F}=2\epsilon^2$. The remaining properties can then be obtained with minimal effort using Hoeffding's inequality. 
We prove (A4) by writing that
\[\bP(D(F_n, \mu_F)>x) = \bP(\mu_n < \mu-\sqrt{x/2}) \leq e^{-2 n(\sqrt{x/2})^2} = e^{-nx} \;, \]
and for (A2) it is easy to verify that, for any $\mu\geq \mu_F$ and $\delta \in (0, D(F,\mu)]$,
\begin{align*}
    \bP(2(\mu-\mu_n)^2\ind(\mu\geq \mu_n)\leq 2(\mu-\mu_F)^2-\delta) \leq \bP\left(\mu_n \geq \mu_F + \sqrt{\frac{\delta}{2}}\right) \leq e^{-n\delta} \;.
\end{align*}
Then, summing for all values of $n$ provides a constant $A_\delta = \frac{1}{1-e^{-\delta}} \sim \frac{1}{\delta}$, which proves (A2). 

\rev{Moreover, we motivate the generic formulation of the assumptions by considering the same bounded setting, but for the divergence $D=\kinf^\cF$ corresponding to the lower bound (Eq.~\eqref{eq::asymp_opt_def}) instead of the divergence discussed above. 
In that case, the level of generality allowed in each condition is necessary. First, proving (A2) requires to use some continuity arguments (see Appendix~\ref{app::bounded_P15}) and thus it is hard to make $A_\delta$ explicit in $\delta$ (although it indeed depends solely on $\delta$). Then, Lemma 13 in \citet{honda11MED} provides $\delta_{\epsilon, \mu_F}=\frac{\epsilon^2}{2(\mu_F-\epsilon)(B-\mu_F)}$ in (A3), which depends both on $\mu_F$ and $\epsilon$. Finally, using Lemma 6 in \citet{KL_UCB} we get $\alpha_n=e(n+2)$ when proving (A4), which thus depends on $n$. 
}
\paragraph{Additional intuitions} In some sense, we can say that Assumption~\ref{ass::sufficient_prop} generalizes the properties that are usually derived from concentration inequalities on the empirical means, for simple choices of $D$ as in our example. This generalization is particularly useful for the analysis of non-parametric models, as the last case presented above, where $\kinf^\cF$ cannot be expressed as a function of empirical means alone.

Intuitively, the functions $D(F,\mu)$ of interest measure how ``far'' a distribution $F$ is to the class of distributions from $\cF$ with expectation larger than $\mu$.
In the rest of the paper we will typically consider $D$ close to $\kinf^\cF$ for the families considered, since this choice can lead to instance-dependent optimality, but other choices are possible (e.g. multiples of the squared gap or Bernoulli KL divergence for bounded distributions). Regarding the properties that we expect from $D$, we can first remark that (A2) is slightly weaker than assuming that $D$ is continuous in the first argument w.r.t. some metric and that each $F_{k, n}$ converges ``fast enough" to some neighborhood of $F_k$. It ensures that the expected number of sample sizes for which the empirical distribution of a sub-optimal arm is ``too close to $\mu^\star$'' is finite. On the contrary, (A4) 
guarantees that an empirical distribution that would make the optimal arm appear sub-optimal (its expectation is under-estimated) has a low probability of being very ``far'' from the true distribution.
Finally, (A3) ensures that divergence $D$ allows to properly separate the different mean levels, so that it can be used to distinguish an arm of mean $\mu$ from an arm of mean $\mu-\epsilon$ for any $\epsilon>0$. 

\begin{remark}[\rev{Mean estimates}] While the Assumptions~\ref{ass::relaxed_sp} (relaxation of Assumption~\ref{ass::sampling_prob}) and~\ref{ass::sufficient_prop} are sufficient to avoid additional assumptions on the concentration of the mean estimates $(\mu_{k,n})_{k\in [K], n \in [N]}$,
it is reasonable to assume that better estimates might lead to better practical performance. For that reason, we discuss in Appendix~\ref{app::A5_heavy} the properties of some robust estimates, that can be used in the case that $\cF$ contains heavy-tailed distributions.
\end{remark}

\subsection{Regret bound under the generic assumptions}\label{subsec::main_result}

Now that we proposed precise generic assumptions on the randomized algorithm and the family of distribution, we can present the main result of this work, which is a formalization of the objective that we set with Equation~\eqref{eq::target_bound}. 

\begin{theorem}\label{th::main_result}
Let $\cF$ be a family of distributions and $(F_1,\dots, F_K) \in \cF^K$ be a MAB. 
Let $\pi$ be a randomized policy satisfying Assumption~\ref{ass::sampling_prob} or Assumption~\ref{ass::relaxed_sp} (upper and lower bounds on the sampling probabilities), for a function $D_\pi$ satisfying Assumption~\ref{ass::sufficient_prop} on $\cF$ (generic properties of a divergence), and for sequences $(a_n, c_n, C_n)_{n\in \N}$. 

Then, for any $c > \displaystyle \lim_{n\to \infty} a_n$, if $(a_n)_{n\in \N}$ is \textbf{positive} 
it holds that 
	\[\forall k: \Delta_k>0, \quad \bE[N_k(T)] \leq (1+c) \frac{\log(T)}{D_\pi(F_k, \mu^\star)} + o_c(\log(T)) \;.\]
Furthermore, the bound holds for $a_n=0$ (for any value of $n$), if $D_\pi$ is upper bounded as follows: for any $\mu \leq \mu^\star$ there exists a constant $D_\mu^+\in \R$ such that for any $F\in \cF$ it holds that $D_\pi(F, \mu) \leq D_\mu^+$. 
\end{theorem}
Theorem~\ref{th::main_result} confirms the general methodology to analyze randomized policies, presented at the beginning of this section: proving that the policy $\pi$ satisfies Assumptions~\ref{ass::sampling_prob} (or~\ref{ass::relaxed_sp}, in the appendix), and \ref{ass::sufficient_prop} on the problem considered is sufficient to deduce a logarithmic regret bound. Conversely, this result also provides a general methodology to derive new theoretically sound bandit algorithms for a given family $\cF$, by first finding a suitable function $D_\pi$ that makes Assumption~\ref{ass::sufficient_prop} hold with $D\coloneqq D_\pi$, and then designing a randomized algorithm $\pi$ satisfying Assumption~\ref{ass::sampling_prob} with this same divergence, the easiest way being to implement the corresponding MED policy. 

\rev{Details of the proof of Theorem~\ref{th::main_result} can be found in Appendix~\ref{sec::generic_proof}, where we first prove an intermediate regret bound in Appendix~\ref{app::proof_th1}, and then complete the proof in Appendix~\ref{app::proof_th2}. In the next section, we propose a proof sketch in the context where Assumption~\ref{ass::sampling_prob} holds, with a simplification of the intermediate result in that case. We then compare this result with other generic regret bounds that can be found in the literature in Section~\ref{subsubsec::discussion}.} 

\subsubsection{Proof sketch of Theorem~\ref{th::main_result}} 

In a first part, we only use Assumptions~\ref{ass::sampling_prob} and property (A1) of Assumption~\ref{ass::sufficient_prop} to exhibit the first-order logarithmic term, sub-logarithmic terms, and a last term independent of $T$ that will require the additional assumptions to be further analyzed. 

\begin{lemma}\label{th::first_bound}
Let $\pi$ be a policy satisfying Assumption~\ref{ass::sampling_prob} for a function $D_\pi$ satisfying (A1) of Assumption~\ref{ass::sufficient_prop}. Then, for any $c > \displaystyle \lim_{n\to \infty} a_n$ and for any $\epsilon>0$, it holds for any sub-optimal arm $k$ that 
\begin{equation}\label{eq::th_2}\bE[N_k(T)] \leq  (1+c) \frac{\log(T)}{D_\pi(F_k, \mu_1)} + B(c, \epsilon) + o(\log(T))\;,\end{equation}
where the terms captured in $o(\log(T))$ can be upper bounded independently of $c$, and with \begin{align*}
	B(c, \epsilon) = &\underbrace{\sum_{n=1}^{+\infty} \bP\left(D_\pi(F_{k,n}, \mu_1-\epsilon)\frac{1+c}{1+a_n} \leq D_\pi(F_k, \mu_1)\right)}_{b_1} \\ 
 + &\underbrace{\sum_{n=1}^{+\infty} c_n \bE\left[\ind(\mu_{1,n}\leq \mu_1-\epsilon) e^{n \frac{D_\pi(F_{1,n}, \mu_1-\epsilon)}{1+a_n}}\right]}_{b_2}.
\end{align*}
\end{lemma}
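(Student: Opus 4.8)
The plan is to decompose the expected number of pulls of a sub-optimal arm $k$ by a standard argument that splits $N_k(T)$ according to whether the empirical distribution of arm $k$ already ``looks bad enough'' and whether the empirical best mean $\mu^\star(t)$ is close to the true optimum $\mu_1$. Concretely, I would write $\bE[N_k(T)] = \sum_{t=1}^T \bP(A_t=k)$ and, conditioning on $\cH_{t-1}$, replace $\bP(A_t=k\mid\cH_{t-1})$ by the sampling probability $p_k^\pi(t)$, which by Assumption~\ref{ass::sampling_prob} is at most $C_{N_k(t)}\exp(-N_k(t)D_\pi(F_k(t),\mu^\star(t))/(1+a_{N_k(t)}))$. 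The key event to isolate is $\cE_t=\{\mu^\star(t)\ge \mu_1-\epsilon\}$: on the complement, $\mu^\star(t)<\mu_1-\epsilon$ forces the empirical mean of the true optimal arm~$1$ to be below $\mu_1-\epsilon$, and summing over such $t$ gives a contribution controlled by $\sum_n (c_n-1)\bP(\mu_{1,n}\le \mu_1-\epsilon)$ after I also account for the times when arm~1's sampling probability is artificially small (this is where the factor $c_n$, i.e.\ the \emph{lower} bound on $p_1^\pi(t)$, enters — if arm~1 is under-sampled it cannot be pulled, so some other arm, possibly $k$, is; the $c_n$ bookkeeping converts this into the stated series). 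On $\cE_t$, monotonicity (A1) gives $D_\pi(F_k(t),\mu^\star(t))\ge D_\pi(F_k(t),\mu_1-\epsilon)$, so the per-step bound becomes $C_{N_k(t)}\exp(-N_k(t)D_\pi(F_{k,N_k(t)},\mu_1-\epsilon)/(1+a_{N_k(t)}))$.

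Next I would pass from a sum over $t$ to a sum over the number of pulls $n=N_k(t)$, which is legitimate since $N_k$ increases by one each time arm $k$ is pulled; this is the usual ``re-indexing'' step and yields $\bE[N_k(T)]\le \sum_{n\ge1}\bE[\min(1,C_n e^{-nD_\pi(F_{k,n},\mu_1-\epsilon)/(1+a_n)})]+(\text{the }c_n\text{-terms above})$. Now I would fix a threshold $n_0$ (depending on $\epsilon$ and $c$): for $n\ge n_0$ split the expectation according to whether $D_\pi(F_{k,n},\mu_1-\epsilon)\ge \frac{1+a_n}{1+c}D_\pi(F_k,\mu_1)$ or not. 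On the ``good'' part, $C_n e^{-nD_\pi(F_{k,n},\mu_1-\epsilon)/(1+a_n)}\le C_n e^{-nD_\pi(F_k,\mu_1)/(1+c)}$, and since $\log C_n=\cO(n^\alpha)$ with $\alpha<1$, summing this geometric-type tail over $n$ gives exactly $(1+c)\log(T)/D_\pi(F_k,\mu_1)+o(\log T)$ once I cut the sum at the natural scale $n\sim \frac{(1+c)\log T}{D_\pi(F_k,\mu_1)}$ (below that scale bound the summand by $1$, above it by the exponential). On the ``bad'' part, the summand is bounded by the indicator probability $\bP(D_\pi(F_{k,n},\mu_1-\epsilon)\frac{1+c}{1+a_n}\le D_\pi(F_k,\mu_1))$ — but I must be careful that the constant $c$ in the statement is required only to exceed $\limsup_n a_n$, so for $n$ large enough $\frac{1+c}{1+a_n}>1$ and this is a genuine lower-deviation-type event; its series over $n$ is the first term of $B(c,\epsilon)$.

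Finally, the middle term $\sum_n c_n\bE[e^{nD_\pi(F_{1,n},\mu_1-\epsilon)/(1+a_n)}-1]$ of $B(c,\epsilon)$ comes from a more careful treatment of the times $t$ with $\cE_t$ holding \emph{and} arm~1 being the current empirical leader (or near-leader): there one uses the \emph{lower} bound $p_1^\pi(t)\ge c_{N_1(t)}^{-1}e^{-N_1(t)D_\pi(F_1(t),\mu^\star(t))/(1+a_{N_1(t)})}$ and $D_\pi(F_1(t),\mu^\star(t))\le D_\pi(F_{1,N_1(t)},\mu_1-\epsilon)$ (again by monotonicity, since on $\cE_t$ it may be that $\mu^\star(t)$ is only slightly above $\mu_1-\epsilon$; one actually wants $\mu^\star(t)$ close to $\mu_1$, so this is the delicate inequality) to lower-bound the number of times arm~1 is pulled, hence upper-bound the ``wasted'' pulls on $k$; rearranging produces the stated exponential-moment series. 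The main obstacle I anticipate is precisely this coupling between arm~$k$ and arm~$1$: turning ``arm~1 is not sampled enough'' into a usable bound on $\bE[N_k(T)]$ requires summing $p_k^\pi(t)\le 1-p_1^\pi(t)$-type inequalities carefully and re-indexing by $N_1$ rather than $N_k$, which is the kind of argument made precise in Maillard-sampling analyses; getting the constants so that the three series assemble into exactly $B(c,\epsilon)$ with the claimed $(1+c)$ leading factor is the crux. Everything else (the $\cO(n^\alpha)$ tail summation, the $o(\log T)$ collection) is routine.
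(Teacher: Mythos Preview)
Your overall decomposition into the event $\cE_t=\{\mu^\star(t)\ge\mu_1-\epsilon\}$ and the further split according to whether $D_\pi(F_{k,n},\mu_1-\epsilon)\ge\frac{1+a_n}{1+c}D_\pi(F_k,\mu_1)$ matches the paper's (F1)/(F2)/(F3) scheme, and your treatment of the main $\log T$ term and the first series in $B(c,\epsilon)$ is correct. However, there are two linked gaps in how you handle the remaining two series.

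First, you misassign the middle term $\sum_n c_n\,\bE[e^{nD_\pi(F_{1,n},\mu_1-\epsilon)/(1+a_n)}-1]$: in the paper it arises from $\bar\cE_t$, not from $\cE_t$. Both the middle and the third series in $B(c,\epsilon)$ come from the same term (F3), which is the sum over $\bar\cE_t$. This matters because the monotonicity step you flag as ``delicate'', namely $D_\pi(F_1(t),\mu^\star(t))\le D_\pi(F_{1,N_1(t)},\mu_1-\epsilon)$, is only valid when $\mu^\star(t)<\mu_1-\epsilon$, i.e.\ on $\bar\cE_t$; on $\cE_t$ it goes the wrong way. So your proposed argument for the middle term cannot work as stated.

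Second, the mechanism that allows re-indexing by $N_1$ rather than $N_k$ on $\bar\cE_t$ is not ``$p_k^\pi(t)\le 1-p_1^\pi(t)$-type inequalities''. The paper (following \cite{jun2022}) writes
\[
p_k^\pi(t)\;\le\;\frac{1-p_1^\pi(t)}{p_1^\pi(t)}\,p_1^\pi(t)\;=\;\Bigl(\tfrac{1}{p_1^\pi(t)}-1\Bigr)\,\bE[\ind(A_{t+1}=1)\mid\cH_{t-1}],
\]
so that $\sum_t \ind(\bar\cE_t)\,p_k^\pi(t)\le \bE\bigl[\sum_t \ind(A_{t+1}=1,\bar\cE_t)(1/p_1^\pi(t)-1)\bigr]$, which then re-indexes by $N_1(t)=n$. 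After that, the lower bound $p_1^\pi(t)^{-1}\le c_n\exp(nD_\pi(F_{1,n},\mu_1-\epsilon)/(1+a_n))$ (valid on $\bar\cE_t$ by (A1)) and the algebraic split $c_n e^x-1=c_n(e^x-1)+(c_n-1)$ produce exactly the middle and third series. Without the multiply-and-divide-by-$p_1^\pi(t)$ step you cannot convert the sum over $t$ into a sum over pulls of arm~1, so this is the missing idea.
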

In Appendix~\ref{app::proof_th1}, we prove a more general version of Lemma~\ref{th::first_bound} under Assumption~\ref{ass::relaxed_sp}, from which Lemma~\ref{th::first_bound} can be deduced. While the general proof scheme is inspired by the analysis of Maillard Sampling by \citet{jun2022} and Thompson Sampling by \citet{AgrawalG17}, our main effort consists in adapting the arguments to accommodate the broader assumptions that we consider. This adaptation enables us to obtain an explicit bound on the additional $o(\log(T))$ term in \eqref{eq::th_2}.
In fact, its scaling in $T$ only depends on $(C_n)_{n\in \N}$. We refer to Remark~\ref{rem::scaling} for further discussion, where we notably obtain the following simplification for the case where $(C_n)_{n \in \N}$ is upper bounded by a constant, 
\begin{equation}\label{eq::bound_constant_Cn}
\max_{n\in \N}C_n \leq C < +\infty \quad \Longrightarrow \quad \bE[N_k(T)]\leq (1+c)\frac{\log(T\cdot D_\pi(F_k, \mu_1)) + C}{D_\pi(F_k, \mu_1)} + B(c,\epsilon)\;. 
\end{equation}

\rev{\paragraph{Interpretation of $B(c,\epsilon)$} This term encapsulates the elements in the upper bound that require to carefully analyze the properties of $D_\pi$ on the family of distributions $\cF$, and hence motivated the design of properties (A2)--(A4) in Assumption~\ref{ass::sufficient_prop}. Intuitively, $B(c,\epsilon)$ captures the regret caused by \emph{low-probability events}, as typically considered in the literature. Indeed, the first term (denoted by $b_1$) essentially corresponds to regret caused by \emph{over-performance} of a sub-optimal arm, which is the counterpart of events of the form $\left\{\mu_{k,n} \geq \mu_k + \alpha \Delta_k\right\}$ (for some constant $\alpha>0$) in some other analyses. On the other hand, the second term is an upper bound of the expected regret caused by potential \emph{under-performance} of the best arm. Its derivation critically relies on the lower bound in Assumption~\ref{ass::sampling_prob}, which permits to upper bound ratios of sampling probabilities. We will discuss in the next section how these two terms relate to other generic analyses proposed in the literature, and we now discuss how Assumptions~\ref{ass::sufficient_prop} (A2--A4) allows us to further upper bound $B(c, \epsilon)$ by instance-dependent constants, and thus complete the proof of Theorem~\ref{th::main_result}.}

\paragraph{\rev{Further upper bounding $B(c,\epsilon)$}} We start with the term $b_1$, and demonstrate that (A1) and (A2) in Assumption~\ref{ass::sufficient_prop} 
allows to upper bound it by a constant. For simplicity, we discuss the case when $a_n=0$, leaving the other cases for the appendix. Then, the first step consists in, for all $n\in \N$, writing the probability ($n$-th term of $b_1$) as 
\[\bP\left( D_\pi(F_{k,n}, \mu_1-\epsilon) \leq D_\pi(F_{k}, \mu_1-\epsilon) - \left(D_\pi(F_{k}, \mu_1-\epsilon)-\frac{D_\pi(F_k, \mu_1)}{1+c} \right) \right) \;. \]
It is then natural to try to use (A2) with $\mu=\mu_1-\epsilon$ and for some parameter $\delta$ satisfying \begin{align}
\delta \leq \left( D_\pi(F_{k}, \mu_1-\epsilon)-\frac{D_\pi(F_k, \mu_1)}{1+c}\right)_+.
\label{req_eps_delta}
\end{align}
If we set $\epsilon=0$ then we have $D_\pi(F_{k}, \mu_1-\epsilon)-\frac{D_\pi(F_k, \mu_1)}{1+c}=\frac{cD_\pi(F_k, \mu_1)}{1+c}>0$.
Therefore, thanks to the continuity of $D_\pi$ in the second argument given in (A1), for any $\delta\in\left(0, \frac{cD_\pi(F_k, \mu_1)}{1+c}\right)$ there exists
$\epsilon>0$ (depending on $\delta$ and $c$) satisfying \eqref{req_eps_delta}.
For the rest of the analysis we fix $(\epsilon, \delta)$ satisfying the above relation, guaranteeing both that $\epsilon>0$ and $\delta>0$.

We now sketch how assumptions (A3) and (A4) lead to an upper bound of $b_2$. We first exploit the event $\{\mu_{1,n}\leq \mu_1-\epsilon\}$ to upper bound $D_\pi(F_{1,n}, \mu_1-\epsilon)$ by $D_\pi(F_{1,n}, \mu_1)-\delta_{\epsilon, \mu_1}$ in the expectation. We can thus obtain that 
\begin{align*}
    b_2 & \coloneqq \sum_{n=1}^{+\infty} \bE\left[\ind(\mu_{1,n}\leq \mu_1-\epsilon) e^{n \frac{D_\pi(F_{1,n}, \mu_1-\epsilon)}{1+a_n}}\right] \\
    & \leq \sum_{n=1}^{+\infty}  c_n e^{- n \frac{\delta_{\epsilon, \mu_1}}{1+a_n}} \times \bE\left[\ind(\mu_{1,n}\leq \mu_1-\epsilon) e^{n \frac{D_\pi(F_{1,n}, \mu_1)}{1+a_n}}\right] \\
    & \leq \sum_{n=1}^{+\infty}  c_n \times  e^{- n \frac{\delta_{\epsilon, \mu_1}}{1+a_n}}\bE\left[e^{n \frac{D_\pi(F_{1,n}, \mu_1)}{1+a_n}}\right] \\
    & = \sum_{n=1}^{+\infty}  c_n \times e^{- n \frac{\delta_{\epsilon, \mu_1}}{1+a_n}} + \sum_{n=1}^{+\infty} c_n e^{- n \frac{\delta_{\epsilon, \mu_1}}{1+a_n}} \times \bE\left[e^{n \frac{D_\pi(F_{1,n}, \mu_1)}{1+a_n}}-1\right] \;.
\end{align*}
We then further upper bound $\bE\left[e^{n \frac{D_\pi(F_{1,n}, \mu_1)}{1+a_n}}-1\right]$ for a fixed value of $n$. Using that this is the expectation of a non-negative random varialbe, we first write that
\begin{align*}
	\bE\left[e^{n \frac{D_\pi(F_{1,n}, \mu_1)}{1+a_n}}-1\right]  & = \int_{0}^{+\infty} \bP\left(e^{n \frac{D_\pi(F_{1,n}, \mu_1)}{1+a_n}}-1 > x\right) \mathrm{d}x \\
	& =  \int_{0}^{+\infty} \bP\left(D_\pi(F_{1,n}, \mu_1-\epsilon)> \frac{1+a_n}{n}\log\left(1+x)\right)\right) \mathrm{d}x  \;. \\
\end{align*}
We then conclude by using (A4). For the case $a_n>0$, we obtain
\[\bE\left[e^{n \frac{D_\pi(F_{1,n}, \mu_1)}{1+a_n}}-1\right] \leq  \int_{0}^{+\infty} \frac{\alpha_n}{(1+x)^{1+a_n}} \mathrm{d}x = \frac{\alpha_n}{a_n} \;.  \]
Finally, for the case when $a_n=0$ and the divergence is upper bounded, we similarly obtain the upper bound $\alpha_n \times n D_{\mu_1}^+$ (for $D_{\mu_1}^+$ defined in Theorem~\ref{th::main_result}). In all cases, the scaling of these terms multiplied by $c_n$ (in $n$) guarantees that $b_2$ is finite. Furthermore, an explicit upper bound can be obtained for specific instances of all parameters.

\subsection{\rev{Discussions}}\label{subsubsec::discussion}

In this section, we discuss the regret upper bound established in Theorem~\ref{th::main_result}, along with the intermediate result introduced in its proof sketch (Lemma~\ref{th::first_bound}). We begin by comparing Lemma~\ref{th::first_bound} with other general analyses of Thompson Sampling (TS) from the literature and then expand on how these results can lead to problem-independent bounds.

\subsubsection{\rev{Comparison with other generic results from the literature}} 
As previously mentioned, the proof of Lemma~\ref{th::first_bound} is inspired by the modern analysis of TS in \citet{agrawal13optTS, AgrawalG17}. Their work has inspired subsequent authors to derive general theorems that provide a framework for the analysis of TS (index) policies. Notably, Theorem 1 in \citet{Giro}, applied to a bootstrapping policy, and Theorem 36.2 in \citet{BanditBook} fall into this category. When they are applied to different policies, the bounds of the two theorems match. For completeness, we restate the latter below, adapting the notation to align with the conventions used in this paper.
\begin{theorem}[Theorem 36.2 in \citealp{BanditBook}]\label{th::lattimore}
Consider a TS policy, as defined in Algorithm 4.2 from \cite{russo18survey}. Then, for any $\epsilon>0$,  
\[\forall k:\; \Delta_k >0, \;\bE[N_k(T)] \leq 1 + \underbrace{\bE\left[\sum_{n=0}^{T-1}\ind(G_{kn}>1/T)\right]}_{R_1} + \underbrace{\bE\left[\sum_{n=0}^{T-1} \left(\frac{1}{G_{1n}}-1\right) \right]}_{R_2}\;,
\]
with $G_{kn}=\bP(\wt \mu_{k,n}\geq \mu_1-\epsilon|X_{k,s_1}, \dots, X_{k,s_n})$, and $\wt \mu_{k,n}$ denotes the expectation of arm $k$ under a model randomly sampled from the sampler distribution, using as inputs the $n$ first observations collected from arm $k$. By convention, $\wt \mu_{k0}$ is a sample from the prior distribution.
\end{theorem}
The quantities $(G_{k,n})_{k \in [K], n \in \N}$ corresponds to the Boundary Crossing Probabilities (BCPs). 
We can compare Theorem~\ref{th::lattimore} with the intermediate step in our analysis, presented in Lemma~\ref{th::first_bound}. Below, we outline a simplified comparison between the two results. Our key observations are:
\begin{itemize} 
\item 
The terms $b_1$ and $b_2$ from the lemma are analogous to $R_1$ and $R_2$, in Theorem~\ref{th::lattimore}, but are more explicit thanks to Assumption~\ref{ass::sampling_prob}. 
\item After completing the comparison we find that an additional term, not covered by the lemma, remains in Theorem~\ref{th::lattimore}. As a result, analyzing TS using Theorem~\ref{th::lattimore} requires a deeper understanding of the properties of BCPs than using Lemma~\ref{th::first_bound} to analyze its \TS{} counterpart, a distinction we elaborate on further. \end{itemize}

\paragraph{Detailed comparison} To formalize the comparison, let us assume that the BCPs instead of $p_k^\pi(t)$ satisfy the bounds of Assumption~\ref{ass::sampling_prob} for some divergence $D$. For simplicity, we omit multiplicative terms and directly assume that $G_{k,n} = e^{-nD(F_{k,n}, \mu_1 - \epsilon)}$. The arguments below can be adapted as needed, as demonstrated in the proof of Theorem~\ref{th::main_result}. From this, it is straightforward to derive the first-order logarithmic term in Lemma~\ref{th::first_bound}, and the component $b_1$ of $B(c, \epsilon)$. For any $c>0$, let us define $n_k(t)= \left\lceil \frac{1+c}{D(F_k,\mu_1)}\log(T)\right\rceil$. It holds that

\begin{align*}
\forall c>0,\; R_1 & \leq \bE\left[\sum_{n=0}^{n_k(T)-1}\ind(G_{kn}>1/T)\right] + \bE\left[\sum_{n=n_k(T)}^{T-1}\ind(G_{kn}>1/T)\right]  \\
& \leq n_k(T) + \bE\left[\sum_{n=n_k(T)}^{T-1}\ind\left(G_{kn}>1/T, D(F_{k,n}, \mu_1-\epsilon)\leq \frac{D(F_k,\mu_1)}{1+c} \right)\right] \\ &\quad+ \bE\left[\sum_{n=n_k(T)}^{T-1}\ind\left(G_{kn}>1/T, D(F_{k,n}, \mu_1-\epsilon)\geq \frac{D(F_k,\mu_1)}{1+c} \right)\right]\;.\\
\end{align*}
The fact that we introduced the event $\{D(F_{k,n}, \mu_1-\epsilon)\geq \frac{D(F_k,\mu_1)}{1+c}\}$ now allows to provide an upper bound depending on term $b_1$ of Lemma~\ref{th::first_bound}, as detailed in the following,
\begin{align*}
R_1& \leq n_k(T) + \sum_{n=n_k(T)}^{T-1} \bP\left(D(F_{k,n}, \mu_1-\epsilon)\leq \frac{D(F_k,\mu_1)}{1+c} \right) \\
&\quad+ \bE\left[\sum_{n=n_k(T)}^{T-1}\ind\left(G_{kn}>1/T, D(F_{k,n}, \mu_1-\epsilon)\geq \frac{D(F_k,\mu_1)}{1+c} \right)\right] \\ 
&\leq n_k(T) + \sum_{n=n_k(T)}^{T-1} \bP\left(D(F_{k,n}, \mu_1-\epsilon)\leq \frac{D(F_k,\mu_1)}{1+c} \right) \\
& \leq 1 + \frac{1+c}{D(F_k,\mu_1)}\log(T) + b_1 \;. \\
\end{align*}
Hence, it is relatively straightforward to convert $R_1$ into some of the terms involved in Lemma~\ref{th::first_bound}, under the requested assumption on the BCP. We can then discuss the term $R_2$, corresponding to the regret incurred due to under-estimating the best arm. Let us omit again some multiplicative constants in the approximation $p_{1,n}\approx G_{1,n}$ when discussing Lemma~\ref{th::first_bound}. Then, considering that $p_{1,n}=G_{1,n}$ and that $G_{1,n}\geq c_n^{-1} e^{-nD(F_{1,n}, \mu_1-\epsilon)}$ in the case when $\mu_{1,n}\leq \mu_1-\epsilon$, for some sequence $(c_n)_{n\in \N}$, we can directly write that 
\begin{align*}
R_2 & = \bE\left[\sum_{n=0}^{T-1} \left(\frac{1}{G_{1n}}-1\right)\ind(\mu_{1,n}\leq \mu_1-\epsilon) \right]+ \bE\left[\sum_{n=0}^{T-1} \left(\frac{1}{G_{1n}}-1\right)\ind(\mu_{1,n}\geq \mu_1-\epsilon) \right]  \\ & \leq
b_2 + \underbrace{\bE\left[\sum_{n=0}^{T-1} \left(\frac{1}{G_{1n}}-1\right)\ind(\mu_{1,n}\geq \mu_1-\epsilon) \right] }_{b_3} \;,\\ 
\end{align*}
where the second line is obtained by following the same steps as the proof of Theorem~\ref{th::main_result}, 
ignoring the $-1$ term. This completes the demonstration that under the assumptions of Lemma~\ref{th::first_bound} we can translate the bound of Theorem~\ref{th::lattimore} (applied to standard TS) into the bounds of Lemma~\ref{th::first_bound} up to an additional term, denoted by $b_3$ above. 

\paragraph{Interpretation of $b_3$ and benefits of \TS} Let us now discuss the term $b_3$. Intuitively, the regret due to the under-performance of the best arm, when it is not significantly mis-estimated, should not be very large. However, from a technical standpoint, it is not straightforward to obtain tight bounds on the terms contributing to $b_3$. One possible approach, followed for instance by \cite{RiouHonda20}, is to split the expectation into two scenarios: one where $\mu_{1,n} \in [\mu_1 - \epsilon, \mu_1 - \epsilon/2]$ and another where $\mu_{1,n} \geq \mu_1 - \epsilon/2$. For the latter, an exponentially decreasing upper bound on $1 - G_{1,n}$ is sufficient due to the gap between $\mu_1 - \epsilon$ and $\mu_1 - \epsilon/2$. Indeed, in this case a large deviation occurs when the sampled expectation is below $\mu_1-\epsilon$. However, the intermediate case can be more challenging in some settings, as it requires precise bounds on the BCP in that specific range. Choosing again the analysis of NPTS in \cite{RiouHonda20} for comparison, our analysis of NPTS$^\star$ does not require their Lemma 17, whose proof is quite technical and heavily specific to the considered model.

\rev{For this reason, our claim that the \TS{} framework simplifies the analysis of TS stems from the observation that analyzing \TS{} does not require these additional properties on the BCP, which can be difficult to obtain. Specifically, under the \TS{} framework, it is necessary to lower-bound the BCP \textbf{only in the case where the best arm experiences a large deviation} (in the sense of the large deviation principle, see \citealp{DemboZ98}), that is potentially making it appear sub-optimal. 
}

We can finally mention that in \cite{AgrawalG17} the authors provide a fully explicit bound on $b_2$ for Beta-Bernoulli and Gaussian Thompson Sampling, respectively obtained in their Lemmas 2.9 and 2.13. However, the proofs are also very technical and specific to each distribution considered. Hence, there does not seem to be a clear way to generalize this specific part of their analysis to other families of distributions. 

\subsubsection{\rev{Scaling of the constants and problem-independent analysis}}\label{subsec::pb_ind}

In this section, we discuss the additional steps required to transform the instance-dependent regret bound of Theorem~\ref{th::main_result} into problem-independent bounds.
For clarity, we focus on the regret bounds for MED policies, that all satisfy Assumption~\ref{ass::sampling_prob} (see Lemma~\ref{lem::ass_prob_MED}) for some divergence $D$ and a sequence $(a_n)_{n\in \N}$ chosen by the decision-maker, with constants $c_n = C_n = 1$ for all $n \in \N$. The arguments presented below can however be adapted with other values (leading to different rates in the results).

By following the proof structure of Theorem~\ref{th::main_result}, and Remark~\ref{rem::scaling} ($(C_n)_{n\in \N}$ is constant), if $D$ satisfies Assumption~\ref{ass::sufficient_prop} on $\cF$ then the regret upper bound is obtained by proving that
\begin{equation}\label{eq::explicit_bound_MED}
\forall k: \Delta_k >0, \; \bE[N_k(T)] \leq \frac{(1+c) \log(T\cdot D(F_k, \mu_1)) + C}{D(F_k, \mu_1)} + A_{\delta_c} + \sum_{n=1}^{T} (d_n+1) e^{-n \frac{\delta_{\epsilon, \mu_1}}{1+a_n}} \;,   
\end{equation}
where the sequence $(d_n)_{n\in \N}$ is defined as $d_n=\frac{\alpha_n}{a_n} \wedge \alpha_n nD_{\mu_1}^+$, which unifies 
the two bounds obtained in Theorem~\ref{th::main_result}, whether $(a_n)_{n\in \N}$ is strictly positive or not. The terms in this bound are explicitly detailed in Assumption~\ref{ass::sufficient_prop} and the proof of Theorem~\ref{th::main_result}:
\begin{itemize}
    \item $\delta_c = D(F_k, \mu_1) \times \frac{c-a_\infty}{4(1+c)}$, with $a_\infty=\displaystyle\lim_{n\to \infty} a_n$, 
    \item $A_{\delta_c}$ is an upper bound on $\sum_{n=1}^{+\infty} \bP(D(F_{k,n}, \mu_1)\leq D(F_k, \mu_1)-\delta_c)$, from (A2),
    \item For a fixed value of $c$, $\epsilon$ is defined as the unique solution of
    \[D(F_k, \mu_1-\epsilon) = D(F_k, \mu_1) \times \frac{1+\frac{c-a_\infty}{2}}{1+c} \;, \]
    \item $\delta_{\epsilon, \mu_1}$ is a lower bound for $D(F_{1,n}, \mu_1)-D(F_{1,n}, \mu_1-\epsilon)$, from (A4), when $\mu_{1,n}\leq \mu_1-\epsilon$.
\end{itemize}
Previously, we aimed to optimize the first-order logarithmic term of the bound, so we proved that for any (small) value of $c$ we could derive an upper bound with asymptotic scaling $(1+c)\frac{\log(T)}{D(F_k, \mu_1)}$. Hence, the parameter of interest was $c$, and we tuned other parameters like $\epsilon$ as a function of $c$ in the analysis. Under a problem-independent perspective, we focus on how \emph{each} term scales with respect to problem parameters. Hence, we propose to re-examine the bound in Eq.~\eqref{eq::explicit_bound_MED} by fixing $\epsilon$ proportional to the gap, for instance $\epsilon = \frac{\Delta_k}{3}$, as typical in such analysis. We recall from Assumption~\ref{ass::sampling_prob} that $(a_n)_{n\in \N}$ is non-increasing, so $a_1=\max_{n\in [N]}a_n$.

\paragraph{Re-writing of the regret bound} To proceed further with the analysis, we need to understand three key terms from \eqref{eq::explicit_bound_MED}: (1) how
the maximum possible value of
$\delta_{\epsilon, \mu_1}$
in (A3)
scales with $(\epsilon, \mu_1)$, (2) what is an appropriate value of $c$ given $\epsilon$,
(3) how $A_{\delta}$ in (A2) scales as a function of $\delta$.
To start with, we solve the problem of choosing a suitable value of $c$, distinct from the one used for the problem-dependent analysis and with a different relationship with $\epsilon=\frac{\Delta_k}{3}$, by using the convenient tuning 
\[c = (1+a_1) \frac{D(F_k, \mu_1)}{D(F_k, \mu_1-2\epsilon)}-1,\]
which allows to use $\delta \coloneqq D(F_k, \mu_1-\epsilon)-D(F_k, \mu_1-2\epsilon)$ in the part of the proof where assumption (A2) is involved.
We can then apply (A3) to obtain the inequality $\delta \geq \delta_{\epsilon, \mu_1-\epsilon}$. For convenience, we define $\gamma_k=\min\{\delta_{\epsilon, \mu_1-\epsilon}, \delta_{\epsilon, \mu_1}\}$ for the rest of the analysis. By substituting all constants into the regret bound~\eqref{eq::explicit_bound_MED}, we obtain that 
\begin{equation}\label{eq::explicit_alternative_MED}
\forall k: \Delta_k >0, \; \bE[N_k(T)] \leq (1+a_1)\frac{\log(T \cdot D(F_k, \mu_1)) + C}{D\left(F_k, \mu_1-\frac{2\Delta_k}{3}\right)} + A_{\gamma_k} + \sum_{n=1}^{T} (d_n+1) e^{-n \frac{\gamma_k}{1+a_n}} \;. 
\end{equation}
It is clear that further upper bounds can only be obtained with precise bounds on $A_{\gamma_k}$ (A2), $\gamma_k$ (A3), $(\alpha_n)_{n\in \N}$ (A4), and on $D\left(F_k, \mu_1-\frac{2\Delta_k}{3}\right)$. These bounds have to be specified for each policy and family of distributions considered. In the following, we first present the case of the sub-Gaussian divergence, which allows to highlight the strengths and limitations of the generic framework that we propose regarding the derivation of problem-independent guarantees.

\paragraph{Sub-Gaussian case} We consider sub-Gaussian distributions, and the corresponding divergence $D: (F, \mu_1)\mapsto \frac{(\mu_1-\mu_F)^2}{2}\ind(\mu_1\geq \mu_F)$. Note that this corresponds to the MED instance known as \emph{Maillard Sampling} \citep{Maillard11, jun2022} or \emph{SoftElim} \citep{boutilier20}. We also assume that the means belong to a finite range $\cR = [\mu_0^-, \mu_0^+]$, with $\mu_0^+-\mu_0^-\coloneqq R<\infty$. This is necessary to obtain problem-independent bounds, and otherwise we could consider a problem where one pull of a sub-optimal arm could already cost a linear regret. We introduce the notation $\lesssim$ to avoid writing (problem-independent) multiplicative constants, and prove the following result.
\begin{lemma}[Problem-independent bound for sub-Gaussian MED]\label{lem::pb_ind_sg}
Let $\cF$ be the family of $1$-sub-Gaussian distributions with means in $\cR=[\mu_0^-, \mu_0^+] \subset \R$, and consider MED implemented with $D: (F, \mu_1) \in \cF \times \cR \mapsto \frac{(\mu_1-\mu_F)^2}{2}\ind(\mu_1\geq \mu_F)$ and a non-negative and non-increasing sequence $(a_n)_{n \in \N}$. Then, the upper bound presented in Eq.~\eqref{eq::explicit_alternative_MED} provides the problem-independent regret bound
\[ \cR_T \lesssim  K^\frac{1}{4}T^\frac{3}{4} \wedge \sqrt{KT\times (\log(K)+a_T^{-1})} \;. \]
\end{lemma}

\begin{proof} 
First, we use Hoeffding's inequality and direct computation to obtain the following bounds 
\[A_{\gamma_k}\lesssim \frac{1}{\gamma_k},\quad \text{and} \quad \gamma_k = D\left(F_k, \mu_k+\frac{\Delta_k}{3}\right) = \frac{\Delta_k^2}{18},\]
and in addition Hoeffding' inequality also provides that $\alpha_n=1$ in (A4) for the setting considered.
Then, plugging these values in~\eqref{eq::explicit_alternative_MED} we obtain that for any $\Delta>0$ (that we will tune later) the following bound holds,
\begin{equation}\label{eq::pb_ind_subgauss}
\cR_T \lesssim  \Delta T + \sum_{k: \Delta_k>\Delta} \left\{\frac{\log(T\Delta_k^2)}{\Delta_k}+\frac{1}{\Delta_k} + \underbrace{\Delta_k \sum_{n=1}^T \left\{\frac{1}{a_n}\wedge n\frac{R^2}{2} \right\} e^{-n \frac{\Delta_k^2}{36}}}_{S} \right\} \;. 
\end{equation}
In the worst case (e.g. if $a_T=0$), the right-hand term is dominated by the sum $S$, and by putting the range $R$ in the constants we obtain that
\[ \cR_T \lesssim \Delta T + \sum_{k: \Delta_k>\Delta} \Delta_k \sum_{n=1}^T n e^{-n\frac{\Delta_k^2}{18}} \lesssim \Delta T + \frac{K}{\Delta^3}\;, \]
which further provides $\cR_T \lesssim K^\frac{1}{4} T^\frac{3}{4}$, by choosing $\Delta=(K/T)^\frac{1}{4}$. This gives the first part of the bound, and we now prove the potential refinement depending on $a_T$. First, we use that $(a_n)_{n\in \N}$ is non-increasing and thus lower bounded by $a_T$. Hence, if $a_T>0$ it holds that $S\leq \frac{a_T^{-1}}{\Delta_k}$, which in turn provides 
\[\cR_T \lesssim \Delta T + \sum_{k:\Delta_k>\Delta}\frac{\log(T\Delta_k^2)+a_T^{-1}}{\Delta_k}. \]
It then remains to choose $\Delta \gtrsim \sqrt{(\log(K)+a_T^{-1})\frac{K}{T}}$ to prove the desired bound, where the multiplicative constant here only comes from making sure that $\frac{\log(T\Delta_k^2)}{\Delta_k}$ is non-increasing in $\Delta_k$ for $\Delta_k\geq \Delta$. This proves the lemma.
\end{proof}

\paragraph{Discussion} The scaling of the second term in the regret bound is directly influenced by the choice of the sequence $(a_n)_{n\in \N}$, determined by the decision-maker. On the other hand, while the bound $K^\frac{1}{4}T^\frac{3}{4}$ does not reach the optimal $\sqrt{KT}$ rate, it remains valid universally. This proves that the generic analysis that we propose leads to sub-linear problem-independent bounds in addition to the logarithmic problem-dependent bound.

To gain further insight, we adopt the notion of the \emph{minimax ratio} (as discussed in \citealp{jun2022, Qin23}), which is the scaling of $\cR_T/\sqrt{KT}$. From the bound of the lemma, it becomes evident that the decision-maker can optimize this ratio by choosing a sequence $(a_n)_{n\in \N}$ with a lower bound $a>0$. Then, the minimax ratio $\sqrt{\log(K)}$ is obtained, matching the result proved by \cite{Qin23} for Maillard Sampling. However, this optimization comes at some cost for problem-dependent performance: the best achievable asymptotic rate becomes $(1 + a) \frac{\log(T)}{D(F_k, \mu_1)}$. Therefore, our approach does not simultaneously yield the best problem-dependent rate (obtained only if $\lim_{n \to +\infty} a_n = 0$) and the minimax ratio of $\sqrt{\log(K)}$, contrarily to the more involved proofs of \cite{Qin23} for kl-MS or \cite{AgrawalG17} for Gaussian TS with known variance. 

We believe that the looseness of the problem-independent bound for $a_n=0$ comes from the fact that, with (A3) and (A4), we tried to identify minimal properties needed to upper bound \[\sum_{n=1}^T \bE\left[\ind(\mu_{1,n}\leq \mu_1-\epsilon)e^{n \frac{D(F_{1,n}, \mu_1-\epsilon)}{1+a_n}}\right] \coloneqq \; b_2 \text{ from Lemma~\ref{th::first_bound}},\] 
instead of computing the expectation in a more precise manner with more assumptions.  
We propose to discuss the case of Gaussian distributions to gain further insights. Indeed, with $a_n=0$ for all $n\in \N$, $b_2$ can be computed explicitly as follows,
\begin{equation}\label{eq::b2_gauss}
F_1 = \cN(\mu_1,1) \Longrightarrow b_2 = \sum_{n=1}^T \frac{e^{-n\frac{\epsilon^2}{2}}}{\epsilon\sqrt{2\pi n}} \Longrightarrow b_2 \lesssim \frac{1}{\epsilon^2}, \; \end{equation}
where we omit the computation because it is straightforward, and the last step is proved by splitting the sum depending on the value of $n$ with respect to $\epsilon^{-2}$.
By using this bound we can replace $S \lesssim K/\Delta^3$ in \eqref{eq::pb_ind_subgauss} with $\Delta \epsilon^{-2}\approx \Delta^{-1}$ for $\epsilon \approx \Delta$, which leads to the guarantee of $\cR_T \lesssim \sqrt{KT\log(K)}$ for the Gaussian case.
This proves that better problem-independent bounds requires more precise analysis of the term $b_2$ in the regret upper bound provided by Lemma~\ref{th::first_bound}. While this is certainly a limitation of our framework, we believe that it is interesting that at least sub-linear problem-independent bounds can be derived under the generic assumptions that we propose. We leave the investigation of problem-independent upper bounds that would be both tighter and obtained under generic assumptions for future work.

\paragraph{Generalization beyond the sub-Gaussian case} To the best of our knowledge, in the literature problem-independent guarantees are always achieved by a reduction to the sub-Gaussian case discussed above, in the sense that all the quantities involved in the regret bounds are expressed as squared gaps. We refer for instance to \cite{Qin23, jin22expTS} for recent examples of such reduction in the case of exponential families of distributions, respectively for the analysis of MS and TS. It is straightforward that this reduction can be used within our framework, if the following properties are established in addition of Assumption~\ref{ass::sufficient_prop}. 

\begin{assumption}[Additional properties for problem-independent analysis]\label{ass::reduction_sg}
The divergence $D$ satisfies Assumption~\ref{ass::sufficient_prop}, and in addition it holds that
\begin{enumerate}
    \item $\forall k: \Delta_k>0$, $D\left(F_k, \mu_k+\frac{\Delta_k}{3}\right) \gtrsim \Delta_k^2$ (the divergence scales with squared gaps),
    \item $\forall k: \Delta_k>0$, $\delta_{\frac{\Delta_k}{3}, \mu_1-\frac{\Delta_k}{3}} \gtrsim \Delta_k^2$ (same for the difference of divergences in (A4)),
    \item in (A2), $A_\delta \lesssim \frac{1}{\delta}$,
\end{enumerate}
where we recall that we use the notation $\lesssim$ to hide absolute constants.
\end{assumption}
This assumption is a summary of the ingredients needed for the proof steps of Lemma~\ref{lem::pb_ind_sg} to hold, from which the next proposition naturally follows. 
\begin{proposition}[Problem-independent bounds for MED (general case)] If MED is implemented with a sequence $(a_n)_{n \in \N}$ and a divergence $D$ satisfying Assumption~\ref{ass::reduction_sg} on $\cF$, and in addition that $\alpha_n=1$ in (A4) for all $n\in \N$, then it admits the problem-independent bound of Lemma~\ref{lem::pb_ind_sg}.
\end{proposition}
This assumption is easy to verify for simple parametric settings (see \cite{jin22expTS} for SPEF), but we leave the investigation of these properties for non-parametric settings for future works.
Hence, in the remainder of this work we focus on applications of Theorem~\ref{th::main_result} to derive problem-dependent regret upper bounds for MED and \TS{} policies in various settings. 


\section{\rev{Applications of Theorem~\ref{th::main_result}: regret analysis of various policies}}\label{sec::applications} 

In this section, we propose some applications of the proposed framework by proving instance-dependent regret guarantees for some MED under various models for the family of distributions, and for some TS$^\star$ policies adapted from existing TS policies.

In each case, we demonstrate that the policies in question satisfy Assumptions~\ref{ass::sampling_prob}, for a divergence satisfying Assumption~\ref{ass::sufficient_prop}, enabling us to apply Theorem~\ref{th::main_result}. For \TS{} policies we believe that our framework offers a simplified proof compared to the existing analyses of their TS counterpart.

For both policy types, verifying Assumption~\ref{ass::sampling_prob} is made easier by the groundwork laid in Section~\ref{subsubsec::sampling}. Specifically, we showed in Lemma~\ref{lem::ass_prob_MED} that MED policies meet the conditions of Assumption~\ref{ass::sampling_prob}, and Corollary~\ref{lem::ass_prob_TS} provides a method to prove this for \TS{} policies by bounding the Boundary Crossing Probabilities (BCP) in the specific case considered.

The results in this section address the key question raised in Section~\ref{sec::introduction}: there is indeed a set of fundamental properties shared by all the main families of distributions that have been studied in the bandit literature, that seems to account for the success of common algorithmic principles across them. Assumption~\ref{ass::sufficient_prop} appears to capture these shared properties. Moreover, we prove that this assumption typically holds for $\kinf^\cF$, or a function that coincides with $\kinf^\cF$ once the empirical distribution of a sub-optimal arm is sufficiently close to its true distribution, thereby establishing the asymptotic optimality of the considered policies.

\subsection{Single-Parameter Exponential Families (SPEF)}\label{subsec::spef}

We start by recalling the definition of these families of distributions. 

\begin{definition}\label{def::spef} $\cF$ is a SPEF if there exists a set $\Theta\subset \R$, a measure $\eta$ and some function $b: \Theta \mapsto \R$ such that \[\forall \nu \in \cF\;, \exists \theta \in \Theta\;:\; \forall x \in \R\;, \; \frac{\mathrm{d}\nu_\theta}{\mathrm{d}\eta}(x) = \exp(\theta x - b(\theta)) \;. \]
We thus denote this specific SPEF by $\cF_b^\Theta$.
\end{definition}
Furthermore, for any parameter $\theta \in \Theta$, the expectation of $\nu_\theta$ is $\mu_\theta=b'(\theta)$ (one-to-one mapping). Hence, we can indifferently write the KL-divergence between two distributions $\nu_\theta$ and $\nu_{\theta'}$ in terms of their parameters or as a function of their expectations. We choose the second formulation, and further recall that for any SPEF it holds that
\begin{equation} \label{eq::kl_spef}\forall F \in \cF_b^\Theta:\; \kinf^{\cF_b^\Theta}(F, \mu) = \kl(\mu_F, \mu) \coloneqq \int_{\mu_F}^{\mu} \frac{x-\mu_F}{V_b(x)} \mathrm{d}x\;, \end{equation}
for $\mu\geq \mu_F$, and $\kinf^{\cF_b^\Theta}(F, \mu)=0$ otherwise. The function $V_b:x\to \R^+$ is used to denote the variance of the distribution with expectation $x$. A proof for this expression of $\kl$ can be found for instance in  \citep{jin22expTS}.
SPEF include many standard parametric types of distributions: Bernoulli, Gaussian with known variance, Exponential, Poisson, \dots, and thus are a standard model in bandits.

\begin{lemma}\label{lemm::cond_spef}
For any SPEF $\cF_b^\Theta$, Assumption~\ref{ass::sufficient_prop} holds for the $\kinf^{\cF_b^\Theta}$ divergence 
    \end{lemma}
We defer the proof to Appendix~\ref{subsec::SPEF}. Most of the properties can be deduced from Chernoff's inequality (see e.g. \citealp{KL_UCB}), for any arm $k \in [K]$ it holds that
\begin{equation}\label{eq::chernoff} \forall x \geq \mu_k : \; \bP(\mu_{k,n} \geq x) \leq e^{-n \kl(x, \mu_k)} ,\; \forall y \leq \mu_k: \; \bP(\mu_{k,n} \leq y) \leq e^{-n \kl(y, \mu_k)}\;.\end{equation}    
In addition, the formulation of $\kl$ in Eq.~\eqref{eq::kl_spef} allows to prove (A3) with $\delta_{\epsilon, \mu_1}=\kl(\mu_1-\epsilon, \mu_1)$, and $\alpha_n=1$ in (A4), we refer to the appendix for details.

\begin{corollary}[Instance-dependent optimality of MED and \TS{} for SPEF]\label{cor::spef}
$\!$
\begin{itemize}
    \item MED implemented with the $\kinf^{\cF_b^\Theta}$ divergence 
    is asymptotically optimal on $\cF_b^\Theta$.
    \item if the means belong to a known range $[\mu^-, \mu^+]$, \TS{} with Jeffreys prior \citep{korda13TSexp} is also asymptotically optimal on $\cF_b^\Theta$.
\end{itemize}
\end{corollary}
For completeness, the implementation of the \TS{} policy for this model can be found in Algorithm~\ref{alg::TS_SPEF} in appendix. The main missing ingredient to prove this result is showing that the proposed \TS{} satisfies Assumption~\ref{ass::sampling_prob}. We use Corollary~\ref{lem::ass_prob_TS} and Lemma~\ref{lem::bcp_spef} (detailed and proved in Appendix~\ref{app::ts_spef}), to obtain that the assumption holds with $c_n=\cO(n)$ and $C_n=\cO(\sqrt{n})$. 

The result presented in Corollary~\ref{cor::spef} is novel for the MED part, although not very surprising because it was already proved for the specific case of Bernoulli and Gaussian distributions. Regarding \TS, we retrieve the asymptotic optimality proved in \citet{korda13TSexp} for TS, under the same assumptions. 

\subsection{Gaussian distributions with unknown variances}
We now apply our framework to a second type of parametric families of distributions: Gaussian distributions with unknown variances, that we denote by $\cF_G$. Our main reference for this part is \citet{honda14}, from which we obtain that
\begin{align*} &\kinf^{\cF_G}(F, \mu) = \frac{1}{2}\log\left(1+\frac{(\mu-\mu_F)^2}{\sigma_F^2}\right)\times \ind(\mu\geq \mu_F)\;, \\ \text{with}& \quad \mu_F=\bE_{X\sim F}[X]\quad \text{and} \quad \sigma_F^2 = \bE_{X\sim F}\left[(X-\bE_{X\sim F}[X])^2\right] \;.
\end{align*}
Hence, the distribution $F$ is used in the computation by being mapped to its mean $\mu_F$ and standard deviation $\sigma_F$. Contrarily to the single-parameter case studied in previous section, the fact that the parameter space is unbounded poses some challenges in the analysis. In particular, $\kinf^{\cF_G}(F,\mu)-\kinf^{\cF_G}(F, \mu-\epsilon)$ could be made arbitrarily small in some (low probability) scenarios. In order to simplify the presentation, we further assume that (1) expectations are bounded by a known lower bound $\mu^-\in \R$, and (2) all variances are smaller than a known constant $V<\infty$. We further assume that all policies discussed below clip the empirical estimates they use to satisfy these constraints. In Remark~\ref{rem::gauss} at the end of this section, we propose an adaptation for the general case, with the same idea of clipping the empirical estimates but with adaptation to the sample size. We now prove Assumption~\ref{ass::sufficient_prop} on $\kinf^{\cF_G}$. 

\begin{lemma}\label{lemm:cond_gauss} 
Consider the family $\cF=\{F\in \cF_G:\; \mu_F\geq \mu^-, \sigma_F^2\leq V\}$, then assumptions (A1)--(A3) hold on $\cF$ for $D \coloneqq \kinf^{\cF_G}$, if the mean and variance estimates are respectively clipped to the range $[\mu^-, +\infty)$ and $(0, V]$. Furthermore, the following variant of (A4) holds: for any $x_0>0$, there exists a sequence of constants $(\alpha_n(x_0))_{n \in \N}$ such that
 \[\forall x \geq x_0, \; \bP(D(F_n, \mu_F)\geq x) \leq \alpha_n(x_0) \times e^{-(n-1)x} \;. \]
\end{lemma}
We detail the proof of Lemma~\ref{lemm:cond_gauss} in Appendix~\ref{app::gauss_p15}, where we notably prove that the clipped estimates guarantee that (A3) holds for $\delta_{\epsilon, \mu_1} = \frac{1}{4}\frac{\epsilon^2}{(\mu_1-\mu^-)\vee V}$. Contrarily to all other settings considered in this section, we could not obtain (A4) but a slightly weaker version for (A4). We prove this result in Lemma~\ref{lem::conc_gaussian} in Appendix~\ref{app::gauss_p15}, which is (as far as we know) a novel concentration inequality for the $\kinf^{\cF_G}$ divergence. As discussed in the same appendix, it implies a mild adaptation of the analysis, and we can still deduce the following problem-dependent regret bounds. 

\begin{corollary}[Instance-dependent optimality of MED and Gaussian-\TS{} on $\cF_G$]\label{cor::G}
MED\footnote{We assume that in this setting MED starts by sampling each arm twice, so that all variance estimates are almost surely non-zero} using divergence $D$ (with clipped estimates) and $a_n = \frac{4}{n}$, and Gaussian \TS{} (Algorithm~\ref{alg::TS_G}) with inverse-gamma priors with shape parameter $\alpha< 0$, are both asympotically optimal for Gaussian distributions with a known lower bound on their means and upper bound on their variances.
\end{corollary}
We provide in Lemma~\ref{lem::A4_gauss} the main ingredient of the adaptation of the proof of Theorem~\ref{th::main_result} to the modified condition (A4), from which the result is direct for MED.
For Gaussian-\TS{} (Algorithm~\ref{alg::TS_G}), it remains to upper and lower bound the BCP in the appropriate form. In Lemma~\ref{lem::bcp_gts} (Appendix~\ref{app::gauss_p15}) we prove these bounds, proving that Assumption~\ref{ass::relaxed_sp} holds with $c_n=\cO(\sqrt{n})$, $C_n=\cO(1/\sqrt{n})$ for this \TS{} policy. 
This concludes the proof.

As in the previous section, the optimality result for MED for Gaussian distributions is novel, and the analysis of Gaussian-\TS{} recovers the bound proved for the corresponding TS by \cite{honda14}.

\begin{remark}[Guarantees with unbounded parameters]\label{rem::gauss}
In the case where $V$ and $\mu^-$ are infinite or unknown one can use a clipping adaptive to the sample size (e.g. with $\mu_0^-(n) =-\sqrt{n}$ and $V=\sqrt{n}$) and recover the asymptotic guarantees of Corollary~\ref{cor::G}. The adaptation of the proof is straightforward.
\end{remark}

\subsection{Bounded distributions}\label{subsec::bounded}

In this part we consider the family of distributions with a bounded support $[0,B]$, that we denote by $\cF_B$. The particularity of this example is that it is well established that several divergences can lead to theoretically sound regret bounds. We consider the optimal one, which is the KL-divergence involved in the lower bound \eqref{eq::asymp_opt_def}, that can be formulated (see e.g. \citealp{Honda15IMED}) as the following optimization problem,
\begin{equation}\label{eq::kinf_B}
\forall F\in \cF_B, \mu \in [0,B]: \; \kinf^{\cF_B}(F, \mu) =  \max_{\lambda \in \left(0, \frac{1}{B-\mu}\right]}\bE_{X\sim F}\left[\log\left(1-\lambda(X-\mu)\right)\right]\;.\end{equation}
This choice is not the most popular in the literature because it is harder to compute than some relaxations: for $B=1$, one can also consider the Bernoulli divergence $\kl(\mu_F, \mu)$, or the $\frac{1}{4}$-sub-Gaussian divergence, $2(\mu-\mu_F)^2$. It is folklore (see e.g. \citealp{KL_UCB}) that they satisfy the following ordering,
\[ \forall F\in \cF_1, \mu\in [\mu_F,1]: \; \kinf^{\cF_1}(F, \mu) \geq \kl(\mu_F,\mu) \geq 2(\mu-\mu_F)^2 \;.  \]
However, in the case where a distribution $F$ has a small density near the boundaries of the support, $\kinf^\cF(F,\mu)$ can be much larger than its relaxations. We refer to \citet{Baudry23fast} for a formal discussion of this phenomenon, as well as concrete illustrations. They prove that in some cases asymptotically optimal algorithms can perform significantly better both in theory and practice that non-optimal ones, which motivates further investigation of this setting. We first prove the following result. 

\begin{lemma} \label{lemm::bounded_div} Assumption~\ref{ass::sufficient_prop} holds on $\cF_B$ for the $\kinf^{\cF_B}$ divergence.
\end{lemma}
The proof relies on several results that can be found in the literature \citep{HondaTakemura10, KL_UCB}, we refer to Appendix~\ref{app::bounded_P15} for the details. We obtain for instance that $\delta_{\epsilon, \mu}= \frac{\epsilon^2}{2(\mu-\epsilon)(B-\mu)}$, and that $\alpha_n=e(n+2)$. We now establish the regret bound for the MED and \TS{} algorithm of this section.

\begin{corollary}[Instance-dependent optimality of MED and \TS{} on $\cF_B$]\label{cor::bounded}
If the upper bound $B$ is known, then the MED policy using the $\kinf^{\cF_B}$ divergence and $a_n=0$ for all $n$, and the Non-Parametric TS$^*$ algorithm (Algorithm~\ref{alg::NPTS}) are both asymptotically optimal. 
\end{corollary}

As in previous sections, the result is direct for MED given Lemma~\ref{lemm::bounded_div} and Theorem~\ref{th::main_result}. For NPTS$^\star$, we prove in Lemma~\ref{lem::bcp_npts_B} in Appendix that Assumption~\ref{ass::relaxed_sp} holds, or that Assumption~\ref{ass::sampling_prob} holds for $c_n=\cO(e^{\alpha\sqrt{n}})$ for some $\alpha>0$ and $C_n=\cO(n)$. 

Corollary~\ref{cor::bounded} is the first proof of optimality for this version of MED. This result can also be found in \citet{Baudry23fast}, but their proof is based on the framework that we propose from an earlier version of this work. Notably, they also propose the Fast-MED and Online-MED as variants of MED that improve its computational and memory efficiency (for the latter) while preserving its theoretical guarantees, essentially by using tight approximations of $\kinf^{\cF_B}$ instead of solving the optimization problem \eqref{eq::kinf_B} for all arms at each time step. Furthermore, in Section~\ref{subsubsec::discussion} we discussed how the proof of optimality of NPTS$^\star$ with our framework is more direct than the original proof of \cite{RiouHonda20} for NPTS, and we detailed which of their technical results become unnecessary with our framework. 

\subsection{Distributions satisfying a $h$-moment condition}\label{subsec::hcond}

We now consider families of distributions satisfying a $h$-moment condition \citep{agrawal2020optimal, agrawal2021regret}. Following their definition, we assume that the learner knows a positive convex function $h$ satisfying $\frac{h(|x|)}{|x|^{1+\eta}} \to + \infty$ for some $\eta>0$, a constant $B>0$. We then assume that the distribution comes from a distribution $\cF_{h,B}$ defined as follows,
\begin{equation}\label{eq::uncentered_main}\cF_{h,B} = \left\{F \in \cP:\; \bE_{X\sim F}\left[h(|X|)\right] < B \;, \; \bE_{X\sim F}[h(|X|)^{2+\epsilon_F}]<+\infty \text{ for some } \epsilon_F>0 \right\} \;.\end{equation} 
In that case, we say that the $h$-moment condition is \emph{uncentered}. We will later also consider the \emph{centered} case, where $|X|$ is replaced $|X-\bE_{X\sim F}[X]|$. The main element of this definition is the condition $\bE_{X\sim F}\left[h(|X|)\right] < B$. For instance, if $h(x)=x^2$ this can be an assumption of the second-moment of the distribution (the variance in the centered case). Compared to the definition of \cite{agrawal2020optimal}, we added the second condition $\bE_{X\sim F}[h(|X|)^{2+\epsilon_F}]<+\infty$ for technical reasons, justified by the regret analysis. Indeed, this condition ensures with sufficiently large probability that the empirical distributions belong to $\cF$. We refer to Appendix~\ref{app::h_P15} for details.

\subsubsection{Analysis for the uncentered case}
We can first restate from \cite{agrawal2020optimal} that
\begin{equation}\label{eq::kinf_h}\kinf^{\cF_{h,B}}(F, \mu) = 
	\max_{(\lambda_1, \lambda_2) \in \cR_2}\bE_{F}\left[\log\left(1-\lambda_1(X-\mu)-\lambda_2 (B-h(|X_i|))\right)\right] \;,
 \end{equation}
with $\cR_2= \{(\lambda_1,\lambda_2) \in (\R^+)^2 \;:\; \forall x \in \R \;, 1-\lambda_1 (x-\mu) - \lambda_2 (B-h(|x|)) \geq 0 \}$ is the set for which the logarithm is defined. We now prove the assumption for a divergence build with $\kinf^{\cF_{h,B}}$. 

\begin{lemma}\label{lemm::cond_h}
    Assumption~\ref{ass::sufficient_prop} holds on $\cF_{h, B}$ for the divergence \[D: (F, \mu) \mapsto \kinf^{\cF_{h,B}}(F, \mu) \ind(F \in \cF).\]
\end{lemma}
The detailed proof can be found in Appendix~\ref{app::h_P15}. This family of distributions shows the interest of the generality of our framework. Indeed, compared to simpler parametric families, it is particularly challenging to relate quantities involving $\kinf^{\cF_{h,B}}$ to explicit parameters of the bandit problem, such as the gaps. 

\begin{corollary}[Instance-dependent optimality of MED on $\cF_{h,B}$]\label{cor::h}
The version of MED implemented with the divergence $D$ presented in Lemma~\ref{lemm::cond_h}, and $a_n=0$ for all $n$, is asymptotically optimal on $\cF_{h,B}$.
\end{corollary}
The result is direct from Lemma~\ref{lem::ass_prob_MED} and Lemma~\ref{lemm::cond_h}. 
To the best of our knowledge, MED is the first asymptotically optimal algorithm for general $h$, while \cite{agrawal2021regret} proved the optimality of a KL-UCB algorithm when $h$ is of the form $h(|x|)=|x|^{1+\eta}$ for some $\eta>0$. In addition, we now extend the analysis for the \emph{centered} case, for which no algorithm is yet known to be optimal.

\subsubsection{Analysis for the centered case}\label{subsubsec::centered} The centered case, i.e.\! considering a moment condition of the form $\bE_{X\sim F}[h(|X-\bE_{X\sim F}[X]|)]\leq B$, is more challenging because in general the space of distributions is no more convex nor compact. To overcome this technical difficulty, we further impose the knowledge of a lower bound of the possible means $\mu^- \in \R$, and consider a family of distributions defined as follows,  
\begin{equation}\label{eq::centered_main}\cF_{h,B}^c = \left\{F \in \cP:\; \bE_F\left[h(|X-\mu_F|)\right] < B \;, \; \bE_F[h(|X|)^{2+\epsilon_F}]<+\infty \text{ for some } \epsilon_F>0, \mu_F\geq \mu^- \right\} \;.\end{equation}
Again, the main element of this definition remains the bounded $h$-moment condition. Regarding $\kinf^{\cF_{h,B}^c}$, we prove in Appendix~\ref{app::cent_h_P15} that it has the same form as $\kinf^{\cF_{h,B}}$, where it is just necessary to replace $h(|X|)$ by $h(|X-\mu|)$ in Equation~\eqref{eq::kinf_h}. We can then prove similar results as in the uncentered case.

\begin{lemma}\label{lemm::cond_h_cent}
    Assumption~\ref{ass::sufficient_prop} holds on $\cF_{h, B}^c$ for the divergence \[D: (F, \mu) \mapsto \kinf^{\cF_{h,B}^c}(F, \mu) \ind(F \in \cF).\]
\end{lemma}
The detailed proof can be found in Appendix~\ref{app::cent_h_P15}, as well as the proof of the following regret bound. 

\begin{corollary}[Instance-dependent optimality of MED on $\cF_{h,B}^c$]\label{cor::centered_h}
The version of MED implemented with the divergence $D$ presented in Lemma~\ref{lemm::cond_h_cent}, and $a_n=0$ for all $n$, is asymptotically optimal on $\cF_{h,B}^c$.
\end{corollary}
Some illustrative examples of families covered by this result are distributions with bounded variance ($\bE_F[(X-\mu_F)^2] < B$), or some characterization of a subgaussian model (assuming that $\bE_F[\exp(s^{-1}(X-\mu_F)^2)]< B$ for some known $s, B$). We detail this second setting in Appendix~\ref{app::SG}. To the best of our knowledge, MED is for instance the first algorithm with regret guarantees for the bounded variance setting, and the first algorithm to match the lower bound of \cite{burnetas96LB} for a subgaussian model.

\subsection{Summary and discussion}

We conclude this section by proposing a summary of the results that we obtained, and discussing the insights that can be derived from them.

\paragraph{Asymptotic optimality of MED} First,
Corollaries~\ref{cor::spef}--\ref{cor::centered_h}
of Theorem~\ref{th::main_result} establish that MED is asymptotically optimal for all the families of distributions introduced in Section~\ref{sec::introduction}. This finding may not be very surprising for parametric and bounded settings (though not proved yet except for the multinomial case), but is particularly insightful for models involving $h$-moment conditions given in \eqref{eq::uncentered_main}. Indeed, this setting was so far tackled only in the uncentered case and for $h(|x|)=|x|^{1+\epsilon}$ for some $\epsilon>0$, for the regret minimization problem \citep{bubeck_heavy, agrawal2021regret}.
We allow more general definitions of $h$ inspired by \citet{agrawal2020optimal} and tackle the centered case, under the additional assumptions that $\bE_F[h(|X|)^{2+\epsilon}]<+\infty$ for some $\epsilon>0$ and that the means admit a known lower bound. This analysis is permitted by the abstraction of the sufficient properties required for regret upper bounds that we propose in Assumption~\ref{ass::sufficient_prop}.

\paragraph{Simple analysis of \TS} The second type of contributions presented in this section is the novel analysis of some existing TS algorithms implemented according to our proposed variant \TS. In each case, the additional work after completing the analysis of MED consists in upper and lower bounding the BCP in a form that matches \eqref{eq::sampling_prob} with the function $\kinf^\cF$. 
Thanks to this reduction, the same posterior sampling method seems easier to analyze through the \TS{} framework than under the standard (index policy) TS. 

\paragraph{TS as an approximation of MED?} Regarding their very similar guarantees and analysis, we can reasonably interpret that MED and \TS{}  
are actually two variants of the same exploration strategy\rev{, at least in their asymptotic behavior}. \rev{We propose the interpretation} that \TS{} can be seen as a way to approximate MED through sampling of the parameters, instead of directly computing $D_\pi$. \rev{As discussed in Section~\ref{subsec::presentation_TS}, since there is strong evidence that \TS{} and TS are (again, at least asymptotically) equivalent, we believe that this insight carries over the standard TS framework}. In the light of this result, a natural question to ask is if TS or MED should be preferred in practice. In our opinion, in the case of non-informative priors, this depends on two factors: (1) the ability to provide a TS with tight bounds on its BCP under the model $\cF$, and (2) how costly it is to compute $D_\pi$ compared to performing the sampling step of TS. For example, for parametric families the function $D_\pi$ is very easy to compute, and so MED may be an interesting option, but for non-parametric families computing $D_\pi$ (chosen close to $\kinf^\cF$) at each step may be burdensome and TS may be more appealing. This is the main motivation for the novel Non-Parametric TS algorithm that we propose in the next section, for non-parametric families of distributions satisfying an $h$-moment condition.

\section{A simple NPTS algorithm for families with $h$-moment conditions}\label{sec::NPTS}

In this section we illustrate the benefits of the generic approach proposed in this paper for the design and analysis of novel bandit algorithms. We consider the family of distributions satisfying a \emph{centered} $h$-moment condition, which was already introduced in the previous section. The definition of this family and the assumptions made on the function $h$ are detailed in Section~\ref{subsec::hcond}. We choose to study more specifically this family of distributions because it can capture non-parametric assumptions outside the traditional scope of bounded and sub-gaussian distributions. For instance, for any constants $m>1, B>0$, $\epsilon>0$ and $\mu^-\in \R$, we can provide algorithms adapted to a family of distributions with bounded moments: 
\[\cF_{m}^B = \left\{F \in \cP \;, \bE_F[|X-\bE_F[X]|^m] < B \;,\; \bE_F[|X|^{2m+\epsilon}]<+\infty, \bE_F[X]\geq \mu^-  
\right\} \;,\]
as we proved in Section~\ref{subsubsec::centered}.
This model may be relevant in some problems where limited prior knowledge on the distributions of rewards is available. \citealp{bubeck_heavy, agrawal2021regret} provided algorithms for the uncentered version of this constraint, but they do not assume that $\bE_F[|X|^{2m+\epsilon_F}]<+\infty$ and so their model can consider heavier-tail distributions than our definition. We think that providing algorithms for the centered case is a significant advance, since it captures the class of bounded-variance distributions and a subgaussian model as discussed before, even if additional assumptions are needed for the analysis. These assumptions are furthermore not used by the algorithms that we study.
From the technical viewpoint, the centered case is more difficult, mainly because $\mathcal{F}_m^B$ becomes no more convex.

Lemmas~\ref{lemm::cond_h} and \ref{lemm::cond_h_cent} respectively state that Assumption~\ref{ass::sufficient_prop} holds for the divergence satisfying $D_\pi(F,\mu)=\kinf^\cF(F,\mu)\ind(F\in \cF)$ for any $F\in \cF$ and $\mu\in \R$, in both the centered and uncentered case, and hence that the corresponding MED algorithm (with $a_n=0$) is asymptotically optimal. However, computing $D_\pi$ requires to solve an optimization problem at each time step, which can be computationally expensive. Building on the findings of previous sections, we naturally consider a novel Thompson Sampling algorithm for families $\cF$ satisfying a centered $h$-moment condition, as a computationally efficient version of MED.
We present all of our results and algorithms for the centered case and omit details for uncentered for simplicity, since the adaptation to the uncentered case is straightforward by replacing all expressions of the type $h(|x-\mu|)$ (for some $x \in \R$ and a ``target mean'' $\mu \in \R$) by $h(|x|)$ in the algorithm and the proofs.
See also Remark~\ref{remark_adaptation} in Appendix~\ref{app::lb_npts}
for this point.

\subsection{From NPTS to $h$-NPTS}

In this section we present how $h$-NPTS is derived from the existing NPTS policy.

\paragraph{Non-Parametric TS} We build on NPTS \citep{RiouHonda20}, since the $\kinf^\cF$ for bounded distributions and families based on $h$-moment conditions are similarly expressed as optimization problems. Considering some data $X_1,\dots, X_n$ and their upper bound $B$, a sampling step of NPTS returns a re-weighted mean of the form 
\begin{equation}\label{eq::dirichlet_bounded} \sum_{i=1}^{n} w_i X_i + w_{n+1}B \;, \text{ with } (w_1,\dots, w_{n+1}) \sim \cD_{n+1}\;, \end{equation} where $\cD_{n+1}$ is the Dirichlet distribution $\cD_{n+1}\coloneqq \text{Dir}(1,\dots, 1)$, which is the uniform distribution on the simplex of dimension $n+1$. To adapt this principle, we need to tackle two questions. First, we need to replace the upper bound $B$ by another ``exploration bonus'', as the role of $B$ is to ensure that the best arm has a reasonable chance to be sampled even if its first draws are bad. Then, we need to introduce the $h$-moment condition in the algorithm.

\paragraph{$h$-NPTS} We propose the $h$-NPTS algorithm, combining the structure of \TS{} (Algorithm~\ref{alg::TS_variant}) and a sampling step inspired by NPTS. At each time step, we first build a set $\cA$ of candidate arms with the currently optimal arms, and arms with empirical distributions that do not yet satisfy the desired $h$-moment condition.
Thanks to our assumptions on $\cF$, this check will fail only a finite number of times in expectation. Then, ``challenger'' arms (currently suboptimal arms) are only
compared to the best empirical mean, that we denote by $\widehat \mu^\star$. As in NPTS, we draw some weights $(w_1, \dots, w_{n+1}) \sim \cD_{n+1}$. Finally, we check if
there exists an exploration bonus $x\in\mathbb{R}$ such that the ``re-weighted'' empirical distribution over $(X_1, X_2,\dots,X_n, x)$ belongs to the family $\mathcal{F}$ with expectation at least $\widehat\mu^*$, which is expressed as
\begin{equation}\label{eq::check_npts} \exists x \geq \widehat\mu^\star: \; \sum_{i=1}^{n} w_i X_i + w_{n+1} x \geq \widehat \mu^\star\;,\; \text{and } \sum_{i=1}^{n} w_i h(|X_i-\widehat\mu^\star|) + w_{n+1} (h(|x-\widehat \mu^\star|)-\gamma) \leq B\;,\end{equation}
where $\gamma>0$ is a parameter of the algorithm that slightly advantages the exploration bonus, that we introduce for technical reasons. If \eqref{eq::check_npts} holds for an arm $k$, then it is added to $\cA$. Interestingly, this condition can be checked with a closed formula, and so no optimization is needed.
To be more specific (see Appendix~\ref{app::proof_check_npts} for details), \eqref{eq::check_npts} is equivalent to 
\begin{equation}\label{eq::check_practice}
h\left(\left(\frac{1}{w_{n+1}}\sum_{i=1}^{n} w_i (\wh \mu^\star - X_i)\right)^+\right) \leq B + \gamma + \frac{1}{w_{n+1}} \sum_{i=1}^{n} w_i (B-h(|X_i - \wh \mu^\star|)) \;,
\end{equation}
where $(x)^+=\max\{x, 0\}$. This check is relatively easy to implement, which is another advantage of the new algorithm structure that we introduced. On the other hand, working with \eqref{eq::check_npts} will be more convenient in the theoretical analysis. We provide the detailed $h$-NPTS in Algorithm~\ref{alg::h-NPTS}, that is arguably much simpler to implement than other optimal algorithms for this setting.

\begin{algorithm}[t]
	\SetKwInput{KwData}{Input}
	\SetKwComment{Comment}{\color{purple} $\triangleright$\ }{}
	\SetKwComment{Titleblock}{// }{}
	\KwData{$K$ arms, function $h$, constant $B$, parameter $\gamma>0$}
\For{$t\leq K$}{
For $k=t$: draw arm $k$, collect reward $X_k$, set $\cX_k=\{X_k\}$ and $N_k=1$ \\ 
Initialize $\wh \mu_k = X_k$, $\wh F_k: x \in \R \mapsto\ind(X_k \leq x)$. \Comment*[r]{\color{purple} \small Initialize emp.$\!$ means and cdf}
}
	\For{$t> K$}{
		Set $\cA= \{k \in [K]: \; \wh \mu_k = \max_{j \in [K]}\wh \mu_j\}$ \Comment*[r]{\color{purple} \small Candidate arm(s) = emp. $\!\!\!$ best arm(s)}
		\For{$k \notin \cA$}{
			\textbf{If} $\wh F_k \notin \cF$: add $k$ to $\cA$ \Comment*[r]{\color{purple} \small Check if empirical distribution is in $\cF$} 
			\vspace{1mm}
			\textbf{Else:} draw $w \sim \cD_{N_k+1}$, add $k$ to $\cA$ if ~\eqref{eq::check_practice} holds \Comment*[r]{\color{purple} \small NPTS step}
			\vspace{1mm} 
		}
		Choose $k_t$ uniformly at random in $\cA$; \\Collect the reward $X_t$ from arm $k_t$, add $X_t$ to $\cX_{k_t}$, set $N_{k_t}=N_{k_t}+1$; \\ 
  Update $\wh F_{k_t}: x \in \R \mapsto \frac{1}{N_{k_t}}\sum_{r \in \cX_{k_t}}\ind(r\leq x)$ and $\wh \mu_{k_t} = \frac{1}{N_{k_t}}((N_{k_t}-1) \wh \mu_{k_t} + X_t)$.}
	\SetKwInput{KwResult}{Return}
	\caption{Non Parametric Thompson Sampling for $h$-moment conditions ($h$-NPTS)}
	\label{alg::h-NPTS}
\end{algorithm}


    

\subsection{Analysis} 

Since we already proved Assumption~\ref{ass::sufficient_prop} in Section~\ref{subsec::hcond}, in order to derive a regret upper bound thanks to Theorem~\ref{th::main_result}, it remains to prove Assumption~\ref{ass::relaxed_sp} on the sampling probabilities for $h$-NPTS. 
Thanks to the \TS{} framework, as discussed in Section~\ref{subsec::presentation_TS}, this can be done by upper and lower bounding the \emph{Boundary Crossing Probabilities} (BCP) of $h$-NPTS. To do that, for any parameters $\gamma>0, \eta>0$ we define the following function
\[\Lambda_{\eta, \gamma}^\star(F,\mu) = \max_{(\alpha, \beta) \in \cR_2^{\eta, \gamma}} \bE_F\left[\log(1-\alpha(X-\mu)-\beta(B-h(|X-\mu|))\right] \;, \]
where $\cR_2^{\eta, \gamma} = \{(\alpha, \beta)\in (\R^+)^2 \;: \; \forall x\in \R\;,\; 1-\alpha(x+\eta-\mu)-\beta(B+\gamma-h(|x-\mu|)\geq 0\}$. When $\eta=0$ we use the notation $\cR^\gamma$ and $\Lambda_\gamma^\star$. We prove in Appendix~\ref{app::proof_th_npts} that the sampling probabilities of $h$-NPTS satisfy Assumption~\ref{ass::relaxed_sp}, with $D_\pi: (F, \mu)\mapsto \Lambda_{\gamma}^\star(F, \mu) \ind(F\in \cF)$, where $\Lambda_\gamma^\star$ matches $\kinf^\cF$ for
$\gamma\downarrow 0$.
The proof is based on the following upper and lower bounds on the BCP.

\begin{lemma}[Bounds on the BCP]\label{lem::bounds_bcp} Let $X_1,\dots, X_n$ be i.i.d. observations with empirical cdf $F_n$, and consider any threshold $\mu^\star>\bE_{F_n}[X]$.
Using the notation $Z_i=h(|X_i-\mu|)$ for $i \in [n]$ and $Z_{n+1}=h(|X_{n+1}-\mu|)-\gamma$, let the Boundary Crossing Probability be
\[\text{\emph{[BCP]}} \coloneqq \bP\left(\exists X_{n+1} \geq \mu^\star: \; \sum_{i=1}^{n+1} w_i X_i\geq \mu^\star, \sum_{i=1}^{n+1} w_i Z_i\leq B \right) \;.\]
Firstly, there exists a mapping $C$ such that for any $\eta>0$ and any $n\in \N$
	\begin{equation}\label{eq::ub_bcp_hnpts}\text{\emph{[BCP]}}\leq C(F_n, \eta) \times e^{-n \Lambda_{\eta, \gamma}^\star(F_n, \mu^\star)}\;,\end{equation} 
where $C$ is continuous w.r.t. the Wasserstein metric in $F_n$, is continuous in $\eta$, and scales in $\eta^{-2}$. Furthermore, for any $\delta \geq 0$, there exists two constants $n_{\delta, \gamma} \in \N$ and $c_\delta>0$ such that 
\[\text{\emph{[BCP]}}\geq \left\{
\begin{array}{ll}
c_\delta \times e^{-n\left(\Lambda_{\gamma}^\star(F_n, \mu^\star)+\delta\right)} & \mbox{if } n \geq n_{\delta, \gamma}\;, \\
e^{-n \log(nC_{B,\mu})} & \hspace{-12mm} \mbox{if } n \leq n_{\delta, \gamma}, \; \text{ with } C_{B,\mu}= \max\left\{\frac{3h^{-1}(B)-\mu}{h^{-1}(B)-\mu}  , \frac{B}{B-h\left(\frac{ h^{-1}(B)-\mu}{2}\right)} \right\}\;.
\end{array}
\right.\]
\end{lemma}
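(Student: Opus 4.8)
The plan is to analyze the event inside the BCP by optimizing over the free exploration bonus $X_{n+1}$ and then recognizing a Dirichlet-weighted sum that can be controlled by a Cramér-type/Chernoff argument. First I would simplify the event: since $X_{n+1}\geq \mu^\star$ is free, the constraints $\sum_{i=1}^{n+1} w_i X_i\geq \mu^\star$ and $\sum_{i=1}^{n+1} w_i Z_i\leq B$ are most easily satisfiable by taking $X_{n+1}$ as large as the second constraint allows (recall $h$ is convex and increasing so $Z_{n+1}=h(|X_{n+1}-\mu|)-\gamma$ is increasing in $X_{n+1}$ for $X_{n+1}\geq\mu$); eliminating $X_{n+1}$ reduces the event to a single inequality in the weights $(w_1,\dots,w_n,w_{n+1})$ and the data, of exactly the form \eqref{eq::check_practice}. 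This is the same reduction used to derive \eqref{eq::check_practice} from \eqref{eq::check_npts}, so I would invoke the computation in Appendix~\ref{app::proof_check_npts} rather than redo it.

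For the upper bound \eqref{eq::ub_bcp_hnpts}, the plan is to follow the NPTS boundary-crossing analysis of \citet{RiouHonda20}, adapted to the $h$-moment constraint. Writing the reduced event as $\{\sum_i w_i Y_i \geq 0\}$ for suitable shifted variables $Y_i$ built from $(X_i-\mu^\star)$ and $(B-h(|X_i-\mu^\star|))$, I would apply a Chernoff bound in the Dirichlet weights: for the uniform Dirichlet $\cD_{n+1}$, $\bP(\sum_i w_i Y_i \ge 0)$ admits a bound of the form $\inf_{\lambda\ge 0}\prod_i \bE[e^{\lambda w_i Y_i}]$-type estimate, which after the standard Dirichlet-integral manipulation produces a factor $e^{-n\,\Lambda^\star(F_n,\mu^\star)}$ with $\Lambda^\star$ the convex-conjugate-type quantity appearing in the statement, times a polynomial prefactor coming from the boundary terms of the integral. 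The shift by $\eta$ in $\cR_2^{\eta,\gamma}$ is the usual device to keep the relevant supremum attained at an interior point so that the prefactor $C(F_n,\eta)$ is finite; tracking its dependence gives the claimed continuity in the Wasserstein metric (via continuity of $\bE_{F_n}$ of bounded-Lipschitz-ish integrands after truncation) and the $\eta^{-2}$ scaling (from second-order Taylor terms near the optimum). I would need the assumption $F_n\in\cF$ with the extra moment bound $\bE[h(|X|)^{2+\epsilon}]<\infty$ to ensure the log-integrand and its derivatives are integrable.

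For the lower bound I would split at a threshold $n_{\delta,\gamma}$. For large $n$, the plan is a change-of-measure / tilting argument: restrict the Dirichlet weights to a neighborhood of the optimal tilted weights $w_i^\star \propto$ (the maximizer structure of $\Lambda_\gamma^\star$), show this neighborhood has probability $\ge c_\delta e^{-n\delta}$ by a local central-limit / concentration estimate for the Dirichlet simplex, and show that on this neighborhood the reduced inequality \eqref{eq::check_practice} holds; multiplying the densities yields the $e^{-n(\Lambda_\gamma^\star(F_n,\mu^\star)+\delta)}$ bound. For small $n\le n_{\delta,\gamma}$ I would use a crude but explicit estimate: the event certainly contains the region where $w_{n+1}$ is a definite constant fraction and the bonus $X_{n+1}$ is taken near $h^{-1}(B)$ (or a fixed multiple thereof), and the probability that a uniform Dirichlet puts all-but-one mass appropriately is at least $(nC_{B,\mu})^{-n}=e^{-n\log(nC_{B,\mu})}$; the explicit constant $C_{B,\mu}$ in the statement comes from checking that this crude choice actually satisfies both constraints, which is a direct algebraic verification using $h(|x-\mu|)$ at $x=h^{-1}(B)$ and at $x=(h^{-1}(B)-\mu)/2+\mu$.

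The main obstacle I expect is the large-$n$ lower bound: making the tilting argument rigorous requires a uniform-in-$F_n$ handle on the maximizer of $\Lambda_\gamma^\star(F_n,\cdot)$ and a quantitative anticoncentration estimate for Dirichlet weights around that maximizer, and the non-convexity of $\cF$ in the centered case means the optimization defining $\Lambda_\gamma^\star$ is less well-behaved than in the bounded/uncentered settings — one must be careful that the feasible set $\cR_\gamma$ is nonempty and that the supremum is attained, which is precisely where the parameter $\gamma>0$ earns its keep. I would handle this by first establishing (in a preliminary lemma) the strict feasibility and attainment for $\Lambda_\gamma^\star$ given $F_n\in\cF$, and only then run the anticoncentration estimate.
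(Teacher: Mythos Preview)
Your upper-bound plan has a genuine gap. After eliminating $X_{n+1}$ via the reduction \eqref{eq::check_practice}, the resulting event is \emph{not} of the form $\{\sum_i w_i Y_i\ge 0\}$: it reads
\[
h\!\left(\Bigl(\tfrac{1}{w_{n+1}}\textstyle\sum_{i\le n} w_i(\mu^\star-X_i)\Bigr)^{\!+}\right)
\le B+\gamma+\tfrac{1}{w_{n+1}}\textstyle\sum_{i\le n} w_i(B-h(|X_i-\mu^\star|)),
\]
which is nonlinear in the weights (through $h$ applied to a weighted sum). A one-parameter Chernoff bound does not apply to this, and your ``suitable shifted variables $Y_i$'' do not exist. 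Relatedly, your interpretation of $\eta$ as an interior-point regularizer with $\eta^{-2}$ coming from a Taylor expansion is off-target.

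The paper does \emph{not} eliminate $X_{n+1}$ for the upper bound. Instead it keeps both linear constraints, partitions the range of $X_{n+1}$ into intervals $[x_j,x_j+\eta)$ with $x_j=\mu+j\eta$, and applies a two-parameter Chernoff bound (parameters $(\alpha,\beta)$, one for the mean constraint and one for the $h$-moment constraint) to each piece, yielding
\[
\text{[BCP]}\le \sum_{j\ge 0}\frac{e^{-n\Lambda_{\alpha,\beta}(F_n,\mu^\star)}}{1-\alpha(x_j+\eta-\mu^\star)-\beta(B+\gamma-h(x_j-\mu^\star))}\,.
\]
The shifted region $\cR_2^{\eta,\gamma}$ arises precisely because the discretization forces one to replace $x_j$ by $x_j+\eta$ in the mean constraint; the prefactor $C(F_n,\eta)$ is then the value of this series at the optimizer $(\alpha_n^\star,\beta_n^\star)$, shown to converge by the growth condition $x^{1+\eta_0}=o(h(x))$, and its $\eta$-dependence comes from the number of ``small-$x_j$'' terms ($\sim y_n/\eta$) together with the tail sum $\sum_k h(\eta k)^{-1}$, not from a second-order expansion.

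Your large-$n$ lower-bound sketch is in the right spirit but misses the concrete mechanism. The paper does not restrict Dirichlet weights to a simplex neighborhood; it rewrites $w_i=R_i/\sum_j R_j$ with $R_i\sim\mathrm{Exp}(1)$ and exponentially tilts each $R_i$ to $\mathrm{Exp}(1/G_i)$ with $G_i=(1-\alpha(X_i-\mu-\eta_1)-\beta(B-\eta_2-Z_{X_i}))^{-1}$, which produces the factor $e^{-n\Lambda_{\alpha,\beta}}$ exactly. The exploration bonus is taken at the \emph{specific} value $X_{n+1}=\arg\min_x\{1-\alpha_n^\star(x-\mu-\eta_1)-\beta_n^\star(B-\eta_2-h(|x-\mu|))\}$. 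The role of $\gamma>0$ is then purely quantitative: it guarantees $G_i\le 2/(\gamma\,\bar\beta)$ uniformly for $i\le n$, so that Chebyshev on the tilted sums gives $p_1,p_2=O(1/n)$ and the residual probability $P_{\alpha,\beta}$ is bounded below by a constant; this is the step where ``$\gamma$ earns its keep'', not a feasibility/attainment issue. Your small-$n$ bound matches the paper's: fix $X_{n+1}=(\mu+h^{-1}(B))/2$ and lower-bound by $\bP(w_{n+1}\ge 1-1/(C_{B,\mu}n))$.
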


The detailed proofs and constants for the upper and lower bounds are presented respectively in Appendices~\ref{app::ub_npts} and \ref{app::lb_npts}. These two results are novel, and our proof techniques are of independent interest. Indeed, some recent works have focused on proving similar anti-concentration bounds for weighted sums, with weights drawn from a Dirichlet distribution \citep{RiouHonda20, baudry21DS, tiapkin2022}. Unfortunately, their results are not enough for our setting since they consider the BCP associated with the weighted sum \eqref{eq::dirichlet_bounded} (bounded distributions),
while we need to consider the joint distribution of the weighted sum and the moment condition as given in \eqref{eq::check_npts}.
Instead,
as an intermediate step in our analysis we prove a novel lower bound for this case in Appendix~\ref{app::lb_bcp_bounded} (Lemma~\ref{lem::novel_lb_bcp_npts}). This bound can thus compare with the results provided in these works. We can for instance remark that the lower bound of \cite{tiapkin2022} is tighter than ours, but they require to increase the exploration bonus in \eqref{eq::dirichlet_bounded} from $B$ to $2B$ and its parameter in the Dirichlet distribution from $1$ to $\Omega(\log(n))$, which is not suitable to our setting.


 
\paragraph{Proof ideas} We upper bound the BCP by partitioning the possible values for $X_{n+1}$ as $(x_j)_{j \in \N} = (\mu+j\eta)_{j \in \N}$ and using a union bound. Then, we upper bound each term using the Chernoff method, and denoting by $(\alpha_n^\star, \beta_n^\star)$ the optimizers of $\Lambda_{\eta, \gamma}^\star(F_n,\mu))$ we obtain that 
\[\text{[BCP]} \leq \sum_{j=1}^{+\infty}\frac{e^{-n \Lambda_{\eta, \gamma}^\star(F_n, \mu^\star)}}{1-\alpha_n^\star(x_j+\eta-\mu)- \beta_n^\star(B+\gamma-h(|x_j-\mu|))} \;. \]
At this step, we can remark that once $x_j$ is large enough the denominator is decreasing and dominated by the term $h(|x_j-\mu|)$, that satisfies $h(|x_j-\mu|)\geq |x_j-\mu|^{1+\eta}$ for some $\eta>0$ when $x_j$ is large enough by definition of $\cF$. Hence, only a finite number of terms are actually significant and the sum converges. 

For the lower bound, we select a proper value for $X_{n+1}$, use exponential tilting to get the term in $e^{-n \Lambda_{\alpha, \beta}(F_n, \mu^\star)}$ from a change of distribution, and then analyze the probability that \eqref{eq::check_npts} hold under the new (more favorable) distribution of the weights. The parameter $\gamma>0$ ensures that even in the least favorable cases this probability is larger than a constant when $n$ is large enough, which concludes the proof. The result may still hold for $\gamma=0$, but proving it would require additional technicality. Finally, for $n\leq n_\delta$ our lower bound is simply $\bP\left(w_{n+1}\geq 1-\frac{1}{C_{B,\mu}n}\right)$. 

We now state the main result of this section, which is a regret upper bound for $h$-NPTS.

\begin{theorem}[Logarithmic regret for $h$-NPTS] \label{th::npts}
	If $\cF$ is defined by a centered $h$-moment condition (Eq.~\eqref{eq::centered_main}), then for any $c>0$, $\gamma>0$ and for any sub-optimal arm $k$, $h$-NPTS satisfies
 \[\bE[N_k(T)] \leq \frac{1+c}{\Lambda_\gamma^\star(F_k, \mu^\star)} \log(T) + o_c(\log(T)) \;. \]
\end{theorem}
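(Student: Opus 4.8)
The plan is to derive Theorem~\ref{th::npts} as a direct application of the machinery already built in Section~\ref{sec::results}, namely Theorem~\ref{th::main_result}. The strategy has three ingredients that must be checked: (i) the family $\cF$ defined by a centered $h$-moment condition satisfies Assumption~\ref{ass::sufficient_prop} with the function $D_\pi(F,\mu) = \Lambda_\gamma^\star(F,\mu)\ind(F\in\cF)$; (ii) the $h$-NPTS algorithm satisfies Assumption~\ref{ass::sampling_prob} (or the more general Assumption~\ref{ass::relaxed_sp}) with \emph{this same} function $D_\pi$ and with $a_n=0$; and (iii) the auxiliary condition $\sum_{n=1}^{+\infty} c_n\, \bP(\mu_{1,n}\le \mu_1-\epsilon) = \cO_\epsilon(1)$ holds for the resulting sequence $(c_n)$. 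Once these three are in place, Theorem~\ref{th::main_result} with $a_n=0$ yields $\bE[N_k(T)] \le (1+c)\log(T)/D_\pi(F_k,\mu^\star) + o(\log T)$, and since $D_\pi(F_k,\mu^\star) = \Lambda_\gamma^\star(F_k,\mu^\star)$ for an empirically optimal-looking $F_k\in\cF$, this is exactly the claimed bound.

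For ingredient (i), I would invoke Theorem~\ref{lem::P15}: it already establishes that (A1)--(A4) hold for the centered $h$-moment family with $D = \kinf^\cF(F,\mu)\ind(F\in\cF)$. Since $\Lambda_\gamma^\star$ differs from $\kinf^\cF$ only through the relaxations by $\gamma$ in the admissible region $\cR_2^\gamma$ (and matches $\kinf^\cF$ as $\gamma\downarrow 0$), I would argue that the same arguments—monotonicity and continuity in the second argument, the lower-deviation tail bound (A2), the separability of mean levels (A3), and the Chernoff-type upper-deviation bound (A4)—carry over verbatim to $\Lambda_\gamma^\star$, possibly with $\gamma$-dependent constants. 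Concretely, (A4) follows because $\Lambda_\gamma^\star(F_n,\mu_F)$ is itself of Chernoff form, so $\bP(\Lambda_\gamma^\star(F_n,\mu_F) > x)\le \alpha_{n,x} e^{-nx}$ with polynomial-in-$n$ prefactor, exactly as in the proof for $\kinf^\cF$; (A2) follows from continuity of $\Lambda_\gamma^\star$ in its first argument with respect to the Wasserstein metric plus fast convergence of the empirical distribution; (A3) from the strict monotonicity gap, which for the $h$-moment case was the delicate point already handled in Theorem~\ref{lem::P15}.

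For ingredient (ii), the substance is Lemma~\ref{lem::bounds_bcp}, whose upper and lower bounds on the BCP I would translate into the sampling-probability range required by Assumption~\ref{ass::relaxed_sp}. Recall that for \TS{}-type algorithms $p_k^\pi(t)\in[\text{BCP}/K,\text{BCP}]$ when arm $k$ is empirically suboptimal, and $p_k^\pi(t)\in[K^{-1},1]$ otherwise, so it suffices to sandwich the BCP. The upper bound \eqref{eq::ub_bcp_hnpts} gives $\text{BCP}\le C(F_n,\eta) e^{-n\Lambda_{\eta,\gamma}^\star(F_n,\mu^\star)}$; letting $\eta\downarrow 0$ (or folding the $\eta$-dependence into the relaxed assumption as the paper does) yields the bound $p_k^\pi(t) e^{nD_\pi(F_k(t),\mu^\star(t))}\le C_n$ with $C_n$ controlled by $C(F_n,\eta)$, which scales polynomially once we check the Wasserstein-continuity/growth of $C$ on the relevant event. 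The lower bound of Lemma~\ref{lem::bounds_bcp} gives, for $n\ge n_{\delta,\gamma}$, $\text{BCP}\ge c_\delta e^{-n(\Lambda_\gamma^\star(F_n,\mu^\star)+\delta)}$, and for small $n$ the crude bound $e^{-n\log(nC_{B,\mu})}$; together these give $c_n^{-1}$ of the form $c_\delta/K$ for large $n$ and a sub-exponential (indeed $e^{\cO(n\log n)}$... which must be absorbed — here the $\delta$ in the exponent is the reason Assumption~\ref{ass::relaxed_sp} rather than \ref{ass::sampling_prob} is used, and the $o(\log T)$ slack in the final bound). The main obstacle is precisely this: reconciling the $\delta$-loss and the $\eta$-loss in Lemma~\ref{lem::bounds_bcp} with the clean statement of Assumption~\ref{ass::sampling_prob}; this is why the paper states it proves the result under the relaxed Assumption~\ref{ass::relaxed_sp}, and I would need to verify that the proof of Theorem~\ref{th::main_result}/Lemma~\ref{th::first_bound} still goes through when the exponent in the sampling probability is $D_\pi$ up to a vanishing additive perturbation and the $C_n$ prefactors have only Wasserstein-continuous (rather than uniformly polynomial) control—handled by restricting to a high-probability event where $F_k(t)$ is close to $F_k$.

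Finally, for ingredient (iii), I would note that the relevant $c_n$ for $h$-NPTS is essentially constant (of order $K/c_\delta$) for $n$ large, and at most $e^{\cO(n\log n)}$ for the finitely many small $n$; so $\sum_n c_n \bP(\mu_{1,n}\le\mu_1-\epsilon)$ converges provided $\bP(\mu_{1,n}\le\mu_1-\epsilon)$ decays fast enough. Here I would use exactly the argument from the proof of Corollary~\ref{cor::MED}: the centered $h$-moment condition with $h(|x|)/|x|^{1+\eta}\to\infty$ and $\bE_F[h(|X|)^{2+\epsilon_F}]<\infty$ implies, via Theorem~2 of \citet{fournier_wasserstein} (or a direct concentration-of-empirical-mean argument), a polynomially—indeed fast enough—decaying deviation probability, which dominates the polynomial growth of $c_n$ on the large-$n$ regime while the small-$n$ terms contribute a finite sum. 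With (i)--(iii) established, the theorem follows by plugging into Theorem~\ref{th::main_result} with $a_n\equiv 0$, which is legitimate because $\Lambda_\gamma^\star(F,\mu)$ is uniformly bounded above by a constant $D_\mu^+$ for all $F\in\cP$ (the boundedness condition in the second part of Theorem~\ref{th::main_result}), completing the proof.
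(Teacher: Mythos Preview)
Your proposal is correct and follows essentially the same route as the paper: reduce to Theorem~\ref{th::main_result} by (i) transporting (A1)--(A4) from $\kinf^\cF$ to $\Lambda_\gamma^\star$, (ii) verifying Assumption~\ref{ass::relaxed_sp} via the BCP bounds of Lemma~\ref{lem::bounds_bcp}, and (iii) checking the summability condition using the Wasserstein concentration result. The paper's only sharper detail is the concrete handling of the $\eta$-loss in the upper bound: rather than ``letting $\eta\downarrow 0$'', it sets $\eta^{-1}=\cO(n^a)$ so that the prefactor $C(F_n,\eta)$ becomes polynomial in $n$ on the Wasserstein neighborhood $\cB(F_k)$, and then the gap $\Lambda_{\eta,\gamma}^\star-\Lambda_\gamma^\star$ is absorbed multiplicatively into $C_n$; your sketch identifies this mechanism but is less explicit about how $\eta$ is tuned.
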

We detail the proof in Appendix~\ref{app::proof_th_npts}, where we show that Lemma~\ref{lem::bounds_bcp} gives sufficient bounds on the sampling probability of each arm in $h$-NPTS, and then use Corollary~\ref{th::main_result}. As $\gamma>0$, the algorithm may not be asymptotically optimal, since $\cR_2^\gamma \subset \cR_2^0$. However, this is the case only for problems for which arm $k$ is ``far'' from being optimal, so the constant before the logarithm is small: if for some distribution $F$ and $\mu \in \R$ the optimizers in $\cR_2^0$ belong to $\cR_2^\gamma$ then $\Lambda_\gamma^\star(F, \mu) = \Lambda_0^\star(F, \mu)$. 

Theorem~\ref{th::npts} completes the picture that $h$-NPTS is an easy to implement, computationally efficient, and theoretically sound alternative to MED for distributions satisfying a centered $h$-moment condition. In our opinion, our methodology to derive and analyze this algorithm shows another interest of the general insights provided in this paper: we first tried to find the best function $D_\pi$ for which MED would work for this family, and then provided a TS algorithm in order to approach this MED strategy with sampling.

\begin{remark}[Further generalization of $h$-NPTS] As stated above, the algorithm and its analysis are first easily translated to the uncentered case. Furthermore, we think that the same proof techniques as for Lemma~\ref{lem::bounds_bcp} may be used with only minor changes if several 
conditions with functions $h_1,\dots, h_m$ and constants $B_1,\dots, B_m$ were considered, for $m\in \N$.
\end{remark}
\section{Numerical Experiments}\label{sec::experiments}

In this section, we present experiments on synthetic data to illustrate the theoretical results developed in this paper. Our focus is not to benchmark our approach against all possible policies but to provide a detailed comparison of MED, TS, and its variant TS$^\star$, in several settings. For broader experimental evaluations of bandit algorithms, including UCB \citep{auer2002finite}, KL-UCB \citep{KL_UCB}, IMED \citep{Honda15IMED}, as well as bootstrapping and sub-sampling methods, we refer to works such as \citet{chapelle11, Giro, SDA, baudry21DS}. These studies consistently demonstrate that Thompson Sampling and IMED (the deterministic version of MED) rank among the top-performing policies in $K$-armed bandit problems. Given this, we believe that focusing our comparison on TS provides sufficient context to showcase our findings. The experiments presented here (regret and computation time) were conducted using a Jupyter notebook, which, along with the code, is provided for reproducibility\footnote{\url{https://github.com/DBaudry/unified_MED_TS}}.

\paragraph{Experimental Setup} We compare the empirical regret of TS, TS$^\star$, and MED across various simulations. The experiments cover standard families of distributions, including Bernoulli and Gaussian (with known variances) for the SPEF instance of the policies (Section~\ref{subsec::spef}), as well as Beta distributions for testing the non-parametric bounded variant (Section~\ref{subsec::bounded}). All implementations were done in Python. Each bandit instance was simulated 500 times with a time horizon of $T=5000$. For each policy, we report the empirical regret over the 500 simulations, along with the $10$th and $90$th percentiles of the results. 

\paragraph{Bernoulli experiments} We conducted four experiments with Bernoulli rewards. In the first experiment (\emph{Bernoulli 1}), we consider a simple $2$-armed instance with means $\mu_1=0.6$ and $\mu_2=0.5$. The second experiment (\emph{Bernoulli 2}) increases the number of arms and narrows the gap: $\mu_1=0.55$,  with $9$ other arms having a mean of $0.5$. The third experiment (\emph{Bernoulli 3}) explores distributions with relatively low rewards, with $6$ arms with means $(\mu_k)_{k \in [6]} =(0.1,\, 0.05,\, 0.15,\, 0.1,\, 0.03,\, 0.06)$. Finally, in the fourth experiment (\emph{Bernoulli 4}) the expected rewards are relatively high, with $K=7$ arms and $(\mu_k)_{k \in [7]}=(0.85,\, 0.85,\, 0.6,\, 0.7,\, 0.9,\, 0.92,\, 0.95)$. The results are summarized in Figure~\ref{fig::exp_ber}. Across all experiments, we observe that TS and TS$^\star$ perform nearly identically, and outperform MED in most cases.

\begin{figure}[htbp]
    \centering
    \includegraphics[width=0.45\linewidth]{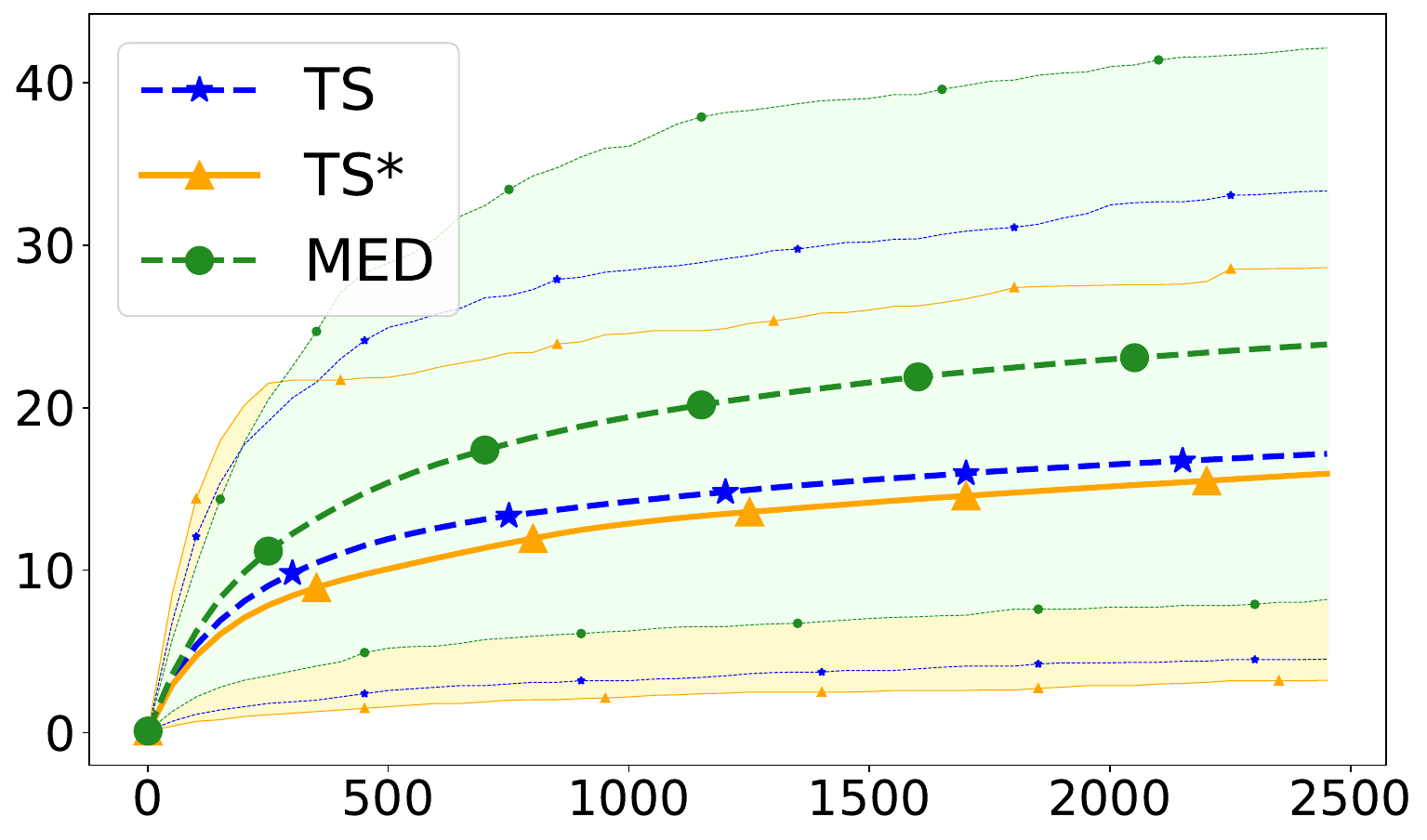}
    \includegraphics[width=0.45\linewidth]{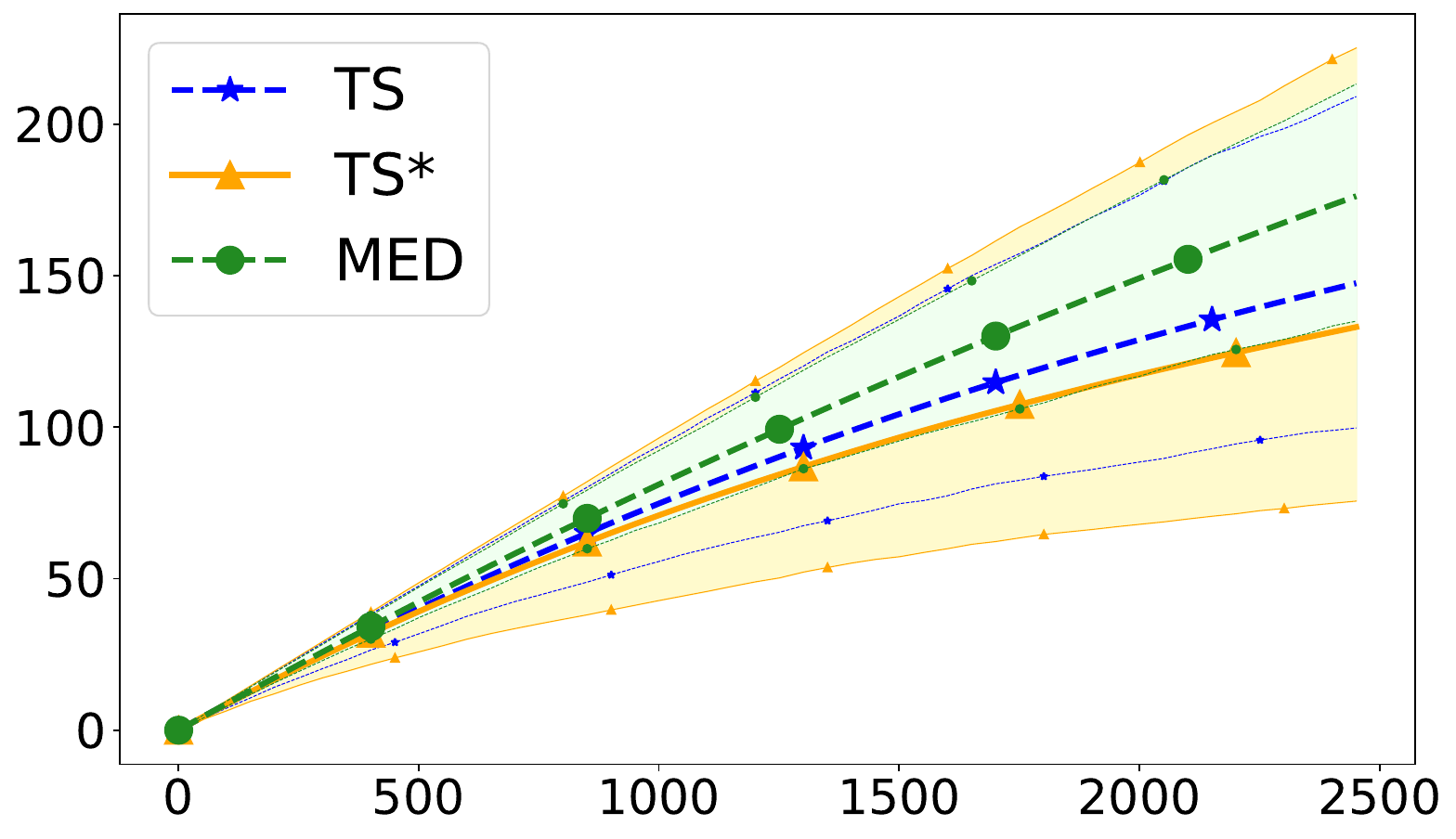}
    \includegraphics[width=0.45\linewidth]{figures/Ber1.pdf}
    \includegraphics[width=0.45\linewidth]{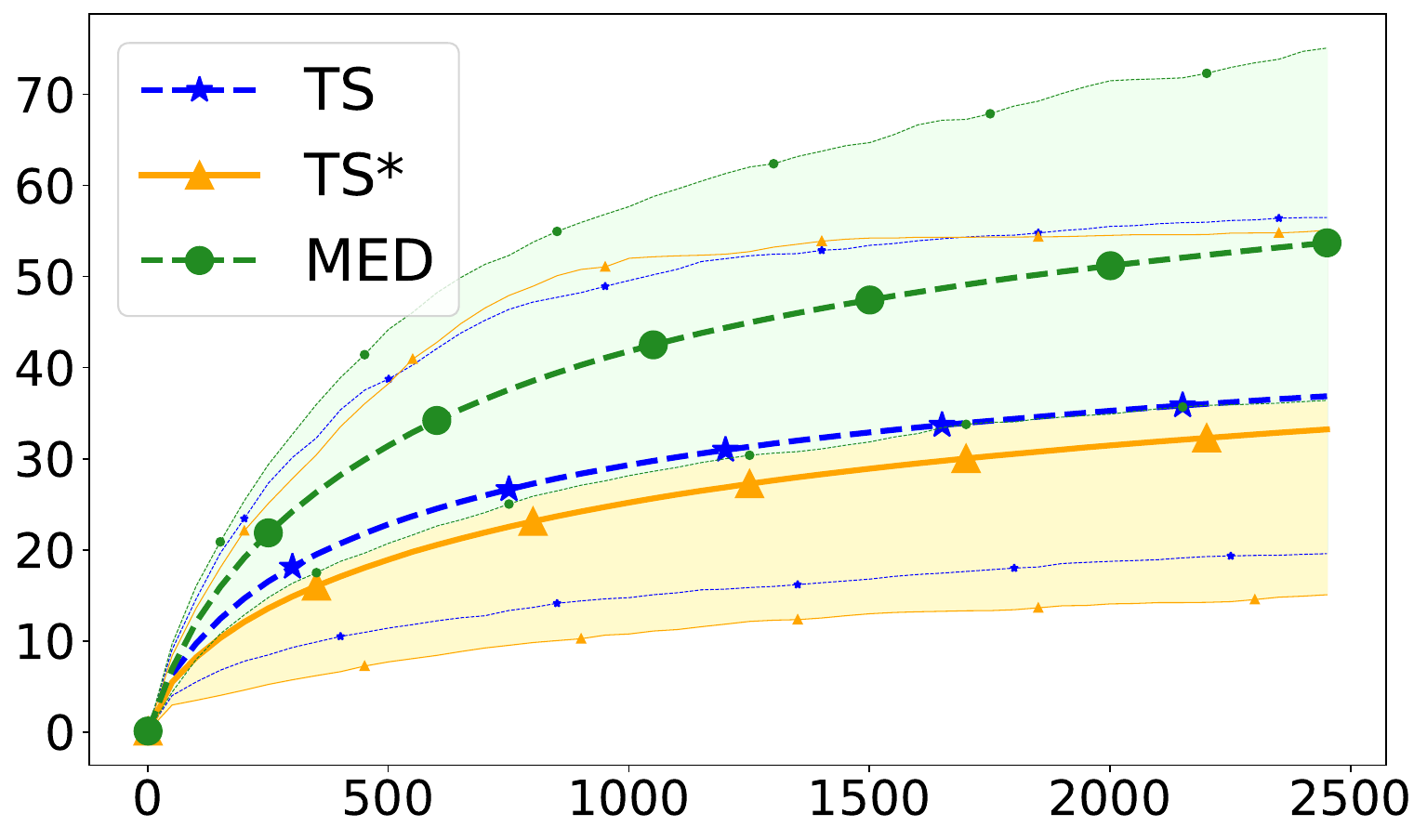}
    \caption{Average regret and $10$--$90\%$ percentiles as a function of $T$ on $500$ runs, for experiments \emph{Bernoulli 1} (top left), \emph{Bernoulli 2} (top right), \emph{Bernoulli 3} (bottom left) and \emph{Bernoulli 4} (bottom right).}
    \label{fig::exp_Gauss}
\end{figure}

\paragraph{Gaussian Distributions with Known Variance} We now present experiments using Gaussian rewards with a fixed variance of $\sigma^2=1$. 
In the first experiment (\emph{Gauss 1}), we set $\mu_1=0.3$ and $\mu_2=0$. In the second (\emph{Gauss 2}), we define $\mu_k=\frac{k-1}{K}$ for $k\in [6]$. The third experiment uses $\mu_1=0.2$ and $\mu_2=\dots=\mu_{10}=0$, while in the fourth (\emph{Gauss 4}) we set $\mu_1=0.1$ and $\mu_2=\dots=\mu_{20}=0$. The results, shown in Figure~\ref{fig::exp_Gauss}, reflect similar trends to the Bernoulli experiments: TS and TS$^\star$ perform almost identically, with both slightly outperforming MED.

\begin{figure}[htbp]
    \centering
    \includegraphics[width=0.45\linewidth]{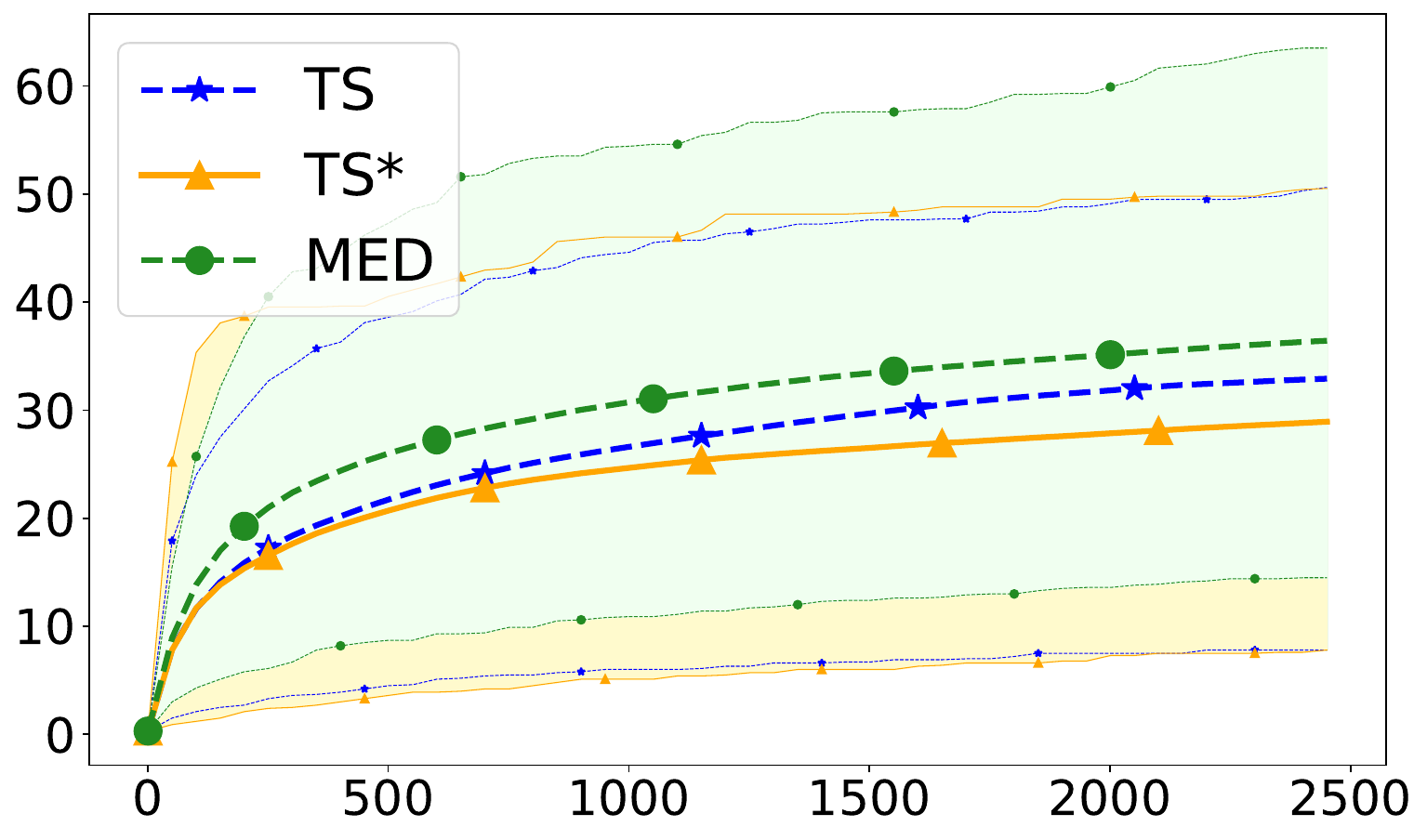}
    \includegraphics[width=0.45\linewidth]{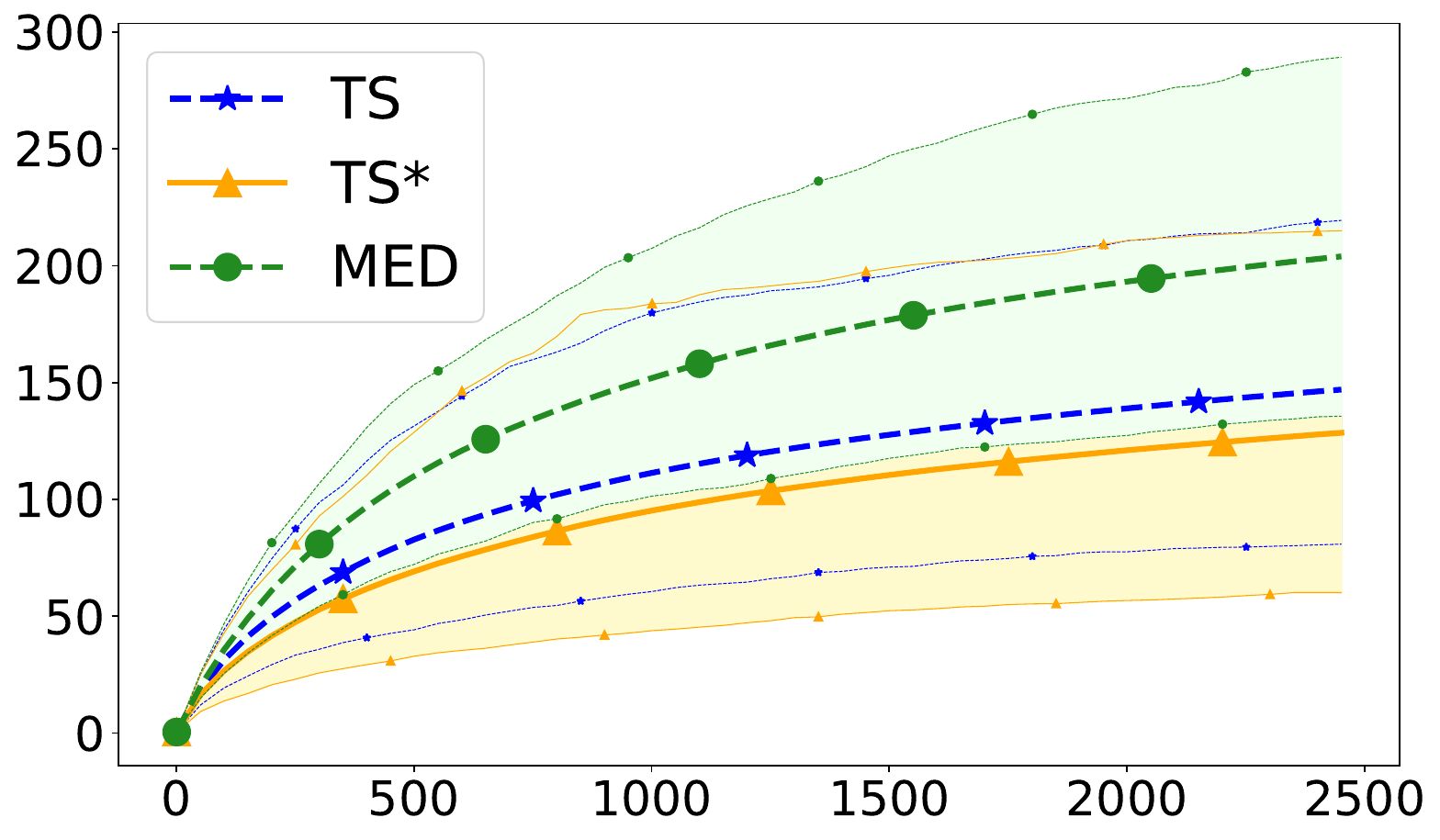}
    \includegraphics[width=0.45\linewidth]{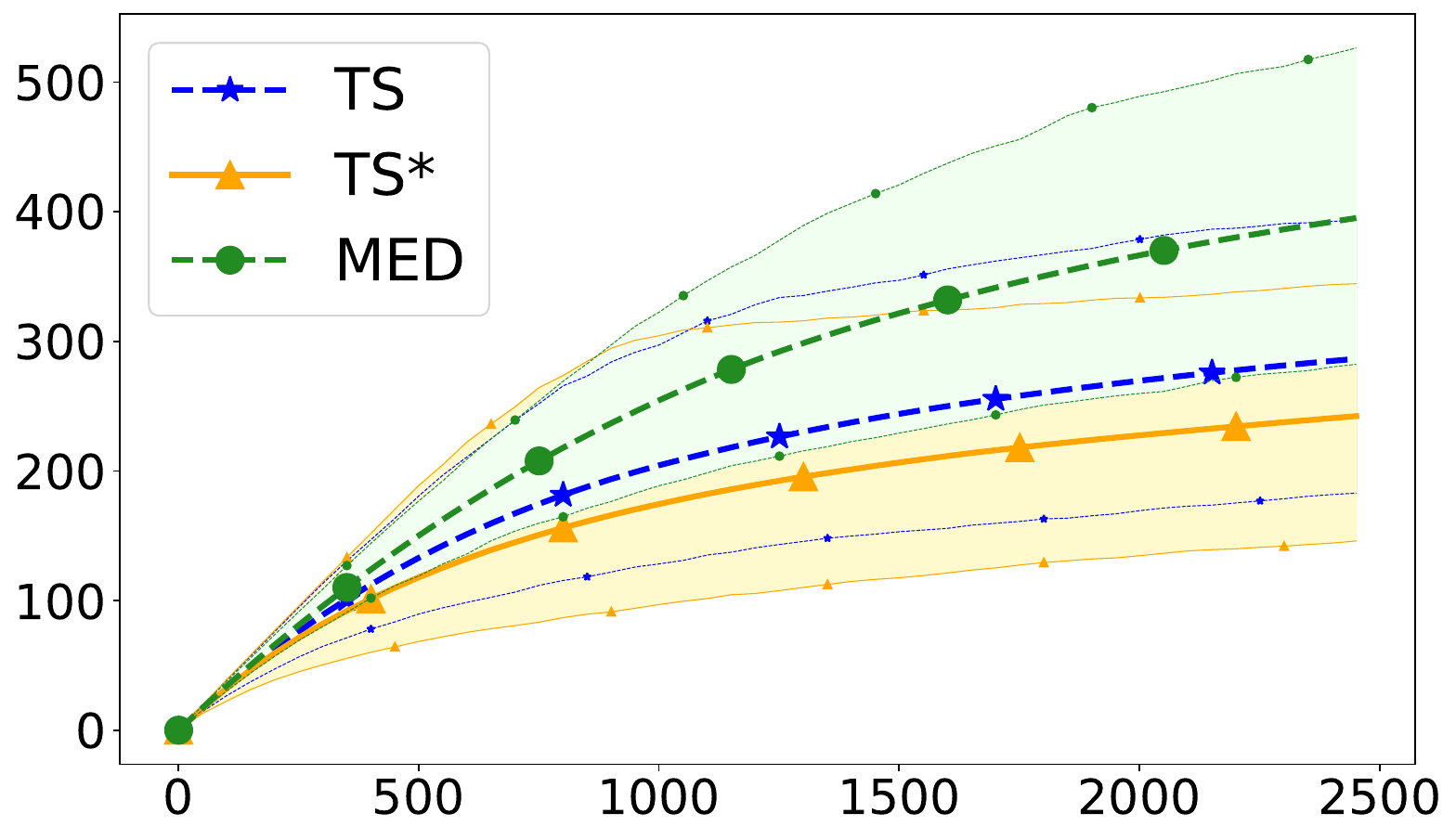}
    \includegraphics[width=0.45\linewidth]{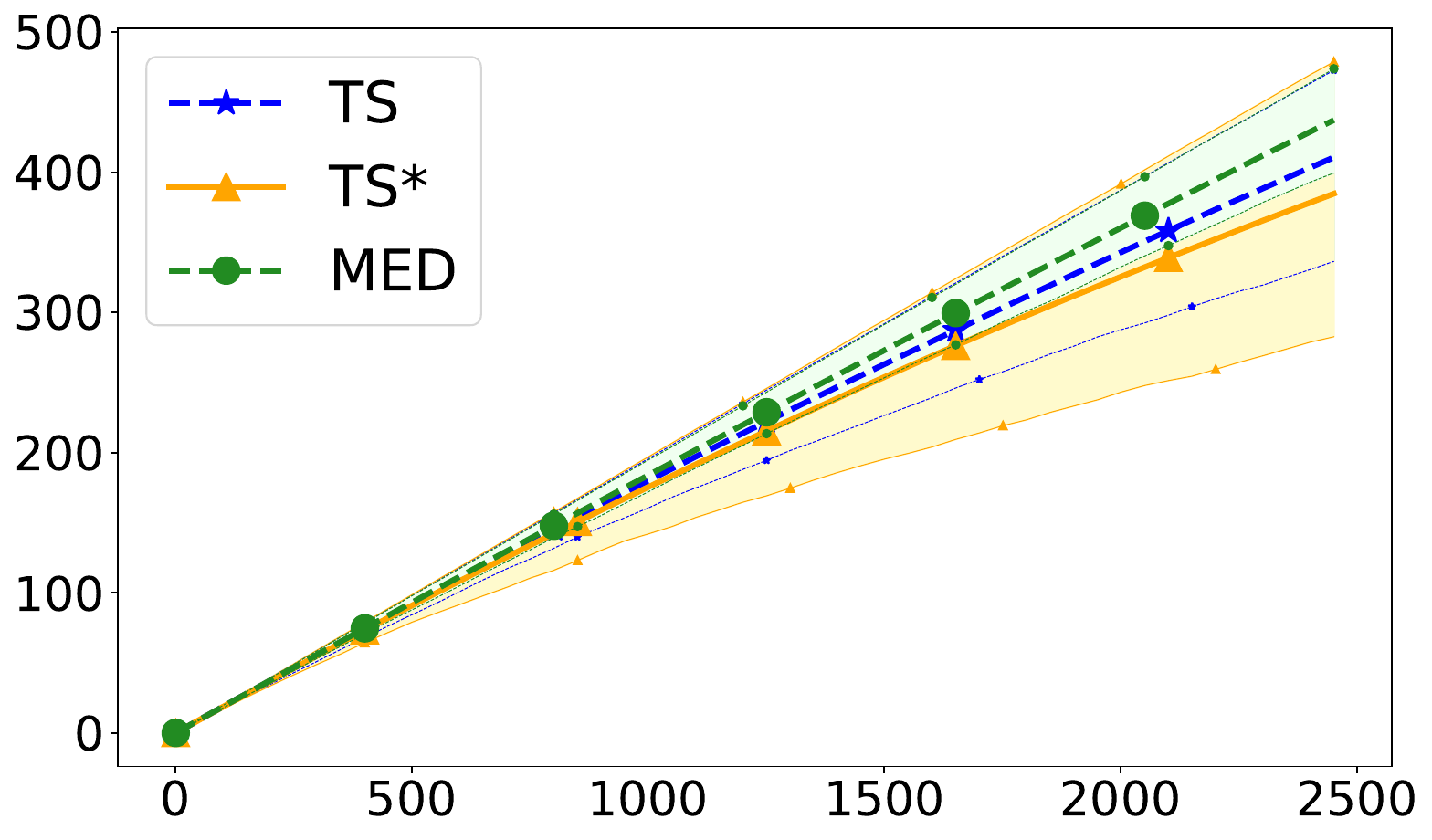}
    \caption{Average regret and $10$--$90\%$ percentiles as a function of $T$ on $500$ runs, for experiments \emph{Gauss 1} (top left), \emph{Gauss 2} (top right), \emph{Gauss 3} (bottom left) and \emph{Gauss 4} (bottom right).}
    \label{fig::exp_ber}
\end{figure}

\paragraph{Beta Distributions} We now conduct experiments using Beta distributions to evaluate the non-parametric policies designed for bounded distributions. Recall that a Beta distribution is characterized by two parameters, $(a,b) \in (\R^{+})^2$, which define a density function $f(x) \propto x^{a-1}(1-x)^{b-1}$ for $x \in [0,1]$. The expectation is given by $\frac{a}{a+b}$, and the variance by $\frac{ab}{(a+b)^2(a+b+1)}$. By adjusting these parameters, we can create distributions with various shapes: they can resemble Gaussian distributions (when both $a$ and $b$ are large, with a mean around $0.5$), exponential distributions (when $b \gg a$), uniform distributions (when $a \approx b \approx 1$), or even Bernoulli distributions (when $a$ and $b$ are very small). We illustrate two such examples in Figure~\ref{fig::beta_shapes} below. Since we have already conducted experiments with Bernoulli distributions, we focus here on the case where $a \geq 1$ and $b \geq 1$. 
\begin{figure}[hbtp]
    \centering
\includegraphics[width=0.45\linewidth]{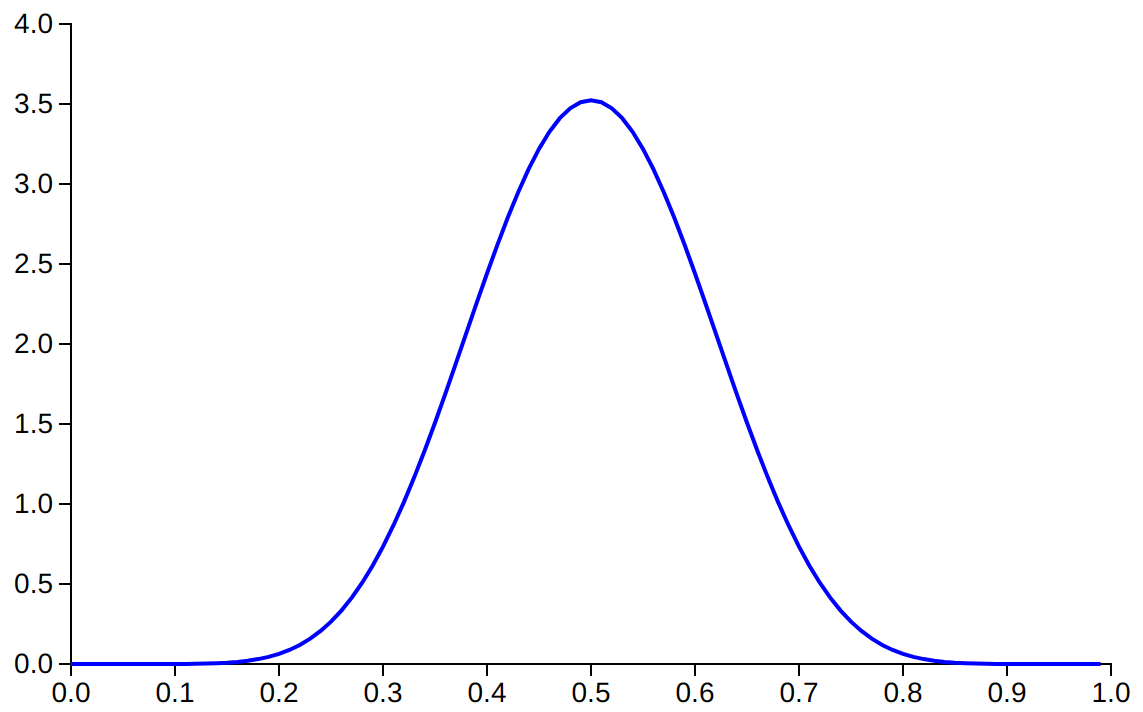}
    \includegraphics[width=0.45\linewidth]{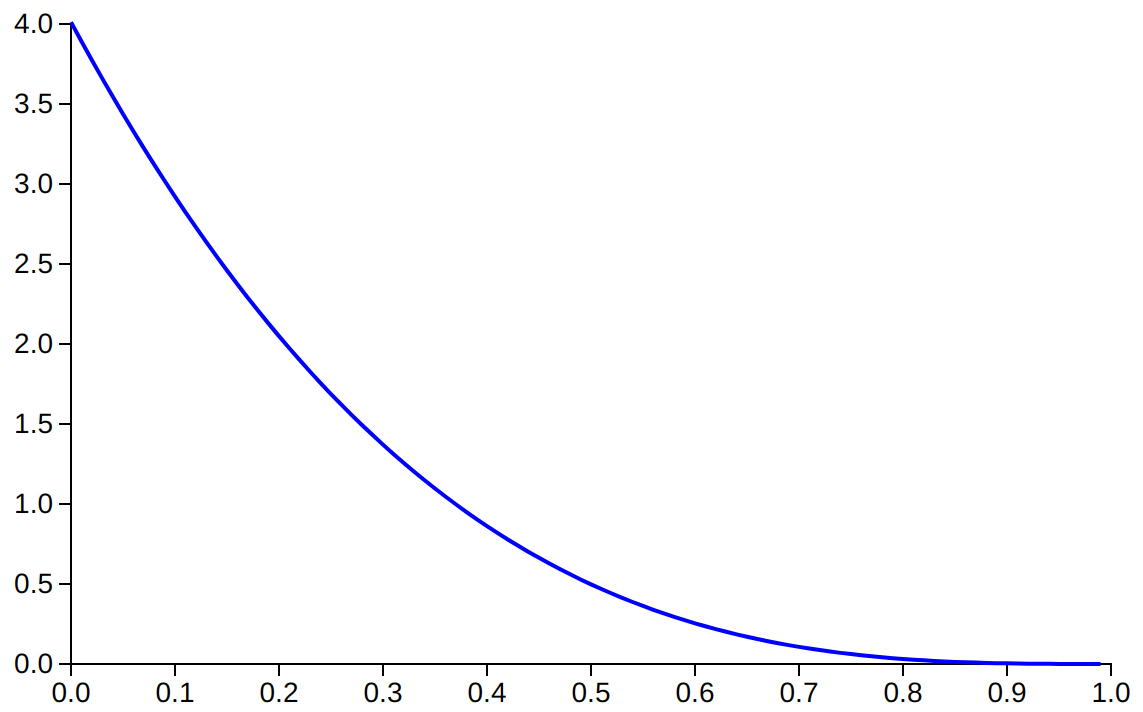}
    \caption{Example of Beta distribution with Gaussian shape (Left, $a=b=10$), \\ and Exponential shape (Right, $a=1,b=4$).}
    \label{fig::beta_shapes}
\end{figure}

In the first two experiments (\emph{Beta 1} and \emph{Beta 2}), we consider distributions close to (or equal to) the uniform distribution, which corresponds to $a=b=1$. Specifically, we set the parameters as $(a_k, b_k)_{k \in [K]} = ((1, 1), (2, 3))$ in \emph{Beta 1}, and $(a_1, b_1) = (1, 1)$ and $(a_2, b_2) = (1, 1.5)$ in \emph{Beta 2}. In the third experiment (\emph{Beta 3}), we increase the parameter values to produce distributions with ``Gaussian-like'' shapes, defining $((a_k, b_k))_{k \in [6]} = ((10, 10), (10, 9), (10, 11), (8, 10), (11, 10), (11, 11))$. Finally, in \emph{Beta 4}, we examine distributions with ``exponential'' shapes by setting $((a_k, b_k))_{k \in [5]} = ((1, 4), (1, 4.5),\allowbreak (1, 5), (1, 5.5), (1, 6))$. The results are presented in Figure~\ref{fig::exp_beta}. Here, MED and NPTS show comparable performance, with NPTS$^\star$ consistently outperforming the others. However, the performance gap is relatively small, suggesting that all the policies perform similarly on this problem.

\begin{figure}[htbp]
    \centering
    \includegraphics[width=0.45\linewidth]{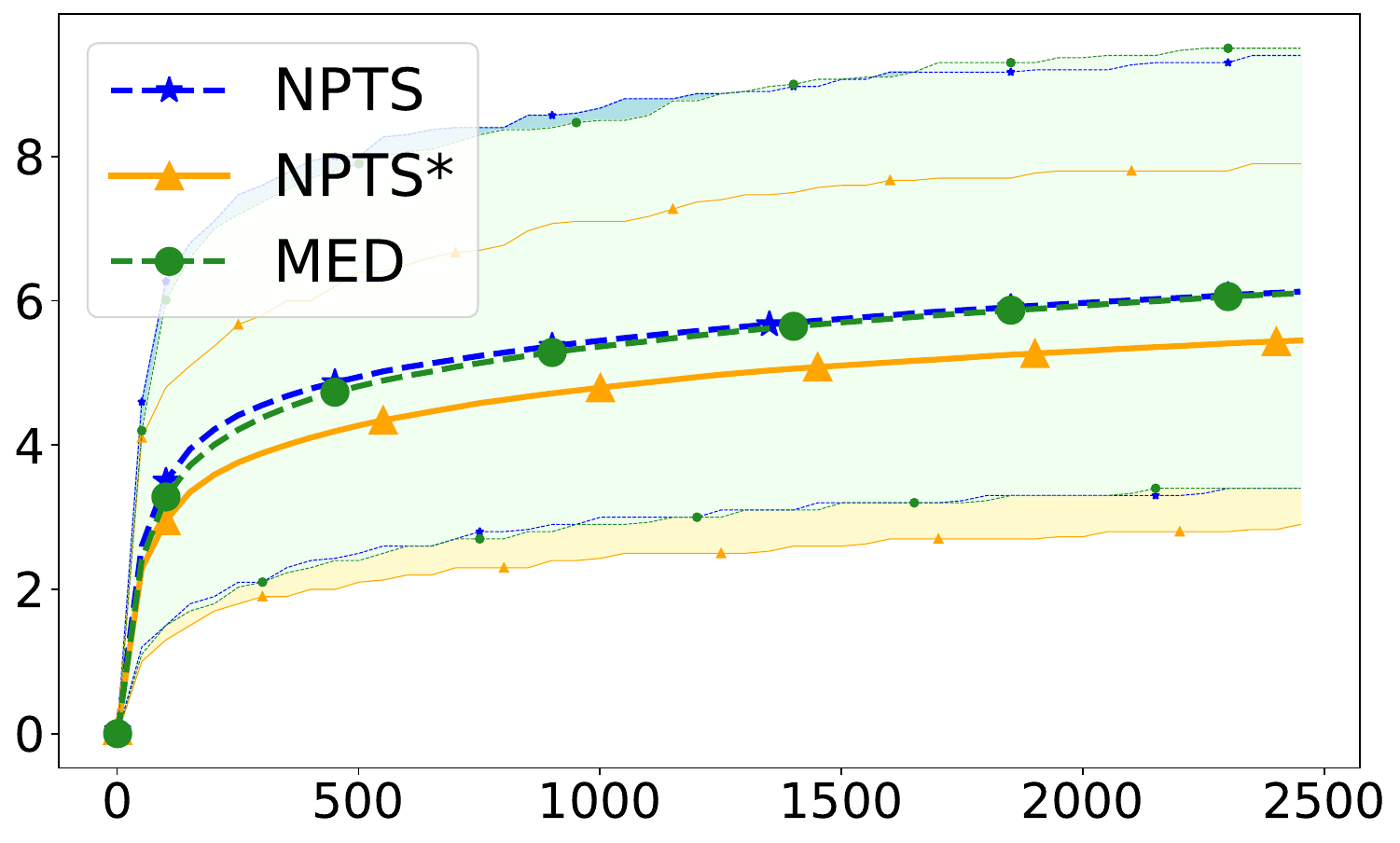}
    \includegraphics[width=0.45\linewidth]{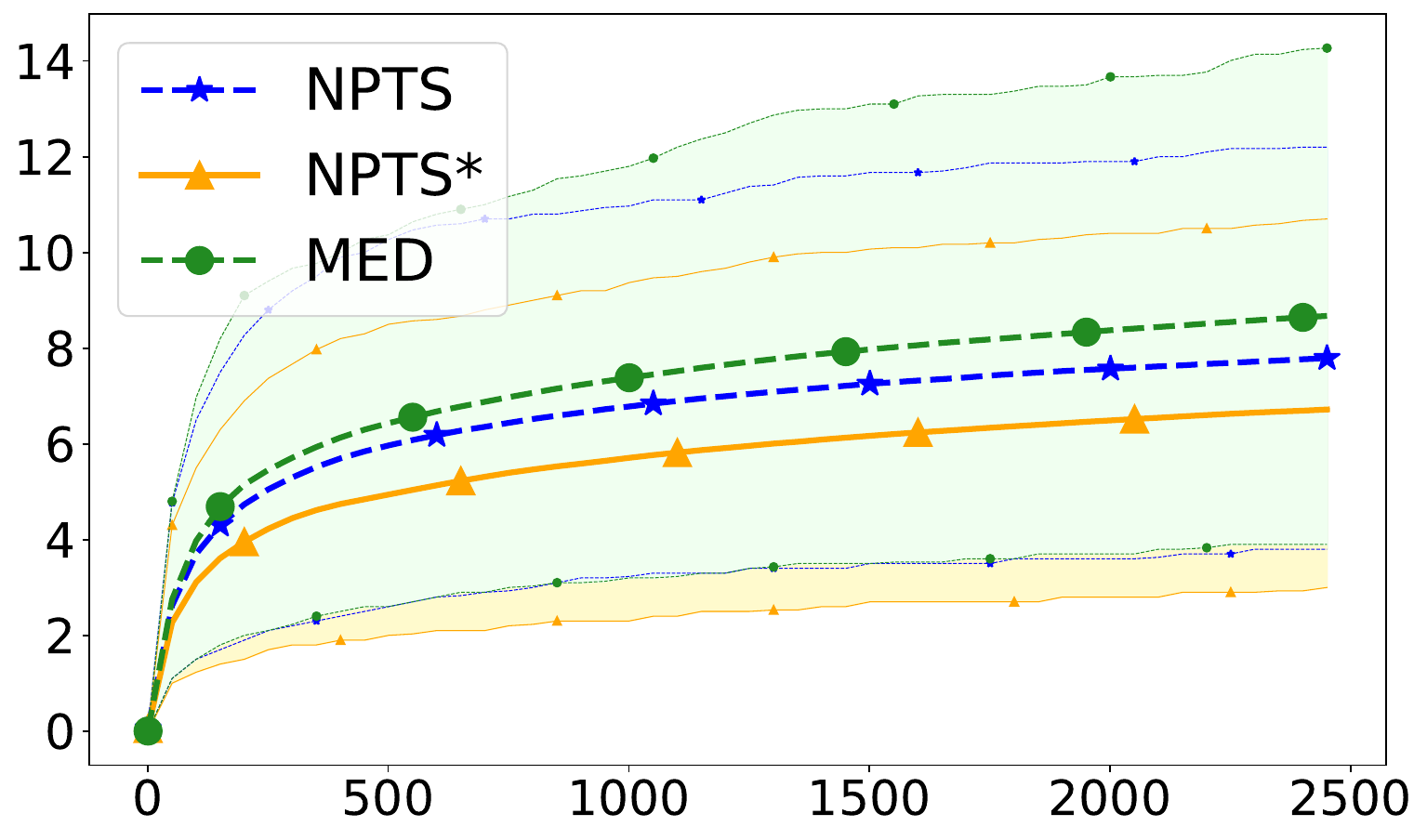}
    \includegraphics[width=0.45\linewidth]{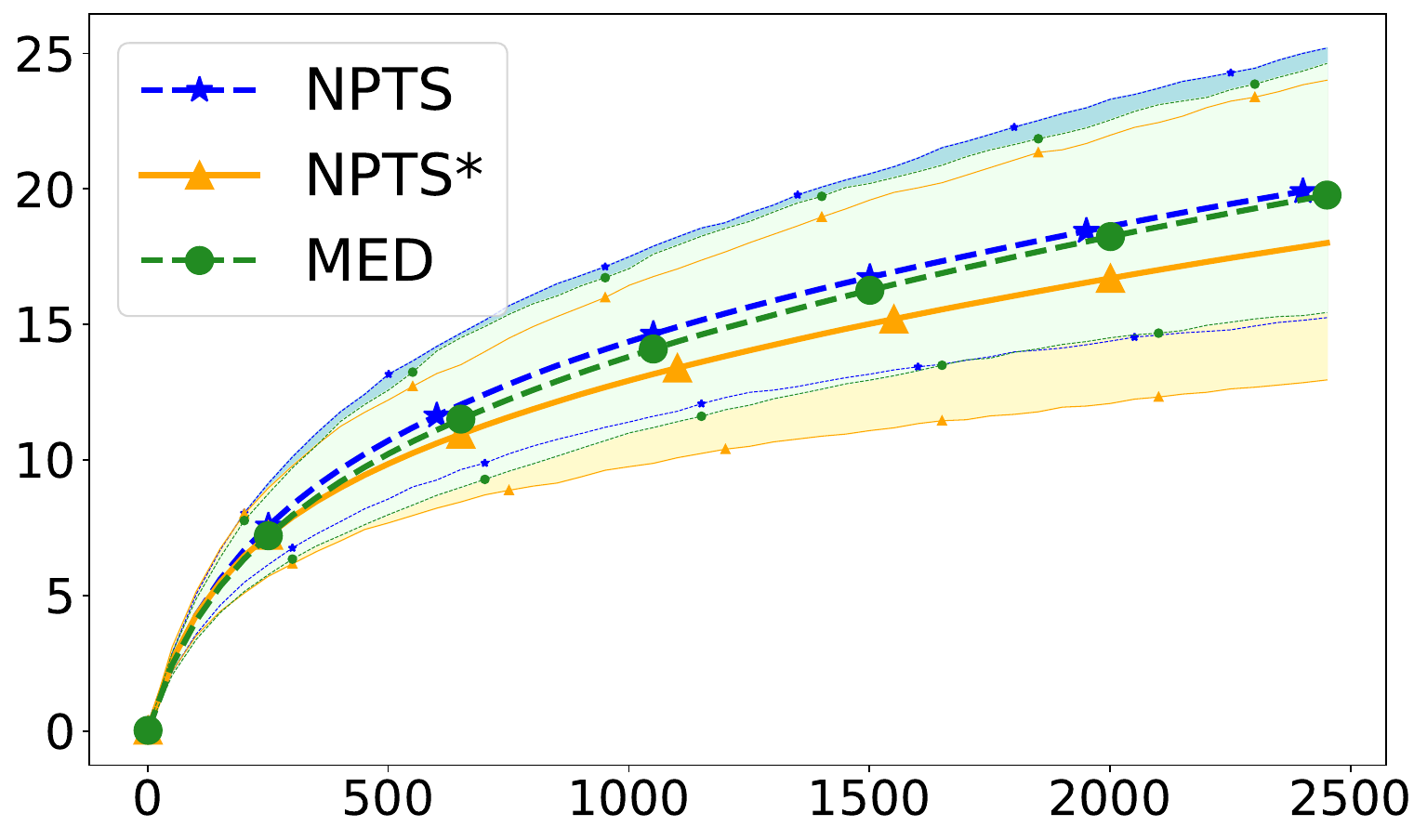}
    \includegraphics[width=0.45\linewidth]{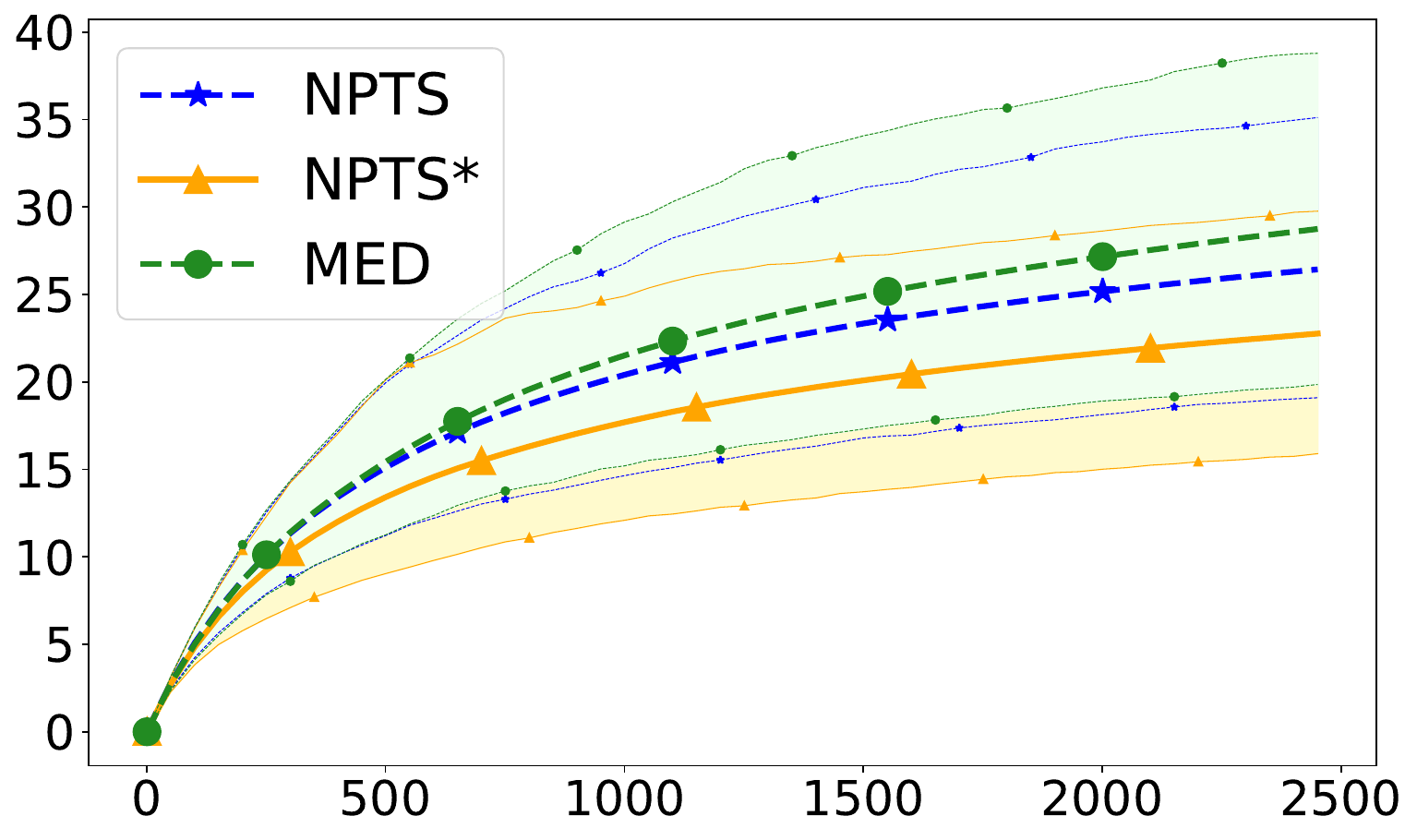}
    \caption{Average regret and $10$--$90\%$ percentiles as a function of $T$ on $500$ runs, for experiments \emph{Beta 1} (top left), \emph{Beta 2} (top right), \emph{Beta 3} (bottom left) and \emph{Beta 4} (bottom right).}
    \label{fig::exp_beta}
\end{figure}

\paragraph{Comparison of Computation Time for NPTS and NPTS$^\star$} We now provide an empirical evidence supporting the claim that TS$^\star$ can be more computationally efficient than TS, using the case of NPTS. In Tables~\ref{tab:comp_time_1}--\ref{tab:comp_time_4}, we report the average computation time\footnote{we used the \texttt{\%timeit} command in Python, with default settings, where the number of trajectories used to compute the estimate is automatically determined.} required to run a single trajectory of both NPTS and NPTS$^\star$ across various time horizons, ranging from $T=100$ to $T=5000$. These experiments correspond to the four Beta distribution setups introduced earlier. 

We observe that while the computation times for the two algorithms are similar at smaller horizons, a significant difference emerges as the horizon increases, particularly when the best empirical arm accumulates significantly more observations than the others. By the time horizon reaches $T=5000$, the ratio of computation times between NPTS and NPTS$^\star$ is approximately 6 for the simpler experiments (\emph{Beta 1} and \emph{Beta 2}), 2.5 for \emph{Beta 4}, and 1.74 for \emph{Beta 3}. Furthermore, when we tested $T=5 \times 10^4$ for \emph{Beta 3}, where the ratio increased to 7, with the algorithms taking 44 seconds and 6.3 seconds per run, respectively. This further illustrates that the computation gain of NPTS$^\star$ becomes significant 
as an ``asymptotic'' regime begins and the algorithm starts to exploit a seemingly best arm.

\begin{table}[tbp]
  \begin{subfigure}[b]{0.48\columnwidth}
    \centering
    \caption{\emph{Beta 1}}
    \label{tab:comp_time_1}
    \begin{tabular}{l|ccccc}
        Alg./$T$ & $100$ & $500$ & $1000$ & $2000$& $5000$\\
        \hline
        NPTS & 1.5 & 11.2 & 31.1 & 96.5 & 496 \\
        NPTS$^\star$ & 1.4 & 7.1 & 14.6 & 29.3 & 75 \\
        ratio & 1.07 & 1.57 & 2.1 & 3.3 & 6.6 \\
    \end{tabular}
  \end{subfigure}
  \hspace{0.02\columnwidth}
  \begin{subfigure}[b]{0.48\columnwidth}
    \centering
    \caption{\emph{Beta 2}}
    \label{tab:comp_time_2}
    \begin{tabular}{l|ccccc}
        Alg./$T$ & $100$ & $500$ & $1000$ & $2000$& $5000$\\
        \hline
        NPTS & 1.5 & 11.1 & 30.9 & 95.8 & 492 \\
        NPTS$^\star$ & 1.4 & 7.2 & 14.6 & 30.1 & 75.2 \\
        ratio & 1.07 & 1.54 & 2.1 & 3.2 & 6.5 \\
    \end{tabular}
  \end{subfigure}

\vspace{5mm}
    \begin{subfigure}[b]{0.48\columnwidth}
    \centering
    \caption{\emph{Beta 3}}
    \label{tab:comp_time_3}
    \begin{tabular}{l|ccccc}
        Alg./$T$ & $100$ & $500$ & $1000$ & $2000$& $5000$\\
        \hline
        NPTS & 3.6 & 22 & 53 & 141 & 613 \\
        NPTS$^\star$ & 3.6 & 20.5 & 45 & 103 & 351 \\
        ratio & 1 & 1.07 & 1.17 & 1.36& 1.74 \\
    \end{tabular}
  \end{subfigure}
  \hspace{0.02\columnwidth}
  \begin{subfigure}[b]{0.48\columnwidth}
    \centering
    \caption{\emph{Beta 4}}
    \label{tab:comp_time_4}
    \begin{tabular}{l|ccccc}
        Alg./$T$ & $100$ & $500$ & $1000$ & $2000$& $5000$\\
        \hline
        NPTS & 3.1 & 19.2 & 47.2 & 130 & 582 \\
        NPTS$^\star$ & 3.1 & 17.5 & 38.3 & 88.1 & 234 \\
        ratio & 1 & 1.09 & 1.23 & 1.47& 2.49 \\
    \end{tabular}
  \end{subfigure}
    \caption{Comparison of average computation time (in milliseconds) of trajectories for different values of $T$.}
\end{table}

\paragraph{Showcasing the potential of $h$-NPTS} We conclude this section by demonstrating the potential of $h$-NPTS for family of distributions characterized by a centered $h$-moment condition. We consider the function $h: x \mapsto x^2$ and an upper bound $B$, defined in each context. The only assumption used by the algorithm is thus that the distributions have a variance upper bounded by $B$. To test this policy, we constructed problem instances with Gaussian mixtures. Fitting Gaussian mixtures under standard assumptions—such as upper bounds, or sub-Gaussianity with \emph{known} proxy variance—can be challenging. An upper bound on the variance, however, may be the most straightforward property for a decision-maker to infer. Furthermore, the additional properties on $\cF$ required for our analysis (Theorem~\ref{th::npts}) are easy to verify. Although we implemented MED in the code, it was excluded from the experiments due to its prohibitive computational cost (around 1000 times slower than $h$-NPTS on \emph{Gauss 1} with $T=500$). This underscores the value of $h$-NPTS as a computationally efficient alternative to MED, with comparable asymptotic theoretical performance.

In these experiments, we compare $h$-NPTS to the \emph{Robust Dirichlet Sampling} (RDS) policy from \citet{baudry21DS}, which is also based on NPTS and achieves $\cO(\log(T)\log\log(T))$ regret on any \emph{light-tailed} instance, which is the case of Gaussian mixtures. As such, RDS serves as a suitable baseline for our problem instances. For a fair comparison, we use a conservative value of $B$, unless specified otherwise, approximately $20\%$ higher than the maximal value of $\bE[h(|X-\bE[X]|)]$ across all distributions in each problem instance.

As a sanity check, we first test $h$-NPTS on the \emph{Gauss 1} experiment presented above, with exactly $B=1$, and compare it with both RDS and TS. The performance of all three algorithms, displayed in Figure~\ref{fig::exp_GM} (top left), is very similar, affirming the potential of $h$-NPTS. 
We now turn to Gaussian mixtures, where each arm is characterized by three vectors $p=(p_1,\dots, p_M)$ (probability of each Gaussian mode), $\mu=(\mu_1,\dots, \mu_M)$ and $S=(\sigma_1, \dots, \sigma_M)$ (resp. the mean and standard deviation of each mode). We first consider a problem (\emph{GM 1}) with two Gaussian mixtures: $p_1=p_2=(0.5, 0.5)$, $S_1=S_2=(1,1)$, and $\mu_1=(0, 1)$ while $\mu_2=(-1, 0)$ (distribution $2$ is a translation of distribution $1$ by $1$), and we use $B=1.5$. In a second experiment (\emph{GM 2}), the two distributions have the same Gaussian modes $\cN(0,1)$ and $\cN(1,1)$, but different probabilities $p_1=(0.5, 0.5)$ and $p_2=(0.6, 0.4)$, and we still use $B=1.5$. Then, we still consider two distributions (\emph{GM 3}) that differ only by their mode probabilities, but with three modes: $\cN(-3, 0.25)$, $\cN(3, 0.25)$, and $\cN(0,1)$; and $p_1=(0.2, 0.6, 0.2)$ and $p_2=(0.15, 0.6, 0.25)$. We present the results for these experiments in Figure~\ref{fig::exp_GM}. Finally, we consider a fourth experiment (\emph{GM 4}) with five mixture arms and summarize the distributions and average regret for $h$-NPTS and RDS in Figure~\ref{fig::exp_GM4}.

In these experiments, we observe that $h$-NPTS generally performs very closely to RDS, except in \emph{GM 1}, where it significantly outperforms RDS. Notably, \emph{GM 1} represents an easier problem compared to the others, and in that case RDS might be suffering from its conservative theoretical guarantees. These results highlight the strong empirical performance of $h$-NPTS, which efficiently leverages its assumptions to perform well in all scenarios.

\begin{figure}[htbp]
    \centering
    \includegraphics[width=0.45\linewidth]{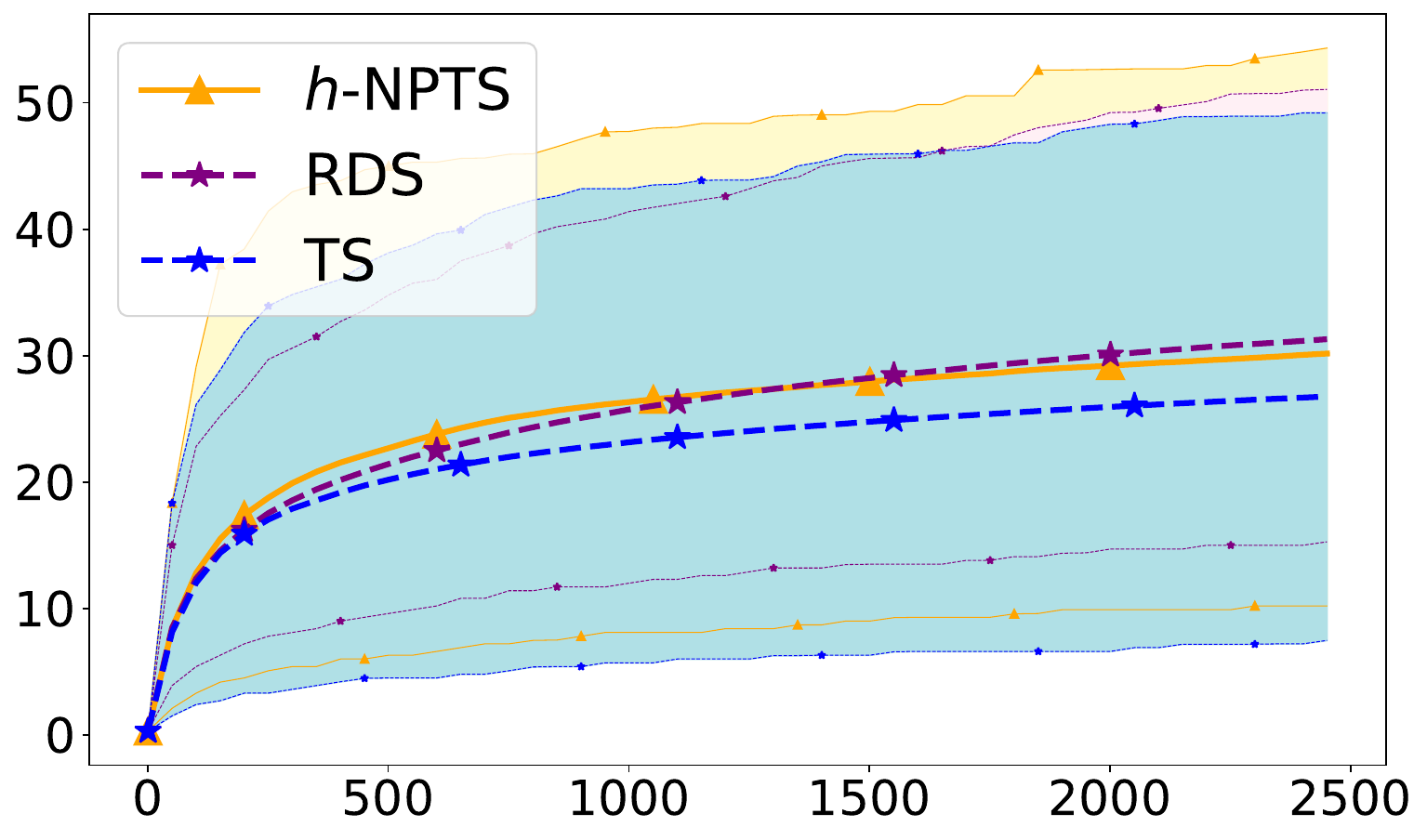}
    \includegraphics[width=0.45\linewidth]{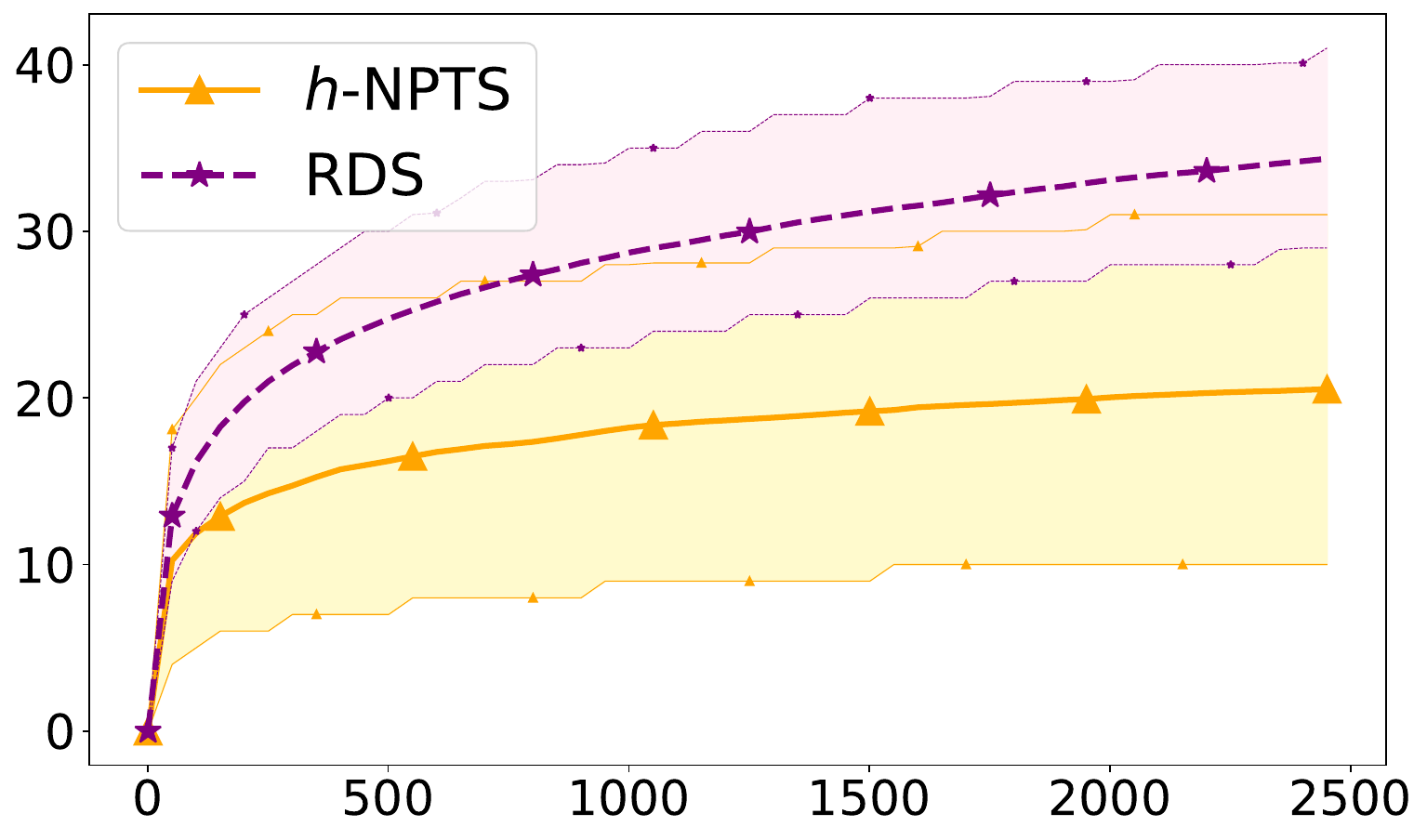}
    \includegraphics[width=0.45\linewidth]{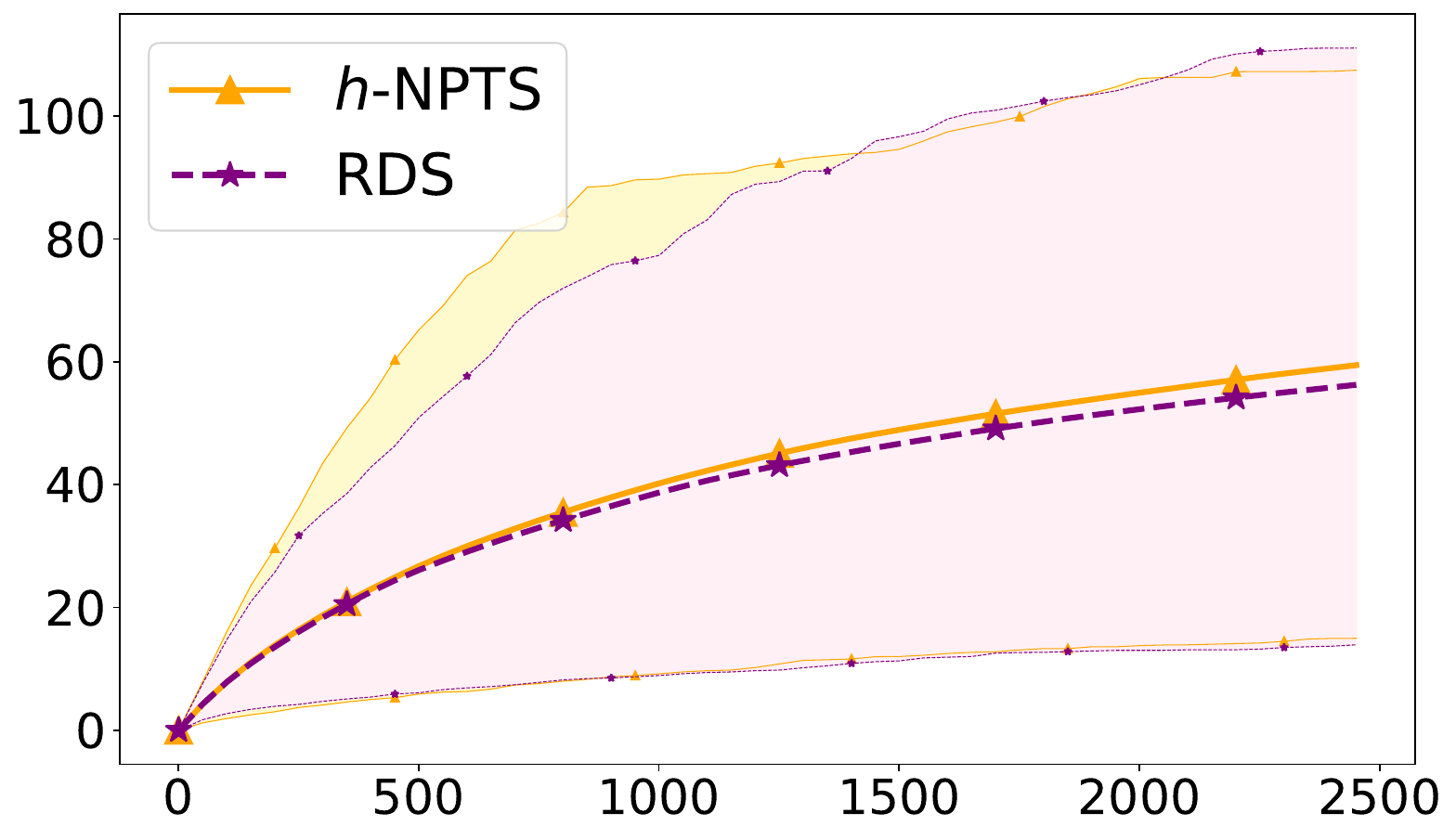}
    \includegraphics[width=0.45\linewidth]{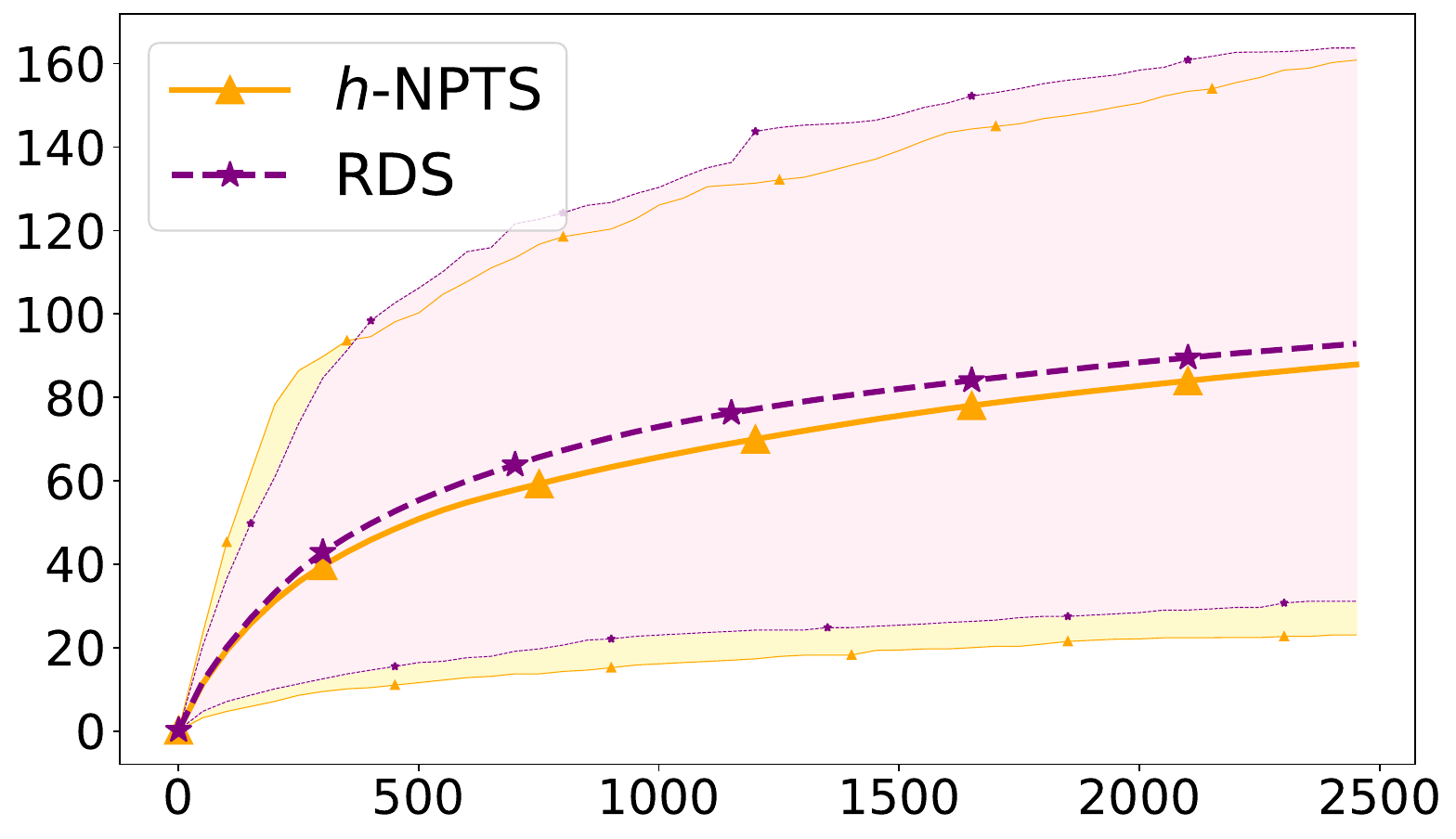}
    \caption{Average regret and $10$--$90\%$ percentiles as a function of $T$ on $500$ runs, for experiments \emph{Gauss 1} with $h$-NPTS (top left), \emph{GM 1} (top right), \emph{GM 2} (bottom left) and \emph{GM 3} (bottom right).}
    \label{fig::exp_GM}
\end{figure}

\begin{figure}[htbp]
    \centering
    \includegraphics[width=0.45\linewidth]{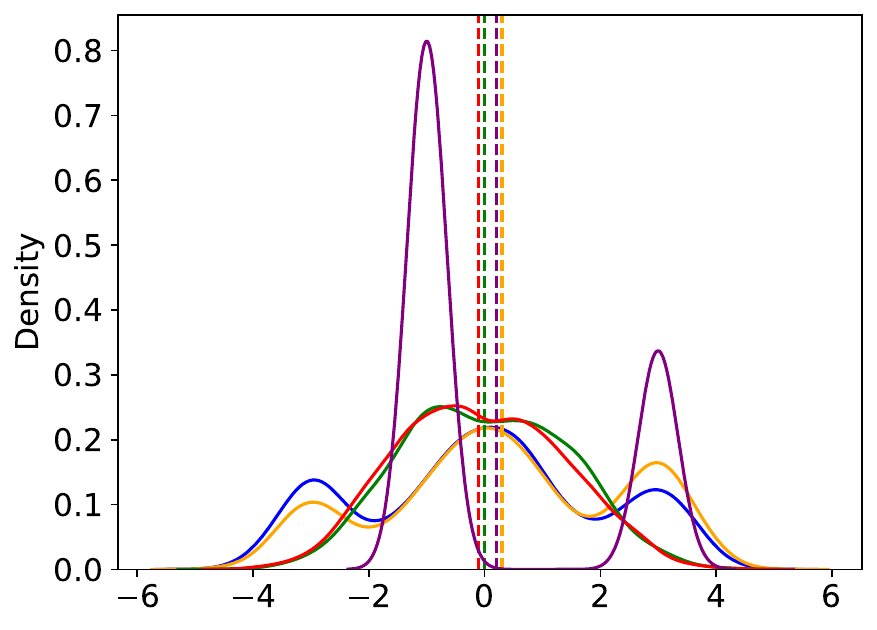}
    \includegraphics[width=0.45\linewidth]{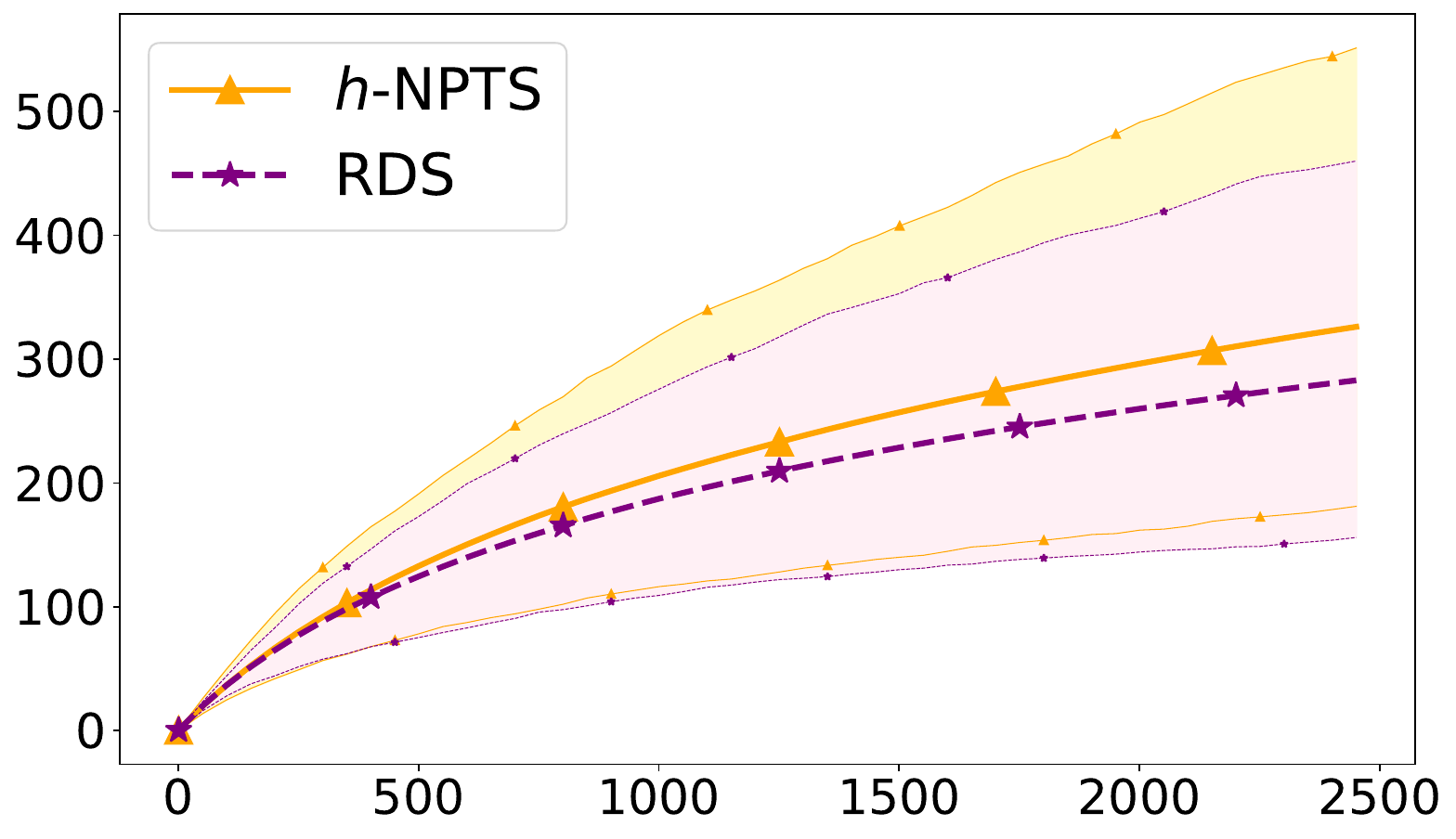}
    \caption[Distributions of the $K=5$ arms in experiment \emph{Gauss 4} (Left), and average regret and $10$--$90\%$ percentiles as a function of $T$ on $500$ runs on this problem (Right).]{Distributions of the $K=5$ arms in experiment \emph{Gauss 4} (Left)\footnotemark, and average regret and $10$-$90\%$ percentiles as a function of $T$ on $500$ runs on this problem (Right).}
    \label{fig::exp_GM4}
\end{figure}

\footnotetext{The exact parameter values can be found in the notebook of the companion Github repository.}

\paragraph{Experimental Conclusions} These experiments, conducted in both parametric and non-parametric settings, provide strong empirical support for the claim in Section~\ref{subsec::presentation_TS} that TS$^\star$ is a close variant of TS. The two policies exhibit nearly identical performance across most cases, with TS$^\star$ occasionally outperforming TS slightly in terms of regret, with significantly improved computational efficiency in non-parametric settings. Our results show that MED consistently underperforms compared to TS and TS$^\star$ in parametric settings but matches their performance in non-parametric cases. The finite-time performance gap between MED and TS in parametric settings could be attributed to potential polynomial multiplicative factors (bounded by $[c_n, C_n]$ for $n \in \mathbb{N}$) in the sampling probabilities for sub-optimal arms. Finally, our experiments with $h$-NPTS confirm that it is indeed an efficient approach --in terms both of performance and computation time compared to MED-- to tackle in a principled way the families of distributions characterized by a simple non-parametric assumption.

\section{Conclusion and perspectives}

In this paper we provided a general recipe to derive regret bounds for randomized bandit algorithms, that allowed us to revisit the Minimum Empirical Divergence (MED) and Thompson Sampling (TS) algorithms for several families of distributions. Guided by our theoretical results, we suggest that \rev{a slight variant of} TS may be interpreted as a way to approximate MED through sampling, that is appealing when the sampling probabilities of MED are hard to compute. Driven by these new insights, we could study in more details some families of distributions satisfying some $h$-moment conditions, for example
distributions with a known variance upper bound. We proved the optimality of a MED algorithm in this setting, and proposed the more computationally efficient $h$-NPTS. While its analysis is intricate, $h$-NPTS is very simple to implement and has (close to optimal) logarithmic regret, making it an appealing solution under such model. 

\rev{An open question remains as to whether supplementary conditions could be identified to achieve the $\sqrt{KT}$ problem-independent bound up to logarithmic factors, which would enable a unified approach to addressing all facets of frequentist regret analysis.} Finally, an interesting research direction may be to find an equivalent unified proof for deterministic algorithms, that would for instance allow to analyze KL-UCB and IMED under the same framework. In particular, it would be interesting to identify if the same Assumption~\ref{ass::sufficient_prop} is sufficient for these algorithms, and how Assumption~\ref{ass::sampling_prob} translates to index comparisons in the deterministic case. 
\newpage

\appendix
\newpage 

{
\tableofcontents
}
\newpage

\section{Index and notation}\label{app::notation}

\paragraph{Notation} We summarize some recurring notation used in this paper.
\begin{itemize}
    \item For an integer $n \in \N$, $[n] = \{1,\dots, n\}$.
    \item $\cP$: set of all possible cumulative distribution functions on $\R$: for any $F\in \cP$, it holds that $F$ is non-decreasing, $\lim_{x\to -\infty}F(x)=0$ and $\lim_{x\to +\infty} F(x)=1$.
.
    \item $\cF$ is a family of distributions.
    \item $(F_k)_{k \in [K]}$: distributions of a $K$-armed bandit, assumed to come from the same family of distributions $\cF$. $(\mu_k)_{k \in [K]}$ is used to denote the means of these distributions. 
    \item $T$ is the time horizon of the bandit algorithm. For any $t\in \N$, $\cH_t = (A_s, r_s)_{s \in [t]}$ is the sequence of arms pulled and rewards collected at the end of each time step.
    \item $N_k(t) = \sum_{s=1}^t \ind(A_s=i)$ is the number of pulls of an arm $k$ up to (and including) time $t$.
    \item $\mu_k(t) = \frac{1}{N_k(t)}\sum_{s=1}^t r_t \ind(A_s=k)$ is the empirical mean of an arm $k$ at round $t$, and the function $F_k(t) : x \in \R \mapsto = \frac{1}{N_k(t)}\sum_{s=1}^t \ind(A_s=k, r_t\leq x)$ is its empirical cdf.
    \item For a sample size $n\in \N$, we use the notation $\mu_{k,n}$ and $F_{k,n}$ respectively for the empirical mean and cdf corresponding to the $n$ first observations collected from arm $k$.
\end{itemize}

\paragraph{Index of $\kinf$ functions} We detail the computation of the $\kinf$ function of each family of distributions considered in this paper. We recall that these functions are used to define the lower bound on the regret for the family $\cF$ \citep{burnetas96LB}, as each optimal arm $k$ satisfies
\[ \liminf_{T\rightarrow \infty} \frac{\bE[N_k(T)]}{\log(T)} \!\geq\! \frac{1}{\kinf^\cF(\nu_k, \mu^\star)} \;,\;\;\; \kinf^\cF(\nu_k, \mu^\star)=\inf_{\nu' \in \cF} \left\{\KL(\nu_k, \nu')\!:\! \bE_{\nu'}(X)\!>\! \mu^\star \right\}\;.\]

\begin{itemize}
	\item \textbf{Single Parameter Exponential Families:} the distribution $F$ is mapped to its mean $\mu_F$, and for any $\mu$ it holds that \[\kinf^\cF(F, \mu)\coloneqq \kl(\mu_F, \mu) \ind(\mu\geq \mu_F).\] 
 The function $\kl$ denotes the Kullback-Leibler divergence corresponding to the family $\cF$, as a function of two means.
	\item \textbf{Bounded distributions} with a known upper bound $B$, \[\kinf^\cF(F, \mu) =  \max_{\lambda \in \left(0, \frac{1}{B-\mu}\right]}\bE_{F}\left[\log\left(1-\lambda(X-\mu)\right)\right]\;.\]
	\item \textbf{Gaussian distributions} with unknown means and variances: the distribution $F$ is mapped to its mean $\mu_F$ and variance $\sigma_F^2$, and
	\begin{align*} \kinf^\cF(F, \mu) &= \frac{1}{2}\log\left(1+\frac{(\mu-\mu_F)^2}{\sigma_F^2}\right)\times \ind(\mu\geq \mu_F)\;,\end{align*}.
	\item \textbf{$h$-moment condition:} for a function $h$ and a constant $B$,
	\begin{align*}\kinf^\cF(F, \mu) &= 
	\max_{(\lambda_1, \lambda_2) \in \cR_2}\bE_{F}\left[\log\left(1-\lambda_1(X-\mu)-\lambda_2 (B-h(X_i))\right)\right] \;,\end{align*}
	with $\cR_2= \{(\lambda_1,\lambda_2) \in (\R^+)^2 \;:\; \forall x \in \R \;, 1-\lambda_1 (x-\mu) - \lambda_2 (B-h(|x|)) \geq 0 \}$ is the set for which the logarithm is defined. We show in Appendix~\ref{app::cent_h_P15} that for centered conditions $\kinf^\cF$ has the same form, except that $h(|X|)$ is replaced by $h(|X-\mu|)$ in the formula.
\end{itemize}

\section{Proofs related to the generic regret analysis} \label{sec::generic_proof}

As explained in the main paper, in this section we detail the proof of Theorem~\ref{th::main_result}. In a first part we state Assumption~\ref{ass::relaxed_sp}, which is slightly more generic than Assumption~\ref{ass::sampling_prob} presented in the main paper. We then prove an intermediate regret bound under this assumption, which in turns proves Lemma~\ref{th::first_bound} in the main paper. We then complete the proof of Theorem~\ref{th::main_result} using the additional statements of Assumption~\ref{ass::sufficient_prop}.

\subsection{\rev{Presentation of Assumption~\ref{ass::relaxed_sp} (relaxation of Assumption~\ref{ass::sampling_prob})}}

The relaxation is based on two simple facts from the analysis: (1) the upper bound on the sampling probability is only necessary when the empirical distribution of a sub-optimal arm is ``close enough'' to its true distribution, and (2) a slightly looser lower bound is sufficient in the analysis if an arbitrary precision in the exponential term can be reached, and (3) these two conditions are only needed for ``large enough'' sample size. Hence, while its statement is relatively heavy (compared to Assumption~\ref{ass::sampling_prob}), the intuition behind the relaxation is very natural. While we showed that all MED policies satisfy Assumption~\ref{ass::sampling_prob} directly, we need the results to hold under Assumption~\ref{ass::relaxed_sp} for the analysis of several \TS{} policies, including $h$-NPTS in Section~\ref{sec::NPTS}.

\begin{assumption}[Relaxation of Assumption~\ref{ass::sampling_prob}]\label{ass::relaxed_sp} $\!$

\textbf{Upper bound:} There exists a neighborhood of $F_k$ (defined with any metric), denoted by $\cB(F_k)$, and $n_0\in \N$ such 
that, for all $t\in [T]$, the sampling probability satisfies 
\begin{equation}\label{eq::ub_sp} N_k(t)\geq n_0,\; F_k(t)\in \cB(F_k)\; \Longrightarrow \; p_k^\pi(t) \leq C_{N_k(t)} \exp\left(-N_k(t) \frac{D_\pi(F_k(t), \mu_\star(t))}{1+a_{N_k(t)}}\right)\;,\end{equation}
where
$(C_{n})_{n\in\mathbb{N}}$, that can depend on $F_k$ through the definition of $\cB(F_k)$, is a sequence satisfying $\log(C_n) = o(n^\alpha)$ for some $\alpha < 1$,
and $(a_n)_{n \in \N}$ is a non-negative sequence. Furthermore, it holds that $\sum_{n=1}^{+\infty} \bP(F_{k,n} \notin \cB(F_k)) < +\infty$: the empirical distribution belongs to $\cB(F_k)$ with high probability for $n$ large enough.

\textbf{Lower bound:} For any 
$\eta>0$, there exists $n_\eta \in \N$ and a sequence $(c_n)_{n\in \N}$ (that might depend on $\eta$) such that for $N_k(t)\geq n_\eta$ it holds that 

\begin{equation}\label{eq::lb_sp}\forall k'\neq k, \; p_k^\pi(t) \geq p_{k'}^\pi(t)\times  c_{N_k(t)}^{-1} \exp\left(-N_k(t)\frac{D_\pi(F_k(t), \mu_\star(t))+\eta}{1+a_{N_k(t)}}\right)\;,\end{equation}
for some sequence $(c_n)_{n\in\N}$ satisfying $c_n = o(\exp(n\eta/2))$. Additionally, there exists some constant $p_\eta$ such that $p_k(t) \geq p_\eta$ for $N_k(t)< n_\eta$.
\end{assumption}

It is clear that Assumption~\ref{ass::relaxed_sp} is strictly more general than Assumption~\ref{ass::sampling_prob}, that we introduced in the main part of this paper to ease the presentation. Indeed, under Assumption~\ref{ass::sampling_prob} the lower bound holds for $\eta=0$ and $n_\eta=1$, and the upper bound holds for $\cB(F_k) = \cP$, the set of all probability distributions on the real line. Hence, proving the result under the relaxed assumption is sufficient to prove them under the stronger assumption.

\subsection{Proof of the intermediate bounds (Lemmas~\ref{th::first_bound} and \ref{lem::first_bound_app})} \label{app::proof_th1}

We now state the more general version of Lemma~\ref{th::first_bound}, and detail its proof.

\begin{lemma}\label{lem::first_bound_app}
Let $\pi$ be a policy satisfying Assumption~\ref{ass::sampling_prob} for a function $D_\pi$ satisfying (A1) of Assumption~\ref{ass::relaxed_sp}. Then, for any $c > \displaystyle \lim_{n\to \infty} a_n$ and for any $\epsilon>0$, and for any parameter $\eta>0$ (and corresponding $p_\eta, n_\eta$ in Assumption~\ref{ass::relaxed_sp}), any sub-optimal arm $k$ satisfies 
\begin{equation}\label{eq::th_2_app}\bE[N_k(T)] \leq  (1+c) \frac{\log(T)}{D_\pi(F_k, \mu_1)} + B(c, \epsilon) + (n_\eta-1) p_\eta^{-1} + o(\log(T))\;,\end{equation}
where the terms captured in $o(\log(T))$ do not depend on $c$, and with \begin{align*}
	B(c, \epsilon) = &\underbrace{\sum_{n=1}^{+\infty} \bP\left(D_\pi(F_{k,n}, \mu_1-\epsilon)\frac{1+c}{1+a_n} \leq D_\pi(F_k, \mu_1)\right)}_{B_1} \\ 
 + &\underbrace{\sum_{n=1}^{+\infty} c_n \bE\left[\ind(\mu_{1,n}\leq \mu_1-\epsilon) e^{n \frac{D_\pi(F_{1,n}, \mu_1-\epsilon)+\eta}{1+a_n}}\right]}_{B_2}
\end{align*}
\end{lemma}

\begin{proof}
The beginning of the proof follows closely standard analysis of Thompson Sampling \citep{AgrawalG17}, assuming that Assumption~\ref{ass::relaxed_sp} holds for a generic function $D_\pi$ and sequences $(a_n, c_n, C_n)_{n \in \N}$. We consider some parameters $c>0, \epsilon\leq \epsilon_0$ (defined in \ref{ass::relaxed_sp}). Then, for any $u \in \N$ we can first upper bound the number of pulls of a sub-optimal arm $k$ by

\begin{align*}
\bE\left[N_k(T)\right] & \leq u + \bE\left[\sum_{t=u}^{T-1} \ind(A_{t+1}=k, N_k(t)>u)\right] \;.
\end{align*}
Classically, we consider two ``good'' events that ensures (when both of them are satisfied) that the sampling probability for a sub-optimal arm $k$ is proportional to $e^{-N_k(t) D_\pi(F_k, \mu_1)}$,
\begin{align}\cG(t)=\{\mu_\star(t) \geq \mu_1 -\epsilon \} \;, \; \text{and} \quad \cJ_k(t)= \left\{\frac{D_\pi(F_k(t), \mu_\star(t))}{1+a_{N_k(t)}} \geq \frac{D_\pi(F_k, \mu_1)}{1+c}, F_k(t) \in \cB(F_k) \right\} \;,\label{def_Jk}
\end{align}
and we denote by $\bar \cG(t)$ and $\bar \cJ_k(t)$ their complementary events. Then, we write that 

\begin{align}
\bE\left[N_k(T)\right] & \leq u + \underbrace{\bE\left[\sum_{t=u}^{T-1} \ind(A_{t+1}=k, N_k(t)>u, \cJ_k(t))\right]}_{\text{(F1)}} \nonumber\\
& + \underbrace{\bE\left[\sum_{t=u}^{T-1} \ind(A_{t+1}=k, N_k(t)>u, \cG(t), \bar \cJ_k(t))\right]}_{\text{(F2)}} + \underbrace{\bE\left[\sum_{t=u}^{T-1} \ind(A_{t+1}=k, N_k(t)>u, \bar \cG(t))\right]}_{\text{(F3)}}  \;.\label{decomp_F123}
\end{align}

\paragraph{Upper bounding (F1)}
Since $\log(C_n)=o(n^{\alpha})$ is assumed in Assumption~\ref{ass::relaxed_sp}, there exists $\beta>0$ such that $C_n\le  \exp(\beta n^{\alpha})$ for all $n\in\N$.
Then
the event $\cJ_k(t)$ provides an upper bound on $p_k^\pi(t)$ and we obtain 
\begin{align}
\text{(F1)} & = \bE\left[\sum_{t=u}^{T-1} \ind(A_{t+1}=k, N_k(t)>u, \cJ(t))\right] \nn\\
& \leq \bE\left[\sum_{t=u}^{T-1} \ind(N_k(t)>u, \cJ_k(t)) \times p_k^\pi(t)\right] \nn\\
& \leq \bE\left[\sum_{t=u}^{T-1} \ind(N_k(t)>u, \cJ_k(t)) \times C_{N_k(t)}\exp\left(-N_k(t) \frac{D_\pi(F_k(t), \mu_\star(t))}{1+a_{N_k(t)}}\right)\right] \nn\\
& \leq \bE\left[\sum_{t=u}^{T-1} \ind(N_k(t)>u, \cJ_k(t))\times C_{N_k(t)} \exp\left(-N_k(t) \frac{D_\pi(F_k, \mu_1)}{1+c}\right)\right] \label{use_Jk}\\
& \leq  \bE\left[\sum_{t=u}^{T-1} \ind(N_k(t)>u, \cJ_k(t)) \exp\left(\beta u^{\alpha}-u \frac{D_\pi(F_k, \mu_1)}{1+c}\right)\right]\nn \\
& \leq T \exp\left(\beta u^{\alpha}-u \frac{D_\pi(F_k, \mu_1)}{1+c}\right) \;,\nn
\end{align}
where we used \eqref{def_Jk} (definition of $\cJ_k(t)$) in \eqref{use_Jk}, assumed $u>n_0$ for $n_0$ defined in Assumption~\ref{ass::relaxed_sp}, and further assumed that $C_n \exp\left(-n \frac{D_\pi(F_k,\mu_1)}{1+c}\right)$ is decreasing in $n$ for $n\geq n_0$. Note that this last condition can be guaranteed easily, since $C_n$ is dominated by the exponential term, and up to redefining the first values of $(C_n)$ is this is not the case. Here we set
\begin{align}
u = \left\lceil (1+c)\frac{\log(T)+ \gamma \log(T)^{\alpha}}{D_\pi(F_k, \mu_1)}\right\rceil \vee n_0
\label{choice_u}
\end{align}
for a fixed value of $\gamma$ that will be specified later.
This choice satisfies $u\geq n_0$, that we can now ignore for the rest of the analysis (it is captured by the $o(\log(T))$ notation in the statement). Therefore, we obtain
\begin{align*}
	\text{(F1)} & \leq T \exp\left(\beta u^\alpha -u \frac{D_\pi(F_k, \mu_1)}{1+c}\right)\\
&\leq \exp\left( \beta \left(1 +(1+c)\frac{\log(T)+ \gamma \log(T)^{\alpha}}{D_\pi(F_k, \mu_1)}\right)^\alpha -\gamma \log(T)^\alpha \right) \\
&\leq \exp\left(\left(\beta\left(1 +\frac{(1+c)(1+ \gamma)}{D_\pi(F_k, \mu_1)}\right)^\alpha -\gamma\right) \log(T)^\alpha \right) \;.
\end{align*}
Then, we can choose any $\gamma\geq 1$ such that $\gamma \geq \beta\left(1+\frac{(1+c)(1+ \gamma)}{D_\pi(F_k, \mu_1)}\right)^\alpha$, which exists since the right-hand term is of order $\gamma^\alpha$ because $\alpha<1$ by assumption.

For such choice, (F1) vanishes when $T$ becomes large and the $o_c(\log(T))$ term in \eqref{eq::th_2} scales more precisely with $\log(T)^\alpha$ because the term $u$ becomes dominant in \eqref{decomp_F123}.

\rev{\begin{remark}[More precise scaling for smaller $(C_n)_{n\in \N}$]\label{rem::scaling}
With a very similar proof, for polynomial $C_u$ we could choose $u$ of the form \[u = (1+c)\frac{\log(T)+ \gamma \log(\log(T)))}{D_\pi(F_k, \mu_1)},\] 
instead of \eqref{choice_u}
and obtain the same result, for a well-chosen $\gamma$. Finally, for $(C_n)_{n\in \N}$ satisfying $C_n\leq C$ for some constant $C>0$, it is clear from ~\eqref{use_Jk} that \emph{(F1)} can be upper bounded by $CTe^{-u\frac{D_\pi(F_k, \mu_1)}{1+c}}$, and so we can directly choose
\[u = (1+c)\frac{\log(T \cdot D_\pi(F_k, \mu_1))}{D_\pi(F_k, \mu_1)}\]
and obtain
\[\text{\emph{(F1)}} \leq \frac{(1+c) \log(T \cdot D_\pi(F_k, \mu_1)) + C}{D_\pi(F_k, \mu_1)} \;, \]
that we use in Section~\ref{subsubsec::discussion} to discuss problem-independent guarantees under our framework. 
\end{remark}
}

\paragraph{Upper bounding (F2)} We remark that as $D_\pi$ is non-decreasing in the second argument we have \[\left\{\cG(t) \cap \bar \cJ_k(t)\right\} \subset \cM_k(t) \coloneqq \left\{\frac{D_\pi(F_k(t), \mu_1-\epsilon)}{1+a_{N_k(t)}} \leq \frac{D_\pi(F_k, \mu_1)}{1+c} \right\}\cup \{F_k(t) \notin \cB_k(F_k) \} \;.\]
For simplicity we denote by $\cM_{k, n}$ the corresponding event replacing $F_k(t)$ by $F_{k,n}$. Then,

\begin{align*}
\text{(F2)} &= \bE\left[\sum_{t=u}^{T-1} \ind(A_{t+1}=k, N_k(t)>u, \cG(t), \bar \cJ_k(t))\right] \\
& \leq \bE\left[\sum_{t=u}^{T-1} \ind(A_{t+1}=k, N_k(t)>u, \cM_k(t))\right] \\
& \leq \bE\left[\sum_{t=u}^{T-1} \sum_{n=u}^{T} \ind(A_{t+1}=k, N_k(t)=n, \cM_{k, n})\right] \\
& \leq \bE\left[\sum_{n=u}^{T-1} \ind(\cM_{k, n}) \sum_{t=u}^{T-1} \ind(A_{t+1}=k, N_k(t)=n)\right] \\
& \leq \sum_{n=u}^{+\infty} \bP(\cM_{k,n}) \;.
\end{align*}

We then upper bound $\bP(\cM_{k,n})$ by the sum of the probabilities of the two events, and thus obtain that 
\[(\text{F2}) \leq \sum_{n=1}^{+\infty} \bP\left(\frac{D_\pi(F_k(t), \mu_1-\epsilon)}{1+a_{N_k(t)}} \leq \frac{D_\pi(F_k, \mu_1)}{1+c}\right) + \sum_{n=1}^{+\infty} \bP(F_{k, n} \notin \cB(F_k)), \]
which gives the term $B_1$ in the statement of Lemma~\ref{lem::first_bound_app}, and a constant term thanks to Assumption~\ref{ass::relaxed_sp}. Note that this term is upper bounded by $0$ under Assumption~\ref{ass::sampling_prob}, since $\cB(F_k)=\cP$. 

\paragraph{Upper bounding (F3)} We use a proof scheme from \cite{AgrawalG17} and the lower bound part of Assumption~\ref{ass::relaxed_sp} to obtain that
\begin{align*}
\text{(F3)} & \leq \bE\left[\sum_{t=u}^{T-1} \ind(N_k(t)>u, A_{t+1}=k, \bar \cG_t)\right] \\
& = \bE\left[\sum_{t=u}^{T-1} \ind(N_k(t)>u, \bar \cG_t) \times p_k^\pi(t)\right] \\
& = \bE\left[\sum_{t=u}^{T-1} \ind(N_k(t)>u, \bar \cG_t)\times p_1^\pi(t) \times \frac{p_k^\pi(t)}{p_1^\pi(t)}\right] \\
& \leq \bE\left[\sum_{t=u}^{T-1} \ind(\bar \cG_t) \times \frac{p_k^\pi(t)}{p_1^\pi(t)}\times \bE[\ind(A_{t+1}=1)|\cH_{t-1}]\right] \\
& = \bE\left[\sum_{t=u}^{T-1} \ind(A_{t+1}=1, \bar \cG_t) \times \frac{p_k^\pi(t)}{p_1^\pi(t)}\right]
\end{align*}
We then use Assumption~\ref{ass::relaxed_sp} and (A1) of Assumption~\ref{ass::sufficient_prop} to upper bound the ratio of probabilities as follows, for any $\eta>0$ and $n\geq n_\eta$ as formulated in the assumption,
\begin{align*}
\frac{p_k^\pi(t)}{p_1^\pi(t)} &\leq c_{N_1(t)} \exp\left(N_1(t)\frac{D_\pi(F_1(t), \mu^\star(t))+\eta}{1+a_{N_1(t)}}\right), \quad &\text{(\ref{ass::relaxed_sp})} \\
&\leq c_{N_1(t)} \exp\left(N_1(t)\frac{D_\pi(F_1(t), \mu_1-\epsilon)+\eta}{1+a_{N_1(t)}}\right)\;, \quad &\text{(A1) in \ref{ass::sufficient_prop}}
\end{align*}
and we thus obtain an expression that is fully determined by the first $N_1(t)$ samples of arm $1$. We then use a union bound on the number of pulls of this arm, and simplifies the event $\mu^\star(t) \leq \mu_1-\epsilon$ to $\mu_1(t) \leq \mu_1-\epsilon$, so that all terms in the expectation now depend on arm $1$ only. For convenience, we use the notation \[\text{PR}_n(\mu_1-\epsilon)=c_n e^{n\frac{D_\pi(F_{1,n}, \mu_1-\epsilon)+\eta}{1+a_n}}\ind(n\geq n_\eta)+ p_\eta^{-1}\ind(n<n_\eta),\] and obtain 

\begin{align*}
\text{(F3)} & \leq \bE\left[\sum_{t=u}^{T-1} \sum_{n=1}^{T-1} \ind(A_{t+1}=1, N_1(t)=n, \mu_{1,n}< \mu_1-\epsilon) \times \text{PR}_n(\mu_1-\epsilon)\right] \\
& \leq \bE\left[\sum_{n=1}^{T-1}\ind(\mu_{1,n} < \mu_1-\epsilon)\times \text{PR}_n(\mu_1-\epsilon)\times \sum_{t=u}^{T-1} \ind(A_{t+1}=1, N_1(t)=n)\right] \\
& \leq \sum_{n=1}^{T-1}\bE\left[\ind(\mu_{1,n} < \mu_1-\epsilon)\times \text{PR}_n(\mu_1-\epsilon)\right]\\
& \leq \sum_{n=1}^{n_\eta-1} \bP(\mu_{1,n} \leq \mu_1-\epsilon) p_\eta^{-1} + \sum_{n=n_\eta}^{+\infty}  \bE\left[\ind(\mu_{1,n} < \mu_1-\epsilon)\times c_n e^{n\frac{D_\pi(F_{1,n}, \mu_1-\epsilon)+\eta}{1+a_n}}\right] \\ 
&\leq (n_\eta-1) p_\eta^{-1} + \sum_{n=1}^{+\infty}  c_n \bE\left[\ind(\mu_{1,n} < \mu_1-\epsilon)\times e^{n\frac{D_\pi(F_{1,n}, \mu_1-\epsilon)+\eta}{1+a_n}}\right] \;.
\end{align*}
This completes the proof of Lemma \ref{lem::first_bound_app}, and at the same time the proof of Lemma~\ref{th::first_bound} in the main paper with $n_\eta=1$ and $\eta=0$.

\end{proof}
\subsection{Proof of Theorem~\ref{th::main_result}} \label{app::proof_th2}

We start the proof of the theorem from the intermediate bound of Lemma~\ref{lem::first_bound_app}. The remaining steps consist in further upper bounding $B(c, \epsilon)$, assuming that Assumption~\ref{ass::sufficient_prop} holds in full. We recall that

\begin{align*}
	B(c, \epsilon) = &\underbrace{\sum_{n=u}^{+\infty} \bP\left(D_\pi(F_{k,n}, \mu_1-\epsilon)\frac{1+c}{1+a_n} \leq D_\pi(F_k, \mu_1)\right)}_{B_1} \\ 
&+ \underbrace{\sum_{n=1}^T c_n \bE\left[\ind(\mu_{1,n}\leq \mu_1-\epsilon)e^{n \frac{D_\pi(F_{1,n}, \mu_1-\epsilon)+\eta}{1+a_n}}\right]}_{B_2}	
\end{align*}


\paragraph{\rev{Upper bounding $B_1$}} We first re-arrange this term in order to use (A2) in Assumption~\ref{ass::sufficient_prop}. 
\begin{align*}
&\bP\left(D_\pi(F_{k,n}, \mu_1-\epsilon)\frac{1+c}{1+a_n} \leq D_\pi(F_k, \mu_1)\right) \\
&= \bP\left(D_\pi(F_{k,n}, \mu_1-\epsilon) \leq D_\pi(F_{k}, \mu_1-\epsilon) - \left(D_\pi(F_{k}, \mu_1-\epsilon)-D_\pi(F_k, \mu_1) \frac{1+a_n}{1+c} \right)\right)\;. \\ 
\end{align*}
The rest of the argument consist in finding an appropriate tuning of $\epsilon$, as a function of $c$ such that the right-hand term $\delta_{\epsilon, c}\coloneqq D_\pi(F_{k}, \mu_1-\epsilon)-D_\pi(F_k, \mu_1) \frac{1+a_n}{1+c}$ (that will be the $\delta$ used in (A2)) is a positive constant.
We first remove $a_n$ by using the notation $a_\infty = \lim_{n \to \infty} a_n$, and since $c>a_\infty$, we can deduce that there exists a rank $n_c\in \N$ such that, for all $n\geq n_c$, it holds that $a_n\leq \frac{c+a_\infty}{2}= c - \frac{c-a_\infty}{2}$. Hence, we obtain that 
\[\forall n \geq n_c,\; \delta_{\epsilon, c} \geq D_\pi(F_{k}, \mu_1-\epsilon)-D_\pi(F_k, \mu_1) \left(1-\frac{c-a_\infty}{2(1+c)}\right) \;. \]
We now proceed to tune $\epsilon$, first recalling that $D_\pi$ is continuous and non-decreasing in the second argument ((A1) in \ref{ass::sufficient_prop}). Although choosing $\epsilon>0$ is necessary to upper bound $B_2$, if we could consider $\epsilon=0$ we would be able to use \[\delta_{0, c}= D(F_k, \mu_1) \times \frac{c-a_\infty}{2(1+c)} >0. \]
Additionally, it is immediate that for $\epsilon \geq \Delta_k$ we would get $\delta_{\Delta_k, c}\leq 0$. Hence, by the intermediate value theorem we are guaranteed that there exists a value of $\epsilon$, that we denote by $\epsilon_c$, satisfying
\[\epsilon_c >0, \text{and } \delta_c \coloneqq \delta_{\epsilon_c, c} = D(F_k, \mu_1) \times \frac{c-a_\infty}{4(1+c)} >0 \;.  \]
Then, using property (A2) of Assumption~\ref{ass::sufficient_prop} provides that $B_1 \leq n_c + A_{\delta_c} < +\infty$, which also fixes $\epsilon$ to $\epsilon_c$ for the rest of the proof, although we keep writing $\epsilon$ for simplicity.

\paragraph{Upper bounding $B_2$} We directly start by using (A4), which states that there exists a constant $\delta_{\epsilon, \mu_1}$ such that, if the empirical distribution of arm $1$ satisfies $\mu_{1,n}\leq \mu_1-\epsilon$, then 
\[D_\pi(F_{1,n}, \mu_1) \geq D_\pi(F_{1,n}, \mu_1-\epsilon) + \delta_{\epsilon, \mu_1} \;. \]
Then, just as we fixed $\epsilon$ as a function of $c$ in last paragraph, we now fix the parameter $\eta$ of Lemma~\ref{lem::first_bound_app} as a function of $\epsilon \coloneqq \epsilon_c$ (hence, as a function of $c$). Indeed, we choose $\eta=\frac{\delta_{\epsilon, \mu_1}}{2}$, and thus get that, for all $n\in \N$, 
\begin{align*}
\bE\left[\ind(\mu_{1,n}\leq \mu_1-\epsilon)e^{n \frac{D_\pi(F_{1,n}, \mu_1-\epsilon)+\eta}{1+a_n}}\right] & \leq e^{-n \frac{\delta_{\epsilon, \mu_1}}{2(1+a_n)}}\times \underbrace{\bE\left[\ind(\mu_{1,n}\leq \mu_1-\epsilon)e^{n \frac{D_\pi(F_{1,n}, \mu_1)}{1+a_n}}\right]}_{B_2'}\;,
\end{align*}
and we then further upper bound the term denoted by $B_2'$ using (A4), as follows. We first write that 
\[B_2' \leq \bP(\mu_{1,n}\leq \mu_1-\epsilon) + \bE\left[e^{n \frac{D_\pi(F_{1,n}, \mu_1)}{1+a_n}}-1\right]\;, \]
where we dropped the indicator in the expectation. Then, remarking that the random variable inside the expectation is non-negative we can further write that
\begin{align*}
    \bE\left[e^{n \frac{D_\pi(F_{1,n}, \mu_1)}{1+a_n}}-1\right] & = \int_{0}^{+\infty} \bP\left(e^{n \frac{D_\pi(F_{n}, \mu_1)}{1+a_n}}-1 > x\right) \mathrm{d}x \\
	& =  \int_{0}^{+\infty} \bP\left(D_\pi(F_{n}, \mu_1)> \frac{1+a_n}{n}\log\left(1+x)\right)\right) \mathrm{d}x  \\
 \end{align*}
Let us first consider the case $a_n>0$ in the theorem. Then, by using (A4) we simply get that
\[\bE\left[e^{n \frac{D_\pi(F_{1,n}, \mu_1)}{1+a_n}}-1\right] \leq \alpha_n \int_{0}^{+\infty} \frac{ \mathrm{d}x}{(1+x)^{1+a_n}} = \frac{\alpha_n}{a_n} \;, \]
so in that case we obtain that 
\begin{align*}
B_2 &\leq \sum_{n=1}^{+\infty} e^{-n \frac{\delta_{\epsilon, \mu_1}}{2(1+a_n)}}\times \bP(\mu_{1,n}\leq \mu_1-\epsilon) + \sum_{n=1}^{+\infty} \frac{\alpha_n \times c_n}{a_n} e^{-n \frac{\delta_{\epsilon, \mu_1}}{2(1+a_n)}} \\
&\leq \sum_{n=1}^{+\infty} e^{-n \frac{\delta_{\epsilon, \mu_1}}{2(1+a_n)}}+ \sum_{n=1}^{+\infty} \frac{\alpha_n \times c_n}{a_n} e^{-n \frac{\delta_{\epsilon, \mu_1}}{2(1+a_n)}} \\
& \leq \frac{2(1+a_n)}{\delta_{\epsilon, \mu_1}} + \sum_{n=1}^{+\infty} \frac{\alpha_n \times c_n}{a_n} e^{-n \frac{\delta_{\epsilon, \mu_1}}{2(1+a_n)}} \;,
\end{align*}
where we wrote the first line to show that an exponential bound for $\bP(\mu_{1,n}\leq \mu_1-\epsilon)$ can further tighten the result, but is not necessary. We thus obtain that
\begin{equation}\label{eq::bound_B2_delta}
B_2 \leq \frac{2(1+a_n)}{\delta_{\epsilon, \mu_1}} + \sum_{n=1}^{+\infty} \frac{\alpha_n \times c_n}{a_n} e^{-n \frac{\delta_{\epsilon, \mu_1}}{2(1+a_n)}}\;.
\end{equation}
To conclude, it remains to remark that all the assumptions on $(\alpha_n, c_n, a_n^{-1})$ lead to the conclusion that $\frac{\alpha_n \times c_n}{a_n}=o\left(e^{-n \frac{\delta_{\epsilon, \mu_1}}{4(1+a_n)}}\right)$, so the sum converges. This proves that $B_2<+\infty$. 

We now consider the second case, where $a_n=0$ but $D_\pi(F_{1,n}, \mu_1)\leq D_{\mu_1}^+$ for some constant $D_{\mu_1}^+<+\infty$. This time, we first obtain that
\[\bE\left[e^{n \frac{D_\pi(F_{1,n}, \mu_1)}{1+a_n}}-1\right] \leq \alpha_n \int_{0}^{e^{\frac{nD_{\mu_1}^+}{1+a_n}}-1} \frac{ \mathrm{d}x}{1+x} = \alpha_n\times nD_{\mu_1}^+ \;. \]
We then apply the exact same steps as in the previous case and reach the same conclusion, with the term $\frac{\alpha_n\times c_n}{a_n}$ being replaced by $\alpha_n\times c_n\times nD_{\mu_1}^+$. This concludes the proof of Theorem~\ref{th::main_result}.

\qed

\subsection{Mean estimate for heavy-tailed distributions} \label{app::A5_heavy}

We now show that if the mean is expected to concentrate slowly, a robust estimator may be used instead of the empirical mean. As an example, we consider a truncation-based estimator presented for instance in 
\citet{bubeck_heavy}.
We consider a sequence $(u_n)_{n \geq 0}$ and further assume that $\bE[|X|^{1+\delta}]\leq B$ for some $\delta>0, B\geq 0$. Given observations $X_1,\dots, X_n$, the truncation-based estimator returns
 \[ \mu_n =  \frac{1}{n} \sum_{i=1}^n X_i \ind(|X_i|\leq u_n).\] 
Then, denoting by $\widetilde \mu_n$ the expectation of the truncated variable we can follow the proof of \cite{bubeck_heavy} to first show that  $\widetilde \mu_n \geq \mu_1 - \frac{B}{u_n^{\epsilon}}$ (Markov inequality), and then obtain with Bernstein inequality that 

\begin{align*}
	\bP\left(\frac{1}{n} \sum_{i=1}^n X_i \ind(|X_i|\leq u_n) \leq \mu_1-\epsilon\right)  & = 	\bP\left(\frac{1}{n} \sum_{i=1}^n (X_i \ind(|X_i|\leq u_n)-\widetilde \mu_n) \leq (\mu_1-\widetilde \mu_n)-\epsilon \right) \\
 & \leq \exp\left(- \frac{n^2 \frac{\epsilon^2}{8}}{n B u_n^{1-\epsilon} + \frac{1}{3}u_n \epsilon}\right)\;,
\end{align*}
and so, for any $\epsilon>0$,  $\sum_{n=1}^{+\infty} c_n \bP(\mu_n \leq \mu_1-\epsilon)$ converges for instance by taking $u_n = \cO(n)$: $u_n=n$ leads to a bound on the probabilities that is exponentially decreasing in $n^{\epsilon}$. 

Note that this change has no incidence on the proof of Theorem~\ref{th::main_result}, as long as the family $\cF$ and the bandit algorithm satisfy the other assumptions. Indeed, in the proof of the theorem the explicit form of $\mu_n$ is never used.





\section{Detailed results for existing Thompson sampling algorithms}\label{sec::TS}

\subsection{Detailed implementations of the policies considered}\label{subsec::old_ts}

In this section we detail the practical implementation of several instances of \TS{} analyzed in this paper. In each case we introduce the exact definition of the sampler and some potential ``pre-sampling'' step necessary for their analysis.

\paragraph{SPEF-\TS} We start with our implementation of TS for SPEF with a conjugate prior and posterior in Algorithm~\ref{alg::TS_SPEF}. It is known (see for instance \citealp{kaufmann_bayesucb}) that for any choice of conjugate prior/posterior, the expression of the sampling distribution can be expressed in terms of the prior and the $\KL$ divergence of the family (denoted by $\kl$ in the proofs). Furthermore, as our analysis uses some results from \cite{kaufmann_bayesucb} and \cite{jin22expTS} we use a clipping of empirical means, assuming that the true means belong to a bounded range $[\mu_0^-, \mu_0^+]$

\begin{algorithm}[h]
	\SetKwInput{KwData}{Input}
	\SetKwComment{Comment}{\color{purple} $\triangleright$\ }{}
	\SetKwComment{Titleblock}{// }{}
	\KwData{Prior $p_0$, $\kl$ function corresponding to the SPEF $\cF$, range $[\mu_0^-, \mu_0^+]$}
 	For all $k$: draw the arm once, initialize the empirical distribution means $\wh \mu_k$, and $N_k=1$. \\
 	\For{$t> 1$}{
			Set $\cA=\aargmax_{k \in \K}\wh \mu_k$ \Comment*[r]{\color{purple} \small Candidate arm(s) = Best arm(s)}
	\For{$k \notin \cA$}{
 		Set $\wh \mu_k = \max\{\mu_0^-, \min \{\wh \mu_k, \mu_0^+\}\}$ \Comment*[r]{\color{purple} \small Clipping}
   \vspace{2mm}
				Sample $\widetilde \mu_k \sim \pi_k(.|\wh \mu_k, N_k)$, with $\pi_k(\mu|\wh \mu_k) = p_0(\mu)\exp(-N_k \kl(\mu, \wh \mu_k))$ \\
			Add $k$ to $\cA$ if $\widetilde \mu_k \geq \max_{j} \wh \mu_{j}$}
   		Choose $k_t$ at random in $\cA$, collect a reward, update $\wh F_{k_t}$, $\wh \mu_{k_t}$, $N_{k_t}=N_{k_t}+1$
   }
		\SetKwInput{KwResult}{Return}
	\caption{variant of TS with conjugate prior/posterior for SPEF}
	\label{alg::TS_SPEF}
\end{algorithm}

\paragraph{Gaussian TS (G-\TS)} We provide in Algorithm~\ref{alg::TS_G} our variant of the Gaussian Thompson Sampling with inverse-gamma priors, proposed and analyzed in \cite{honda14}. 

\begin{algorithm}[h]
	\SetKwInput{KwData}{Input}
	\SetKwComment{Comment}{\color{purple} $\triangleright$\ }{}
	\SetKwComment{Titleblock}{// }{}
	\KwData{Shape parameter $\alpha\in \R$, lower bound on the means $\mu^-$, upper bound on the variances $V$}
	Pull each arm $N_k=\max\left\{2, 3-\lceil 2\alpha\rceil \right\}$ times, initialize means/variances $(\wh \mu_k, \wh \sigma_k^2)_{k \in [K]}$\\
	\For{$t> 1$}{
    Set $\cA=\aargmax_{k \in \K}\wh \mu_k$\Comment*[r]{\color{purple} \small Candidate arm(s) = Best arm(s)}
		\For{$k \notin \cA$}{
Sample $\widetilde \mu_k$ from posterior $\pi_k(.|(\wh \mu_k, \wh \sigma_k^2, N_k))$ with \[\pi_k(\mu|(\wh \mu_k, \wh \sigma_k^2, N_k))\propto \left(1+\frac{(\mu - \wh \mu_k\vee \mu^-)^2}{\wh \sigma_k^2 \wedge V}\right)^{-\frac{N_k}{2}-\alpha}\;. \] \Comment*[r]{\color{purple} \small Posterior sampling with clipped estimates.}
		Add $k$ to $\cA$ if $\widetilde \mu_k \geq \max_j \wh \mu_j$ \\ }}
		Choose $k_t$ at random in $\cA$, collect a reward, update $\wh \mu_k, \wh \sigma_k^2$, $N_k=N_k+1$
	\SetKwInput{KwResult}{Return}
	\caption{Gaussian Thompson Sampling adapted from \citet{honda14}}
	\label{alg::TS_G}
\end{algorithm}

\paragraph{Non Parametric TS (NP\TS)} Finally, we present in Algorithm~\ref{alg::NPTS} the proposed variant of Non-Parametric Thompson Sampling \citep{RiouHonda20}. The algorithm relies on the Dirichlet distributions $\cD_n \coloneqq \Dir((1,\dots, 1))$ ($n$ ones), which is simply the uniform distribution on the $n$-simplex. 

\begin{algorithm}[hbtp]
	\SetKwInput{KwData}{Input}
	\SetKwComment{Comment}{\color{purple} $\triangleright$\ }{}
	\SetKwComment{Titleblock}{// }{}
	\KwData{Upper bound $B$}
	For all $k$: Initialize the vector of observations for each arm $X_k=(B)$, and $N_k=1$.  \\
	\For{$t> 1$}{
 Set $\cA=\aargmax_{k \in \K}\wh \mu_k$\Comment*[r]{\color{purple} \small Candidate arm(s) = Best arm(s)}
		\For{$k \notin \cA$}{
			Sample $w=(w_1, \dots, w_{N_k}) \sim \cD_{N_k}$ \\
			Set $\widetilde \mu_k = w^T X_k$ \\
                Add $k$ to $\cA$ if $\widetilde \mu_k \geq \max_j \wh \mu_j$
		}
		Choose $k_t$ at random in $\cA$, collect a reward, extend $X_{k_t}$ with the collected reward and update $N_{k_t}=N_{k_t}+1$
	}  
	\SetKwInput{KwResult}{Return}
	\caption{Non-Parametric Thompson Sampling adapted from \citet{RiouHonda20}}
	\label{alg::NPTS}
\end{algorithm}

\subsection{BCP for TS with conjugate posterior for SPEF}\label{app::ts_spef} 
 
\begin{lemma}[BCP for SPEF]\label{lem::bcp_spef}
Let $\cF$ be a SPEF. Assume that each arm $k$ from the bandit problem satisfies $\mu_k \in [\mu_0^-, \mu_0^+]$ for some finite and known constants $\mu_0^-, \mu_0^+$. Then, the BCP of SPEF-\TS{} (Algorithm~\ref{alg::TS_SPEF}) with a conjugate prior/posterior satisfies Equation~\eqref{eq::sampling_prob}, so the algorithm satisfies Assumption~\ref{ass::sampling_prob}. Furthermore, $c_n=\cO(n)$ and $C_n=\cO(\sqrt{n})$.
\end{lemma}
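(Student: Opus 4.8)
The plan is to verify Assumption~\ref{ass::sampling_prob} for SPEF-\TS{} by bounding the Boundary Crossing Probability $\bP(\widetilde\mu_k(t)\geq\mu^\star(t))$ from above and below, in each case in a form proportional to $\exp\!\left(-N_k(t)\,\kl(\mu_k(t),\mu^\star(t))\right)$ up to polynomial prefactors; this combined with the sandwich $p_k^\pi(t)\in[\text{BCP}/K,\text{BCP}]$ noted in Section~\ref{sec::prelim} immediately gives \eqref{eq::sampling_prob} with $D_\pi=\kinf^\cF=\kl$ and $a_n=0$. Because the posterior density is $\pi_k(\mu\mid\widehat\mu_k,N_k)\propto p_0(\mu)\exp(-N_k\,\kl(\mu,\widehat\mu_k))$, the BCP is an integral of this density over $\{\mu\geq\mu^\star(t)\}$. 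First I would recall the known tail estimates for this posterior — these are essentially Lemma~\ref{lem::bcp_spef}-type computations already present in \citet{kaufmann_bayesucb} and \citet{jin22expTS}: using the fact that $\mu\mapsto\kl(\mu,\widehat\mu_k)$ is convex with minimum $0$ at $\widehat\mu_k$ and is increasing past it, a Laplace/Gaussian-type approximation of the integral around $\mu^\star(t)$ gives an upper bound of the form $C_n\,e^{-N_k\kl(\mu_k(t),\mu^\star(t))}$ with $C_n=\cO(\sqrt n)$ (the $\sqrt n$ coming from the width $\asymp 1/\sqrt{N_k}$ of the posterior bulk and a $1/\text{second-derivative}$ factor), and a matching lower bound $c_n^{-1}e^{-N_k\kl(\mu_k(t),\mu^\star(t))}$ with $c_n=\cO(n)$.

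The role of the clipping to $[\mu_0^-,\mu_0^+]$ is exactly to make these prefactors uniform: it keeps $\widehat\mu_k$ in a compact set, so the curvature $\partial_\mu^2\,\kl(\mu,\widehat\mu_k)$, the prior mass $p_0$ near $\mu^\star(t)$, and the normalizing constant of the posterior are all bounded above and below by absolute constants, which is what turns the abstract Laplace estimates into honest $\cO(\sqrt n)$ and $\cO(n)$ bounds. Concretely the steps are: (i) write $\text{BCP}=\int_{\mu^\star(t)}^{\infty}p_0(\mu)e^{-N_k\kl(\mu,\widehat\mu_k)}d\mu\big/\int p_0(\mu)e^{-N_k\kl(\mu,\widehat\mu_k)}d\mu$; (ii) lower bound the denominator by restricting to a $\Theta(1/\sqrt{N_k})$-neighbourhood of $\widehat\mu_k$, giving $\geq c/\sqrt{N_k}$ up to the $p_0$ bounds; (iii) for the numerator upper bound, factor out $e^{-N_k\kl(\mu_k(t),\mu^\star(t))}$ (using $\widehat\mu_k=\mu_k(t)$ and a first-order expansion of $\kl(\cdot,\widehat\mu_k)$ at $\mu^\star(t)$, whose remainder is again controlled by the compactness), bounding the residual integral by $\cO(1)$; (iv) for the numerator lower bound, restrict to $[\mu^\star(t),\mu^\star(t)+\Theta(1/\sqrt{N_k})]$ and bound $\kl$ on this interval, losing another $1/\sqrt{N_k}$; (v) combine, obtaining $C_n=\cO(\sqrt n)$ and $c_n=\cO(n)$, and check $\log C_n=\cO(n^\alpha)$, $c_n=o(e^{n\beta})$ as required.

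The main obstacle I anticipate is not the Laplace heuristic itself but making the constants genuinely uniform in the two moving targets $\widehat\mu_k=\mu_k(t)$ and $\mu^\star(t)$: one needs that $\mu^\star(t)$ cannot be arbitrarily far from $\mu_k(t)$ in a way that breaks the expansion of $\kl(\cdot,\widehat\mu_k)$ around $\mu^\star(t)$, and that near the boundary of the clipping interval the posterior does not degenerate — here one leans on $\kl$ being finite and smooth on $(\mu_0^-,\mu_0^+)$ for the families in question, plus the gap from the true means to that boundary. A secondary subtlety is the edge case where $\mu^\star(t)$ lies at or beyond the right end of the posterior support's effective range; there the BCP can be exponentially tiny but the factored form $\kl(\mu_k(t),\mu^\star(t))$ still dominates, and the polynomial $c_n$ absorbs the slack. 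These are precisely the points handled in the cited references, so the proof in Appendix~\ref{app::ts_spef} should proceed by adapting their estimates rather than reproving them from scratch.
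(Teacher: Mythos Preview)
Your proposal is correct and takes essentially the same approach as the paper: both rely on the posterior tail bounds of Lemma~4 in \citet{kaufmann_bayesucb}, which gives $An^{-1}e^{-n\kl(\mu_{k,n},\mu)}\le \bP(\widetilde\mu_{k,n}\ge\mu)\le B\sqrt{n}\,e^{-n\kl(\mu_{k,n},\mu)}$ for $(\mu_{k,n},\mu)\in[\mu_0^-,\mu_0^+]^2$, from which $c_n=\cO(n)$, $C_n=\cO(\sqrt{n})$, and $a_n=0$ follow immediately. The paper's proof is simply a direct citation of that lemma, whereas you sketch the Laplace-type argument underlying it; your closing remark that one should adapt the cited estimates rather than reprove them is exactly what the paper does.
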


\begin{proof}
We directly obtain the result using Lemma 4 of \citet{kaufmann_bayesucb}, stating that for any $(\mu_{k,n}, \mu) \in [\mu_0^-, \mu_0^+]^2$ (note that in Algorithm~\ref{alg::TS_SPEF} the means are clipped if this is not the case) it holds that 
\begin{equation}\label{eq::bcp_spef}
An^{-1}e^{-n\kl(\mu_{k, n}, \mu)}\leq \bP(\widetilde \mu_{k, n} \geq \mu) \le B \sqrt{n}e^{-n\kl(\mu_{k, n}, \mu)} \;,
\end{equation}
for some positive constants $A, B$.
\end{proof}

The assumption that the means belong to a range $[\mu_0^-, \mu_0^+]$ may be reasonable in practice and avoids difficult cases in the limits of the support, that would require specific arguments for each SPEF. 

The polynomial terms in the upper and lower bounds may be improved, for instance by using that (see \citealp{kaufmann_bayesucb}) that if the prior has density $p$, then for any threshold $\mu_\star$ the BCP satisfies
\begin{equation}\label{eq::int_bcpts}\pi_n(\mu \geq \mu_\star) = \frac{\int_{\mu_\star}^{\mu_0^+} p(x) \exp \left(-n \kl(\mu_n, x)\right) \mathrm{d}x}{\int_{\mu_0^-}^{\mu_0^+} p(x) \exp \left(-n \kl(\mu_n, x)\right) \mathrm{d}x} \;,\end{equation}
%
%
and then using that (see e.g. \citealp{jin22expTS}) for any $\mu\leq \mu'$, $\kl(\mu, \mu')= \int_{\mu'}^\mu \frac{x-\mu}{V(x)}\mathrm{d}x$, and so 

\begin{equation}\label{eq::boundkl_exp}\frac{(\mu'-\mu)^2}{2V} \leq \kl(\mu, \mu') \leq \frac{(\mu'-\mu)^2}{2v}\;. \end{equation}
This quadratic form for the $\kl$ then allows to use results on Gaussian integrals to derive finer upper and lower bounds on the BCP.

\subsection{BCP for Gaussian TS with inverse-gamma prior}\label{app::ts_g} 

For the Gaussian Thompson Sampling with inverse-gamma prior (G-TS) we prove the following result.

\begin{lemma}[Bounds on the BCP for G-\TS] \label{lem::bcp_gts}
G-\TS with shape parameter $\alpha< 0$ satisfies Assumption~\ref{ass::relaxed_sp} with $D_\pi=\kinf^{\cF_G}$ 
and a sequence $(a_n)_{n\in \N}$ satisfying $a_n>\frac{1}{n-1}$ for all $n\geq 2$, as its Boundary Crossing Probability satisfies Equation~\eqref{eq::ub_sp} for the upper bound and Equation~\eqref{eq::lb_sp} for the lower bound. Furthermore, $c_n=\cO(\sqrt{n})$ and $C_n= \cO(\sqrt{n})$.
\end{lemma}

\begin{proof}
Let us fix a mean $\mu_\star$, and consider an arm with expectation $\mu\in \R$, standard deviation $\sigma$, and gap (w.r.t. $\mu_\star$) $\Delta=\mu_\star-\mu$, and denote by $\mu_n$ and $\sigma_n$ respectively the empirical mean and variance of an empirical distributions based on $n$ observations, and define $\Delta_n=\mu -\mu_n$. We prove the result using Lemma 4 of \cite{honda14}: 

\begin{equation}\label{eq::bcp_gts} \frac{1}{c_n} e^{-(n-1+2\alpha) \kinf^\cF(F_n, \mu_\star)} \leq \bP(\widetilde \mu_n \geq \mu_\star) \leq  \frac{1}{\sqrt{n}}\frac{\sigma_n}{\Delta_n} e^{-(n-2(1-\alpha)) \kinf^\cF(F_n, \mu_\star)}\;, \end{equation}
with $c_n=2e^{1/6}\sqrt{\pi\left(\frac{n}{2}+\alpha\right)}$, where $\alpha$ is the parameter corresponding to the inverse-gamma prior. First, Equation~\eqref{eq::lb_sp} is satisfied for a sequence $(a_n)_{n\in \N}$ if \[n-1+2\alpha \leq \frac{n}{1+a_n} \Longleftrightarrow \alpha \leq \frac{1}{2}\left(1- n \frac{a_n}{1+a_n}\right)\] 
for $n$ large enough. Hence, for the choice $a_n >\frac{1}{n-1}$ we obtain that $\alpha<0$ is sufficient to prove the result.

Next we show \eqref{eq::ub_sp} on the upper bound of BCP.\footnote{%
This argument does not allow us to recover the form of Assumption~\ref{ass::sampling_prob}, which is one of the reasons why we introduced Assumption~\ref{ass::relaxed_sp}.}
We can re-write the upper bound \eqref{eq::bcp_gts} as 
\begin{equation}\label{eq::bcp_gts_ub}\bP(\wt \mu_n \geq \mu_\star) \leq \underbrace{\frac{1}{\sqrt{n}}\frac{\sigma_n}{\Delta_n}e^{(1-\alpha)\log\left(1+\frac{\Delta_n^2}{\sigma_n^2}\ind(\sigma_n^2\leq \sqrt{n}, \Delta_n \leq \sqrt{n})\right)}}_{C_n(F_{n})} e^{-n\kinf(F_n, \mu_\star)}. \end{equation}
We then re-introduce the index of the arm $k$ considered in the notation. If $F_k$ has parameters $(\mu_k, \sigma_k^2)$ and sub-optimality gap $\Delta_k$, we can simply consider the set \[\cB(F_k)=\left\{F_{k,n} \in \cP: \; |\mu_{k,n} - \mu_k| \leq \frac{\Delta_k}{4} \;, \; \frac{\sigma_{k,n}}{\sigma_k} \in \left[\frac{1}{2}, \frac{3}{2}\right] \right\}\;.\]
We prove that this set satisfies the conditions of Assumption~\ref{ass::relaxed_sp}. Let us consider $\mu_\star \geq \mu_k+\frac{\Delta_k}{2}$. In that case, $\Delta_n \geq \frac{\Delta_k}{2}-\frac{\Delta_k}{4}=\frac{\Delta_k}{4}$, hence $\frac{\sigma_n}{\Delta_n}\leq \frac{3}{2}\times 4 \times \frac{\sigma_k}{\Delta_k} = 6$, which leads to
\[ F_{k,n} \in \cB(F_{k,n}) \Rightarrow C_n(F_{k,n})\leq 6\frac{\sigma_k}{\Delta_k} \frac{\left(1+4n\sigma_k^{-2}\right)^{1-\alpha}}{\sqrt{n}} \coloneqq C_n(F_k)  \;,\]
where $C_n(F_k)$ is clearly a constant that depends only of $n$ and $F_k$.
We thus conclude that Equation~\eqref{eq::ub_sp} is satisfied, for constants that satisfy the conditions of the assumption. This concludes the proof.

\end{proof}

Interestingly our result recovers the same restrictions on the prior as \cite{honda14}, that proved (in their Theorem 2) that the G-TS algorithm cannot reach logarithmic regret for $\alpha\geq 0$.


\subsection{BCP for NPTS for bounded distributions}\label{app::ts_b} 

We use the following result to analyze NP\TS{}, implemented according to Algorithm~\ref{alg::NPTS}.

\begin{lemma}[Bounds on the BCP for NP\TS]\label{lem::bcp_npts_B}
	NP\TS{} satisfies Assumption~\ref{ass::relaxed_sp} with $D_\pi=\kinf^\cF$, as its Boundary Crossing Probability satisfy
 Equation~\eqref{eq::ub_sp} (upper bound) and \eqref{eq::lb_sp} (lower bound) for $(c_n)_{n\in \N}$ that is polynomial in the parameter $\eta$ considered in the equation, and $C_n=\cO(n)$.
\end{lemma}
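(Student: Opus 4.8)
The plan is to read off both bounds from the boundary-crossing estimates for NPTS proved by \citet{RiouHonda20}, and then check that what comes out matches the shape required by Assumption~\ref{ass::relaxed_sp}. Recall that a sampling step of NP\TS{} produces $\widetilde\mu_{k}(t) = \sum_{i=1}^{N_k(t)} w_i X_i + w_{N_k(t)+1}\,B$ with $w \sim \cD_{N_k(t)+1}$, so that the BCP of interest is $\bP\bigl(\sum_{i=1}^n w_i X_i + w_{n+1} B \geq \mu^\star\bigr)$, and that for this family the paper uses $D_\pi = \kinf^\cF$ with $\kinf^\cF(F_n,\mu^\star) = \max_{\lambda \in (0,\,1/(B-\mu^\star)]} \bE_{F_n}[\log(1-\lambda(X-\mu^\star))]$. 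Since for bounded distributions $\kinf^\cF(F,\mu)$ is bounded above by a finite constant $D_\mu^+$ uniformly over $F\in\cP$, it is enough to establish \eqref{eq::ub_sp} and \eqref{eq::lb_sp} with $a_n = 0$.

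For the upper bound I would apply the Cramér--Chernoff method to the Dirichlet-weighted sum. The moment generating function of $\sum_i w_i X_i + w_{n+1}B$ under $w\sim\cD_{n+1}$ is explicit, and optimizing the resulting bound over the tilt parameter reproduces $\kinf^\cF(F_{k,n},\mu^\star)$ in the exponent with a prefactor at most linear in $n$, exactly as in \citet{RiouHonda20}; this gives $C_n = \cO(n)$, and in particular $\log C_n = \cO(\log n) = o(n^\alpha)$ for any $\alpha\in(0,1)$. Here one may even take the neighbourhood $\cB(F_k) = \cP$ (so that $\sum_n \bP(F_{k,n}\notin\cB(F_k)) = 0$); the reason Assumption~\ref{ass::relaxed_sp} rather than Assumption~\ref{ass::sampling_prob} is invoked is the lower bound, which --- unlike the SPEF case --- only holds for $n$ large and with a sub-exponential, rather than unit, correction.

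For the lower bound I would reuse the anti-concentration argument of \citet{RiouHonda20}: letting $\lambda^\star_n$ be the maximiser defining $\kinf^\cF(F_{k,n},\mu^\star)$, tilt the empirical distribution by the density $x\mapsto (1-\lambda^\star_n(x-\mu^\star))^{-1}$; under the tilted law the weighted sum exceeds $\mu^\star$ with probability bounded below by a positive constant, while the change of measure contributes the factor $e^{-n\kinf^\cF(F_{k,n},\mu^\star)}$. Two corrections remain. First, the argument naturally controls a version of the problem with the constraint on $\lambda$ slightly relaxed; replacing that relaxed exponent by $\kinf^\cF(F_{k,n},\mu^\star)+\eta$ is legitimate only once $n$ exceeds a threshold $n_\eta$, which is exactly the $n_{\delta,\gamma}$-type threshold of \eqref{eq::lb_sp}. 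Second, making the exploration value $B$ carry enough weight to force the crossing requires $w_{n+1}$ of order $1/\sqrt n$, and since $w_{n+1}\sim\mathrm{Beta}(1,n)$ one has $\bP(w_{n+1}\ge c/\sqrt n) = (1-c/\sqrt n)^n = e^{-\Theta(\sqrt n)}$; this is the source of $c_n = \cO(e^{\alpha\sqrt n})$, and $e^{\alpha\sqrt n} = o(e^{n\beta})$ for every $\beta>0$, as required. For the finitely many $n\le n_\eta$, the event $\{w_{n+1}\ge 1-\varepsilon\}$ alone forces $\widetilde\mu_{k,n}\ge\mu^\star$ (because $B\ge\mu^\star$) and has probability $\varepsilon^n > 0$, supplying the constant $p_0$.

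The main obstacle is this last (lower-bound) step: one must pin down the sub-exponential prefactor $e^{\alpha\sqrt n}$ and the threshold $n_\eta$ together, and make the change-of-measure argument rigorous precisely in the delicate regime where $\lambda^\star_n$ sits on the boundary $1/(B-\mu^\star)$ --- which is where the relaxed constraint is actually needed. All of this is carried out in \citet{RiouHonda20}, so our task is only to track the constants through their proof and confirm they fit the templates \eqref{eq::ub_sp}--\eqref{eq::lb_sp}. The remaining checks --- polynomiality of $C_n$, the bound $c_n = o(e^{n\beta})$, and (if one keeps a nontrivial $\cB(F_k)$) summability of $\bP(F_{k,n}\notin\cB(F_k))$ via the DKW inequality applied to the empirical cdf --- are routine.
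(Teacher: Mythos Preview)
Your overall strategy---quote the boundary-crossing bounds from \citet{RiouHonda20} and check they fit the templates of Assumption~\ref{ass::relaxed_sp}---is exactly what the paper does, and your treatment of the upper bound matches: the paper invokes their Lemma~15 and sets the free parameter to $1/n$, yielding $C_n = n\,e^{\mu/(B-\mu)} = \cO(n)$.

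The lower bound is where your sketch diverges from the paper. The paper does \emph{not} run a tilting argument here; it quotes Lemma~14 of \citet{RiouHonda20}, whose mechanism is a \emph{discretisation} of the support into $M$ bins, giving
\[
\bP(\widetilde\mu_{k,n}\ge\mu)\;\ge\; C_M\, n^{-(M+2)/2}\,\exp\bigl(-n\,\kinf^\cF(\widetilde F_{k,n},\mu)\bigr),
\]
with $\widetilde F_{k,n}$ the discretised empirical cdf. The gap between $\kinf^\cF(\widetilde F_{k,n},\mu)$ and $\kinf^\cF(F_{k,n},\mu)$ is then controlled by $1/(M(B-\mu))$ (via Lemma~5 of \citealp{KL_UCB}), so taking $M=(B-\mu)/\eta$ supplies the $\eta$-slack of \eqref{eq::lb_sp} and leaves a polynomial $c_n$ whose degree depends on $\eta$; this is trivially $\cO(e^{\alpha\sqrt n})$.

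Your proposed source of the sub-exponential factor---that one must force $w_{n+1}\gtrsim 1/\sqrt n$ so the bonus $B$ can push the sum across $\mu^\star$---is not the operative mechanism in either proof. In the tilting argument the tilted law already has mean at least $\mu^\star$, so no large $w_{n+1}$ is needed; the genuine obstruction is that when $\lambda_n^\star$ sits on the boundary $1/(B-\mu^\star)$ the tilted weight on the bonus point becomes singular. The paper handles this singularity by discretisation (here) or, in its new Lemma~\ref{lem::novel_lb_bcp_npts}, by inflating the bonus to $B+\gamma$; neither route produces an $e^{-\Theta(\sqrt n)}$ term from $\bP(w_{n+1}\ge c/\sqrt n)$. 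So your lower-bound paragraph identifies the right difficulty (the boundary case) but mis-attributes the cost of resolving it.
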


\begin{proof}
Our proof is based on Lemmas 15 (upper bound) and 14 (lower bound) in \citet{RiouHonda20}. In their Lemma 15, the authors prove that the BCP satisfies for any $\eta\in (0,1)$ and $\mu\geq \mu_{k,n}$

\[ \bP(\widetilde \mu_{k,n} \geq \mu) \leq \frac{1}{\eta} \exp\left(-n \left(\kinf^\cF( F_{k,n}, \mu) - \eta \frac{\mu}{B-\mu}\right)\right)\;. \]
Hence, we can use this result directly by choosing $\eta = 1/n$.  We then obtain 
\[ \bP(\widetilde \mu_{k,n} \geq \mu) \leq n e^{\frac{\mu}{B-\mu}} \exp\left(-n \kinf^\cF(F_{k,n}, \mu) \right)\;, \]
which is sufficient to prove that Equation~\eqref{eq::ub_sp} holds for any $n\in \N$.

Slightly adapting the lower bound 
presented in Lemma 14 of \cite{RiouHonda20}, we obtain that for $n$ large enough and $\mu\geq \mu_{k,n}$
\[ \bP(\widetilde \mu_{k,n} \geq \mu) \geq C n^{-\frac{M+2}{2}} \exp\left(-n \kinf^\cF(\widetilde F_{k,n}, \mu)\right)\;, \]
where $\widetilde F_{k,n}$ is a \emph{discretized} version of the empirical distribution $F_{k,n}$, considering the truncated observations $\widetilde X_{k, i}= \frac{1}{M} \left\lfloor MX_{k,i}  \right \rfloor$ for some integer $M\in \N$, and for some constant $C=\left(\frac{1}{\sqrt{2\pi}}\right)^M e^{-(M+1)/12}$.

Fixing $M$, we can control the deviation of $\kinf^\cF(\widetilde F_{k,n}, \mu)$ from $\kinf^\cF(\widetilde F_{k,n}, \mu)$. First, we remark that in the worst case $1/M$ is subtracted to all observations. In that case, reaching the mean $\mu$ is equivalent to reaching the mean $\mu+1/M$ with the initial (non-truncated) distribution and upper bound $B+1/M$. We formalize this by denoting by $\cF_B$ the family of distributions bounded by some constant $B$, and writing that 

\[ \kinf^{\cF_B}(\widetilde F_{k,n}, \mu) \leq \kinf^{\cF_{B+1/M}}(F_{k,n}, \mu+1/M) \leq \kinf^{\cF_{B}}(F_{k,n}, \mu+1/M)\;. \]

We conclude by using Lemma 5 in \citet{KL_UCB}:
\[\kinf^{\cF_{B}}(F_{k,n}, \mu+1/M) \leq \kinf^{\cF_{B}}(F_{k,n}, \mu) + \frac{1}{M(B-\mu)} \;. \]

Finally, to obtain the desired form of Equation~\eqref{eq::lb_sp} for any $\eta>0$ (different from the $\eta$ above) we simply choose $M = \frac{B-\mu}{\eta}$, which provides 

\[ \bP(\widetilde \mu_{k,n} \geq \mu) \geq \left(\frac{1}{\sqrt{2\pi}}\right)^\frac{B-\mu}{\eta} e^{-\left(\frac{B-\mu}{\eta}+1\right)/12} n^{-\left(\frac{B-\mu}{2\eta}+1\right)} \exp\left(-n (\kinf^\cF(F_{k,n}, \mu)+\eta) \right)\;. \]

This is sufficient to conclude that Equation~\eqref{eq::lb_sp} holds, and hence to verify Assumption~\ref{ass::relaxed_sp}. The polynomial $(c_n)$ depends on $\eta$, which is not a problem since a fixed value of $\eta>0$ is considered in the proofs where this lower bound is used. We can remark that we could also prove the lower bound of Assumption~\ref{ass::sampling_prob} directly by choosing $\eta=\cO(\sqrt{n})$.
\end{proof}

\section{Proof that Assumption~\ref{ass::sufficient_prop} holds for various families of distributions}\label{app::proofs_A1_A5}

In this appendix we prove all the results presented in Section~\ref{sec::applications} whose proofs were deferred to the appendix. More specifically, we prove that Assumption~\ref{ass::sufficient_prop} holds for a divergence close to $\kinf^\cF$ when $\cF$ belongs to one of the following families of distributions: 
\begin{itemize}
	\item Single Parameter Exponential Families (Appendix~\ref{subsec::SPEF}).
	\item Gaussian distributions (Appendix~\ref{app::gauss_p15}).
	\item Bounded distributions with a known upper bound $B$ (Appendix~\ref{app::bounded_P15}).
	\item Distributions with a (un-centered) $h$-moment condition (Appendix~\ref{app::h_P15}).
	\item Distributions with a centered $h$-moment condition (Appendix~\ref{app::cent_h_P15}).
\end{itemize}
Only in Appendix~\ref{app::gauss_p15}, we do not prove Assumption~\ref{ass::sufficient_prop} directly but a slight variant, and detail the arguments needed to adapt the result of Theorem~\ref{th::main_result}.

\subsection{Single-Parameter Exponential Families (SPEF)}\label{subsec::SPEF}

In Section~\ref{subsec::spef}, we introduced the definition and some elementary properties of this family of distributions. In particular, we defined the KL divergence, that we denote by $\kl$ when it takes as arguments the expectation of the two distributions. We now prove Lemma~\ref{lemm::cond_spef}.

\paragraph{(A1): } \hspace*{-\baselineskip} $\kl$ is known to be continuous in its second argument, non-decreasing (direct with \eqref{eq::kl_spef}), and $\kl(\mu, \mu')=0$ if $\mu\geq \mu'$ by definition.  

\paragraph{(A2): }\hspace*{-\baselineskip} we use the continuity of the $\kl$ in the first argument: for any $\delta>0$, there exists a constant $\delta'>0$ for which $\{x: \kl(x, \mu_1) \leq \kl(\mu_k, \mu_1) - \delta \} \subset \{x: x \geq \mu_k + \delta' \}$. Then, using \eqref{eq::chernoff} we obtain:
	\begin{align*}
		\bP(\kl(\mu_{k,n}, \mu_1) \leq \kl(\mu_k, \mu_1) -\delta) &\leq \bP(\mu_{k, n} \geq \mu_k + \delta') \\
		&\leq \exp\left(-n \kl(\mu_k + \delta', \mu_k)\right)\;, \\
\end{align*}
which is enough to prove (A2).

\paragraph{(A3) }\hspace*{-\baselineskip} holds with $\delta_{\epsilon, \mu}=\kl(\mu_1-\epsilon, \mu_1)$. Indeed, with \eqref{eq::kl_spef} for $\mu_n \leq \mu_1-\epsilon$ we can write that 

\begin{align*}
	\kl(\mu_n, \mu_1)- \kl(\mu_n, \mu_1-\epsilon) &= \int_{\mu_1-\epsilon}^{\mu_1} \frac{x-\mu_n}{V(x)} \mathrm{d}x \\
	& = \int_{\mu_1-\epsilon}^{\mu_1} \frac{x-(\mu_1-\epsilon)}{V(x)} \mathrm{d}x + \int_{\mu_1-\epsilon}^{\mu_1} \frac{\mu_1-\epsilon-\mu_n}{V(x)} \mathrm{d}x\\
	& = \kl(\mu_1-\epsilon, \mu_1) + (\mu_1-\epsilon-\mu_n) \int_{\mu_1-\epsilon}^{\mu_1} \frac{1}{V(x)} \mathrm{d}x \\
	& \geq \kl(\mu_1-\epsilon, \mu_1) \;.
\end{align*}

Finally, \textbf{(A4)} also holds directly with $\alpha_{n}=1$ using Chernoff inequality \eqref{eq::chernoff}, by mapping the level sets of $\kl$ with level sets for the mean.

\subsection{Gaussian distributions with unknown variances}\label{app::gauss_p15}

In this section we prove Lemma~\ref{lemm:cond_gauss} and Corollary~\ref{cor::G}. This case is interesting because the family is still parametric but the fact that there are two parameters make the proofs less direct than for SPEF. We first prove the lemma, and then detail how the proof scheme of Theorem~\ref{th::first_bound} can still be used to prove Corollary~\ref{cor::G}.

\paragraph{Proof of Lemma~\ref{lemm:cond_gauss}} Let us consider a distribution $F_n$, that is the empirical distribution of $n$ observations, and denote by $\mu_n$ and $\sigma_n^2$ respectively its expectation and variance, clipped to the range $[\mu_0^-, +\infty)$ and $(0, V]$. Then, for any $\mu \in \R$ we recall that
\[\kinf^\cF(F_n, \mu) = \frac{1}{2}\log\left(1+\frac{(\mu-\mu_n)^2}{\sigma_n^2}\right)\ind(\mu\geq \mu_n)\;, \]
which is obtained by computing the KL-divergence between the distribution $\cN(\mu_n, \sigma_n^2)$ and the distribution $\cN(\mu, \sigma_n^2+(\mu-\mu_n)^2)$. We recall that we consider $D_\pi=\kinf^\cF$. First, the function is clearly continuous and non-decreasing in $\mu\geq \mu_n$, and $\kinf^\cF(F_n, \mu_n)=0$ by definition, so \textbf{(A1)} holds. 

\paragraph{(A2) } \hspace*{-\baselineskip} does not require a tight concentration inequality. We use a similar argument as in Section~\ref{subsec::SPEF}: for a fixed $\delta>0$, if $D_\pi(F_{k,n}, \mu_1)\leq D_\pi(F_k, \mu_1) - \delta$ then there exists two constants $\delta_1>0$ and $\delta_2>0$, such that either $|\mu_{k,n}-\mu_k|>\delta_1$ or $\sigma_{k,n}/\sigma_k\geq 1+\delta_2$. Essentially, at least one of the two empirical parameters is deviating from its true value. Hence, using Chernoff inequality for the means and Lemma 7 of \citet{chan2020multi} for the standard deviation we obtain that
\begin{align*}
	\bP\left(D_\pi(F_{k,n}, \mu_1) \leq D_\pi(F_{k}, \mu_1) -\delta\right) &\leq \bP(\mu_{k,n}\geq \mu_k+\delta_1 ) + \bP\left(\frac{\sigma_{k, n}}{\sigma_k} \geq 1+\delta_2 \right)\\
	& \leq e^{-n \frac{\delta_1^2}{2\sigma_k^2}} + e^{\frac{n-1}{2}(\log(1+\delta_2)-\delta_2) }, \\
\end{align*}
which proves that (A2) holds, since $\log(1+\delta_2)-\delta_2<0$ (it is furthermore equivalent to $\frac{\delta_2^2}{2}$ for small $\delta_2$). Note that for this property the arguments are unaffected by clipping.

\paragraph{(A3) } \hspace*{-\baselineskip} is the condition that requires clipping the empirical parameters. Indeed, for any $\mu\geq \mu_n$, using the notation $\Delta_n=\mu-\mu_n$ and that $\Delta_n\geq \epsilon$ we can first obtain the following lower bound:
\begin{align*}\kinf^\cF(F_{n}, \mu_1)- \kinf^\cF(F_{n}, \mu_1-\epsilon) 	
	&= -\frac{1}{2}\log\left(1+\frac{\epsilon^2-2\Delta_n\epsilon}{\Delta_n^2+\sigma_n^2}\right)\\
& \geq \frac{1}{2}\times \frac{2\Delta_n \epsilon-\epsilon^2}{\Delta_n^2 + \sigma_n^2} \;.
 \end{align*}
Unfortunately this term can be arbitrarily small if $\Delta_n^2$ or $\sigma_n^2$ are too large. With clipped estimates, we can however use that $\sigma_n^2\leq V$ and $\Delta_n \leq R_\mu\coloneqq \mu-\mu_0^-$. To simplify the analysis, we consider two case. First, if $\Delta_n \geq \sigma_n$ we obtain that 
\begin{align*}
\frac{1}{2}\times \frac{2\Delta_n \epsilon-\epsilon^2}{\Delta_n^2 + \sigma_n^2}& \geq \frac{1}{4}\times \frac{2\Delta_n \epsilon-\epsilon^2}{\Delta_n^2} \\
& \geq \frac{1}{4}\times \left(2\frac{\epsilon}{\Delta_n}-\left(\frac{\epsilon}{\Delta_n}\right)^2\right) \\ 
& \geq \frac{1}{4}\left(\frac{\epsilon}{R} + \frac{\epsilon}{\Delta_n}\left(1-\frac{\epsilon}{\Delta_n}\right) \right) \geq \frac{\epsilon}{4R} \;.\\
\end{align*}
This gives a first part of the result. For the second part, we consider $\Delta_n\leq \sigma_n$, and simply obtain that 
\[\frac{1}{2}\times \frac{2\Delta_n \epsilon-\epsilon^2}{\Delta_n^2 + \sigma_n^2} \geq \frac{1}{4}\frac{\epsilon^2}{\sigma_n^2} \geq \frac{\epsilon^2}{4V} \;, \]
which proves (A3) with $\delta_{\epsilon, \mu} = \frac{\epsilon^2}{4\left\{R\vee V\right\}}$.

\paragraph{(A4): } \hspace*{-\baselineskip} Contrarily to (A2), we need this time a tight concentration inequality for $\kinf^\cF(F_n, \mu_1)$, and hence cannot handle separately the mean and variance. To do that, we directly express the distribution of $\kinf^\cF$ according to the Student distribution. We present our result in Lemma~\ref{lem::conc_gaussian} below, which is (to the best of our knowledge) a novel result of independent interest.

\begin{lemma}[Concentration of $\kinf^\cF(F_{n}, \mu_1)$ for Gaussian distributions]\label{lem::conc_gaussian}
If $\cF$ is the family of gaussian distributions, $F\in \cF$ has mean $\mu_1$ and $F_n$ is the empirical distribution computed from $n$ i.i.d. observations from $F$, then for any $n\geq 2$, $x_0 > 0$ and $x\geq x_0$ it holds that 
\[\bP\left(\kinf^\cF(F_n, \mu_1) \geq x \right) \leq \sqrt{\frac{8}{\pi(1-e^{-2x_0})}} e^{-(n-1)x} \;. \]
\end{lemma}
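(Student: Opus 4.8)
The quantity to control is $\kinf^\cF(F_n, \mu_1) = \frac{1}{2}\log\left(1 + \frac{(\mu_1 - \mu_n)^2}{\sigma_n^2}\right)\ind(\mu_1 \geq \mu_n)$, where $\mu_n$ and $\sigma_n^2$ are the empirical mean and variance of $n$ i.i.d. Gaussian samples. The plan is to reduce the event $\{\kinf^\cF(F_n,\mu_1) \geq x\}$ to an event on a Student-distributed statistic, and then bound the tail of the Student distribution directly.

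**Key steps.**

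First I would note that on the event $\mu_1 \geq \mu_n$, the inequality $\kinf^\cF(F_n,\mu_1)\geq x$ is equivalent to $\frac{(\mu_1-\mu_n)^2}{\sigma_n^2} \geq e^{2x}-1$, i.e. $\frac{(\mu_1-\mu_n)^2}{\sigma_n^2/(n-1)} \geq (n-1)(e^{2x}-1)$. The left-hand side is $T_n^2$ where $T_n = \sqrt{n}(\mu_n - \mu_1)/S_n$ with $S_n^2 = \frac{n}{n-1}\sigma_n^2$ the unbiased sample variance — wait, I need to be careful with the normalization: with $\sigma_n^2 = \frac1n\sum(X_i-\mu_n)^2$, the standard Student statistic is $T_n = \frac{\sqrt{n}(\mu_n-\mu_1)}{\sqrt{n\sigma_n^2/(n-1)}} = \frac{\sqrt{n-1}(\mu_n-\mu_1)}{\sqrt{\sigma_n^2}}$, which under the true distribution follows a Student law with $n-1$ degrees of freedom. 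So the event becomes $\{T_n^2 \geq (n-1)(e^{2x}-1),\ T_n \leq 0\}$, hence $\bP(\kinf^\cF(F_n,\mu_1)\geq x) \leq \bP(|T_n| \geq \sqrt{(n-1)(e^{2x}-1)})$ — or even just the one-sided version, but the two-sided bound suffices. Then I would invoke a tail bound for the Student-$(n-1)$ distribution: using the closed form of its density $c_{n-1}(1+t^2/(n-1))^{-n/2}$ and the fact that $(1+t^2/(n-1))^{-n/2} \leq (1+t^2/(n-1))^{-(n-1)/2}\cdot(1+t^2/(n-1))^{-1/2}$, integrate the tail. Plugging $t^2 = (n-1)(e^{2x}-1)$ gives $(1 + (e^{2x}-1))^{-(n-1)/2} = e^{-(n-1)x}$, which is exactly the desired exponential factor; the leftover $(1+t^2/(n-1))^{-1/2}$ and the normalizing constant $c_{n-1}$ (which is $O(\sqrt{n-1})$, bounded by $\sqrt{(n-1)/(2\pi)}\cdot$const via Wallis/Stirling) together with the integration must be shown to be bounded by $\sqrt{8/(\pi(1-e^{-2x_0}))}$ uniformly for $x \geq x_0$.

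**Carrying out the tail estimate.**

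Concretely: $\bP(|T_n|\geq t_0) = 2\int_{t_0}^\infty c_{n-1}(1+t^2/(n-1))^{-n/2}\,dt$. Bound the integrand by $c_{n-1}(1+t_0^2/(n-1))^{-(n-1)/2}(1+t^2/(n-1))^{-1/2}\cdot \mathbf 1$… actually cleaner: write $(1+t^2/(n-1))^{-n/2}$ and substitute. I'd use the substitution $u = t\sqrt{n-1}$ or just Cauchy–Schwarz-type splitting: for $t\ge t_0$, $(1+t^2/(n-1))^{-n/2} \le (1+t_0^2/(n-1))^{-1/2}(1+t^2/(n-1))^{-(n-1)/2}$. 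Then $\int_{t_0}^\infty (1+t^2/(n-1))^{-(n-1)/2}dt \le \int_{-\infty}^\infty (1+t^2/(n-1))^{-(n-1)/2}dt = \sqrt{n-1}\,B(1/2,(n-2)/2)$, and combining with $c_{n-1} = \frac{1}{\sqrt{(n-1)\pi}}\frac{\Gamma(n/2)}{\Gamma((n-1)/2)}$ yields a ratio of Beta functions that simplifies to a bounded expression; with $1 + t_0^2/(n-1) = e^{2x}$ we get the factor $(1+t_0^2/(n-1))^{-1/2} = e^{-x}$ and $(1+t_0^2/(n-1))^{-(n-2)/2}$ combining with $e^{-x}$ to give $e^{-(n-1)x}$. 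Finally $\frac{1}{1-e^{-2x_0}}$ enters because a term like $(1+t_0^2/(n-1))^{-1} = e^{-2x}$ needs to be controlled by its value at $x_0$, or when the argument of a geometric-type bound is handled — I would track this carefully and verify the constant $\sqrt{8/\pi}$ comes from bounding $\frac{\Gamma(n/2)}{\Gamma((n-1)/2)\sqrt{n-1}} \le \sqrt{1/(2\pi)}\cdot$something together with a factor $2$ from the two-sided bound and a factor $2$ from a crude $n/2$ vs $(n-1)/2$ slack for small $n$.

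**Main obstacle.**

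The exponential rate $e^{-(n-1)x}$ falls out cleanly from the algebra; the real work — and the main obstacle — is pinning down the constant $\sqrt{8/(\pi(1-e^{-2x_0}))}$ uniformly over all $n \geq 2$ and all $x \geq x_0$. This requires (i) a sharp-enough handle on the Gamma-function ratio in the Student normalizing constant (Stirling or log-convexity of $\Gamma$ gives $\Gamma(n/2)/\Gamma((n-1)/2) \le \sqrt{n/2}$ roughly), and (ii) correctly extracting where the $(1-e^{-2x_0})^{-1}$ factor enters — most likely from bounding a residual polynomial-in-$t$ tail integral by a geometric series or from replacing $e^{-2x}$ by $e^{-2x_0}$ at one controlled place. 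The $n=2,3$ edge cases (where $\Gamma((n-1)/2)$ is $\Gamma(1/2)$ or $\Gamma(1)$ and the Student law has very heavy tails) should be checked separately but are covered by the same estimate since the constant was chosen generously.
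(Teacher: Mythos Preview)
Your reduction to a Student statistic is correct and is exactly the paper's starting point. The gap is in how you bound the Student tail. Your splitting
\[
(1+t^2/(n-1))^{-n/2}\ \le\ (1+t_0^2/(n-1))^{-1/2}\,(1+t^2/(n-1))^{-(n-1)/2}
\]
followed by $\int_{t_0}^\infty(1+t^2/(n-1))^{-(n-1)/2}\,dt\le\int_{\R}\cdots$ yields only the prefactor $(1+t_0^2/(n-1))^{-1/2}=e^{-x}$, not $e^{-(n-1)x}$: once you replace the tail integral by the full line, all dependence on $t_0$ (hence on $x$) is lost. The factor $(1+t_0^2/(n-1))^{-(n-2)/2}$ you later invoke never appears in your chain. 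Your first attempted splitting fails for the opposite reason, since $\int(1+t^2/(n-1))^{-1/2}dt$ diverges. So as written the argument does not deliver the rate $e^{-(n-1)x}$.

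The paper avoids this by changing variables to $y=\tfrac12\log(1+t^2/(n-1))$, i.e.\ computing the density of $\kinf^\cF(F_n,\mu_1)$ directly. Under this substitution the Student density becomes $C_n\,e^{-ny}$ and the Jacobian contributes $\sqrt{n}\,e^{y}/\sqrt{1-e^{-2y}}$, so for $y\ge x_0$ the density of $\kinf$ is at most $\sqrt{n/(1-e^{-2x_0})}\,C_n\,e^{-(n-1)y}$; integrating from $x$ gives precisely $\tfrac{\sqrt{n}\,C_n}{n-1}\cdot(1-e^{-2x_0})^{-1/2}\,e^{-(n-1)x}$, and the constant $\sqrt{8/\pi}$ comes from $\sqrt{n}\,C_n/(n-1)\le\sqrt{8/\pi}$ for all $n\ge2$. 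This also pins down where $(1-e^{-2x_0})^{-1/2}$ enters---it is the Jacobian bounded at $y=x_0$, not a geometric remainder as you speculated. Your route can be repaired (e.g.\ via integration by parts, $\int_{t_0}^\infty(1+t^2/\nu)^{-(\nu+1)/2}dt\le\tfrac{\nu}{(\nu-1)t_0}(1+t_0^2/\nu)^{-(\nu-1)/2}$, with $n=2$ handled separately), but the change of variables is what makes the exponent fall out cleanly.
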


\begin{proof} We know that $S_n = \sqrt{n} \frac{\Delta_n}{\sigma_n}$ follows a Student distribution with $n-1$ degrees of freedom. We write the probability in (A3) in terms of $S_n$, and obtain 

\[\bP(\kinf^\cF(F_{n}, \mu_1) \geq x) = \bP\left(S_n \geq \sqrt{n}\sqrt{e^{2x}-1}\right)\;.\]

We then compute the density of $\kinf^\cF(F_n, \mu_1)$ according to the density of the Student distribution with $n-1$ degrees of freedom, denoted by $f_n$, using the formula for monotonous transformations of continuous random variables: 

\begin{align*}
	\frac{\mathrm{d}\bP(\kinf^\cF(F_{n}, \mu_1) \leq x)}{\mathrm{d}x} = \sqrt{n}\frac{e^{2x}}{\sqrt{e^{2x}-1}}\times f_n(\sqrt{n(e^{2x}-1)}) \;.
\end{align*}

Now we use that $f_n(y) = C_n e^{-\frac{n}{2}\log\left(1+\frac{y^2}{n-1}\right)}$, with $C_n = \frac{\Gamma(n/2)}{\sqrt{(n-1)\pi}\Gamma((n-1)/2)}$, and obtain
\begin{equation}\label{eq::student_bound}
f_n(\sqrt{n(e^{2x}-1)}) = C_n e^{-\frac{n}{2}\log\left(1+\frac{n(e^{2x}-1)}{n-1}\right)} \leq C_n e^{-nx} \;.
\end{equation}
We then consider $x\geq x_0>0$, and obtain
\begin{align*}
	\frac{\mathrm{d}\bP(\kinf^\cF(F_{n}, \mu_1) \leq x)}{\mathrm{d}x} \leq \sqrt{\frac{n}{1-e^{-2x}}} C_n e^{-(n-1) x} \leq  \sqrt{\frac{n}{1-e^{-2x_0}}} C_n e^{-(n-1) x} \;.
\end{align*}
By integrating this bound (since all the terms are positive), we obtain that for $x\geq x_0$, 
\begin{align*}
  \bP(\kinf^\cF(F_n, \mu_1)\geq x)&=\int_{x}^{+\infty}  \mathrm{d}\bP(\kinf^\cF(F_{n}, \mu_1) \leq y) \\
  & = \int_{x}^{+\infty}  \frac{\mathrm{d}\bP(\kinf^\cF(F_{n}, \mu_1) \leq y)}{\mathrm{d}y} \mathrm{d}y  \\
  & \leq \int_{x}^{+\infty}  
 \sqrt{\frac{n}{1-e^{-2x_0}}} C_n e^{-(n-1) y} \mathrm{d}y \\
 &=\sqrt{\frac{n}{1-e^{-2x_0}}} \frac{C_n}{n-1}e^{-(n-1) x} \;.
\end{align*}
We now analyze more precisely $C_n$. Using a basic property of the Gamma function, we have that $\Gamma(n/2) = (n/2) \Gamma((n-2)/2)$, so $C_n \leq \frac{n}{2\sqrt{(n-1)\pi}}$. Hence, for $n\geq 2$ it holds that
\[\sqrt{n} \frac{C_n}{n-1} \leq \frac{n^{3/2}}{\sqrt{\pi} (n-1)^{3/2}} \leq \frac{1}{\sqrt{\pi}} \left(\frac{n}{n-1}\right)^\frac{3}{2} \leq \sqrt{\frac{8}{\pi}} \;, \]
which gives the final result. 
\end{proof}

We remark that the result does not directly provide (A4) as the bound decreases exponentially with $n-1$ instead of $n$, and a multiplicative sequence that depend on the choice of a constant $x_0>0$. This completes the proof of Lemma~\ref{lemm:cond_gauss}, and we now discuss the adaptation of the proof of Theorem~\ref{th::main_result}. 

\paragraph{Proof of Corollary~\ref{cor::G}} We prove with the following Lemma~\ref{lem::A4_gauss} that this bound is sufficient to obtain the same result as when (A4) directly holds in the regret analysis, with the only requirement that $a_n > \frac{1}{n-1}$. 

\begin{lemma}[Upper bounding $B_2$ (proof of Th.~\ref{th::main_result}) under a slight relaxation of (A4)] \label{lem::A4_gauss}
Let us assume that there exists some constant $b>0$ such that, for any $x_0>0$, there exists a sequence $(\alpha_n(x_0))_{n\leq x_0}$ such that for $n$ large enough it holds that  \begin{equation}\label{eq::alternative_A4}
    \forall x \geq x_0, \;\bP(D_\pi(F_n, \mu_1) \geq x ) \leq \alpha_n(x_0) e^{-(n-b)x}
\end{equation} 
Then, for $n>b$ such that \eqref{eq::alternative_A4} holds, if $a_n>\frac{b}{n-b}$ it holds that 
\[\bE\left[e^{n \frac{D_\pi(F_{1,n}, \mu_1)}{1+a_n}}-1\right] \leq e^{\frac{nx_0}{1+a_n}} + \frac{\alpha_n(x_0)}{a_n'}, \; \]
with $a_n' = a_n\left(1-\frac{b}{n}\right)-\frac{b}{n}$
\end{lemma}

\begin{proof} We start with the same re-formulation as in the proof of Theorem~\ref{th::main_result},
\begin{align*}
    B_{2,n} \coloneqq \bE\left[e^{n \frac{D_\pi(F_{1,n}, \mu_1)}{1+a_n}}-1\right] & = \int_{0}^{+\infty} \bP\left(e^{n \frac{D_\pi(F_{n}, \mu_1)}{1+a_n}}-1 > x\right) \mathrm{d}x \\
	& =  \int_{0}^{+\infty} \bP\left(D_\pi(F_{n}, \mu_1)> \frac{1+a_n}{n}\log\left(1+x)\right)\right) \mathrm{d}x  \\
 \end{align*}
Let us now consider some $x_0>0$, that we will tune later. We then simply write that
\begin{align*}
B_{2,n} & \leq e^{\frac{nx_0}{1+a_n}}-1+ \int_{e^{\frac{nx_0}{1+a_n}}-1}^{+\infty} \bP\left(D_\pi(F_{n}, \mu_1)> \frac{1+a_n}{n}\log\left(1+x)\right)\right) \mathrm{d}x \\
& \leq e^{\frac{nx_0}{1+a_n}} + \int_{0}^{+\infty} \alpha_n(x_0) \frac{1}{(1+x)^{1+a_n\left(1-\frac{b}{n}\right)-\frac{b}{n}}} \mathrm{d}x \;. \\ 
\end{align*}
Then, we define $a_n'=a_n\left(1-\frac{b}{n}\right)-\frac{b}{n}$, and obtain that the integral is finite and bounded by $\frac{\alpha_n(x_0)}{a_n'}$ if $a_n'>0$. We can verify that this is the case if $a_n>\frac{b}{n-b}$, as stated in the lemma. In that case, it holds that 
\begin{align*}
    B_{2,n} & \leq e^{\frac{nx_0}{1+a_n}} + \frac{\alpha_n(x_0)}{a_n'} \;.
\end{align*}
This concludes the proof of the lemma.
\end{proof}

To prove Corollary~\ref{cor::G}, we can then plug this result back into the analysis of Theorem~\ref{th::main_result}, remarking that the term $B_2$ is upper bounded by a constant if $\sum_{n=1}^{+\infty} c_n e^{-n \frac{\delta_{\epsilon, \mu_1}}{2(1+a_n)}}B_{2,n}$ is finite (Eq.~\eqref{eq::bound_B2_delta}). This is the case if we choose $x_0=\frac{\delta_{\epsilon, \mu_1}}{4}$. Hence, Lemma~\ref{lem::A4_gauss} is sufficient to prove the asymptotic optimality of the policies satisfying the slightly relaxed condition (A4).

\subsection{Bounded distributions with a known upper bound $B$}\label{app::bounded_P15}

For this family of distributions we can directly use several results from the literature in order to prove Lemma~\ref{lemm::bounded_div}, establishing that Assumption~\ref{ass::sufficient_prop} holds for the optimal divergence on $\cF_B$.

\paragraph{(A1) } \hspace*{-\baselineskip} is proved by Theorem 7 in \citet{HondaTakemura10}.
\paragraph{(A2): } \hspace*{-\baselineskip} we propose as in previous section to use continuity arguments (in the first argument of $\kinf^\cF$, see Theorem 7 in \citealp{HondaTakemura10}). For a fixed $\delta>0$, if $\kinf^\cF(F_{k,n}, \mu) \leq \kinf^\cF(F_{k}, \mu)-\delta$ then it is necessary that $||F_{k, n}-F_k||_\infty \coloneqq \sup_{x}|F_{k, n}(x)-F_k(x)|\geq \delta_1$ for some $\delta_1>0$. Hence, using the DKW inequality (see e.g \cite{massart1990}) we use that \[\bP(\kinf^\cF(F_{k,n}, \mu) \leq \kinf^\cF(F_{k}, \mu)-\delta) \leq \bP\left(||F_{k, n}-F_k||_\infty \geq \delta_1\right) \leq 2 e^{-2n\delta_1^2}\;,\] 
which proves that (A2) holds.  
\paragraph{(A3) } \hspace*{-\baselineskip} is proved by Lemma 13 in \citet{HondaTakemura10}, with $\delta_{\epsilon, \mu} = \frac{\epsilon^2}{2(\mu-\epsilon-b)(B-\mu)}$ if the support is $[b, B]$.
\paragraph{(A4) } \hspace*{-\baselineskip} is proved by Lemma 6 in \citet{KL_UCB}, with $\alpha_{n}=e(n+2)$.

\subsection{Distributions satisfying an uncentered $h$-moment condition} \label{app::h_P15}

Consider a positive convex function $h$ satisfying $h(0)=0$ and $\frac{h(|x|)}{|x|^{1+\eta}} \rightarrow + \infty$ for some $\eta>0$. Then, a family $\cF$ of distributions satisfying an uncentered $h$-moment condition, that we recall is defined as follows
 \begin{equation*}\cF = \left\{F \in \cP:\; \bE\left[h(|X|)\right] \leq B \;, \; \bE[h(|X|)^{2+\epsilon_F}]<+\infty \text{ for some } \epsilon_F>0 \right\} \;,\end{equation*}
for a known constant $B>0$. We also recall that we consider 
\begin{align*}D_\pi(F, \mu) \coloneqq \kinf^\cF(F, \mu)\times \ind(F \in \cF)\;,\end{align*}

As this kind of family of distribution is not much studied in the literature yet, we recall for the unfamiliar reader some important properties from \citet{agrawal2020optimal}, that we will use to prove our results. We denote by $h^{-1}$ the inverse of $h$ on $\R^+$.

\begin{theorem}[Theorem 12 and Lemma 4 in \citealp{agrawal2020optimal}]\label{th::theorem_12_agrawal}
	For any $F \in \cP$ and $\mu$ such that $\bE_F[X] \leq \mu$ and $h(|\mu|)\leq B$, it holds that 
	\[\kinf^\cF(F, \mu) = \max_{(\lambda_1, \lambda_2) \in \cR_2} \bE_{F}\left[\log\left(1-(X-\mu)\lambda_1 - (B-h(|X|))\lambda_2\right)\right]\;, \]
	where $\cR_2= \left\{(\lambda_1, \lambda_2) \in (\R^2)^+: \forall y \in \R, 1-(y-\mu) \lambda_1 - (B-h(|y|))\lambda_2 \geq 0 \right\}$. Then, the following properties hold,
	\begin{itemize}
		\item The maximum is attained at a unique $(\lambda_1^\star, \lambda_2^\star)\in \cR_2$.
		\item For $\mu \geq \bE_F[X]$ and $F\in \cF$ or for $\mu=\bE_F[X]$ and $F \notin \cF$ any distribution $G^\star$ achieving infimum in the primal problem satisfies $\bE_{G^\star}[X]=\mu$ and $\bE_{G^\star}[h(|x|)]=B$. Furthermore, \[\frac{dF}{dG^\star}=\frac{1}{1-(X-\mu)\lambda_1^\star - (B-h(|\mu|))\lambda_2^\star}\;. \]
		\item If $\int_{y \in \text{Supp}(F)} dG^\star <1$, then $\text{Supp}(G^\star)= \text{Supp}(F) \cup \{y_0\}$, where \[1-(y_0-\mu) \lambda_1^\star - (B-h(|y_0|))\lambda_2^\star = 0
		\;.\]
		\item (Lemma 4) The set $\cF$ is uniformly integrable and compact in the Wasserstein metric. For $\mu\leq h^{-1}(B)$ and $F\in \cF$ $\kinf^\cF(F,\mu)$ is increasing, the mapping $\mu \mapsto \kinf^\cF(F,\mu)$ is continuous, convex, and twice differentiable, and the mapping $F \mapsto \kinf^\cF(F,\mu)$ is continuous and convex (under the Wasserstein metric). Furthermore, for $F\in \cF$, $\kinf^\cF(F, \mu) \leq \cK_\cF^+\coloneqq \log\left(\frac{h^{-1}(B)}{2(h^{-1}(B)-\mu)}\right)$ (achieved for the Dirac distribution in $-h^{-1}(B)$), $\kinf^\cF(F, \bE_F[X])=0$ and $\frac{\partial \kinf^\cF(F, \bE_F[X])}{\partial \mu}=0$.
	\end{itemize}
\end{theorem}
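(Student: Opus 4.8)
The statement is quoted essentially verbatim from \citet{agrawal2020optimal}; I sketch the convex-duality argument one would use to prove it. Write the primal as $\kinf^\cF(F,\mu)=\inf_{G}\{\KL(F,G):\bE_G[X]\ge\mu,\ \bE_G[h(|X|)]\le B\}$, the strict inequality in the definition of $\kinf^\cF$ being recovered from the lower semicontinuity of $G\mapsto\KL(F,G)$. Both constraints are linear in $G$ and $\KL(F,\cdot)$ is convex and lower semicontinuous (for weak convergence, and a fortiori for the Wasserstein metric), so this is a convex program. For weak duality, fix $(\lambda_1,\lambda_2)\in\cR_2$, so that $\phi(x):=\log\bigl(1-\lambda_1(x-\mu)-\lambda_2(B-h(|x|))\bigr)$ is well defined, and apply the Gibbs / Donsker--Varadhan inequality $\KL(F,G)\ge\bE_F[\phi]-\log\bE_G[e^{\phi}]$. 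Since $e^{\phi(X)}=1-\lambda_1(X-\mu)-\lambda_2(B-h(|X|))$ is affine, $\bE_G[e^{\phi}]=1-\lambda_1(\bE_G[X]-\mu)-\lambda_2(B-\bE_G[h(|X|)])\le1$ for every primal-feasible $G$ and $\lambda_i\ge0$; hence $\KL(F,G)\ge\bE_F[\phi]$, and taking $\sup$ over $\cR_2$ then $\inf$ over $G$ yields ``$\ge$'' in the claimed identity.

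For the reverse inequality and the structure of the optimizers --- the heart of the proof --- I would set up the minimax problem with $L(G,\lambda)=\KL(F,G)+\lambda_1(\mu-\bE_G[X])+\lambda_2(\bE_G[h(|X|)]-B)$, convex in $G$ and affine in $\lambda\in(\R^+)^2$. The superlinear growth $h(|x|)/|x|^{1+\eta}\to\infty$ forces any sequence of near-optimal $G$'s to be uniformly integrable (de la Vall\'ee-Poussin), hence tight, so by Prokhorov's theorem and lower semicontinuity of $\KL$ the inner infimum is attained; combined with the Slater-type strict feasibility ensured by $h(|\mu|)\le B$, Sion's minimax theorem then lets one exchange $\inf_G$ and $\sup_\lambda$ with no duality gap. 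The same analysis, read through the KKT conditions, reveals the shape of the optimum: $G^\star$ should satisfy $\frac{dF}{dG^\star}=1-\lambda_1^\star(X-\mu)-\lambda_2^\star(B-h(|X|))$ on the region where this factor is positive, supplemented, when that tilted measure has mass $<1$, by a single extra atom at the point $y_0$ where the affine factor vanishes --- the familiar ``atom imported from infinity'' of $\kinf$-type computations, cf.\ \citet{HondaTakemura10}. One then checks by direct computation that $G^\star\in\cF$, that activity of both multipliers (complementary slackness) forces $\bE_{G^\star}[X]=\mu$ and $\bE_{G^\star}[h(|X|)]=B$, and that $\KL(F,G^\star)$ equals the dual value. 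I expect the delicate point to be precisely this boundary analysis: showing the dual supremum is attained inside $\cR_2$ rather than escaping, and pinning down when the extra atom $y_0$ appears and why it stays finite, which is where the hypothesis $\mu<h^{-1}(B)$ enters.

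Uniqueness of $(\lambda_1^\star,\lambda_2^\star)$ follows from strict concavity of $\lambda\mapsto\bE_F[\log(1-\lambda_1(X-\mu)-\lambda_2(B-h(|X|)))]$ on $\cR_2$, valid unless the maps $x\mapsto x-\mu$ and $x\mapsto B-h(|x|)$ are $F$-a.s.\ affinely dependent, a degenerate case handled separately by the support description. Convexity of $\mu\mapsto\kinf^\cF(F,\mu)$ and of $F\mapsto\kinf^\cF(F,\mu)$ (Wasserstein) are read straight off the primal as partial minimizations over $G$ of a jointly convex objective --- using convexity of $\cF$ for the former and joint convexity of $\KL$ for the latter; monotonicity of $\mu\mapsto\kinf^\cF(F,\mu)$ on $\{\mu\le h^{-1}(B)\}$ is immediate since the feasible set shrinks in $\mu$. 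Continuity and twice differentiability in $\mu$ come from an envelope / implicit-function argument on the dual formula: the unique maximizer $\lambda^\star(\mu)$ solves a smooth stationarity system and, by strict concavity, varies $C^1$ in $\mu$, so $\kinf^\cF(F,\cdot)$ inherits $C^2$ regularity; at $\mu=\bE_F[X]$ the unique maximizer is $\lambda^\star=0$, whence $\kinf^\cF(F,\bE_F[X])=0$ and $\partial_\mu\kinf^\cF(F,\bE_F[X])=0$.

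Finally, for the bound $\kinf^\cF(F,\mu)\le\cK_\cF^+$ I would argue that among all $F\in\cF$ the Dirac $\delta_{-h^{-1}(B)}$ is worst at level $\mu$: it saturates $\bE[h(|X|)]=B$ and sits as far to the left of $\mu$ as $\cF$ permits, so by the monotonicity and continuity just established every other $F\in\cF$ has a smaller value of $\kinf^\cF(\cdot,\mu)$; for this $F$ the primal-optimal $G^\star$ is a two-point law supported at $\pm h^{-1}(B)$ with mean exactly $\mu$ and $h$-moment exactly $B$, and a one-line computation of $\KL(\delta_{-h^{-1}(B)},G^\star)$ produces the stated constant $\cK_\cF^+$. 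The uniform integrability and Wasserstein-compactness of $\cF$ asserted in Lemma~4 is then the de la Vall\'ee-Poussin criterion applied to $h$ together with Prokhorov's theorem.
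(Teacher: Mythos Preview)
The paper does not prove this statement at all: it is quoted as Theorem~12 and Lemma~4 of \citet{agrawal2020optimal} and used as a black box. There is therefore no ``paper's own proof'' to compare against; the authors simply invoke the result and move on to Lemma~\ref{lem::wasserstein}.

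Your sketch is a reasonable outline of the standard convex-duality route to such $\kinf$ representations (Donsker--Varadhan for weak duality, Sion plus tightness via de la Vall\'ee-Poussin for strong duality, KKT for the structure of $G^\star$, envelope theorem for smoothness). One point worth flagging: your argument for the upper bound $\cK_\cF^+$ is not quite right as stated. You claim $\delta_{-h^{-1}(B)}$ is the worst $F\in\cF$ ``by monotonicity and continuity'', but monotonicity in $\mu$ does not by itself give you a comparison across different $F$'s at the same $\mu$; you would need an argument that $\kinf^\cF(\cdot,\mu)$ is maximized over $\cF$ at this Dirac, which typically goes through the dual formula and a pointwise bound on the integrand rather than through the primal monotonicity you invoke. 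Also note that $\delta_{-h^{-1}(B)}$ only satisfies $\bE[h(|X|)]=B$ with equality, so it lies on the boundary of the closed family rather than in the open family $\{\bE[h(|X|)]<B\}$ used elsewhere in the paper; the bound is really a supremum over the closure.
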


We now provide another important result for our proof, that motivates the second moment condition in the definition of $\cF$.

\begin{lemma}[part of Theorem 2 in \citealp{fournier_wasserstein}] \label{lem::wasserstein}
Let $X_1,\dots, X_n$ be a sequence of i.i.d. random variables drawn from a distribution $F$ satisfying $\bE[|X|^a]<+\infty$ for some $a>2$. Denote by $F_n$ the empirical distribution corresponding to $X_1,\dots, X_n$. Then the Wasserstein distance between $F_n$ and $F$, denoted by $W(F_n, F)$, satisfies for any $x \geq 0$
\[\bP\left(W(F_n, F) \geq x\right) = \cO(n^{-(a-1)}) \;. \] 
\end{lemma}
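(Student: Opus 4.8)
The plan is to recognise this as the quantitative estimate of Fournier and Guillin (their Theorem~2) specialised to dimension one, the Wasserstein distance of order one, and exponent $q=a$, and to sketch why the one-dimensional case is elementary. The starting point is the identity $W(F_n,F)=\int_{\R}|F_n(t)-F(t)|\,\mathrm{d}t$, which turns a tail bound for $W(F_n,F)$ into a tail bound for an integrated empirical-cdf process. Fournier and Guillin's bound for $\bP(W(F_n,F)\ge x)$ is a sum of a term that is exponentially small in $n$ when $x$ stays away from $0$ and a polynomial term which, for fixed $x>0$, is of order $n^{-(a-1)}$; the latter dominates as $n\to\infty$, which gives the claim. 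So it is enough to reproduce this decomposition.

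First I would fix a constant $M=M(x,a,F)$ large enough that $\bE_F[\,|X|\,\ind(|X|>M)\,]\le x/3$ (possible since $\bE_F[|X|]<+\infty$), and split along the truncation of the samples at level $M$. Writing $F^{M}$ and $F_n^{M}$ for the push-forwards of $F$ and $F_n$ under $y\mapsto \max\{-M,\min\{y,M\}\}$, the triangle inequality gives $W(F_n,F)\le W(F_n^{M},F^{M})+W(F_n^{M},F_n)+W(F^{M},F)$, where $W(F^{M},F)\le \bE_F[\,|X|\,\ind(|X|>M)\,]\le x/3$ by the choice of $M$, the bulk term satisfies $W(F_n^{M},F^{M})\le 2M\,\|F_n-F\|_{\infty}$ because both measures live on $[-M,M]$ and truncation does not increase the uniform distance of cdfs, and the sample-tail term satisfies $W(F_n^{M},F_n)\le \frac1n\sum_{i=1}^{n}|X_i|\,\ind(|X_i|>M)$. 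For the bulk term, the Dvoretzky--Kiefer--Wolfowitz inequality gives $\bP\left(2M\|F_n-F\|_{\infty}\ge x/3\right)\le 2\exp(-nx^2/(18M^2))$, which is exponentially small in $n$ (as $M$ is a constant) and therefore negligible compared with $n^{-(a-1)}$.

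The substance is the sample-tail term: one must show that the empirical mean of the i.i.d.\ nonnegative variables $Y_i\coloneqq|X_i|\,\ind(|X_i|>M)$, whose common expectation is at most $x/3$, exceeds $x/3$ only with probability $\cO(n^{-(a-1)})$. This is the ``one big jump'' behaviour of heavy-tailed sums, equivalent to the statement that a variable with finite $a$-th moment satisfies $\bP(\bar X_n\ge \mu_F+\delta)=\cO(n^{1-a})$ for every $\delta>0$. I would derive it from a Fuk--Nagaev-type inequality: truncate the $Y_i$ at a level $y$, bound the probability that some $Y_i$ exceeds $y$ by $n\,\bE_F[Y_1^{a}]\,y^{-a}$, bound the fluctuations of the sum of the truncated variables by a Bernstein estimate, and then sum over dyadic scales of $y$ to remove the logarithmic factor that a single choice of $y$ would cost. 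This is the main obstacle: a one-shot Markov or Bernstein bound is genuinely lossy here --- since the summands can be as large as the whole deviation budget, a single-scale argument yields at best the rate $n^{-(a-1)}$ inflated by a logarithmic factor, or worse, an uninformative constant --- so the sharp exponent $a-1$ really does require the multi-scale truncation, which is exactly what is packaged inside the cited Fournier--Guillin theorem. Combining the three pieces with a union bound then gives $\bP(W(F_n,F)\ge x)=\cO(n^{-(a-1)})$, with the implied constant depending only on $x$, $a$, and $F$.
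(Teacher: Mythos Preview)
The paper does not give its own proof of this lemma: it is stated as a direct quotation of (part of) Theorem~2 in Fournier and Guillin and then used as a black box. So there is nothing in the paper to compare your argument against.

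That said, your sketch is a faithful reconstruction of the one-dimensional mechanism behind the cited result, and the structure (truncation at a fixed level, DKW for the bounded bulk, Fuk--Nagaev for the sample-tail contribution) is correct. One point deserves tightening: you choose $M$ so that $\bE_F[\,|X|\,\ind(|X|>M)\,]\le x/3$ and then ask that the empirical average of $Y_i=|X_i|\ind(|X_i|>M)$ exceed $x/3$ with probability $\cO(n^{-(a-1)})$. If $\bE[Y_1]$ is allowed to equal $x/3$, this is not a deviation event at all and the Fuk--Nagaev bound gives nothing. The trivial fix is to pick $M$ so that $\bE[Y_1]\le x/6$, say, and then bound $\bP(\bar Y_n\ge x/3)\le \bP(\bar Y_n-\bE[Y_1]\ge x/6)$, which is a genuine large-deviation event with a fixed margin; Fuk--Nagaev then yields the $n^{-(a-1)}$ rate with constants depending only on $x$, $a$, and the $a$-th moment of $|X|$. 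With that adjustment the argument goes through.
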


For any $\delta>0$, we know that $W(F_n, F)\leq \delta \Rightarrow |\bE_{F_n}[X]- \bE_F[X]| \leq \delta$. Hence, Lemma~\ref{lem::wasserstein} can be used directly to get under our assumptions that $\sum_{n=1}^{+\infty}\bP(F_{k,n}\notin \cF)= \cO(1)$ for any arm $k$. Furthermore, since $h(|X|)^{2+\epsilon_F}\geq |X|^{2+2\eta+\eta \epsilon_F}$ for some $\eta>0$ then $\sum \bP(\mu_{1,n}\leq \mu_1-\epsilon)$ converges for any $\epsilon>0$. 

We can now prove that (A1)--(A4) hold for the family $\cF$ defined in \eqref{eq::uncentered_main}.

\paragraph{(A1) } \hspace*{-\baselineskip} is proven by Theorem~\ref{th::theorem_12_agrawal}.

\paragraph{(A2): } \hspace*{-\baselineskip} We already proved that $\sum_{n=1}^{+\infty}\bP(F_{k,n}\notin \cF)= \cO(1)$ for any arm $k$. We now consider possible deviations of the empirical $\kinf^\cF$ by some $\delta>0$. As usual, we resort to the continuity of $\kinf^\cF$ in its first argument: for any $\mu>\mu_k, \delta >0$, if $\kinf^\cF(F_{k, n}, \mu)\leq \kinf^\cF(F_k, \mu)-\delta$ then there exists $\delta_1>0$ such that $W(F_{k, n}, F_k)\geq \delta_1$. Using Lemma~\ref{lem::wasserstein}, we obtain that $\sum_{n=1}^{+\infty}\bP(W(F_{k, n}, F_k)\geq \delta_1) = \cO(1)$, which concludes the proof that (A2) holds.

\paragraph{(A3): } \hspace*{-\baselineskip} First, using the convexity of $\kinf^\cF$ in its second argument we obtain that for any $F\in\cF$,
\[\kinf^\cF(F, \mu_1)-\kinf^\cF(F, \mu_1-\epsilon)\geq \kinf^\cF(F, \mu_F+\epsilon) -\kinf^\cF(F, \mu_F) = \kinf^\cF(F, \mu_F+\epsilon)  >0\;.\]
Then, we simply use the compactness of the family $\cF$ (see Theorem~\ref{th::theorem_12_agrawal}) to prove (A3) with $\delta_{\epsilon, \mu_F}=\min_{F\in \cF: \bE_F[X]\leq \mu_1-\epsilon}\kinf^\cF(F, \mu_F+\epsilon)>0$ (which is achieved by some distribution $F$).

\paragraph{(A4) } \hspace*{-\baselineskip} is proven by Theorem 11 in \cite{agrawal2020optimal}, with $\alpha_{n}\coloneqq (n+1)^2 \alpha$ for some constant $\alpha>0$.

\subsection{Distribution satisfying a centered $h$-moment condition}\label{app::cent_h_P15}

We now prove Lemma~\ref{lemm::cond_h_cent}, on the families of distributions satisfying a centered $h$-moment condition. 
As in the uncentered case, we consider a positive convex function $h$ satisfying $h(0)=0$ and $\frac{h(|x|)}{|x|^{1+\eta}} \rightarrow + \infty$ for some $\eta>0$. Then, a family $\cF$ of distributions satisfying a \emph{centered} $h$-moment condition is defined as follows
\begin{equation*}
\begin{aligned}
\cF = \{F \in \cP:\; &\bE_F\left[h(|X-\bE_F[X]|)\right] < B \;, \bE_F[X] \geq \mu_0^- \;,\\
&\bE_F[h(|X-\bE_F[X]|)^{2+\epsilon_F}]<+\infty \text{ for some } \epsilon_F>0 \} \;,
\end{aligned}
\end{equation*}
for known constants $B>0, \mu_0^-\in \R$. Compared with the previous section (uncentered case), we add the requirement that the means are lower bounded by a known constant $\mu_0^-$, in order use arguments based on compactness. For this problem, the lower bound on the regret is defined by the following function,
\[\kinf^\cF : (F, \mu) \mapsto \inf_{G\in \cP:\; \bE_G [X]\geq \mu\;,\; \bE_F[h(|X-\bE_G[X])|)]< B} \KL(F, G) \;. \]
This setting is more difficult than the uncentered case, because the optimization problem is in general not convex anymore, and working directly on $\kinf^\cF$ is intricate. However, we can express in a form that is closer to the $\kinf$ function for uncentered conditions,
\[
\kinf^\cF(F, \mu_1) = \inf_{\mu \in [\mu_1, +\infty)} \cK(F, \mu)\;, \text{with} \; \cK(F, \mu)=\inf_{G \in \cF} \KL(F, G) \quad \text{s.t } \begin{array}{ll} \bE_G[X] = \mu \\ \bE_G[h(|X-\mu|)] \leq B \end{array} \;.\]
The proof is direct by considering the distribution $G^\star\in \cF$ for which $\kinf^\cF(F, \mu_1)=\KL(F, G^\star)$. The main interest of this transformation is to fix $\bE_F[X]$ inside the $h$-moment condition, obtaining something similar to the uncentered case. We consider the properties of this quantity for a distribution $F \in \cF$, with $\bE_{F}[X]\coloneqq \mu_F \leq \mu_1$. First, $\cK$ can be expressed very similarly to the $\kinf^\cF$ function for the uncentered $h$-moment condition, with 
\[\cK(F_n, \mu) = \max_{(\lambda_1,\lambda_2)\in \cR} \bE_{F_n}\left[\log(1-\lambda_1 (X-\mu)-\lambda_2(B-h(|X-\mu|)\right] \;, \]
where $\cR = \{(\lambda_1,\lambda_2) \in \R \times \R^+ \;: \;\forall x \in \R \;,  1-\lambda_1 (x-\mu)-\lambda_2(B-h(|x-\mu|))\geq 0 \}$. Hence, for fixed $(F_n, \mu)$ all the properties that were derived for the uncentered conditions can be easily translated to the function $\cK$. Then, the natural question it to ask whether the minimum over $\mu$ is achieved in $\mu_1$ or not. We prove that this is the case by writing $\cK$ in terms of the $\kinf$ function for the family satisfying the \emph{un-centered} $h$-moment condition. First, we denote by $F_{-\mu}$ the cdf of the translation by $-\mu$ of the distribution $\nu_F$ of cdf $F$, $F_{-\mu}: x \mapsto F(x+\mu)$ (e.g. if $F$ is the empirical cdf of $X_1,\dots, X_n$ then $F_{-\mu}$ is the empirical cdf of $X_1-\mu, \dots, X_n-\mu$). Now, let us denote by $G^\star$ the distribution satisfying $\cK(F,\mu) = \KL(F, G^\star)$. First, it directly holds that $\KL(F, G^\star) = \KL(F_{-\mu}, G_{-\mu}^\star)$. Then, by definition $G^\star$ satisfies $\bE_{G^\star}[X] = \mu$ , so $\bE_{G_{-\mu}^\star}[X]=0$, and $\bE_{G^\star}[h(|X-\mu|)] \leq B$ so $\bE_{G_{-\mu}^\star}[h(|X|)] \leq B$. Hence, if we denote by $\cF_{\text{uc}}$ the family defined by the uncentered $h$-moment condition corresponding to same function $h$ and constant $B$ as $\cF$ it holds that $G_{-\mu}^\star\in \cF_{\text{uc}}$. Then, using from Theorem~\ref{th::theorem_12_agrawal} that the constraints are binding in the solutions of $\kinf^{\cF_{\text{uc}}}$ we finally obtain that for any cdf $F$ and threshold $\mu$, 
\[\cK(F, \mu) \coloneqq \KL(F, G^\star)= \KL(F_{-\mu}, G_{-\mu}^\star) =  \kinf^{\cF_{\text{uc}}}(F_{-\mu}, 0)  \;, \]
and with this notation $\kinf^{\cF_{\text{uc}}}(F_{-\mu_F}, 0) = 0$.
With this re-writing and because $F\in \cF$, it now becomes clear that $\cK(F, \mu)$ is increasing in $\mu$ for $\mu\geq \mu_F$, since the objective remains the same and the distribution shifts further away from it.
Hence, we obtain that 
\[\kinf^{\cF}(F, \mu) =  \cK(F, \mu) =  \kinf^{\cF_{\text{uc}}}(F_{-\mu}, 0)  \;.\]

As a consequence, we can use several of the continuity and concentration properties from the previous section to prove that Assumption~\ref{ass::sufficient_prop} holds for
\[D_\pi(F, \mu) = \cK(F, \mu) \ind(\mu\geq \mu_F, F \in \cF) \;. \]

\paragraph{(A1) } \hspace*{-\baselineskip} holds because of the continuity of $\kinf^{\cF^{uc}}$ with respect to its first argument (Theorem~\ref{th::theorem_12_agrawal}) under the Wasserstein metric $W$. Indeed,  $\mu' \rightarrow \mu \Rightarrow W(F_{-\mu'}, F_{-\mu}) \rightarrow 0$. It is also non-decreasing, and $D_\pi(F, \mu) = 0$ if $\mu_F\geq \mu$.

\paragraph{(A2)} \hspace*{-\baselineskip} holds thanks to the same arguments as for the uncentered case, using upper bounds on $\bP(W(F_n, F)\geq \delta)$ for $\delta>0$, provided by Lemma~\ref{lem::wasserstein} and the definition of the family. 

\paragraph{(A4): } \hspace*{-\baselineskip} The concentration inequality of Theorem 11 in \cite{agrawal2020optimal} can still be used, so as in the uncentered case (A4) holds with $\alpha_n=\cO((n+1)^2)$.




\paragraph{(A3): } \hspace*{-\baselineskip} Using the same techniques as for the proof of Theorem 12 in \cite{agrawal2020optimal} for $\cF^{\text{uc}}$ we can prove that the intersection of $\cF$ and the set of distributions with mean bounded by $[\mu_0^-, \mu_1-\epsilon]$ is uniformly integrable and compact. Hence, with the same compactness argument as for $\cF^{\text{uc}}$, we conclude (thanks to the assumptions that all means are larger than $\mu_0^-$) on the existence of \[\delta_{\epsilon, \mu} = \min_{F \in \cF: \bE_F[X]\leq \mu_1-\epsilon} \kinf^\cF(F, \mu)-\kinf^\cF(F, \mu-\epsilon) > 0  \;. \]

This completes the proof that (A1)--(A5) hold for the centered $h$-moment condition. We see that the main difficulty of this setting was to express the optimization problem giving the $\kinf^\cF$ function for this setting into the more convenient form provided by uncentered conditions.


\subsection{A variant of the sub-gaussian case as a centered $h$-moment condition}\label{app::SG}

In this appendix we explain that the centered $h$-moment condition with $h(x) = e^{\frac{x^2}{s}}$ for some $s>0$ and a constant $B>0$ can be viewed as an alternative characterization of subgaussian families of distributions.
It is known that there are many characterization of subgaussian tails (see Proposition 2.5.2 of \citealp{vershynin2018high}).
Among them, the most typical definition of subgaussian distributions in the bandit literature is 
\begin{align*}
\sgone(\sigma)&=
\left\{
F: \forall t\in \R,\,\E_F[\e^{tX}]\le \e^{t\mu_F+\frac{\sigma^2t^2}{2}}
\right\} \;,
\end{align*}
where $\mu_F$ denotes the expectation of the distribution of cdf $F$. 

Interestingly, when asymptotic optimality in the sense of \eqref{eq::asymp_opt_def} is not targeted this definition is very convenient since the Chernoff method easily provides some concentration inequalities for the empirical means, which is sufficient for the analysis of the algorithm tackling subgaussian distributions in the literature. However, as the family is defined by infinitely many constraints (due to the parameter $t$) the function $\kinf^{\text{SG}_1(\sigma)}$ has a complicated form. To target asymptotic optimality, it may be easier to work with an alternative characterization, using that sub-Gaussianity implies the following $h$-moment condition:
\[ F\in \sgone(\sigma) \Rightarrow \E_F\left[\e^{\frac{(X-\mu_F)^2}{8\sigma^2}}\right]\le 2 \;. \]
Hence, we can consider a subgaussian model through the following family of distribution
\begin{align*}
    \sgthree(\sigma)&=
\left\{F: \E_F\left[\e^{\frac{(X-\mu_F)^2}{8\sigma^2}}\right]\le 2, \; \E_F\left[\e^{\frac{(X-\mu_F)^2}{3\sigma^2}}\right]< +\infty \right\}\n \;,
\end{align*}
that fits into the scope of our analysis if we intersect it with the set $\{F\in \cP: \; \bE_F[X]\geq \mu^-\}$ for any constant $\mu^-\in \R$.
We can easily verify that $F \in \sgone(\sigma)$ satisfies the above second moment condition since
$\bP(|X|\geq t)\leq 2e^{-\frac{t^2}{2\sigma^2}}$
for $X \sim F$ and all $t>0$,
and thus 
\[\E_F\left[\e^{\frac{(X-\mu_F)^2}{3\sigma^2}}\right]\leq 1+\int_1^{+\infty} \bP(|X-\bE_F[X]|>\sqrt{3\sigma^2 \log(t)}) \mathrm{d}t \leq 1+ 2 \int_1^{+\infty} t^{-3/2} \mathrm{d}t <+\infty \;. \]
Then, we can easily verify that \[ \sgone(\sigma)\subset \sgthree(\sigma)\subset \sgone(\sqrt{24}\sigma)\;, \]
and so even though $\sgone$ and $\sgtwo$ do not exactly match, we are sure that all distributions in $\sgtwo(\sigma)$ are still subgaussian (for a potentially larger parameter).
Similar relations can also be seen in many other characterizations of subgaussian distributions (see \citealp{vershynin2018high}).
\section{NPTS for centered $h$-moment conditions}

In this appendix we prove Lemma~\ref{lem::bounds_bcp}, with the proof for the upper bound on the BCP in Appendix~\ref{app::ub_npts} and the proof for the lower bounds in Appendix~\ref{app::lb_npts}. In Appendix~\ref{app::lb_bcp_bounded} we provide a novel lower bound on the BCP of NPTS, for bounded distributions, that showcases the general intuition of the proof of Appendix~\ref{app::lb_npts}. 

\subsection{Upper bound}\label{app::ub_npts}

We start by proving the upper bound presented in Lemma~\ref{lem::bounds_bcp}. To do that, we use the notation $Z_i= h(|X_i-\mu|)$ for $i \in [n]$ and $Z_{n+1}=h(|X_{n+1}-\mu|)-\gamma$, and consider \[\text{(P)}\coloneqq \bP\left(\exists X_{n+1}: \sum_{i=1}^{n+1} w_i X_i \geq \mu ,\sum_{i=1}^{n+1} w_i Z_i \leq B \right)\;.\] 
The general sketch of the proof is the following: (1) we prove an upper bound for a fixed value of $X_{n+1}$ using Chernoff method, (2) we use a union bound on a partition of the possible values for $X_{n+1}$, and (3) we show that a finite number of terms in this union bound are significant. For any cdf $F$, threshold $\mu$ and random variable $Z_X$ determined by $X$ we use the notation 
\[\Lambda_{\alpha, \beta}(F, \mu) = \bE_{F}\left[\log\left(1-(X-\mu)\alpha - (B-Z_X)\beta\right)\right]\;, \]
where $\cR_2 \coloneqq (\alpha, \beta) \in \left\{(\alpha, \beta) \in (\R^+)^2: \forall y \in \R, 1-(y-\mu) \alpha - (B-Z_y)\beta \geq 0 \right\}$. Our first result is the following upper bound for a given value of $X_{n+1}$.

\begin{lemma}[Upper bound on the BCP] \label{lem::ub_bcp_single}
	Consider fixed observations $X_1,\dots, X_n, X_{n+1}$, and denote by $F_{n+1}$ their empirical cdf. Then, for any $(\alpha, \beta)\in \cR_2$ it holds that  
	\[\bP\left(\sum_{i=1}^{n+1} w_i X_i \geq \mu,\sum_{i=1}^{n+1} w_i Z_{X_i} \leq B \right) \leq \exp(-(n+1)\Lambda_{\alpha, \beta}(F_{n+1}, \mu))\;.  \]
\end{lemma}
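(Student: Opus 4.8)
The plan is to apply the Chernoff (Cramér) method to a single scalar functional of the Dirichlet weights. First I would rewrite the target event. Since $\sum_{i=1}^{n+1} w_i = 1$ almost surely, the events $\{\sum_i w_i X_i \ge \mu\}$ and $\{\sum_i w_i Z_{X_i} \le B\}$ are respectively $\{\sum_i w_i(X_i-\mu)\ge 0\}$ and $\{\sum_i w_i(B-Z_{X_i})\ge 0\}$. Because $\alpha,\beta\ge 0$, on the intersection of these events one has $\sum_{i=1}^{n+1} w_i\bigl(\alpha(X_i-\mu)+\beta(B-Z_{X_i})\bigr)\ge 0$, i.e.\ $\sum_{i=1}^{n+1} w_i Y_i \le 1$ with $Y_i := 1-\alpha(X_i-\mu)-\beta(B-Z_{X_i})$. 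By the definition of $\cR_2$ each $Y_i\ge 0$; if some $Y_i=0$ then $\Lambda_{\alpha,\beta}(F_{n+1},\mu)=-\infty$ and the claimed bound is vacuous, so I may assume $Y_i>0$ for all $i$. It then suffices to bound $\bP\bigl(\sum_{i=1}^{n+1}w_iY_i\le 1\bigr)$.

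Next I would use the standard representation of the uniform Dirichlet law: let $E_1,\dots,E_{n+1}$ be i.i.d.\ $\mathrm{Exp}(1)$ and $S=\sum_{j=1}^{n+1}E_j$; then $(E_1/S,\dots,E_{n+1}/S)$ has the law $\cD_{n+1}$. Crucially, multiplying through by $S>0$, the event $\{\sum_i w_iY_i\le 1\}$ becomes $\{\sum_i E_iY_i\le S\}=\{\sum_i E_i(1-Y_i)\ge 0\}$, which no longer involves the normalisation. Writing $W=\sum_{i=1}^{n+1}E_i(1-Y_i)$, Markov's inequality applied to $e^{W}\ge 0$ gives $\bP(W\ge 0)\le \bE[e^{W}]=\prod_{i=1}^{n+1}\bE[e^{E_i(1-Y_i)}]$ by independence. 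Since $1-Y_i<1$ (because $Y_i>0$), each exponential moment is finite and $\bE[e^{(1-Y_i)E_i}]=\frac{1}{1-(1-Y_i)}=\frac{1}{Y_i}$.

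Putting these together,
\[
\bP\Bigl(\textstyle\sum_{i=1}^{n+1} w_iX_i\ge\mu,\ \sum_{i=1}^{n+1}w_iZ_{X_i}\le B\Bigr)\le \prod_{i=1}^{n+1}\frac{1}{Y_i}=\exp\Bigl(-\sum_{i=1}^{n+1}\log Y_i\Bigr)=\exp\bigl(-(n+1)\,\Lambda_{\alpha,\beta}(F_{n+1},\mu)\bigr),
\]
where the last equality only uses that $F_{n+1}$ is the empirical measure putting mass $\tfrac1{n+1}$ on each of $X_1,\dots,X_{n+1}$, so $\tfrac1{n+1}\sum_{i=1}^{n+1}\log Y_i=\bE_{F_{n+1}}[\log(1-\alpha(X-\mu)-\beta(B-Z_X))]=\Lambda_{\alpha,\beta}(F_{n+1},\mu)$. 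This yields the stated bound for every $(\alpha,\beta)\in\cR_2$.

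The computation is essentially routine once the reduction is set up; the only points needing care are the passage from the Dirichlet vector to i.i.d.\ exponentials (so that the constraint becomes an affine functional of independent variables and the moment generating function factorises) and checking that the Chernoff exponent $t=1$ is admissible, which is exactly what $Y_i>0$ guarantees — and is also what makes the bound collapse to $\Lambda_{\alpha,\beta}$ rather than a messier expression. The degenerate case $Y_i=0$, where the right-hand side is $+\infty$, is the only other thing worth a sentence.
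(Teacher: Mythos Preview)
Your proof is correct and follows essentially the same route as the paper: pass from the Dirichlet weights to i.i.d.\ exponentials, combine the two constraints into the single linear functional $\sum_i R_i(\alpha(X_i-\mu)+\beta(B-Z_{X_i}))\ge 0$, and apply Markov's inequality together with the exponential MGF to obtain $\prod_i Y_i^{-1}$. Your write-up is in fact slightly more careful than the paper's, as you explicitly note the admissibility condition $Y_i>0$ for the Chernoff step and dispose of the degenerate case $Y_i=0$.
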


\begin{proof}
We use the notation $\widetilde X=(X_1,\dots, X_{n+1})$ and $\widetilde Z= (Z_{X_1}, \dots, Z_{X_{n+1}})$ for simplicity, and denote by $R=(R_1, \dots, R_{n+1})$ a vector of $n+1$ i.i.d. observations drawn from an exponential distribution $\epsilon(1)$. Using usual properties of the Dirichlet distribution (see Chapter 23 of \citet{inference_mckay}) and Chernoff method, for any $\alpha>0$, $\beta>0$ it holds that

\begin{align}
	\bP_{w \sim \cD_n}(w^T \widetilde X \geq \mu, w^T \widetilde Z \leq B) 	& = \bP_{R \sim \cE(1)^n}(R^T (\widetilde X-\mu) \geq 0, R^T (B-\widetilde Z) \geq 0)\nonumber\\
	& = \bP_{R \sim \cE(1)^n}\left(e^{\alpha(R^T (\widetilde X-\mu)} \geq 1, e^{\beta (R^T (B-\widetilde Z))} \geq 1\right) \nonumber\\
	& \leq \bP_{R \sim \cE(1)^n}\left(e^{\alpha(R^T (\widetilde X-\mu) + \beta (R^T (B-\widetilde Z))} \geq 1\right) \nonumber\\
	& \leq \bE_{R \sim \cE(1)^n}\left[e^{\alpha(R^T (\widetilde X-\mu) + \beta (R^T (B-\widetilde Z))}\right] \;.
 \nonumber
\end{align}
Then, if $(\alpha, \beta) \in \cR_2$, it holds that 
\begin{align*}
	\bP_{w \sim \cD_n}(w^T \widetilde X \geq \mu, w^T \widetilde Z \leq B) & \leq \prod_{i=1}^{n+1} \bE_{R_i \sim \cE(1)}\left[e^{\alpha(R_i (X_i-\mu) + \beta (R_i (B-Z_{X_i}))}\right] \\
	& = \prod_{i=1}^{n+1} \frac{1}{1- \left(\alpha(X_i-\mu) + \beta(B-Z_{X_i})\right)} \\
	& = \exp\left(-(n+1) \bE_{F_{n+1}} \left[\log\left(1-\left(\alpha(X-\mu) + \beta(B-Z_{X_i})\right)\right)\right]\right)\\
	& = \exp\left(-(n+1) \Lambda_{\alpha, \beta}(F_{n+1}, \mu)\right) \;.
\end{align*}
We use this result by partitioning the set of possible values for $X_{n+1}$. Consider a step $\eta>0$, and the sequence $(x_j)_{j \in \N}$ defined by $\forall j\in \N$, $x_j=\mu + j \eta$.

\begin{align*}
	\text{(P)} &\leq  \sum_{j=0}^{+\infty} \bP\left(\exists X_{n+1} \in [x_j, x_{j+1}]: \sum_{i=1}^{n+1} w_i X_i \geq \mu ,\sum_{i=1}^{n+1} w_iZ_{X_i} \leq B \right) \\
	& \leq \sum_{j=0}^{+\infty} \bP\left(\sum_{i=1}^{n} w_i X_i + w_{n+1} x_{j+1} \geq \mu ,\sum_{i=1}^{n} w_i Z_{X_i}+ w_{n+1} Z_{x_j} \leq B \right) \\
	& \leq \sum_{j=0}^{+\infty} \underbrace{\bP\left(\sum_{i=1}^{n} w_i X_i + w_{n+1} (x_j+\eta) \geq \mu ,\sum_{i=1}^{n} w_i Z_{X_i} + w_{n+1} Z_{x_j} \leq B \right)}_{P_j} \;. \\
\end{align*}

At this step we can use Lemma~\ref{lem::ub_bcp_single}, and isolating the term depending on $x_j$ from the others we obtain 
\begin{equation}\label{eq::union_proof_ub}
\sum_{j=0}^{+\infty} P_j \leq \sum_{j=0}^{+\infty} \frac{\exp\left(-n\Lambda_{\alpha(j), \beta(j)}(F_n, \mu-\eta)\right)}{1-\alpha(j)(x_j+\eta-\mu) - \beta(j)(B+\gamma-h(x_j-\mu))} \;, \end{equation} for any choice of sequence $(\alpha(j), \beta(j))_{j \in \N}$. In the rest of the proof we use that when $x_j$ is large enough the denominator becomes small, and so only a finite number of terms are dominant in the sum.

\paragraph{Upper bound for large $x_j$} We consider $(\alpha_n^\star, \beta_n^\star) = \aargmax_{\alpha, \beta} \Lambda_{\alpha, \beta}(F_n, \mu-\eta)$. We first use that $\beta_n^\star>0$, since $\beta_n^\star \in \cR_2$, so when $x_j$ is large enough the denominator scales with $\beta_n^\star h(x_j-\mu)$, which is larger than the other terms since $x^{1+\gamma}=o(h(x))$ for some $\gamma>0$. Hence, there exists a constant $y_n$ such that for any $x_j\geq y_n$ it holds that 

\[ 1-\alpha_n^\star(x_j+\eta-\mu) - \beta_n^\star(B+\gamma-h(x_j-\mu)) \geq \frac{\beta_n^\star}{2}h(x_j-\mu)  \;, \]
and so the part of the sum corresponding to all $x_j \geq y_n$ satisfies 
\begin{align*}\sum_{j\geq 1: x_j \geq y_n}^{+\infty} P_j &\leq \frac{2}{\beta_n^\star}  \left(\sum_{j\geq 1: x_j \geq y_n} h(j\eta)^{-1}\right) e^{-n\Lambda_{\alpha_n^\star, \beta_n^\star}(F_n, \mu-\eta)}
  \;, \end{align*}
where we know that $\sum h(j\eta)^{-1}$ converges since $x^{1+\eta_0}=o(h(x))$ for some $\eta_0>0$.

\paragraph{Upper bound for $x_j\leq y_n$} 
In this case, we define $(\alpha_n^\star, \beta_n^\star)$ as 
\[(\alpha_n^\star, \beta_n^\star) = \max_{(\alpha, \beta) \in \cR_2^{\gamma, \eta}} \Lambda_{\alpha, \beta}(F, \mu)\;, \]
with $\cR_2^{\gamma, \eta} = \{(\alpha, \beta) \in (\R^+)^2: \forall x\in \R, \; \; 1-\alpha(x+\eta-\mu)-\beta(B+\gamma-h(|x-\mu|) ) \geq 0\}$. 
We further define $x^\star = \text{argmin}_{x \in \R}\left\{1-\alpha_n^\star (x+\eta-\mu)-\beta_n^\star(B+\gamma-h(|x-\mu|)) \right\}$. $x^\star$ depends on $F_n$, but we omit reference to $F_n$ for lighter notation. We use again the upper bound of $\sum P_j$ provided by \eqref{eq::union_proof_ub}. The main task, that we tackle in the following, is to lower bound the denominator by a constant depending on the empirical distribution $F_n$ and $\eta$.

Since the partition $(x_j)_{j \in \N}$ defined at the beginning of the proof is arbitrary (and only used in the proof but not by the algorithm), we can slightly modify it to avoid arbitrarily large values when $x_j$ is close to $x^\star$. Let $j^\star\in \N$ be the index satisfying $x^\star \in [x_{j^\star}, x_{j^\star+1}]$. In order to introduce some distance between $x^\star$ and $x_{j^\star}$, $x_{j^\star+1}$, we simply redefine $x_{j^\star}=x^\star-\frac{\eta}{2}$ and $x_{j^\star+1}=x^\star-\frac{\eta}{2}$.
We furthermore use that $d(x_j)\coloneqq 1-\alpha_n^\star(x^\star+\eta-\mu) - \beta_n^\star(B-h(x^\star-\mu)) \geq 0$ and is minimized in $x^\star$. By convexity of $h$, this quantity
is minimized either in $x_{j^\star}$ or $x_{j^\star+1}$. Thanks to the change made to the discretization scheme, this provides 
\[ \inf_{j \in \N} d(x_j)\geq \min\left\{d\left(x^\star+\frac{\eta}{2}\right), d\left((x^\star -\frac{\eta}{2}\right) \right\}>0\;. \]
Furthermore, the right-hand term only depends on $\eta$ and on the empirical distribution $F_n$ (through $x^\star$).
Hence, for $x_j \leq y_n$ we can state that there exists a non-negative constant $D(F_n, \eta)$ such that 
\[ \frac{1}{1-\alpha_n^\star(x_j-\mu) - \beta_n^\star(B-h(x_j-\mu))} \leq D(F_n, \eta)\;,\]
where $D(F_n, \eta)$ 
is continuous in $F_n$ w.r.t. the Wasserstein metric since both $(\beta_n^\star, x^\star)$ are continuous under this metric (as a consequence of Berge's maximum theorem, \citealp{berge1997topological}). Finally, we remark that there are no more than $y_n/\eta$ terms satisfying $x_j\leq y_n$ and obtain 
\begin{align*} \sum_{j=0}^{k\geq  1: x_j < y_n} P_j & \leq  
\sum_{j=0}^{k\geq  1: x_j < y_n} D(F_n, \eta) e^{-n\Lambda_{\alpha_n^\star,\beta_n^\star}(F_n, \mu)} \\
&\leq y_n \frac{D(F_n, \eta)}{\eta} e^{-n\Lambda_{\alpha_n^\star, \beta_n^\star}(F_n, \mu)}.
\end{align*}

\paragraph{Summary} Combining our two results, we obtain that  

\begin{equation}\label{eq::upper_bound_bcp} \text{(P)} \leq C(\alpha_n^\star, \beta_n^\star, \eta) \exp\left(-n\Lambda_{\alpha_n^\star, \beta_n^\star} (F_n, \mu)\right) \;, \end{equation}
where \[C(\alpha_n^\star, \beta_n^\star, \eta)= y_n \frac{D(F_n, \eta)}{\eta} +\frac{2}{\beta_n^\star} \sum_{k=\left\lceil\frac{y_n-\mu}{\eta} \right\rceil}^{+\infty} \frac{1}{h(\eta k)}\;.\] 

As stated above the constant $C(\alpha_n^\star, \beta_n^\star, \eta)$ depend on $F_n$ through the optimizers $(\alpha_n^\star, \beta_n^\star)$ (as they also determine $y_n$ and $x^\star$), that depends on $\gamma$ and $\eta$ since they belong to $\cR_2^{\gamma, \eta}$. Finally, with the same arguments as for the continuity of $D$, it is clear that $C$ is continuous in all parameters, which concludes the proof of the result.
\end{proof}

%

\subsection{Lower bound: Bounded case}\label{app::lb_bcp_bounded}

In this section we present a novel lower bound for the BCP of NPTS in the bounded case, that can also serve as a warm-up to present the idea for the slightly more complicated $h$-moment condition setting. We assume that all observations are bounded by some constant $B>0$, and that the algorithm is using the additional support $X_{n+1}= B+\gamma$ for some $\gamma>0$.

\begin{lemma}[Novel lower bound for the BCP of NPTS]\label{lem::novel_lb_bcp_npts}
If NPTS is implemented with exploration bonus $B+\gamma$ for some $\gamma>0$, then for any $\delta>0$ there exists a constant $n_{\delta, \gamma} \in \N$ such that
    \[\text{\emph{[BCP]}}\geq 
\frac{1-e^{-\frac{\delta}{4}}}{2e^{1+\delta}} \times e^{-n\left(\Lambda_{\alpha_n^\star}(F_n, \mu^\star)+\delta\right)}\; \quad \mbox{if} \quad n \geq n_{\delta, \gamma}\;,\]
and $\alpha_n^\star = \aargmax_{\alpha \in \left[0, \frac{1}{B+\gamma-\mu}\right]}\Lambda_{\alpha_n^\star}(F_n, \mu^\star)$.
\end{lemma}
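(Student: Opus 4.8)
The plan is to pass from the Dirichlet weights to i.i.d.\ exponentials and then apply an exponential change of measure tuned to the maximiser $\alpha_n^\star$. Let $R_1,\dots,R_{n+1}$ be i.i.d.\ $\cE(1)$, so that $w_i=R_i/\sum_j R_j$ has law $\cD_{n+1}$, and write $Y_i\coloneqq X_i-\mu^\star$ for $i\le n$ and $Y_{n+1}\coloneqq B+\gamma-\mu^\star>0$; then the event defining [BCP] is exactly $\{\sum_{i=1}^{n+1}R_iY_i\ge 0\}$. Since $\mu^\star>\bE_{F_n}[X]$ we have $\alpha_n^\star>0$, and $1-\alpha_n^\star Y_i$ is bounded away from $0$ uniformly in $i\le n$ (because $\alpha_n^\star\le 1/Y_{n+1}$ and $X_i\le B$), so we may define the tilted measure $Q$ under which $R_i\sim\cE(1-\alpha_n^\star Y_i)$ for $i\le n$ and $R_{n+1}\sim\cE(1)$, all independent. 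Computing $\mathrm{d}P/\mathrm{d}Q$ and using $\prod_{i\le n}(1-\alpha_n^\star Y_i)^{-1}=\exp(-n\Lambda_{\alpha_n^\star}(F_n,\mu^\star))$ yields the exact identity
\[
\text{[BCP]}=\exp\!\big(-n\Lambda_{\alpha_n^\star}(F_n,\mu^\star)\big)\,\bE_Q\!\Big[e^{-\alpha_n^\star S}\ind\{S+Y_{n+1}R_{n+1}\ge 0\}\Big],\qquad S\coloneqq\sum_{i=1}^n Y_iR_i,
\]
so everything reduces to lower bounding the residual $Q$-expectation $J$.

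Integrating out $R_{n+1}\sim\cE(1)$ first, $J=\bE_Q[\ind\{S\ge0\}e^{-\alpha_n^\star S}]+\bE_Q[\ind\{S<0\}e^{(1/Y_{n+1}-\alpha_n^\star)S}]$. The structural input I would use is that optimality of $\alpha_n^\star$ on $[0,1/Y_{n+1}]$ forces $\bE_Q[S]=\sum_{i\le n}Y_i/(1-\alpha_n^\star Y_i)=-n\,\partial_\alpha\Lambda_\alpha(F_n,\mu^\star)\big|_{\alpha_n^\star}\le 0$ (it vanishes at an interior optimum and is $\le0$ when $\alpha_n^\star=1/Y_{n+1}$), while $\mathrm{Var}_Q(S)=\sum_{i\le n}Y_i^2/(1-\alpha_n^\star Y_i)^2=\Theta(n)$ with constants depending only on $B,\gamma,\mu^\star$ and the support. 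I then split into two cases. If $\alpha_n^\star=1/Y_{n+1}$, the multiplier $1/Y_{n+1}-\alpha_n^\star$ vanishes, so $J\ge\bP_Q(S<0)\ge\bP_Q(S\le\bE_Q[S])\ge\tfrac13$ for $n$ large (central limit theorem), which already beats the claimed bound. If $\alpha_n^\star<1/Y_{n+1}$, then $\bE_Q[S]=0$, and restricting the event to $\{R_{n+1}\ge1\}$ (probability $e^{-1}$, and then $\{S\ge-Y_{n+1}\}$ already implies $S+Y_{n+1}R_{n+1}\ge0$) and to $\{-Y_{n+1}\le S\le\delta/\alpha_n^\star\}$ (on which $e^{-\alpha_n^\star S}\ge e^{-\delta}$) gives $J\ge e^{-1-\delta}\,\bP_Q(-Y_{n+1}\le S\le\delta/\alpha_n^\star)\ge e^{-1-\delta}\,c/\sqrt n$ for some $c>0$, by an anti-concentration / local CLT estimate for $S$ (mean $0$, variance $\Theta(n)$) in the spirit of the lower bound in Lemma~14 of \citet{RiouHonda20}. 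Since $c/\sqrt n\ge\tfrac{1-e^{-\delta/4}}{2}e^{-n\delta}$ for all $n\ge n_{\delta,\gamma}$ (polynomial versus exponential decay), plugging back gives exactly $\text{[BCP]}\ge\frac{1-e^{-\delta/4}}{2e^{1+\delta}}e^{-n(\Lambda_{\alpha_n^\star}(F_n,\mu^\star)+\delta)}$, the threshold $n_{\delta,\gamma}$ collecting all the ``$n$ large enough'' requirements.

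The main obstacle is this lower bound on $J$. Because the tilt makes $S$ centred in the interior regime, $J$ there is genuinely only of order $1/\sqrt n$, so one cannot throw away the left tail of $S$ --- hence the need to integrate out $R_{n+1}$ exactly rather than bound the indicator by $\ind\{S\ge-Y_{n+1}R_{n+1}\}$ crudely --- and one must isolate the boundary regime $\alpha_n^\star=1/Y_{n+1}$, where $\bE_Q[S]$ can be of order $-n$ so the interior restriction $\{R_{n+1}\ge1\}$ would be ruinous, but where the vanishing of the multiplier $1/Y_{n+1}-\alpha_n^\star$ makes a direct argument work. Making the central-limit / anti-concentration estimates quantitative enough to name $n_{\delta,\gamma}$ is then routine bookkeeping.
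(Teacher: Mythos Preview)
Your route and the paper's both pass to i.i.d.\ exponentials and tilt, but the tilts are different. The paper tilts \emph{all} $R_i$ at the optimizer of $\Lambda_\alpha(F_{n+1},\mu^\star+\eta_1)$ for a small perturbation $\eta_1>0$, then conditions on $R_{n+1}\in[G_{n+1},(1+c)G_{n+1}]$ (probability $e^{-1}(1-e^{-c})$). The perturbation is the point: under the tilted law the centred sum $\sum_{i\le n}R_i(X_i-\mu^\star)$ has mean of order $n\eta_1$ and variance $O(n)$ (here $\gamma>0$ is used exactly as you use it, to bound $G_i\le(B+\gamma-\mu^\star)/\gamma$ for $i\le n$), so a one-line Chebyshev gives that the target event holds with probability $1-O(1/n)$. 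No interior/boundary split, no anti-concentration; the $\eta_1$ in the exponent is then absorbed into $\delta$ by continuity and compactness.

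By tilting only $R_1,\dots,R_n$ at the unperturbed $\alpha_n^\star$ you land on $\bE_Q[S]\le 0$, so the event is never typical under $Q$ and you must extract a lower bound of order $1/\sqrt n$ on $\bP_Q(-Y_{n+1}\le S\le\delta/\alpha_n^\star)$. This is the gap. First, $\mathrm{Var}_Q(S)=\sum_i Y_i^2/(1-\alpha_n^\star Y_i)^2$ is \emph{not} $\Theta(n)$ uniformly in $F_n$ (take $X_2=\dots=X_n=\mu^\star$); what is true is only the upper bound $O(n)$, which is what you actually need, but you should say so. Second, and more seriously, turning ``mean $0$, variance $O(n)$, interval of fixed positive length containing $0$'' into a uniform $c/\sqrt n$ lower bound requires either a Berry--Esseen argument with constants uniform over all admissible $(F_n,\alpha_n^\star)$, or the discretization machinery of Lemma~14 in \citet{RiouHonda20} (which is what that lemma actually does, not a CLT). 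The same uniformity issue bites your boundary-case claim $\bP_Q(S<0)\ge 1/3$. None of this is fatal in the bounded setting, but it is precisely the technical work the paper's perturbation trick was designed to avoid.
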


\begin{proof} Recall that
\[\text{[BCP]} \geq \bP_{w \sim \cD_{n+1}}\left(\sum_{i=1}^{n+1} w_i X_i\geq \mu^\star \right) \;,\]
with $X_{n+1} = B+\gamma$. For any cdf $F_{n+1}: x \mapsto  \frac{1}{n+1}\sum_{i=1}^{n+1}\ind(X_i\leq x)$ we define

\[\Lambda_{\alpha}(F_n, \mu) = \bE_{F_{n+1}}\left[\log\left(1-\alpha(X-\mu)\right)\right] \;. \]

We recall that any vector $w=(w_1,\dots, w_{n+1}) \sim \cD_{n+1}$ can be re-written as a normalized sum of $\epsilon(1)$ exponential random variables. Hence, we can write the BCP as follows:
$$ \text{[BCP]} = \bP_{R_1\sim \epsilon(1),\dots, R_{n+1} \sim \epsilon(1)}\left(\sum_{i=1}^{n+1} R_i(X_i-\mu^\star) \geq 0\right) \;,$$
where $(R_i)_{i\geq 1}$ are i.i.d. and follow an exponential distribution $\cE(1)$. To lower bound this quantity, we will perform a change of measure for each variable $R_i$. In particular, we propose to consider a variable that has expectation $G_i$, for some $(G_i)_{i=1,\dots, n+1}$. We can do that by choosing exponential random variables $\epsilon\left(\frac{1}{G_i}\right)$: this technique is known as exponential tilting. If $\widetilde f_i$ denotes the density of this random variable and $f$ denotes the density of the distribution $\epsilon(1)$, it holds that 

\[\forall x \geq 0 : \; \frac{f(x)}{\widetilde f_i(x)} = G_i e^{- \left(1-\frac{1}{G_i} \right) x}
\]

We further denote by and $\bP$ and $\widetilde \bP$ their respective product. Using $R^T U = \sum_{i=1}^{n+1} R_i (X_i-\mu)$, we now express the BCP using the distributions of densities $\widetilde f_i$, 

\begin{align*}
	\text{[BCP]} =& \bE_{\bP}\left[\ind\left(R^T U \geq 0\right)\right] \\
	= & \bE_{\widetilde \bP}\left[\ind\left(R^T U \geq 0\right) \prod_{i=1}^{n+1} \frac{f(R_i)}{\widetilde f_i(R_i)}\right] \\
	= & \prod_{i=1}^{n+1} G_i \times \bE_{\widetilde \bP}\left[\ind\left(R^T U \geq 0\right) e^{-\sum_{i=1}^{n+1} \left(1-\frac{1}{G_i}\right)R_i}\right] \\
	= & e^{- \sum_{i=1}^{n+1} \log\left(\frac{1}{G_i}\right)} \bE_{\widetilde \bP}\left[\ind\left(R^T U \geq 0\right) e^{-\sum_{i=1}^{n+1} \left(1-\frac{1}{G_i}\right)R_i}\right] \;.\\
\end{align*}
Then, our proof will consists in: first, deriving the term $\max_{\alpha}\Lambda_{\alpha}(F_n, \mu)+\delta$ from the exponential term, and then showing that the expectation can be lower bounded by a constant. To do that, for any arbitrary constant $\eta_1>0$ and 
for $i\leq n+1$ we consider 
\[ G_i =  \frac{1}{1-\alpha(X_i-\mu-\eta_1)} \quad \text{ for some } \alpha>0\;, \] 
so that $\frac{1}{n}\sum_{i=1}^{n} \log\left(\frac{1}{G_i}\right)=\Lambda_{\alpha}(F_n, \mu+\eta_1)$. Now, let us choose \[\alpha= \alpha^\star \coloneqq \aargmax\, \Lambda_{\alpha}(F_{n+1}, \mu+\eta_1) \;, \] 
and then $\Lambda_{\alpha^\star}(F_{n+1}, \mu+\eta_1)= \kinf^\cF(F_{n+1}, \mu+\eta_1)$, where $\cF$ is the family of distributions bounded by $B+\gamma$, and the weights $(G_i)_{i \in [n+1]}$ associated with support $(X_1,\dots, X_n)$ define a probability distribution of mean equal to $\mu+\eta_1$. Furthermore, it holds that
\begin{align}
 \alpha \leq \alpha_M \coloneqq \frac{1}{B+\gamma - \mu-\eta_1},\label{def_alpha_M}   
\end{align}
but also that 
\[\forall i \in [n]\;, \;  X_i\leq B \Rightarrow  \frac{1}{1+\alpha_M(\mu+\eta_1)}\leq G_i \leq \frac{B+\gamma-\mu-\eta_1}{\gamma}\;.\] 

This property is crucial for the rest of our proof, and motivates the choice of $X_{n+1}$ as $B+\gamma$ instead of $B$: it ensures that all the exponential variables $R_1,\dots, R_n$ have a reasonable expectation, that does not depend on $n$: only $G_{n+1}$ can potentially be large (in the case when the empirical distribution is very bad, e.g. $X_1= X_2= \dots= X_n=0$).

We now write that 

\[ \text{[BCP]} \geq   e^{-(n+1) \Lambda_{\alpha}(F_{n+1} \mu+\eta_1)} \times \bE_{\widetilde \bP}\left[\ind\left(R^T U \geq 0\right) e^{- \alpha R^T \widetilde U}\right] \;,
\]
with $\widetilde U = (X_1-\mu-\eta_1, \dots, X_{n+1}-\mu-\eta_1)$, and after remarking that for all $i$,
\[1- \frac{1}{G_i}= \alpha (X_i-\mu-\eta_1) .\]

We now use that $\alpha\leq \alpha_M \coloneqq \frac{1}{B+\gamma-\mu-\eta_1}$ to handle the exponential term inside the expectation. Then, for any $\delta >0$ we write that 
\begin{equation} \label{eq::bcp_palph}	\text{[BCP]} \geq  e^{-(n+1) \left(\Lambda_{\alpha, \beta}(F_{n+1}, \mu+\eta_1)+\frac{\delta}{2}\right)} \times P_{\alpha} \;,
\end{equation}
where we defined 
\begin{equation}\label{eq::P_alpha}P_{\alpha} = \widetilde \bP\left(R^T U \geq 0, R^T \widetilde U \leq (n+1)\frac{\delta}{2\alpha_M}\right) \;.
\end{equation}

In some cases, it may happen that $G_{n+1}$ is large (at most $n+1$), and in that case the exponential variable as variance $n^2$. Hence, we would like to control very precisely the deviations of $R_{n+1}$. For this reason, for some constant $c>0$ we consider the event 

\[E_{n+1} \coloneqq \left\{ R_{n+1} \in \left[G_{n+1},\; (1+c)G_{n+1} \right]\right\} \;. \]

Using the cdf of exponential variables, this event has exactly a probability of 

\[\bP_\alpha(R_{n+1} \in \left[G_{n+1},\; (1+c)G_{n+1} \right]) = e^{-\frac{G_{n+1}}{G_{n+1}}}-e^{-\frac{G_{n+1}(1+c)}{G_{n+1}}} = e^{-1}(1-e^{-c})\;. \]

We now re-write $P_{\alpha}$ as

\begin{align*}
	P_{\alpha} &\geq \widetilde \bP\left(R^T U \geq 0, R^T \widetilde U \leq (n+1)\frac{\delta}{2\alpha_M}, E_{n+1}\right) \\
	& \geq \bP(E_{n+1}) \bP\left(R^T U \geq 0, R^T \widetilde U \leq (n+1)\frac{\delta}{2\alpha_M}| E_{n+1}\right)\\	
	& = e^{-1}(1-e^{-c}) \left(1- \underbrace{\bP_{\alpha}(R^T U \leq 0, R^T \widetilde U \geq n\frac{\delta}{2\alpha_M}|E_{n+1})}_{p_1}\right) \;.
\end{align*}

The rest of the proof consists in showing that for $n$ large enough the terms $p_1$ and $p_2$ are small. We consider the first event in $p_1$. Using that $\bE_\alpha[R^TU] = (n+1) \eta$

\begin{align*}
	 \left\{R^T U \leq 0 \right\}& \subset \left\{\sum_{i=1}^n R_i (X_i - \mu) + G_{n+1} (X_{n+1}- \mu) \leq 0\right\} \\
	& = \left\{\sum_{i=1}^n R_i (X_i - \mu) - \bE\left[\sum_{i=1}^n R_i (X_i-\mu)\right] \leq -(n+1)\eta \right\} \;,
\end{align*}
and so under $E_{n+1}$ the first event implies a deviation of $\sum_{i=1}^n R_i (X_i - \mu)$ from its mean. Next, considering the second event it holds that 
\begin{align*}
	&\left\{R^T \widetilde U \geq (n+1)\frac{\delta}{2\alpha_M} \right\}\subset \left\{\sum_{i=1}^n R_i (X_i - \mu) + (1+c)G_{n+1} (X_{n+1}- \mu) \geq (n+1)\frac{\delta}{2\alpha_M}\right\} \\
	& = \left\{\sum_{i=1}^n R_i (X_i - \mu) - \bE\left[\sum_{i=1}^n R_i (X_i-\mu)\right] \leq (n+1)\frac{\delta}{2\alpha_M} - c G_{n+1}(B+\gamma-\mu) \right\} \;.
\end{align*}
Then, we use that $G_{n+1} \leq n+1$ to tune $c$ in order to obtain something similar to the first event.

For the above goal, let us consider the case of
\begin{align}
c= \frac{\delta}{4\alpha_M (B+\gamma-\mu)}.\label{assump_c}
\end{align}
In this case we obtain that 
\begin{align*}
	\left\{R^T \widetilde U \geq (n+1)\frac{\delta}{2\alpha_M} \right\}\subset\left\{\sum_{i=1}^n R_i (X_i - \mu) - \bE\left[\sum_{i=1}^n R_i (X_i-\mu)\right] \leq (n+1)\frac{\delta}{4\alpha_M}\right\} \;.
\end{align*}
Then, using the notation $\eta' = \min\left\{\eta, \frac{\delta}{4\alpha_M} \right\}$ we obtain with the Tchebychev inequality that 
\begin{align}
	p_1 &\leq \bP_\alpha\left(\left|\sum_{i=1}^n R_i (X_i-\mu) - \sum_{i=1}^n G_i (X_i-\mu) \right| \geq (n+1) \eta' \right)\nn \\
	& \leq \frac{1}{(n+1)^2 \eta'^2} \times \sum_{i=1}^n G_i^2 (X_i-\mu)^2\nn \nn\\
	& \leq \frac{1}{(n+1)^2 \eta'^2} (B-\mu)^2 \max_{i=1,\dots, n} G_i \times \sum_{i=1}^n G_i  \nn\\
	& \leq \frac{(B-\mu)^2}{(n+1) \eta'^2}  \frac{B+\gamma -\mu - \eta_1}{\gamma}  \nn\\
	& \leq \frac{1}{n+1} \times \frac{(B-\mu)^2}{\eta'^2} \left(1+\frac{B -\mu}{\gamma} \right) \;.\label{result_under_c}
\end{align}
As this bound is independent of $F_n$, we obtain that for any $\eta>0$, there exists $n_\eta \in \N$ satisfying $p_1 \leq \frac{1}{2}$ for $n\geq n_\eta$, under the case of \eqref{assump_c}.

Here note that $c$ in \eqref{assump_c} goes to $\frac{\delta}{4}$ as $\eta\to 0$
since $\alpha_M (B+\gamma- \mu)\to 1$ holds from \eqref{def_alpha_M}.
For this reason,
we now take
$c=\delta/4$ instead of \eqref{assump_c}, under which a result similar to \eqref{result_under_c} holds and there still exists $n_{\eta}\in\mathbb{N}$ satisfying $p_1 \leq \frac{1}{2}$ for $n\geq n_\eta$ if we choose $\eta$ small enough.
For such $n\ge n_{\eta}$,
it holds that $P_\alpha\geq \frac{e^{-1}(1-e^{-c})}{2}=\frac{e^{-1}(1-e^{-\frac{\delta}{4}})}{2}$ (plugging our result in Equation~\eqref{eq::P_alpha}) and so using \eqref{eq::bcp_palph} we obtain that
\[ \text{[BCP]} \geq  \frac{e^{-1}(1-e^{-\frac{\delta}{4}})}{2}  e^{-(n+1) \left(\Lambda_{\alpha}(F_{n+1}, \mu+\eta)+\frac{\delta}{2}\right)}  \;.
\]

We now tune $\eta$ as 
\[\eta = \max \left\{\eta >0: \; \forall F_n \in \cF\;, \; \Lambda_{\alpha^\star}(F_{n+1}, \mu+\eta)\leq \kinf^{\cF^\gamma}(F_{n+1}, \mu) + \frac{\delta}{2} \right\}\;. \]
Due to the compactness of $\cF$, this quantity exists, is strictly positive, and depends on $\delta$, and goes to $0$ as $\delta$ goes to $0$. Finally, we remark that since $X_{n+1}$ has a negative contribution to $\kinf(F_{n+1}, \mu)$ (expressed as a sum of $n+1$ logarithmic terms, the $(n+1)$-th one being the minimum), 
then it holds that 
\[(n+1)\kinf^{\cF^\gamma}(F_{n+1}, \mu) 
\leq n \kinf^{\cF^\gamma}(F_n, \mu)\;\]
and consequently, we conclude that for any $\delta>0$, there exists a constant $n_\delta$ such that for all $n\geq n_\delta$ it holds that  
\[ \text{[BCP]} \geq  \frac{1-e^{-\frac{\delta}{4}}}{2e^{1+\delta}}  e^{-n \left(\kinf^{\cF^\gamma}(F_n, \mu)+\delta\right)}  \;,
\]
which concludes our proof for this result.
\end{proof}

\begin{remark} Chernoff method would lead to a tighter upper bound for $p_1$, however we choose Tchebychev inequality for simplicity since knowing the right value of $n_\delta$ is not necessary for our algorithm.
\end{remark}


\subsection{Lower bound: $h$-moment condition}\label{app::lb_npts}

In this section we prove the lower bounds on the BCP in Lemma~\ref{lem::bounds_bcp}. The proof follows the exact same steps as the proof of Lemma~\ref{lem::novel_lb_bcp_npts} presented in Section~\ref{app::lb_bcp_bounded}, with some additional technicality due to the slightly more complicated definition of the family $\cF$. We still write all the details for completeness.
We first recall that for observations $X_1,\dots, X_n$ the BCP is expressed as 
\begin{align}
\text{[BCP]}
&\geq \bP_{w \sim \cD_{n+1}}\left(\exists X_{n+1} \geq \mu^\star: \; \sum_{i=1}^{n+1} w_i X_i\geq \mu^\star, \sum_{i=1}^{n+1} w_i Z_{X_i}\leq B \right)\nn\\
&\geq \bP_{w \sim \cD_{n+1}}\left(\sum_{i=1}^{n+1} w_i X_i\geq \mu^\star, \sum_{i=1}^{n+1} w_i Z_{X_i}\leq B \right)\quad\mbox{for arbitrary $X_{n+1}\ge \mu^\star$}\nn
\;,
\end{align}
with $Z_{X_i} = h(|X_i-\mu|)$ for $i \in [n]$ and $Z_{X_{n+1}} = h(X_{n+1}-\mu)-\gamma$.
To provide a lower bound on this quantity we will choose $X_{n+1}$ so that the bound is expressed in terms of the quantity 
\[\Lambda_{\alpha, \beta}(F, \mu, B) = \bE_{F}\left[\log\left(1-\alpha(X-\mu)-\beta(B-Z_X)\right)\right] \;, \]
given some parameters $\alpha, \beta, \mu, B$ and a cdf $F$.


We recall that any vector $w=(w_1,\dots, w_{n+1}) \sim \cD_{n+1}$ can be re-written as a normalized sum of $\epsilon(1)$ exponential random variables.
Hence, we can write the BCP as follows:
$$ \text{[BCP]} \geq \bP_{R_1\sim \epsilon(1),\dots, R_{n+1} \sim \epsilon(1)}\left(\sum_{i=1}^{n+1} R_i(X_i-\mu) \geq 0, \sum_{i=1}^{n+1} R_i(B-Z_{X_i}) \geq 0 \right) \;,$$
where $(R_i)_{i\geq 1}$ are i.i.d. and follow an exponential distribution $\cE(1)$. We perform an exponential tilting of the distributions, considering the exponential variables with expectation $\bE[R_i] = G_i$, for some sequence $(G_i)_{i=1,\dots, n+1}$, so $R_i \sim \epsilon\left(\frac{1}{G_i}\right)$. If $\widetilde f_i$ denotes the density of this random variable and $f$ denotes the density of the distribution $\epsilon(1)$, it holds that 
\[\forall x \geq 0 : \; \frac{f_i(x)}{\widetilde f_i(x)} = G_i e^{- \left(1-\frac{1}{G_i} \right) x}.
\]

We further denote by and $\bP$ and $\widetilde \bP$ their respective product. Using the notation $R^T U = \sum_{i=1}^{n+1} R_i (X_i-\mu)$ and $R^T V = \sum_{i=1}^{n+1}R_i(B-Z_{X_i})$, we now re-write the BCP using the distributions of densities $\widetilde f_i$, 
\begin{align*}
	\text{[BCP]} =& \bE_{\bP}\left[\ind\left(R^T U \geq 0, R^T V \geq 0\right)\right] \\
	= & \bE_{\widetilde \bP}\left[\ind\left(R^T U \geq 0, R^T V \geq 0\right) \prod_{i=1}^{n+1} \frac{f_i(R_i)}{\widetilde f_i(R_i)}\right] \\
	= & \prod_{i=1}^{n+1} G_i \times \bE_{\widetilde \bP}\left[\ind\left(R^T U \geq 0, R^T V \geq 0\right) e^{-\sum_{i=1}^{n+1} \left(1-\frac{1}{G_i}\right)R_i}\right] \\
	= & e^{- \sum_{i=1}^{n+1} \log\left(\frac{1}{G_i}\right)} \bE_{\widetilde \bP}\left[\ind\left(R^T U \geq 0, R^T V \geq 0\right) e^{-\sum_{i=1}^{n+1} \left(1-\frac{1}{G_i}\right)R_i}\right] \;.\\
\end{align*}
We want to make the term $\max_{\alpha, \beta}\Lambda_{\alpha, \beta}(F_n, \mu, B)+\delta$ come from the exponential term in front of the expectation, and to show that the expectation can be lower bounded by a constant. To do that, we consider 
two arbitrary constants $\eta_1>0, \eta_2>0$, and for $i\leq n+1$ we define
\[
G_i
=
\begin{cases}
\frac{1}{1-\alpha(X_i-\mu-\eta_1)- \beta (B-\eta_2 - Z_{X_i})}, 
&i\in[n],\\
\frac{1}{1-\alpha(X_i-\mu-\eta_1)- \beta (B+\gamma-\eta_2 - Z_{X_i})}, &i=n+1,
\end{cases}
\] 
for some constants $(\alpha, \beta)$.
We still center the $h$-moment in $\mu$, as we want the objective inside the expectation to be strictly easier to reach with our change of distribution. However, we keep the notation $\Lambda_{\alpha, \beta}$ with a slight abuse for simplicity. 

We then choose $X_{n+1}$ in the BCP as
\begin{equation}\label{eq::X_n+1} X_{n+1} = \text{argmin}_{x\in \R} \{1-\alpha_n^\star(x-\mu-\eta_1)- \beta_n^\star (B-\eta_2 - h(|x-\mu|))\} \;. \end{equation}
Then, by definition of $(\alpha_n^\star, \beta_n^\star)$ we obtain that
\begin{align*}
1-\alpha_n^\star (X_i-\mu-\eta_1)&-\beta_n^\star(B-\eta_2-h(|X_i-\mu|))\\ 
&\geq 1-\alpha_n^\star (X_{n+1}-\mu-\eta_1)-\beta_n^\star(B-\eta_2-h(|X_{n+1}-\mu|)) \quad \text{(by \eqref{eq::X_n+1})} \\
&\geq \beta_n^\star \gamma + \underbrace{1-\alpha_n^\star (X_{n+1}-\mu-\eta_1)-\beta_n^\star(B+\gamma-\eta_2-Z_{n+1})}_{\geq 0} \\
&\geq \overline \beta \gamma \coloneqq  \left( \min\limits_{F_n\in \cF: \bE_{F_n}[X]\leq \mu} \beta_n^\star\right) \gamma \;,
\end{align*}
where $\overline \beta$ is well-defined and satisfies $\overline \beta>0$ by compactness of $\cF$.
With the same argument, we can also deduce the existence of a (universal) upper bound $X^+$ of the potential additional support $X_{n+1}$ (exploiting the continuity of mapping $F \mapsto \alpha_F^\star$, $F \mapsto \beta_F^\star$ in $F$, and that the additional support satisfies $\alpha_F^\star =
h'(X-\mu)\beta_F^\star$). Now, given that $X_{n+1}$ cannot be arbitrarily large, $W(F_n, F_{n+1})=\cO(n^{-1})$, where $W$ denotes the Wasserstein metric. Furthermore, using the continuity of the mapping $F \mapsto 1-\alpha_{F}^\star (x-\mu-\eta_1)-\beta_{F}^\star(B-\eta_2-h(|x-\mu|))$ for any $x\in \R$, we finally conclude that there exists a rank $n_0\in \N$ such that for any $n\geq n_0$ it holds that
\[\forall i \in [n]\;, \; G_i \leq \frac{2}{\gamma \overline \beta}\;.\] 
This property ensures that all the variables $G_1,\dots, G_{n}$ have a reasonable expectation, that does not depend on $n$ but on $\gamma$: only $G_{n+1}$ can potentially be large (at most $n+1$).

We now get back to lower bounding the BCP, writing that 
\[ \text{[BCP]} \geq   e^{-n \Lambda_{\alpha, \beta}(F_n, \mu+\eta_1, B-\eta_2)} \times \bE_{\widetilde \bP}\left[\ind\left(R^T U \geq 0, R^T V \geq 0\right) e^{- \alpha R^T \widetilde U - \beta R^T \widetilde V}\right] \;,
\]
with $\widetilde U = (X_1-\mu-\eta_1, \dots, X_{n+1}-\mu-\eta_1)$ and $\widetilde V = (B-\eta_2-Z_{X_1}, \dots, B+\gamma-\eta_2-Z_{X_{n+1}}$, and after remarking that for all $i\leq n$ (and with $B+\gamma$ $i=n+1$),
\[1- \frac{1}{G_i}= \alpha (X_i-\mu-\eta_1) + \beta (B-\eta_2-Z_{X_i}).\]

Then, we handle the exponential term inside the expectation by first using from \citet{agrawal2020optimal} (their proof can be simply adapted to get the centered condition instead of the uncentered one) that $\alpha$ and $\beta$ are both upper bounded respectively by some constants $\alpha_M $ and $\beta_M$ depending only on $\mu+\eta_1$ and $B-\eta_2$. Then, for any $\delta >0$ we write that 
\[	\text{[BCP]} \geq  e^{-n \left(\Lambda_{\alpha, \beta}(F_n, \mu+\eta_1, B-\eta_2)+\frac{\delta}{2}\right)} \times P_{\alpha, \beta} \;,
\]
where we defined 
\[P_{\alpha, \beta} = \widetilde \bP\left(R^T U \geq 0, R^T V \geq 0, R^T \widetilde U \leq n\frac{\delta}{4\alpha_M}, R^T \widetilde V \leq n\frac{\delta}{4\beta_M}\right) \;.
\]
As in the bounded case, we control separately the term $R_{n+1}$ since it is the only term with an expectation that can depend on $n$. Considering a constant $c>0$, we define the event 

\[E_{n+1} \coloneqq \left\{ R_{n+1} \in \left[G_{n+1},\; (1+c)G_{n+1} \right]\right\} \;. \]

Using the cdf of exponential variables, this event has exactly a probability of 
\[\bP_{\alpha,\beta}(R_{n+1} \in \left[G_{n+1},\; (1+c)G_{n+1} \right]) = e^{-\frac{G_{n+1}}{G_{n+1}}}-e^{-\frac{G_{n+1}(1+c)}{G_{n+1}}} = e^{-1}(1-e^{-c})\;, \]
so we obtain that
\begin{align*}
	P_{\alpha, \beta} &\geq \widetilde \bP\left(R^T U \geq 0, R^T V\geq 0 R^T \widetilde U \leq (n+1)\frac{\delta}{4\alpha_M}, R^T \widetilde V \geq \frac{\delta}{4\beta_M},  E_{n+1}\right) \\
	& \geq \bP_{\alpha, \beta}(E_{n+1}) \bP\left(R^T U \geq 0, R^TV \geq 0 , R^T \widetilde U \leq (n+1)\frac{\delta}{4\alpha_M}, R^T \widetilde V \leq (n+1)\frac{\delta}{4\alpha_M}| E_{n+1}\right)\\	
	& = p_c \left(1- \underbrace{\bP_{\alpha,\beta}(R^T U \leq 0, R^T \widetilde U \geq n\frac{\delta}{4\alpha_M}|E_{n+1})}_{p_1}-\underbrace{\bP_{\alpha,\beta}(R^T V \leq 0, R^T \widetilde V \geq n\frac{\delta}{4\beta_M}|E_{n+1})}_{p_2}\right) \;,
\end{align*}
with $p_c=e^{-1}(1-e^{-c})$. The rest of the proof consists in showing that for $n$ large enough the terms $p_1$ and $p_2$ are small. We now consider that $(\alpha, \beta)= \aargmax \Lambda_{\alpha, \beta}(F_{n+1}, \mu+\eta_1, B-\eta_2)$. We consider the first event in $p_1$. Using that $\bE_{\alpha, \beta}[R^TU] = (n+1) \eta$

\begin{align*}
	\left\{R^T U \leq 0 \right\}& \subset \left\{\sum_{i=1}^n R_i (X_i - \mu) + G_{n+1} (X_{n+1}- \mu) \leq 0\right\} \\
	& = \left\{\sum_{i=1}^n R_i (X_i - \mu) - \bE\left[\sum_{i=1}^n R_i (X_i-\mu)\right] \leq -(n+1)\eta_1 \right\} \;,
\end{align*}
and so under $E_{n+1}$ the first event implies a deviation of $\sum_{i=1}^n R_i (X_i - \mu)$ from its mean. Then, to handle the second event we use that 
\begin{align*}
	&\left\{R^T \widetilde U \geq (n+1)\frac{\delta}{4\alpha_M} \right\}\subset \left\{\sum_{i=1}^n R_i (X_i - \mu) + (1+c)G_{n+1} (X_{n+1}- \mu) \geq (n+1)\frac{\delta}{4\alpha_M}\right\} \\
	& = \left\{\sum_{i=1}^n R_i (X_i - \mu) - \bE\left[\sum_{i=1}^n R_i (X_i-\mu)\right] \leq (n+1)\frac{\delta}{4\alpha_M} - c G_{n+1}(X^+-\mu) \right\} \;.
\end{align*}

Then, we use that $G_{n+1} \leq n+1$ to obtain a first condition on $c$. We arbitrarily consider $c\leq \frac{\delta}{8\alpha_M (X^+-\mu)}$, to obtain that 

\begin{align*}
	\left\{R^T \widetilde U \geq (n+1)\frac{\delta}{4\alpha_M} \right\}\subset\left\{\sum_{i=1}^n R_i (X_i - \mu) - \bE\left[\sum_{i=1}^n R_i (X_i-\mu)\right] \leq (n+1)\frac{\delta}{8\alpha_M}\right\} \;.
\end{align*}

Hence, using the notation $\eta_1' = \min\left\{\eta, \frac{\delta}{4\alpha_M} \right\}$ we get

\begin{equation}\label{eq;;ub_p1} p_1 \leq \bP_{\alpha,\beta}\left(\left|\sum_{i=1}^n R_i (X_i-\mu) - \sum_{i=1}^n G_i (X_i-\mu) \right| \geq (n+1) \eta_1' \right)\;. \end{equation}

We then perform the exact same steps with $p_2$. This time, it will depend on the sign of $B-h(X_{n+1}-\mu)+\gamma$. If $B-h(X_{n+1}-\mu)+\gamma >0$, we obtain similarly that 

\begin{align*}
	&\left\{R^T V \leq 0 \right\} \subset \left\{\sum_{i=1}^n R_i (B-h(|X_i - \mu|) - \bE\left[\sum_{i=1}^n R_i (X_i-\mu)\right] \leq -(n+1)\eta_2 \right\} \; \text{, and}\\
	& \left\{R^T \widetilde V \leq \frac{\delta}{4\beta_M} \right\} \subset \left\{\sum_{i=1}^n R_i (B-Z_{X_i}) - \bE\left[\sum_{i=1}^n R_i (B-Z_{X_i})\right] \geq (n+1) \left(\frac{\delta}{4\beta_M} - c (B+\gamma)\right) \right\} \;.
\end{align*}

So, this time we can require for instance $c \leq \frac{\delta}{8 \beta_M (B+\gamma)}$. On the other hand, if $B-h(X_{n+1}-\mu)+\gamma\leq 0$ this time we have  
\begin{align*}
	&\left\{R^T V \leq 0 \right\} \subset \left\{\sum_{i=1}^n R_i (B-h(|X_i - \mu|) - \bE\left[\sum_{i=1}^n R_i (X_i-\mu)\right] \leq -(n+1)\eta_{2, c} \right\} \;,\\
	& \left\{R^T \widetilde V \leq \frac{\delta}{4\beta_M} \right\}\\
 &\subset \left\{\sum_{i=1}^n R_i (B-Z_{X_i}) - \bE\left[\sum_{i=1}^n R_i (B-h(|X_i - \mu|))\right] \geq (n+1) \frac{\delta}{4\beta_M} \right\} \;,
\end{align*}
with $\eta_{2, c} = \eta_2 -c (h(X^+-\mu)-B-\gamma)$, and so this time we can ask that $c\leq \frac{\eta_2}{2(h(X^+-\mu)-B-\gamma)}$. Hence, if all requirements on $c$ are satisfied we can define $\eta_2' = \min \left\{\frac{\eta_2}{2(h(X^+-\mu)-B-\gamma)} ,\frac{\delta}{8 \beta_M (B+\gamma)}, \eta_2 \right\}$, and we proved that 

\begin{equation}\label{eq;;ub_p2} p_2 \leq \bP_{\alpha,\beta}\left(\left|\sum_{i=1}^n R_i (B-h(X_i-\mu)) - \sum_{i=1}^n G_i (B-h(|X_i-\mu|) \right| \geq (n+1) \eta_2' \right)\;. \end{equation}
Then, just as in the proof for the bounded case we can use the Tchebychev inequality to further upper bound $p_1$ and $p_2$ from \eqref{eq;;ub_p1} and \eqref{eq;;ub_p2}, and obtain that
\begin{align*}
	&p_1  \leq \frac{1}{(n+1)^2 \eta_1'^2} \times \sum_{i=1}^n G_i^2 (X_i-\mu)^2\;, \\
\text{and} \quad &p_2  \leq \frac{1}{(n+1)^2 \eta_2'^2} \times \sum_{i=1}^n G_i^2 (B-h(|X_i-\mu|))^2\;.
\end{align*}

This time the quadratic terms depending on $X_i$ cannot be discarded as easily as in the bounded case. However, we can use a similar trick to the one we used to prove the upper bound in Lemma~\ref{lem::bounds_bcp}, remarking that if $X_i$ is too large then $G_i$ is necessarily very small. Indeed, using that $\beta\geq \beta_m$ and that when $X_i$ is large then $G_i^2 \sim \frac{1}{\beta (B-h(|X_i-\mu|))^2}$, we obtain that there exists a constant $C>0$ such that for $|X_i-\mu| \geq C$ it holds that 

\[G_i^2 \leq \frac{2}{\beta_m (B-h(|X_i-\mu|))^2}\;. \]

Using this result and the notation $A_C = \max_{|x-\mu| \geq C} \frac{(x-\mu)^2}{\beta_m (B-h(|x-\mu|))^2}$, we first obtain that 
\begin{align*}
	p_1 & \leq \frac{1}{(n+1)^2 \eta_1'^2} \left(C^2 \sum_{i=1}^n G_i^2  + 2 n A_C   \right)  \\
	&\leq \frac{1}{(n+1)^2 \eta_1'^2} \left(C^2 \max_{i=1,\dots, n} G_i \times \sum_{i=1}^n G_i  + 2 n A_C\right) \\
	&\leq \frac{1}{(n+1)^2 \eta_1'^2} \left(C^2 (n+1) \max_{i=1,\dots, n} G_i + 2 nA_C\right) \\
	&\leq \frac{1}{(n+1) \eta_1'^2} \left(\frac{2C^2}{\gamma \beta_m} + 2 A_C\right) = \cO(n^{-1})
\;.
\end{align*}

Similarly, we obtain that

\begin{align*}
	p_2 & \leq \frac{1}{(n+1)^2 \eta_2'^2} \left((B-h(C))^2 \sum_{i=1}^n G_i^2  +  \frac{2 n}{\beta_m}   \right)  \\
	&\leq \frac{1}{(n+1) \eta_2'^2} \left(\frac{2(B-h(C))^2}{\gamma \beta_m} + \frac{2}{\beta_m}\right) = \cO(n^{-1})
	\;.
\end{align*}

As this bound is independent of $F_n$, we obtain that for any $\eta_1>0, \eta_2>0$, there exists a rank $n_{\eta_1,\eta_2}$ such that $p_1 \leq \frac{1}{4}$ and $p_2\leq \frac{1}{4}$ for $n\geq n_\eta$. In that case, it holds that 

\[ \text{[BCP]} \geq  \frac{e^{-1}(1-e^{-c})}{2}  e^{-(n+1) \left(\Lambda_{\alpha_n^\star, \beta_n^\star}(F_{n+1} , \mu+\eta_1, B-\eta_2)+\frac{\delta}{2}\right)}  \;,
\]
where this bound holds if $c\leq \min \left\{\frac{\delta}{8\alpha_M (X^+-\mu)}, \frac{\delta}{8 \beta_M (B+\gamma)}, \frac{\eta_2}{2(h(X^+-\mu)-B-\gamma)} \right\}$, so $c$ is of order $\cO(\min\{\delta, \eta_2 \})$. We now tune $\eta_1$ and $\eta_2$. For simplicity we consider 
\[\eta_1=\eta_2 = \max \left\{\eta >0: \; \forall F_n \in \cF\;, \; \max_{\alpha, \beta} \Lambda_{\alpha,\beta}(F_n, \mu+\eta, B-\eta)\leq \max_{\alpha, \beta} \Lambda_{\alpha,\beta}(F_n, \mu, B) + \frac{\delta}{2} \right\}\;. \]

Thanks to the compactness of $\cF$, this quantity exists, is strictly positive, and only depends on $\delta$. This allows us to conclude that for any $\gamma>0$ and $\delta>0$, there exists two constants $n_\cF$ and $c_\delta$ such that for all $n\geq n_\cF$ it holds that  

\[ \text{[BCP]} \geq  c_\delta  e^{-n \max_{(\alpha, \beta) \in \cR_2^\gamma} \left(\Lambda_{\alpha,\beta}(F_n, \mu, B) +\delta\right)}  \;,
\]
which concludes the part of the proof corresponding to $n\geq n_\cF$.

We finish the proof by showing that the BCP can also be lower bounded for $n \leq n_\cF$. However, this bound does not need to be tight since it will not impact the validity of the conditions that we want to prove. We provide a simple lower bound by fixing the value of the additional support to $X_{n+1} = \frac{\mu + h^{-1}(B)}{2}$. We show that for an appropriate constant $c$ depending only on $B$ and $\mu$, if the empirical distribution belongs to $\cF$ then it holds for any $n\geq 1$ that 

\[\text{[BCP]} \geq  \bP\left(w_{n+1} \geq 1-\frac{1}{cn}\right)\geq e^{-n\log(n c)} \;. \]

Indeed, if $F_n \in \cF$ then for all $i$, $\mu-h^{-1}(nB) \leq X_i \leq \mu+h^{-1}(nB)$ under the event $\{w_{n+1} \geq 1-\frac{1}{cn}\}$ we obtain

\begin{align*}
	\sum_{i=1}^{n+1} w_i X_i& \geq \left(1-\frac{1}{cn}\right) \left(\frac{\mu+h^{-1}(B)}{2}\right) - \frac{h^{-1}(nB)-\mu}{cn}\\
	& = \mu + \frac{h^{-1}(B)-\mu}{2} - \frac{1}{cn}\left(\frac{\mu+h^{-1}(B)}{2}+h^{-1}(nB)-\mu\right)\;, \\
\end{align*}
and 
\begin{align*}
	\sum_{i=1}^{n+1}& w_i h(|X_i-\mu|) \leq Z_{X_{n+1}} + \frac{1}{cn}\sum_{i=1}^n h(|X_i-\mu|) \leq h\left(\frac{h^{-1}(B)-\mu}{2}\right) + \frac{B}{c} \\ 
\end{align*}

Then, the tuning on $c$ is made to ensure that the bounds are satisfying for all values of $n$. We can verify that this is true if 
\[ c \geq C_{B,\mu} \coloneqq \max\left\{\frac{3h^{-1}(B)-\mu}{h^{-1}(B)-\mu}  , \frac{B}{B-h\left(\frac{ h^{-1}(B)-\mu}{2}\right)} \right\} \;. \]

\begin{remark}[Adaptation for the un-centered condition]\label{remark_adaptation} The same proof scheme leads to a similar result for the un-centered condition, simply with a different constant $C_{B,\mu}$. Indeed, we can fix the same value for $X_{n+1}$, and write the same results using instead that for all $i$, $-h^{-1}(nB) \leq X_i \leq h^{-1}(nB)$. The constant $C_{B, \mu}$ would become 
	\[C_{B,\mu} \coloneqq \max\left\{\frac{3h^{-1}(B)+\mu}{h^{-1}(B)-\mu}  , \frac{B}{B-h\left(\frac{\mu + h^{-1}(B)}{2}\right)} \right\} \] 
\end{remark}

\subsection{Validity of the simplified check}\label{app::proof_check_npts}

In this section we prove that for any parameter $\gamma>0$, observations $X_1,\dots, X_{n+1}$ and weights $w_1,\dots, w_{n+1}$ the following condition
\begin{equation*} \exists x \geq \mu^\star: \; \sum_{i=1}^{n} w_i X_i + w_{n+1} x \geq \mu^\star\;,\; \text{and } \sum_{i=1}^{n} w_i h(|X_i-\mu^\star|) + w_{n+1} (h(|x-\mu^\star|)-\gamma) \leq B\;,\end{equation*}
is equivalent to 
\begin{equation*}
h\left(\left(\frac{1}{w_{n+1}}\sum_{i=1}^{n} w_i (\mu^\star - X_i)\right)^+\right) \leq B + \gamma + \frac{1}{w_{n+1}} \sum_{i=1}^{n} w_i (B-h(|X_i - \mu^\star|)) \;,
\end{equation*}
where $(x)^+=\max\{x, 0\}$.

Assume that $x$ is a solution to the first (mean) constraint. Then, using that $\sum_{i=1}^{n+1}w_i=1$ and that $\{w_{n+1}=0\}$ has probability zero it must hold that 
\begin{align*}
    \sum_{i=1}^{n} w_i(X_i-\mu^\star) + w_{n+1} (x-\mu^\star) \geq 0 \Rightarrow x \geq \mu^\star + \frac{1}{w_{n+1}} \sum_{i=1}^{n} w_i (\mu^\star - X_i) \;.
\end{align*}
Then, doing the same with the second constraint it must also hold that 

\[h(x-\mu^\star) \leq B+\gamma + \frac{1}{w_{n+1}} \sum_{i=1}^n w_i (B-h(|X_i-\mu^\star|)) \;. \]

We then consider the two cases, depending on the sign of $\sum_{i=1}^n w_i (\mu^\star-X_i)$. If it is positive, as $h$ is a bijection on the positive line we directly get that 

\[h\left(\frac{1}{w_{n+1}}\sum_{i=1}^{n} w_i (\mu^\star - X_i)\right) \leq h(x-\mu^\star) \leq B + \gamma + \frac{1}{w_{n+1}} \sum_{i=1}^{n} w_i (B-h(|X_i - \mu^\star|)) \;. \]

In the case that $\sum_{i=1}^n w_i (\mu^\star-X_i)\leq 0$, it is only necessary to check if the condition holds for $x=0$, that is if 
\[0 = h(0) \leq B + \gamma + \frac{1}{w_{n+1}} \sum_{i=1}^{n} w_i (B-h(|X_i - \mu^\star|)) \;. \]

Hence, the formulation of ~\eqref{eq::check_practice} is just a way to summarize the two cases and \eqref{eq::check_npts} $\Rightarrow $~\eqref{eq::check_practice}. Furthermore, if \eqref{eq::check_practice} holds then \eqref{eq::check_practice} holds with either $x=\mu^\star$ or $x= \mu^\star + \frac{1}{w_{n+1}}\sum_{i=1}^{n} w_i (\mu^\star - X_i)$, which concludes the proof.

\subsection{Proof of Theorem~\ref{th::npts}}\label{app::proof_th_npts}

We already proved in Lemmas~\ref{lemm::cond_h} and~\ref{lemm::cond_h_cent} that (A1)--(A4) hold for families defined by an $h$-moment condition. Adapting these results for $D_\pi(F, \mu) = \Lambda_\gamma^\star(F, \mu)\ind(F\in \cF)$ would be direct (we proved them for $\Lambda_0^\star(F, \mu)$ with this notation), so we omit the proof for simplicity. The rest of the proof consists in proving that Assumption~\ref{ass::relaxed_sp} holds.

\paragraph{Upper bound} Lemma~\ref{lem::bounds_bcp} states that there exists a mapping $C$ such that for any $\eta>0$ and any $n\in \N$
	\begin{equation*}\text{[BCP]}\leq C(F_n, \eta) \times e^{-n \Lambda_{\eta, \gamma}^\star(F_n, \mu^\star)}\;,\end{equation*} 
where $C$ is continuous w.r.t. the Wasserstein metric in $F_n$, is continuous in $\eta$, and scales in $\eta^{-2}$. For any cdf $F \in \{F_2, \dots, F_K\}$ (sufficient to prove the result) and $\epsilon >0$, we define a neighborhood of $F$ as $\cB = \{F_n \in \cF: \;, W(F_n, F) \leq \epsilon \}$. If $F_n$ denotes the empirical distribution of the observations $X_1,\dots, X_n$ drawn i.i.d. from $F$, using Theorem 2 in \cite{fournier_wasserstein} and that $\bE_F[|X|^{2+\xi}]<+\infty$ for some $\xi>0$ (by definition of $\cF$) we obtain that 
\[\sum_{n=1}^{+\infty} \bP(W(F_n, F) \geq \epsilon) < +\infty \;. \]

Using this result, by continuity we can first choose $\epsilon$ small enough so that for instance $C(F_n, \eta) \leq 2 C(F, \eta)$ for any $\eta$ small enough. Then, we choose $\eta^{-1}=\cO(n^a)$ for some $a>0$, so that (1) $C(F_n, \eta)$ is a polynomial in $n$, and (2) $e^{-n \Lambda_{\eta, \gamma}^\star(F_n, \mu^\star)} \leq 2e^{-n \Lambda_{\gamma}^\star(F_n, \mu^\star)}$. With these choices of parameters, we satisfy all the conditions of Assumption~\ref{ass::relaxed_sp} regarding the upper bound on $p_k^\pi(t)$.

\paragraph{Lower bound} The formulation of the lower bound in Lemma~\ref{lem::bounds_bcp} already fits Assumption~\ref{ass::relaxed_sp} regarding the lower bound part. 
Indeed, the lower bound in Lemma~\ref{lem::bounds_bcp} becomes that of Assumption~\ref{ass::relaxed_sp} if we set 
$(n_{\eta}, c_n,p_0)$ in Assumption~\ref{ass::relaxed_sp} to
$(n_{\delta,\gamma}, c_{\delta}^{-1}, e^{-n_{\delta, \gamma} \log(n_{\delta, \gamma} C_{B,\mu})})$.
This concludes the proof.

\bibliography{biblio}

\end{document}